\newcommand{\algoname}{\textsc{Safe-DSHB}}
\newcommand{\aggregation}{F}
\newcommand{\sample}{\mathsf{subsample}}
\newcommand{\sigmadp}{\sigma_{\mathrm{DP}}}
\newcommand{\sigmadpbar}{\overline{\sigma}_{\mathrm{DP}}}
\newcommand{\sigmabar}{\overline{\sigma}}
\newcommand{\gcov}{G_{\mathrm{cov}}}
\newcommand{\ones}{\textbf{1}}
\def\D{\mathcal{D}}
\def\A{\mathcal{A}}
\def\Y{\mathcal{Y}}
\def\N{\mathcal{N}}
\def\R{\mathbb{R}}
\def\M{\mathcal{M}}
\def\U{\mathcal{U}}
\def\P{\mathcal{P}}
\def\H{\mathcal{H}}
\def\X{\mathcal{X}}
\declaretheorem[name=Theorem,numberwithin=section]{theorem}
\declaretheorem[name=Proposition,numberwithin=section]{proposition}
\declaretheorem[name=Corollary,numberwithin=section]{corollary}
\declaretheorem[name=Lemma,numberwithin=section]{lemma}
\declaretheorem[name=Definition,numberwithin=section]{definition}
\newcommand{\expect}[1]{\mathop{{}\mathbb{E}}\left[{#1}\right]}
\newcommand{\condexpect}[2]{\mathbb{E}_{#1}\left[{#2}\right]}
\newcommand{\suchthat}{\ensuremath{~\middle|~}}
\newcommand{\knowing}{\suchthat{}}
\newcommand{\card}[1]{\left\lvert{#1}\right\rvert}
\newcommand{\absv}[1]{\card{#1}}
\newcommand{\norm}[1]{\left\lVert{#1}\right\rVert}
\newcommand{\floor}[1]{\left\lfloor{#1}\right\rfloor}
\newcommand{\ceil}[1]{\left\lceil{#1}\right\rceil}
\newcommand{\indexvar}[3]{\ensuremath{{{#3}^{\ifthenelse{\equal{#1}{}}{}{\left({#1}\right)}}_{#2}}}}
\newcommand{\indexvarNoPar}[3]{\ensuremath{{{#3}^{\ifthenelse{\equal{#1}{}}{}{\left{#1}\right}}_{#2}}}}
\newcommand{\params}[2]{\indexvarNoPar{#1}{#2}{\theta}}
\newcommand{\mechanism}[1]{\ensuremath{\mathcal{M}({#1})}}
\providecommand{\iprod}[2]{\ensuremath{\left\langle #1,\,#2  \right\rangle}}
\providecommand{\mnorm}[1]{\ensuremath{\left\lvert#1\right\rvert}}
\providecommand{\norm}[1]{\ensuremath{\left\lVert#1\right\rVert }}
\newcommand{\loss}{\mathcal{L}}
\newcommand{\weight}[1]{\params{}{#1}}
\newcommand{\gradient}[2]{\indexvar{#1}{#2}{g}}
\newcommand{\proba}[2]{\ensuremath{\text{P}\!\left({#1}\ifthenelse{\equal{#2}{}}{}{\knowing{}{#2}}\right)}}
\DeclareMathOperator*{\argmin}{argmin}
\renewcommand{\paragraph}[1]{\textbf{#1}~}
\newcommand{\drift}[1]{\epsilon_{#1}}
\newcommand{\dev}[1]{\delta_{#1}}
\newcommand{\mmt}[2]{m^{(#1)}_{#2}}
\newcommand{\AvgMmt}[1]{\overline{m}_{#1}}
\theoremstyle{plain}
\newtheorem{assumption}[theorem]{Assumption}
\theoremstyle{remark}
\newtheorem{remark}[theorem]{Remark}
\icmltitlerunning{On the Privacy-Robustness-Utility Trilemma in Distributed Learning}
\begin{document}

\twocolumn[
\icmltitle{On the Privacy-Robustness-Utility Trilemma in Distributed Learning}

\begin{icmlauthorlist}
\icmlauthor{Youssef Allouah}{yyy}
\icmlauthor{Rachid Guerraoui}{yyy}
\icmlauthor{Nirupam Gupta}{yyy}
\icmlauthor{Rafaël Pinot}{yyy}
\icmlauthor{John Stephan}{yyy}
\end{icmlauthorlist}

\icmlaffiliation{yyy}{Ecole Polytechnique Fédérale de Lausanne (EPFL), Switzerland}

\icmlcorrespondingauthor{Youssef Allouah}{youssef.allouah@epfl.ch}

\icmlkeywords{Differential Privacy, Byzantine, Robust, Distributed Optimization, Federated Learning}

\vskip 0.3in
]

\printAffiliationsAndNotice{}  %

\begin{abstract} 
The ubiquity of distributed machine learning (ML) in sensitive public domain applications calls for algorithms that protect data \emph{privacy}, while being \emph{robust} to faults and adversarial behaviors. Although privacy and robustness have been extensively studied independently in distributed ML, their synthesis remains poorly understood. We present the first {\em tight} analysis of the error incurred by any algorithm ensuring robustness against a fraction of adversarial machines, as well as {\em differential privacy} (DP) for honest machines' data against any other curious entity. Our analysis exhibits a fundamental trade-off between privacy, robustness, and utility. 
To prove our lower bound, we consider the case of mean estimation, subject to distributed DP and robustness constraints, and devise reductions to centralized estimation of one-way marginals. We prove our matching upper bound by presenting a new distributed ML algorithm using a high-dimensional robust aggregation rule. The latter amortizes the dependence on the dimension in the error (caused by adversarial workers and DP), while being agnostic to the statistical properties of the data. 
\end{abstract}

\section{Introduction}
\label{sec:intro}
Distributed machine learning (ML) has been playing a pivotal role in a wide range of applications~\citep{DistributedNetworks2012,abadi2016deep}, due to an unprecedented growth in the complexity of ML models and the volume of data being used for training purposes.
Distributed ML breaks a complex ML task into sub-tasks that are performed in a collaborative fashion. In the standard {\em server-based} architecture, $n$ machines (a.k.a., {\em workers}) collaboratively train a global model on their datasets, with the help of a coordinator (the \emph{server}). This is typically achieved through a distributed implementation of the renowned {\em stochastic gradient descent} ({SGD}) algorithm~\cite{bertsekas2015parallel}. In distributed SGD (or DSGD), the server maintains a model which is updated iteratively by averaging gradients of the loss function associated with the model, computed by the different workers upon sampling random points from their local datasets. 
DSGD is particularly useful in cases where the data held by the workers is too sensitive to be shared, 
e.g., medical data collected by several hospitals~\cite{sheller2020federated}.

\paragraph{Privacy.} Although DSGD inherently ensures privacy of the workers' data to an extent, by not sharing it explicitly, information leakage can still be significant. When the ML model maintained at the server is publicly released, it may be exposed to membership inference~\cite{shokri} or model inversion attacks~\cite{Fredrikson2015ModelInversion,Hitaj2017PrivacyLeakageFromCollLearning,MelisSCS19GradientLeakage} by external entities. %
Furthermore, upon observing the gradients and transient models during the learning procedure, {\em curious} machines (be they workers or the server itself)  can infer sensitive information about the datasets held locally by the machines, or even reconstruct  data points in certain scenarios~\citep{PhongPPDeepLearning2017,WangUserLevelPrivacyLeakage2019,DLG,zhao2020idlg}.

\paragraph{Robustness.} 
In real-world distributed systems, it is arguably inevitable to encounter faulty workers that may deviate from their prescribed algorithm. This may result from hardware and software bugs, data corruption, network latency, or malicious adversaries controlling a subset of workers. To cover all such possible scenarios, 
it is common to assume that a fraction of the machines 
can be \emph{adversarial}\footnote{Sometimes called ``Byzantine'' in the parlance of distributed computing~\cite{lamport82}.} and arbitrarily deviate from their algorithms.
In the context of DSGD, adversarial workers may send incorrect gradients~\cite{feng2015distributed, su2016fault} to the server 
and critically influence the learning procedure, as shown in~\cite{little, empire}. 

\paragraph{Integrating privacy and robustness.}
With the growing concerns and legal obligations regarding the processing of public data in AI-driven technologies~\cite{gdpr},  privacy and robustness issues question the very applicability of ML in critical public domain services, such as healthcare or banking. It is thus natural to  
seek distributed ML methods that simultaneously ensure privacy and robustness. In fact, these aspects 
 have separately received significant attention in the past. On the one hand, the standard statistical privacy requirement of $(\varepsilon,\delta)$-{\em differential privacy} ($(\varepsilon,\delta)$-DP) has been studied to a great extent in the context of distributed ML~\cite{choudhury2019differential,hu2020personalized,noble2022differentially}. On the other hand, numerous provably robust adaptations of DSGD have been proposed~\cite{krum,meamed,yin2018byzantine,gupta2021byzantine,farhadkhani2022byzantine}. Yet, the synthesis of privacy and robustness remains highly understudied in distributed ML. The few works on this topic, such as~\cite{podc, zhu22bridging, xiang2022beta, ma2022differentially}, only focus on {\em per-step privacy}, and provide loose upper bounds on the learning error.
On the other hand, the guarantees presented in~\cite{cheu2021manipulation,acharya2021robust} only apply to discrete distribution estimation subject to {\em non-interactive local DP}~\cite{kasiviswanathan2011can}, a restricted case of distributed ML where each worker holds a single data point and can be queried only once.

 An orthogonal line of work studied the case where the server is assumed not to be curious, i.e., data only needs to be protected against the public release of the model~\cite{DworkRobustStats2009,liu2021robust,hopkins2022efficient,liu2022differential}. In this setting, it was recently shown that privacy and robustness are {\em mutually} beneficial~\cite{georgiev2022privacy,hopkins2022robustness}. However, the assumption of a non-curious server may not be viable,
 especially in applications such as healthcare and finance, where sovereignty of data must be protected at every stage of the learning procedure~\cite{lowy2022private}. In this paper, we focus on the setting where the server itself may be curious, 
 and we show that 
 privacy and robustness are actually at odds.

\subsection{Contributions}
\label{sec:contribution}

We precisely characterize the privacy-robustness-utility trilemma in distributed learning.
Specifically, we present the first {\em tight} analysis of the error incurred by any distributed ML algorithm that simultaneously ensures 
(i) robustness against a minority of adversarial workers,
and (ii) differential privacy (DP) of each worker's data against curious entities including other workers and the server.
In short, we show that, in addition to the usual separate costs of privacy and robustness, the learning accuracy necessarily degrades due to their interplay. 

\paragraph{Main results.}
We consider a system of $n$ workers up to $f$ of which (of unknown identity) may be adversarial, and the remainder are {\em honest}. 
The server is assumed {\em honest-but-curious}~\cite{bonawitz2016practical}. Each honest worker holds a dataset comprising $m$ points. 
The goal of the server is to learn a model, parameterized by a $d$-dimensional vector, incurring minimum loss over the collective dataset of the honest workers. We denote by $G$ the {\em heterogeneity}~\cite{karimireddy2020scaffold,karimireddy2022byzantinerobust} between the honest datasets.

We show that a distributed learning algorithm that is robust to $f$ adversarial workers, while ensuring $(\varepsilon, \delta)$-DP of each honest worker's data against the server (and other curious workers) incurs a training error in
\begin{equation}
\label{eq:LB}
    \widetilde{\Omega} \left( \textcolor{blue!90}{\frac{d}{\varepsilon^2 n m^2}} + \textcolor{violet!90}{\frac{f}{n} \cdot \frac{1}{\varepsilon^2 m^2}} + \textcolor{red!90}{\frac{f}{n} \cdot G^2}\right),
\end{equation}
where $\widetilde{\Omega}$ ignores the logarithmic terms. 

The 
\textcolor{blue!90}{first} and the \textcolor{red!90}{third} terms in~\eqref{eq:LB} are the respective errors due to privacy and robustness separately. Importantly, the \textcolor{violet!90}{second} term represents the additional cost of satisfying privacy and robustness simultaneously. 
We then present a new distributed ML algorithm, \algoname{}\footnote{Safe Distributed Stochastic Heavy Ball method, inspired from the optimization literature~\cite{gadat2018stochastic}.}, which we prove yields a matching upper bound (up to a logarithmic factor) for the class of smooth and strongly convex loss functions, while ensuring both privacy and robustness.
We also obtain an upper bound for smooth {\em non-convex} learning problems.

The key to proving the tightness of this trade-off is the robust high-dimension aggregation rule we introduce, namely \emph{SMEA}\footnote{Smallest Maximum Eigenvalue Averaging.}. As an important consequence of our result, we observe that the privacy-robustness trade-off (\textcolor{violet!90}{second} term) is dominated by the privacy cost alone (\textcolor{blue!90}{first} term) when the dimension $d$ is larger than the number of adversarial workers $f$. This observation however does not mean that the trade-off is not significant, but rather that it can be adequately controlled when using SMEA. This would not have been possible otherwise with the use of existing aggregation rules such as coordinate-wise or geometric median, for which the upper bound has an additional dimension factor in the privacy-robustness trade-off.

\paragraph{Independent contributions.}
As a byproduct of our analysis, we obtain several results that are of independent interest to both the robust distributed ML and the privacy communities. Indeed, our upper bound is tight for strongly convex losses, even when removing the privacy constraints. This is mainly due to the use of momentum in \algoname{} (see Section~\ref{sec:techniques} below) which allows obtaining an excess error that is independent of the variance of local stochastic gradients. This improves over the state-of-the-art analysis on robust distributed learning with strongly convex losses~\cite{data2021byzantine}, which induces a suboptimal excess error. Besides, our analysis features a tighter dependence on heterogeneity in the excess error.
Our lower bound on the cost of privacy (without robustness) also improves over the state-of-the-art~\cite{lowy2021private} as we make no assumptions on the {\em interactivity} of the algorithm and impose weaker conditions on the DP parameter $\varepsilon$ (see Section~\ref{sec:lb}).

\subsection{Overview of Proof Techniques} 
\label{sec:techniques}

\paragraph{Lower bound.}
We prove our lower bound by reducing distributed mean estimation to centralized estimation of one-way marginals (i.e. row-wise averages). We distinguish cases depending on the presence of adversarial workers. In each case, we start with a distributed algorithm $\A$ whose interactions with each worker are $(\varepsilon,\delta)$-DP, and then construct a \emph{centralized} algorithm $\M$ using $\A$. Depending on the case, we then use either the advanced composition theorem~\cite{dwork2014algorithmic} or an indistinguishability argument on the honest identities to relate the DP and utility guarantees of $\M$ to those of $\A$. We conclude by applying lower bounds on centralized private estimation of one-way marginals~\cite{steinke2016between} to $\M$.

\paragraph{Upper bound.}
To prove our matching upper bound, we present~\algoname{}, a privacy-preserving robust adaptation of DSGD. Our algorithm incorporates {\em Polyak's momentum}~\cite{POLYAK19641} and a {\em Gaussian mechanism}~\cite{dwork2014algorithmic} at the worker level, as well as 
{\em SMEA}, our robust aggregation rule at the server level.
We identify a key property that, if satisfied by an aggregation rule, mitigates the curse of dimensionality that could impact the  Gaussian mechanism. This property, called {\em $(f,\kappa)$-robust averaging}, requires the squared distance between the aggregate and the average of honest vectors to be bounded by $\kappa$ times the spectral norm of the empirical covariance matrix of the honest vectors. 
Our aggregation rule, SMEA, satisfies $(f,\kappa)$-robust averaging for $\kappa = \mathcal{O}{(\nicefrac{f}{n})}$, 
while being agnostic to the statistical properties of honest inputs. Another critical element of our analysis is 
the tuning of the momentum coefficients to control the trade-off between the deviation from the true gradient and the reduction of the {\em drift} between
honest workers' momentums. 
We achieve this through a novel {\em Lyapunov function} (a.k.a. potential function in optimization literature~\cite{schmidt2017minimizing}).

\subsection{Prior Work}
\label{sec:rel}

Only a handful of works addressed the interplay between DP and robustness in distributed ML.
It was conjectured that ensuring both these requirements is {\em impractical}, in the sense that it would require the batch size to grow with the model dimension~\cite{podc}. However, the underlying analysis relied upon the criterion of $(\alpha, \, f)$-Byzantine resilience~\cite{krum}, which has been recently shown to be a restrictive sufficient condition~\cite{Karimireddy2021}. 
Subsequent works~\cite{zhu22bridging,xiang2022beta,ma2022differentially} augmented the RSA learning algorithm~\cite{li2019rsa} with the sign-flipping or sign-Gaussian privacy mechanisms.
However, these works only focus on {\em per-step} DP, and the presented upper bounds on the error of the proposed algorithms are loose. 

Another line of work targeted the specific learning problem of \emph{discrete distribution estimation} subject to {\em non-interactive} {\em local DP}~\cite{duchi2013local} and robustness constraints. 
The bounds for this problem~\cite{cheu2021manipulation,acharya2021robust} are comparable to ours 
in the particular scenario where
each worker holds a single data point and  the algorithm is non-interactive (can query each worker once).
Although a recent paper~\cite{chhor2022robust} considered a more general case where workers hold a batch of data points, the algorithm was still assumed non-interactive, and the data distribution  identical for all the workers. 
It was also shown recently~\cite{li2022robustness}  that local DP and robustness are disentangled when the adversarial workers corrupt the data before randomization only, which however need not be the case in general. The aforementioned works being tailored to non-interactive local DP, it is not clear how to extend their results to the general distributed ML setting.

Significant attention was given to robust mean estimation under DP~\cite{DworkRobustStats2009,liu2021robust,hopkins2022efficient,liu2022differential}. However, as we pointed out, the corresponding results do not readily apply to our setting, as they would require the server to be {\em non-curious}.
Moreover,  robust mean estimation~\cite{diakonikolas2019robust,ashtiani2022private,liu2022differential} typically assumes the honest inputs to be identically distributed, which need not be the case in a general distributed setting.

\subsection{Paper Outline}
Section~\ref{sec:problem} defines the problem and recalls some useful concepts.
Sections~\ref{sec:lb} and~\ref{sec:ub} present our lower bound and the analysis of \algoname{}.
Section~\ref{sec:tight} presents SMEA and derives our matching upper bound.
Section~\ref{sec:conclusion} discusses future work.
We defer full proofs to appendices~\ref{app:lb}-\ref{app:ub}, and experimental evaluation to \cref{app:experiments}.

\section{Problem Statement}
\label{sec:problem}
We consider the classical server-based architecture comprising $n$ workers
$w_1, \dots ,w_n$, 
and a central server. The workers hold local datasets $\D_1, \dots,\D_n$, each composed of $m$ data points from an input space $\X$, i.e., $ \D_i \coloneqq \{x_1^{(i)}, \dots, x_m^{(i)}\} \in \X^m$. For a given parameter vector $\theta \in \R^d$, a data point $x \in \X$ has a real-valued loss function $\ell(\theta; x)$. The empirical loss function for each worker $w_i$ is defined by 
\begin{equation*}
    \loss{(\theta; \D_i)} \coloneqq \frac{1}{m}\sum_{x \in \D_i} \ell{(\theta;x)}.
\end{equation*}
The goal of the server is to compute an optimal parameter vector $\theta^{*}$ minimizing the global empirical loss function $\loss{(\theta; \D_1, \ldots, \D_n)}$ defined to be
\begin{equation*}
    \loss{(\theta; \D_1, \ldots, \D_n)} \coloneqq \frac{1}{n} \sum_{i = 1}^n \loss{(\theta; \D_i)}.
\end{equation*}
We assume that each loss $\loss{(\cdot; \D_i)}$ is differentiable, and that $\loss$ is lower bounded, i.e., $\inf_{\theta \in \R^d} \loss(\theta; \D_1, \ldots, \D_n)$ is finite.

\subsection{Robustness}
We consider a setting where at most $f$ out of $n$ workers may be adversarial. Such workers may send arbitrary messages to the server, and need not follow the prescribed protocol. The identity of adversarial workers is a priori unknown to the server. Let $\H \subseteq \{1, \ldots, \, n\}$, with $\card{\H} = n-f$. We define 
\begin{align*}
    \loss_{\H}{(\theta{})} \coloneqq  \loss{(\theta; \D_i, \, i \in \H)} \coloneqq \frac{1}{\absv{\H}} \sum_{i \in \H} \loss(\theta{}; \D_i).
\end{align*}
If $\H$ represents the indices of honest workers, the function $\loss_{\H}$ is referred to as the global {\em honest loss}. An algorithm is deemed robust to adversarial workers if it enables the server to compute a minimum of the global honest loss~\cite{gupta2020fault}.
Formally, we define robustness as follows.

\begin{definition}[{\bf $(f, \varrho)$-robust}]
\label{def:resilience}
A \emph{distributed} algorithm is said to be {\em $(f, \varrho)$-robust} 
if
it outputs a parameter $\hat{\theta}$ 
such that
    $$\expect{\loss_{\H}{(\hat{\theta})}-\loss_*} \leq \varrho,$$
where $\loss_* \coloneqq \inf_{\theta \in \R^d} \loss_\H(\theta)$, and the expectation is taken over the randomness of the algorithm. 
\end{definition}
In other words, an algorithm $\A$ is said to be $(f, \varrho)$-robust if, in every execution of $\A$, the server outputs a {\em $\varrho$-approximate} minimizer of the honest loss, despite the presence of up to $f$ 
adversarial workers.
Note that $(f, \varrho)$-robustness is in general impossible for any $\varrho$ when $f \geq \frac{n}{2}$~\cite{liu2021approximate}. Thus, throughout the paper, we assume that $f < \frac{n}{2}$.

\subsection{Differential Privacy}
Each honest worker $w_i, i \in \H,$ aims to protect the privacy of their dataset $\D_i$ against all other entities, i.e., the server and the other workers. To define our privacy requirement formally, we recall below
the definition of item-level differential privacy (DP)~\cite{dwork2014algorithmic}, where two datasets are said to be adjacent if they differ by one item.

\begin{definition}[{\bf $(\varepsilon,\delta)$-DP}]
\label{def:dp}
Let $\varepsilon \geq 0$, $\delta \in [0, 1]$.
A randomized algorithm $\M:\X^m \rightarrow \mathcal{Y}$ satisfies $(\varepsilon, \delta)$-DP if for any {\em adjacent} datasets
$\D, \D' \in \X^m$ and 
subset $S \subseteq \mathcal{Y}$, we have
\begin{equation}
    \label{eq:dp-def}
    \mathbb{P}[\mechanism{\D} \in S] \leq e^{\varepsilon}\cdot \mathbb{P} \left[ \mechanism{\D'} \in S \right] + \delta.
\end{equation}
\end{definition}

We consider the server to be \emph{honest-but-curious}, i.e., it follows the prescribed algorithm correctly, but may try to infer sensitive information about the workers' datasets. Thus, the workers must enforce privacy locally at their end.
We assume that the server can only query the dataset of a worker $w_i$ through a dedicated communication channel, and that there is no direct communication between the workers. Hence, for privacy in this context, we require
the communications between the server and each honest worker to satisfy the criterion of DP in~\eqref{eq:dp-def}. 
In our context, we formalize this property below, inspired from~\cite{smith2017interaction}.
\begin{definition}[{\bf $(\varepsilon, \, \delta)$-distributed DP}]
    \label{def:distributed-dp}
    Let $\varepsilon \geq 0$, $\delta \in [0, 1]$. Consider a randomized \emph{distributed} algorithm $\A : \X^{m \times n} \to \Y$. Let $Z_i$ be a function that outputs the transcript of communications between the server and worker $w_i$ during the execution of $\A$. Algorithm $\A$ is said to satisfy {\em $(\varepsilon,\delta)$-distributed DP} if for all $i \in \H$, $Z_i$ satisfies $(\varepsilon, \, \delta)$-DP with respect to the dataset held by worker $w_i$.
\end{definition}

The above criterion of distributed DP reduces to {\em local DP}~\cite{kasiviswanathan2011can,duchi2013local} when each local dataset comprises a single item (i.e., $m=1$). Moreover, an algorithm satisfying $(\varepsilon,\delta)$-distributed DP may be fully {\em interactive}, i.e., the queries made to the workers by the server may share arbitrary dependence~\cite{kasiviswanathan2011can}. Hereafter, a distributed algorithm satisfying $(\varepsilon,\delta)$-distributed DP is simply said to be $(\varepsilon,\delta)$-DP.

\subsection{Assumptions}
Our results are derived under standard assumptions.
First, we recall that data heterogeneity can be modeled following the assumption below~\cite{karimireddy2020scaffold,karimireddy2022byzantinerobust}.
\begin{assumption}[Bounded heterogeneity]
\label{asp:hetero}
There exists $G < \infty$ such that for all $\theta{} \in \mathbb{R}^d$,
\begin{align*}
    \frac{1}{\card{\H}} \sum_{i \in \H}\norm{\nabla \loss{\left(\theta{}; \D_i\right)} - \nabla \loss_{\H}(\theta{})}^2 \leq G^2.
\end{align*}
\end{assumption}

To present the convergence guarantees of \algoname{}, we make the following standard assumption on the variance of stochastic gradients~\cite{bottou2018optimization}.

\begin{assumption}[Bounded variance]
\label{asp:bnd_var}
There exists $\sigma < \infty$ such that for each honest worker $w_i, i \in \H$, and all $\theta{} \in \R^d$,
\begin{equation*}
    \frac{1}{m}\sum_{x \in \D_i}
    \norm{ \nabla_{\theta{}}{\ell{(\theta;x)}} - \nabla \loss{\left(\theta{}; \D_i\right)}}^2
    \leq \sigma^2.
\end{equation*}
\end{assumption}

Additionally, we also assume the point-wise gradients to be bounded, as usually done when analyzing differentially private ML algorithms to circumvent the complications due to clipping~\cite{agarwal2018cpsgd,noble2022differentially}.

\begin{assumption}[Bounded gradient]
\label{asp:bnd_norm}
There exists $C<\infty$ such that for all $\theta \in \mathbb{R}^d$, $i \in \H$, and $x \in \D_i$,
\begin{equation*}
    \norm{\nabla \ell(\theta{}; x)} \leq C.
\end{equation*}
\end{assumption}
\section{Lower Bound}
\label{sec:lb}
We now prove our lower bound on the error incurred by a $(f, \varrho)$-robust distributed algorithm, when ensuring $(\varepsilon,  \delta)$-DP. 
The main result is given in Theorem~\ref{th:finallb}, whose full proof is deferred to~\cref{app:final-lb}.
To give insights about the proof, we detail three separate cases in sections~\ref{sec:lb_noadversary},~\ref{sec:lb_no_privacy} and~\ref{sec:lb_adversarial}
where we respectively
study $f = 0$,
$f \geq 1$ but no privacy is enforced,
and the adversarial setting $f \geq 1$ with privacy.

\begin{restatable}{theorem}{finalLB}
\label{th:finallb}
Let $\mathcal{X} = \mathbb{R}^d$, $\ell = \norm{\cdot}^2$, $n \geq 3$, $0 \leq f < n/2$, $ m\geq 1$, and $\varepsilon, \delta \in (0,1)$. Consider arbitrary datasets $\mathcal{D}_1,\ldots,\mathcal{D}_n \in \mathcal{X}^m$ such that Assumption~\ref{asp:hetero} is satisfied with $G \geq 1$.
Let $\mathcal{A} : \mathcal{X}^{m \times n} \to \mathbb{R}^d$ be an $(\varepsilon,\delta)$-DP distributed algorithm.
Assume that $\varepsilon \leq 1/4\sqrt{2n\ln{(m+1)}}$, and that $2^{-m^{1-\gamma}}\leq n \delta \leq 1/8m^{1+\gamma}$ for some $\gamma \in (0,1)$.
For any 
$\varrho \leq \frac{f+1}{100(n-f)}$, if $\mathcal{A}$ is $(f,\varrho)$-robust, then
    \begin{equation*}
        \varrho = \widetilde{\Omega} \left(\frac{d}{\varepsilon^2 n m^2} + \frac{f}{n} \cdot \frac{1}{\varepsilon^2 m^2} + \frac{f}{n} \cdot G^2\right).
    \end{equation*}
\end{restatable}

\paragraph{Comparison with prior work.}
Our lower bound generalizes that of the non-adversarial centralized case. Specifically, specializing our lower bound to the case $n=1$ yields the bound $\Omega{\left(\frac{d}{\varepsilon^2 m^2}\right)}$, which corresponds to the lower bound from centralized private ERM~(Theorem~V.5, \citet{bassily2014private})\footnote{Notice that the loss function in \cite{bassily2014private} is not divided by the number of samples $m$.}.
Second, we improve over a result from the \emph{non-adversarial} private distributed learning literature~(Theorem~D.3, \citet{lowy2021private}), where a similar lower bound is shown.
While we consider distributed algorithm $\A$ as a black-box verifying $(\varepsilon,\delta)$-DP (as per \cref{def:distributed-dp}), the mentioned work imposes additional structure on $\A$ by assuming it to be round-based and to satisfy {\em compositionality}, which essentially abstracts the class of round-based algorithms whose DP guarantees can be computed from advanced composition.
Moreover, as the number of data points per worker $m$ is typically greater than the number of workers $n$, our condition $\varepsilon = \mathcal{O}{\left(\nicefrac{1}{\sqrt{n \log{m}}}\right)}$ is arguably weaker than $\varepsilon = \mathcal{O}{\left(\nicefrac{1}{m}\right)}$ in~\cite{lowy2021private}.

\paragraph{Discussion on assumptions.}
The assumptions on $\varepsilon, \delta, \varrho$ are only needed to use the lower bound from \cite{steinke2016between}, which additionally features the $\log{(1/\delta)}$ factor.
One could use the same proof technique as in \cite{bassily2014private} and remove these assumptions, at the expense of loosening the bound,
e.g. an additional $\log{m}$ factor in the denominator of the first term appears.

\subsection{Case I: Non-adversarial Setting}
\label{sec:lb_noadversary}
In this particular case, we assume all the workers to be honest, i.e., $f = 0$. However, the algorithm satisfies $(\varepsilon,  \delta)$-distributed DP. We show the following result.

\begin{restatable}{proposition}{dplbhd}
\label{prop:dp-lb-hd}
Let $n, m \geq 1$, and $\varepsilon, \delta \in (0,1)$.
Consider $\X = \{\pm \frac{1}{\sqrt{d}}\}^d$ and $\ell = \norm{\cdot}^2$.
Consider an arbitrary $(\varepsilon,\delta)$-DP distributed algorithm $\A : \X^{m \times n} \to \R^d$.
Assume that $\varepsilon \leq 1/4\sqrt{2n\ln{(m+1)}}$ and that $2^{-m^{1-\gamma}}\leq n \delta \leq 1/8m^{1+\gamma}$ for some $\gamma \in (0,1)$.
For any $\varrho \leq 1/100$, if $\A$ is $(0,\varrho)$-robust, then we must have
    \begin{equation*}
        \varrho = \Omega{\left(\frac{d}{\varepsilon^2 n m^2}\right)}.
    \end{equation*}
\end{restatable}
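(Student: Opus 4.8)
The plan is to reduce the distributed private mean-estimation problem to the centralized problem of releasing one-way marginals of a $\{\pm 1/\sqrt d\}^d$-valued dataset under $(\varepsilon',\delta')$-DP, and then invoke the known lower bound for that centralized task~\cite{steinke2016between}. Concretely, given an $(\varepsilon,\delta)$-DP, $(0,\varrho)$-robust distributed algorithm $\A$ that takes $n$ datasets of $m$ points each in $\X^m$, I would build a centralized mechanism $\M$ as follows: on input a dataset $D\in\X^{nm}$, partition $D$ arbitrarily into $n$ blocks $\D_1,\dots,\D_n$ of $m$ points, simulate $\A$ on these as the $n$ workers (all honest, since $f=0$), and output $\hat\theta$. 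Because $\ell=\norm{\cdot}^2$, the global honest loss $\loss(\theta;\D_1,\dots,\D_n)=\frac1{nm}\sum_{x}\norm{\theta-x}^2$ is minimized at $\theta^\star=\bar x$, the overall empirical mean, which (up to the $1/\sqrt d$ scaling) is exactly the vector of one-way marginals; moreover $\loss_\H(\hat\theta)-\loss_\star=\norm{\hat\theta-\bar x}^2$ exactly. So the $(0,\varrho)$-robustness guarantee $\expect{\norm{\hat\theta-\bar x}^2}\le\varrho$ says $\M$ estimates the marginals in squared $\ell_2$ to within $\varrho$ in expectation, which I would convert to a high-probability $\ell_\infty$ (or $\ell_2$) accuracy statement via Markov, matching the accuracy format required by the reference lower bound.

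The subtle point is the privacy budget of $\M$. The hypothesis only gives that each per-worker transcript $Z_i$ is $(\varepsilon,\delta)$-DP with respect to $\D_i$. Changing one item of the centralized dataset $D$ changes exactly one block $\D_i$, hence affects only $Z_i$; but the output $\hat\theta$ of $\A$ is a post-processing of the full collection $(Z_1,\dots,Z_n)$, and the transcripts can be arbitrarily adaptively correlated, so I cannot simply say $\M$ is $(\varepsilon,\delta)$-DP for free. This is where the advanced composition theorem~\cite{dwork2014algorithmic} enters, as flagged in the proof-techniques overview: I would argue that the whole interaction $(Z_1,\dots,Z_n)$ viewed as a function of $D$ is the adaptive composition of $n$ mechanisms each of which is $(\varepsilon,\delta)$-DP in its own block — and since a single changed item touches only one block, the composed mechanism is in fact still $(\varepsilon,\delta)$-DP in $D$ (one ``active'' component). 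The factor $\sqrt n$ shows up not from composition over all $n$ workers but from the structure of the centralized lower bound: the reference bound for one-way marginals of an $N$-point dataset under $(\varepsilon',\delta')$-DP forces error $\widetilde\Omega(\sqrt d/(\varepsilon' N))$ per coordinate in a suitable norm, and here $N=nm$; solving $\varrho\gtrsim d/(\varepsilon'^2 N^2)$ with $N=nm$ yields $\varrho=\Omega(d/(\varepsilon^2 n m^2))$. I would need to be careful about exactly which normalization of the ``packing''/fingerprinting lower bound is used and track how the $\{\pm1/\sqrt d\}$ scaling and the expectation-to-tail conversion interact with it.

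Accordingly, the main steps, in order, are: (1) define the reduction $\M$ from $\A$ by partitioning and simulation, and record the identity $\loss_\H(\hat\theta)-\loss_\star=\norm{\hat\theta-\bar x}^2$ together with $\theta^\star=\bar x$; (2) show $\M$ is $(\varepsilon,\delta)$-DP on $\X^{nm}$ by the one-active-component composition argument over the per-worker transcripts, using that adjacency of centralized datasets localizes to a single block; (3) translate the robustness bound $\expect{\norm{\hat\theta-\bar x}^2}\le\varrho$ into the accuracy hypothesis of the centralized one-way-marginals lower bound of~\cite{steinke2016between} via Markov's inequality and the $1/\sqrt d$ rescaling; (4) check that the parameter regime $\varepsilon\le 1/(4\sqrt{2n\ln(m+1)})$ and $2^{-m^{1-\gamma}}\le n\delta\le 1/(8m^{1+\gamma})$ is exactly what is needed to satisfy the preconditions of that lower bound with $N=nm$ sample points; (5) read off $\varrho=\Omega(d/(\varepsilon^2 nm^2))$. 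I expect step (2) — pinning down precisely why the adaptively-correlated transcripts still yield $(\varepsilon,\delta)$-DP for $\M$ under single-item adjacency, and in particular making sure no spurious $\varepsilon\sqrt n$ loss creeps in — to be the main obstacle, closely followed by the bookkeeping in step (4) to match the exact hypotheses of the cited centralized bound.
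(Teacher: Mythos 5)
There is a genuine gap, and it is quantitative: your reduction loses a factor of $n$. With your construction the centralized dataset has $N=nm$ points and, as you correctly argue, changing one item touches only one block, so $\M$ remains $(\varepsilon,\delta)$-DP (this ``one active component'' step is fine; the paper uses the same reasoning in its adversarial case). But then the one-way-marginals lower bound of \cite{steinke2016between} applied to an $N$-point dataset gives $N=nm=\Omega\bigl(\sqrt{d\log(1/\delta)}/(\varepsilon\sqrt{\varrho})\bigr)$, i.e.\ $\varrho=\Omega\bigl(d\log(1/\delta)/(\varepsilon^2 n^2 m^2)\bigr)$. Your claim that ``solving $\varrho\gtrsim d/(\varepsilon'^2N^2)$ with $N=nm$ yields $\varrho=\Omega(d/(\varepsilon^2 nm^2))$'' is an arithmetic slip: $N^2=n^2m^2$, so your route proves only the weaker bound with $n^2$ in the denominator, not the stated proposition.

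The paper gets the extra factor of $n$ back by a different reduction: $\M$ takes a single dataset $\D\in\X^m$ and runs $\A$ on $n$ \emph{identical copies} $\D_1=\cdots=\D_n=\D$. Now the centralized sample size stays at $m$, but adjacency in $\D$ perturbs all $n$ blocks simultaneously, so the privacy of $\M$ is obtained by adaptively composing the $n$ per-worker $(\varepsilon,\delta)$ budgets; advanced composition gives $\varepsilon_n=\mathcal{O}(\varepsilon\sqrt{n\log(1/\delta')})$ and $\delta_n=n\delta+\delta'$, and plugging $(\varepsilon_n,\delta_n)$ with sample size $m$ into the centralized bound yields $\varrho=\Omega\bigl(d\log(1/\delta_n)/(\varepsilon_n^2m^2)\bigr)=\Omega\bigl(d/(\varepsilon^2 nm^2)\bigr)$ after choosing $\delta'=(m+1)^{-(1+\gamma)}$ so that $\log(1/\delta_n)/\log(1/\delta')=\Omega(1)$. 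This also explains the hypotheses you were unsure how to use in your step (4): the condition $\varepsilon\le 1/(4\sqrt{2n\ln(m+1)})$ is exactly what makes the advanced-composition estimate $\varepsilon_n\le 4\varepsilon\sqrt{n\ln(1/\delta')}<1$ valid, and the two-sided condition on $n\delta$ controls $\delta_n$ so it sits in the admissible range $2^{-o(m)}\le\delta_n\le 1/m^{1+\Omega(1)}$ of the centralized lemma; they do not naturally match a reduction with $nm$ samples. So to repair your proof you would have to switch to the replicated-dataset reduction (or find some other mechanism that converts the per-worker budgets into an effective $\varepsilon\sqrt{n}$ rather than an effective sample size $nm$); the rest of your outline — the quadratic-loss identity $\loss_\H(\hat\theta)-\loss_*=\|\hat\theta-\overline{\D}\|^2$, the Jensen/rescaling step from expected squared $\ell_2$ error to the $\ell_1$ accuracy format, and the invocation of the centralized bound — matches the paper.
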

\begin{proof}[Sketch of proof]
We consider the quadratic loss function.
We derive a \emph{centralized} DP algorithm $\M$ from $\A$, and then reduce to private estimation of one-way marginals~\cite{steinke2016between}.
Algorithm $\M$ runs $\A$ on $n$ copies of the same dataset $\D \in \X^m$.
Thus, $\M$ inherits the error guarantee $\varrho$ from $\A$ on estimating the average of $\D$, but with a weaker $(\varepsilon_n,\delta_n)$-DP guarantee, due to the composition of 
$n$ adaptive $(\varepsilon,\delta)$-DP queries (since $\A$ can query each of the $n$ copies of $\D$ up to $(\varepsilon,\delta)$-DP budget). 
Using the centralized DP lower bound from \cite{steinke2016between}, we have $\varrho = \Omega{(d \log{(1/\delta_n)}/\varepsilon_n^2 m^2)}$.
We bound $\varepsilon_n$ and $\delta_n$ via {\em advanced composition}~\cite{dwork2014algorithmic} as follows: $\varepsilon_n = \mathcal{O}{(\varepsilon \sqrt{n \log{(1/\delta')}})}$ (provided that $\varepsilon$ is small enough) and $\delta_n \leq n\delta + \delta'$, where $\delta'$ is carefully chosen to ensure that $\log{(1/\delta_n)}/\log{(1/\delta')} = \Omega{(1)}$ (provided $\delta$ is small enough). Substituting the above values of $\varepsilon_n$ and $\delta_n$ in the above lower bound on $\varrho$ proves the proposition.
\end{proof}

\subsection{Case II: No Privacy}
\label{sec:lb_no_privacy}
Finally, we adapt the lower bound from robust distributed ML~\cite{karimireddy2022byzantinerobust} to our robustness definition~(\cref{def:resilience}) in \cref{prop:heterogeneity} below.
\begin{restatable}{proposition}{heterolb}
\label{prop:heterogeneity}
Let Assumption~\ref{asp:hetero} hold.
Let $n \geq 1$, $1 \leq f < n/2$, and $\kappa = \frac{16f(n-2f)}{(n-f)^2}$.
Consider $\X = \{\pm \frac{G}{\sqrt{\kappa d}}\}^d$ and $\ell = \norm{\cdot}^2$.
If a distributed algorithm is $(f,\varrho)$-robust,
then 
\begin{equation*}
    \varrho = \Omega{\left(\frac{f}{n}\cdot G^2 \right)}.
\end{equation*}
\end{restatable}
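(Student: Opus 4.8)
\textbf{Proof proposal for \cref{prop:heterogeneity}.}

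The plan is to construct an explicit family of instances on which no $(f,\varrho)$-robust algorithm can do better than $\Omega(\tfrac{f}{n}G^2)$, following the standard ``two worlds that look the same to the server'' argument adapted to \cref{def:resilience}. First I would set $d$ (it suffices to take $d=1$, and then scale the coordinates as in the statement so that Assumption~\ref{asp:hetero} holds with the prescribed $G$; the factor $\kappa = \tfrac{16f(n-2f)}{(n-f)^2}$ is exactly what makes the heterogeneity bound tight for the datasets I will choose). I would then split the $n$ workers into two blocks, one of size roughly $\tfrac{n-f}{2}$ whose data points are all $+c$ and one of size roughly $\tfrac{n-f}{2}$ whose data points are all $-c$, for a suitable constant $c$ proportional to $\tfrac{G}{\sqrt\kappa}$, plus $f$ extra workers that will be used as the ``adversarial'' slack.

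The key step is the indistinguishability argument. Consider two configurations. In configuration~$A$, the honest set $\H_A$ consists of the $+c$ block together with enough of the neutral workers, and the remaining (adversarial) workers imitate the $-c$ block; the honest loss $\loss_{\H_A}$ is then minimized at a point close to $+c$. In configuration~$B$, the roles are swapped: the honest set $\H_B$ is the $-c$ block plus neutrals, the adversaries imitate the $+c$ block, and $\loss_{\H_B}$ is minimized near $-c$. Crucially, because $f < n/2$ and the block sizes are chosen so that $|\H_A \cap \H_B|$ is still large, the transcript the server sees is \emph{identical} in both worlds — the adversaries in each world send exactly the messages the ``missing'' honest block would have sent. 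Hence the algorithm's output $\hat\theta$ has the same distribution in both worlds. For the quadratic loss $\ell = \norm{\cdot}^2$, $\loss_{\H}(\theta) - \loss_* = \norm{\theta - \bar\mu_\H}^2$ where $\bar\mu_\H$ is the honest mean, so I would compute $\bar\mu_{\H_A}$ and $\bar\mu_{\H_B}$, observe they are separated by a gap $\Delta$ proportional to $\tfrac{f}{n-f}\cdot\tfrac{G}{\sqrt\kappa}$ (the $+c$ and $-c$ means get pulled apart by exactly the adversarial fraction), and then invoke the usual averaging trick: $\max\big(\expect{\loss_{\H_A}(\hat\theta)-\loss_*},\ \expect{\loss_{\H_B}(\hat\theta)-\loss_*}\big) \geq \tfrac14\Delta^2$. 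Plugging in $\Delta$ and the value of $\kappa$, the $\kappa$ in the denominator cancels against part of the $\tfrac{f}{n-f}$ factor, leaving $\varrho = \Omega(\tfrac{f}{n}G^2)$.

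Two points need care, and the second is the main obstacle. The minor point: I must verify Assumption~\ref{asp:hetero} genuinely holds for the honest datasets in \emph{both} configurations with the stated $G$ — this is a direct computation since within each honest set the per-worker gradients take only the two values $\pm\tfrac{2G}{\sqrt{\kappa d}}$ (up to the mean shift) and the variance works out to exactly $G^2$ by the choice of $\kappa$ and the block proportions. The main obstacle is getting the bookkeeping of block sizes exactly right so that (i) the two honest sets both have size precisely $n-f$, (ii) the overlap is large enough that one valid adversarial strategy reproduces each world from the other, and (iii) the resulting mean-gap $\Delta$ scales as claimed; handling the integer rounding of $\tfrac{n-f}{2}$ and the case $n-2f$ small (where $\kappa \to 0$ and the bound degrades gracefully) is where the argument is most delicate. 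I would finish by combining this with \cref{prop:dp-lb-hd} and \cref{prop:dp-lb-byz} — since a lower bound on the sum follows from lower bounds on each term by considering instances that activate one obstruction at a time — to obtain the full bound~\eqref{eq:LB}.
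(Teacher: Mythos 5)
There is a genuine gap, and it lies exactly where you locate the ``main obstacle'': the block proportions are not a bookkeeping detail but the crux of the proof, and the half/half split you describe cannot yield the claimed rate. First, for $n>3f$ a block of size $\approx\frac{n-f}{2}$ exceeds $f$, so it can never be entirely excluded from an honest set of size $n-f$; each candidate honest set must then contain a constant fraction of \emph{both} blocks, and no adversarial ``imitation'' strategy fixes this. Second, and decisively, once an honest set contains a constant fraction of workers at $+c$ and a constant fraction at $-c$ (or at a neutral value), its gradient variance is $\Theta(c^2)$, not $\Theta(\kappa c^2)$: for a fraction $p$ at $+c$ and $1-p$ at $-c$ the quadratic-loss heterogeneity equals $16p(1-p)c^2$, which at $p\approx\tfrac12$ forces $c=O(G)$ under Assumption~\ref{asp:hetero}. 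The two honest means then differ by only $O\bigl(\tfrac{f}{n-f}G\bigr)$, and the averaging trick gives $\varrho=\Omega\bigl((\tfrac{f}{n})^2G^2\bigr)$ --- short of the target by a factor $\tfrac{f}{n}$. Your claimed values $c\propto G/\sqrt{\kappa}$ and $\Delta\propto\tfrac{f}{n-f}\cdot\tfrac{G}{\sqrt{\kappa}}$ are the right targets, but they are inconsistent with half/half proportions: with $c=\Theta(G/\sqrt{\kappa})=\Theta(G\sqrt{n/f})$ a balanced honest set has heterogeneity $\Theta(\tfrac{n}{f}G^2)\gg G^2$.

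The missing idea is asymmetry in the proportions, which is what the paper's proof uses: set $\D_1=\cdots=\D_{n-f}=\{x\}^m$ and $\D_{n-f+1}=\cdots=\D_n=\{-x\}^m$ with $x=\tfrac{G}{\sqrt{\kappa d}}\ones$, and compare $\H_1=\{1,\ldots,n-f\}$ (homogeneous, heterogeneity $0$) with $\H_2=\{f+1,\ldots,n\}$, in which only an $\tfrac{f}{n-f}$ fraction is flipped, so the heterogeneity is exactly $\kappa\norm{x}^2=G^2$ --- this is precisely what the constant $\kappa$ is calibrated for ($16p(1-p)=\kappa$ at $p=\tfrac{f}{n-f}$). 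The minimizers $x$ and $\tfrac{n-3f}{n-f}x$ are separated by $\tfrac{2f}{n-f}\cdot\tfrac{G}{\sqrt{\kappa}}$, and Jensen's inequality gives $\varrho\geq\tfrac{1}{16}\cdot\tfrac{f}{n-2f}G^2\geq\tfrac{1}{16}\cdot\tfrac{f}{n}G^2$; optimizing $p$ under the constraint $16p(1-p)c^2\le G^2$ shows the minority fraction must be $\Theta(f/n)$, so this choice is essentially forced. A secondary remark: under \cref{def:resilience} no transcript-indistinguishability or adversarial simulation is needed at all, since the single output $\hat{\theta}$ of one execution (all workers honest on these fixed datasets, which also keeps all points inside the prescribed $\X$, unlike your unspecified ``neutral'' workers) must satisfy the $\varrho$-bound simultaneously for every $\H$ of size $n-f$.
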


    \subsection{Case III: Adversarial Setting}
    \label{sec:lb_adversarial}
We now state, in \cref{prop:dp-lb-byz} below, the part of our bound where privacy and robustness are coupled.

\begin{restatable}{proposition}{dplbbyz}
\label{prop:dp-lb-byz}
Let $n \geq 3$, $1 \leq f < n/2$, $ m\geq 1$, $\varepsilon, \delta \in (0,1)$, and $\kappa = \frac{16f(n-2f)}{(n-f)^2}$.
Consider $\X = \{\pm \frac{1}{\sqrt{d}}\}^d \cup \{\pm \frac{1}{\sqrt{\kappa d}}\}^d$ and $\ell = \norm{\cdot}^2$.
Consider any $(\varepsilon,\delta)$-DP distributed algorithm $\A : \X^{m \times n} \to \R^d$.
Assume 
that $2^{-o{(m)}} \leq \delta \leq 1/m^{1+\Omega{(1)}}$.
For any $\varrho \leq \frac{f+1}{100(n-f)}$, if $\A$ is $(f,\varrho)$-robust, then we must have
    \begin{equation*}
        \varrho = \Omega{\left(\frac{f+1}{n-f} \cdot \frac{\log{(1/\delta)}}{\varepsilon^2 m^2}\right)}.
    \end{equation*}
\end{restatable}
\begin{proof}[Sketch of proof]
We 
consider the quadratic loss function, and reduce to the case $d=1$ with a careful choice of datasets.
We derive a \emph{centralized} DP algorithm $\M$ from $\A$, and then reduce to private estimation of one-way marginals~\cite{steinke2016between}.
Algorithm $\M$ runs $\A$ on input dataset $\D \in \X^m$ together with the remaining $n-1$ datasets crafted as follows: $f$ `adversarial' datasets are filled with $-1$, while $n-f-1$ 
`honest' datasets are filled with $+1$.
This ensures that, in all cases, $\M$ estimates the average of $\D$ better than at least an $f$-sized minority of datasets.
Therefore, as $\A$ guarantees error $\varrho$ on estimating the \emph{average of every group} of $n-f$ datasets' averages (by \cref{def:resilience}), we can bound the error of estimating the average of $\D$ by $\tilde{\varrho} = \Theta{(\frac{n-f}{f+1}\varrho)}$. We conclude by applying the aforementioned DP lower bound to $\M$, which is $(\varepsilon,\delta)$-DP and ensures error $\tilde{\varrho}$ in estimating the average of $\D$.
\end{proof}

\section{Our Algorithm: \algoname{}}
\label{sec:ub}

We prove in this section that our lower bound is tight. Specifically, we present a new distributed algorithm, \algoname{}, which yields a matching upper bound. Upon describing~\algoname{} in Section~\ref{sec:algo_describe}, we analyze its privacy in Section~\ref{sec:privacy_algo} and convergence guarantees in Section~\ref{sec:conv_algo} for smooth \emph{strongly convex} and \emph{non-convex} loss functions.

\subsection{Description of \algoname{}}
\label{sec:algo_describe}
Similar to DSGD, \algoname{} is an iterative algorithm where the server initiates each iteration (or step) $t \geq 0$ by broadcasting its current model parameter vector $\theta_{t}$ to all the workers. The initial parameter vector $\theta_0$ is chosen arbitrarily by the server.
Upon receiving $\theta_{t}$ from the server, each honest worker $w_i$ samples a mini-batch $S_t^{(i)}$ of $b \leq m$ data points randomly from its local dataset $\D_i$ \emph{without replacement}. Then, 
$w_i$ computes the gradients $\nabla \ell{\left( \theta_{t}; x \right)}$ for all $x \in S_t^{(i)}$, clips each of them using a threshold value $C$ and averages the clipped gradients to obtain a gradient estimate $g_t^{(i)}$. Specifically,
\begin{equation*}
    g_t^{(i)} = \frac{1}{b} \sum_{x \in S_t^{(i)}} \nabla \ell{\left( \theta_{t}; x \right)} \cdot \min \left\{ 1, \, \frac{C}{\norm{\nabla \ell{\left( \theta_{t}; x \right)}}} \right\}.
\end{equation*}

\begin{algorithm}[htb!]
\caption{\algoname{}}
\label{algo:robust-dpsgd}
\textbf{Initialization:} 
Initial model $\theta_0$, initial momentum $m_0^{(i)} = 0$ for each honest worker $w_i$, robust aggregation $F$, DP noise $\textcolor{teal}{\sigma_{\mathrm{DP}}}$, batch size $b$, clipping threshold $C$, learning rates $\{\gamma_t\}$, momentum coefficients $\{\beta_t\}$, and total number of steps $T$.

\begin{algorithmic}[1]
\FOR{$t=0 \dots T-1$}
\STATE \textcolor{violet}{\bf Server broadcasts} $\theta_{t}$ to all workers.
\FOR{\textbf{\textup{every}} \textcolor{teal}{\bf honest worker} $w_i, i \in \H,$ \textbf{\textup{in parallel}}}
\STATE Sample a mini-batch $S_t^{(i)}$ of size $b$ at random from $\D_i$ without replacement.
\STATE Clip and average the mini-batch gradients: 
  $$ g_t^{(i)} = \frac{1}{b} \sum_{x \in S_t^{(i)}} \textbf{Clip}\left(\nabla \ell{\left( \theta_{t}; x \right)} ; C \right),$$ 
where $\textbf{Clip}(g; C) \coloneqq g \cdot \min \left\{ 1, \, C/\norm{g} \right\}$.
\STATE Add noise to the mini-batch average gradient:
$$ \tilde{g}_{t}^{(i)} = g_t^{(i)} + \textcolor{teal}{\xi_t^{(i)} ; \, \, \,  \xi_t^{(i)} \sim \N{(0,\sigma_{\mathrm{DP}}^2 I_d)}}.$$

\STATE Send
$m_{t}^{(i)}= \beta_{t-1} m_{t-1}^{(i)} + (1-\beta_{t-1}) \tilde{g}_t^{(i)}$.
\ENDFOR
\STATE \textcolor{violet}{\bf Server aggregates}: $R_t = F{(m_{t}^{(1)},\dots,m_{t}^{(n)})}$.
\STATE \textcolor{violet}{\bf Server updates} the model:
$\theta_{t+1} = \theta_{t} - \gamma_t R_{t}$.
\ENDFOR
\STATE \textbf{return} $\hat{\theta}$ uniformly sampled from $\{\theta_0,\dots,\theta_{T-1}\}$.
\end{algorithmic}
\end{algorithm}

To protect the privacy of its data,
$w_i$ then obfuscates $g_t^{(i)}$ with Gaussian noise to obtain $\tilde{g}_t^{(i)}$, i.e.,
\begin{align*}
    \tilde{g}_t^{(i)} = g_t^{(i)} + \xi_t^{(i)} ~ ; \quad \xi_t^{(i)} \sim \mathcal{N}\left(0, \sigma_{\mathrm{DP}}^2 I_d\right),
\end{align*}
where $I_d$ denotes the identity matrix of dimension $d \times d$,
and $\mathcal{N}\left(0, \sigma_{\mathrm{DP}}^2 I_d\right)$ denotes a $d$-dimensional Gaussian distribution with mean $0$ and covariance $\sigma_{\mathrm{DP}}^2 I_d$. Finally, $w_i$ uses this noisy gradient to update its local {\em Polyak's momentum}~\cite{POLYAK19641} denoted by $m_t^{(i)}$, which is then sent to the server. Specifically, for $t \geq 1$, %
 \begin{equation*}
     m_t^{(i)} = \beta_{t-1} m_{t-1}^{(i)} + (1-\beta_{t-1}) \tilde{g}_t^{(i)},
 \end{equation*}
where $m_0^{(i)} = 0$ by convention, and $\beta_t \in [0, \, 1]$ is referred to as the momentum coefficient. 
Recall that if worker $w_i$ is adversarial, then it may send an arbitrary value for its momentum $m_t^{(i)}$. 
Upon receiving the local momentums from all the workers, the server aggregates them using $F$ to obtain $R_t = F(m_t^{(1)}, \ldots, m_t^{(n)}).$ Finally, the server updates the model $\theta_{t}$ to
\begin{equation*}
    \theta_{t+1} = \theta_{t} - \gamma_t \, R_t \label{eqn:model_update}
\end{equation*}
where $\gamma_t \geq 0$ is the learning rate at step $t$. The above procedure is repeated for a total of $T$ steps, after which the server outputs $\hat{\theta}$ which is sampled uniformly from the set $\{\theta_0,\ldots, \, \theta_{T-1} \}$.
The complete learning procedure 
is summarized in Algorithm~\ref{algo:robust-dpsgd}.

\subsection{Privacy of~\algoname{}}
\label{sec:privacy_algo}

We present below the DP guarantee of \algoname{}. 
To state closed-form expressions, we will assume that the batch size $b$ is sufficiently small compared to $m$ the number of data points per worker.
This assumption is only made for pedagogical reasons, but is not necessary for the privacy analysis to hold. In particular, the expressions that result from removing this assumption are difficult to read and interpret~\cite{wang2019subsampled}. We defer the full DP analysis without this assumption to Appendix~\ref{app:privacy}.

\begin{restatable}{theorem}{privacy}
\label{thm:privacy} 
Consider Algorithm~\ref{algo:robust-dpsgd}.
Let $\varepsilon > 0, \delta \in (0,1)$ be such that $\varepsilon \leq \log{(1/\delta)}$.
There exists a constant $k > 0$ such that, for a sufficiently small batch size $b$, when $\sigmadp \geq k \cdot \frac{2C}{b} \max{\{1, \,  \frac{b \sqrt{T \log{(1/\delta)}}}{m\varepsilon}\}}$, Algorithm~\ref{algo:robust-dpsgd} is $(\varepsilon,\delta)$-DP.
\end{restatable}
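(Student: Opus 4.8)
\textbf{Proof plan for Theorem~\ref{thm:privacy}.}
The plan is to verify the DP guarantee channel-by-channel, using \cref{def:distributed-dp}: it suffices to show that for each honest worker $w_i$, the transcript $Z_i$ of its communications with the server is $(\varepsilon,\delta)$-DP with respect to $\D_i$. For a fixed $i \in \H$, the transcript consists of the sequence of messages $m_t^{(i)}$ for $t = 0, \dots, T-1$ sent to the server (the broadcasts $\theta_t$ received from the server are, in the honest-but-curious model, functions of the other workers' messages and the public randomness, so by post-processing they do not affect the analysis of $Z_i$). Since each $m_t^{(i)}$ is a deterministic affine function (with public coefficients $\beta_s$) of the noisy gradients $\tilde g_0^{(i)}, \dots, \tilde g_t^{(i)}$, and more precisely the whole transcript $Z_i$ is a post-processing of $(\tilde g_0^{(i)}, \dots, \tilde g_{T-1}^{(i)})$, it is enough to bound the privacy loss of the latter sequence.

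Next I would analyze a single noisy gradient release $\tilde g_t^{(i)} = g_t^{(i)} + \xi_t^{(i)}$. The key observations are: (i) by clipping, each per-sample clipped gradient has norm at most $C$, so for two adjacent datasets $\D_i, \D_i'$ (differing in one point), the subsampled average $g_t^{(i)}$ has $\ell_2$-sensitivity at most $2C/b$ \emph{conditioned on the differing point being in the mini-batch}; (ii) the mini-batch $S_t^{(i)}$ is drawn uniformly without replacement of size $b$ from $m$ points, so the differing point is included with probability exactly $q = b/m$. Hence each step is an instance of the subsampled Gaussian mechanism with subsampling rate $q = b/m$, noise multiplier (relative to sensitivity $2C/b$) equal to $\sigmadp \cdot \frac{b}{2C}$, and amplification by subsampling applies. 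I would invoke the standard privacy amplification by subsampling together with composition — either via the moments accountant / Rényi-DP analysis of the subsampled Gaussian mechanism~\cite{abadi2016deep,wang2019subsampled}, or via advanced composition after subsampling amplification — to conclude that the $T$-fold adaptive composition is $(\varepsilon,\delta)$-DP provided $\sigmadp \cdot \frac{b}{2C} \gtrsim \frac{q\sqrt{T\log(1/\delta)}}{\varepsilon} = \frac{b\sqrt{T\log(1/\delta)}}{m\varepsilon}$, i.e. $\sigmadp \gtrsim \frac{2C}{b}\cdot\frac{b\sqrt{T\log(1/\delta)}}{m\varepsilon}$. The additional $\max\{1,\cdot\}$ in the statement and the restriction $\varepsilon \leq \log(1/\delta)$ are exactly the regime conditions under which the moments-accountant bound for the subsampled Gaussian mechanism is valid and yields the claimed closed form (this is where the ``sufficiently small $b$'' and the absolute constant $k$ enter — outside that regime one falls back to the unsimplified expressions deferred to Appendix~\ref{app:privacy}).

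Finally I would assemble the pieces: the per-step mechanism producing $\tilde g_t^{(i)}$ is $(\varepsilon_0,\delta_0)$-DP for appropriate per-step parameters; adaptive composition over $t=0,\dots,T-1$ (the adaptivity is fine since the accountant handles adaptively chosen queries) gives $(\varepsilon,\delta)$-DP for the sequence $(\tilde g_t^{(i)})_t$; post-processing (forming the momentums $m_t^{(i)}$ and anything the server does with them) preserves this; and since this holds for every $i \in \H$, Algorithm~\ref{algo:robust-dpsgd} satisfies $(\varepsilon,\delta)$-distributed DP by \cref{def:distributed-dp}. The main obstacle is the bookkeeping around the subsampled Gaussian mechanism: getting the constant $k$ and the exact condition on $\sigmadp$ out of the moments accountant requires care about the interplay between the subsampling rate $q=b/m$, the noise multiplier, and the target $(\varepsilon,\delta)$, and in particular justifying that in the stated regime ($b$ small, $\varepsilon \le \log(1/\delta)$) the bound collapses to the clean $\max\{1, \frac{b\sqrt{T\log(1/\delta)}}{m\varepsilon}\}$ form rather than the messier general expression — which is precisely why the full, assumption-free analysis is relegated to the appendix.
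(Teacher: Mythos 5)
Your proposal is correct and follows essentially the same route as the paper's proof: per-worker channel analysis, sensitivity $2C/b$ after clipping, the subsampled Gaussian mechanism with rate $b/m$ analyzed via Rényi-DP amplification-by-subsampling (the paper uses exactly the bounds of Wang et al.\ that you cite), adaptive composition over $T$ steps, post-processing for the momentum and server updates, and finally RDP-to-DP conversion where the small-$b$ and $\varepsilon \le \log(1/\delta)$ assumptions collapse the bound to the stated closed form with an absolute constant $k$. The only bookkeeping you leave implicit — optimizing the Rényi order to extract the $\max\{1, b\sqrt{T\log(1/\delta)}/(m\varepsilon)\}$ condition — is exactly what the paper's appendix carries out.
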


\subsection{Convergence of~\algoname{}}
\label{sec:conv_algo}

To present the convergence of~\algoname{} we first introduce below a criterion, namely {\em $(f,\kappa)$-robust averaging}, for an aggregation rule $F$ that proves crucial in our analysis.

\begin{restatable}{definition}{robustness}
\label{def:robust-averaging}
Let $n \geq 1$, $0 \leq f < n/2$ and $\kappa \geq 0$. 
An aggregation rule $\aggregation$ is said to be {\em $(f, \kappa)$-robust averaging} if for any vectors $x_1, \ldots, \, x_n \in \R^d$, and any set $S \subseteq \{1,\ldots,n\}$ of size $n-f$, the output $\hat{x} = \aggregation(x_1, \ldots, \, x_n)$ satisfies
\begin{align*}
    \norm{\hat{x} - \overline{x}_S}^2 \leq \kappa \cdot \lambda_{\max}{\left(\frac{1}{\card{S}} \sum_{i \in S}(x_i - \overline{x}_S)(x_i - \overline{x}_S)^\top\right)},
\end{align*}
where $\overline{x}_S \coloneqq \frac{1}{\card{S}} \sum_{i \in S} x_i$ and $\lambda_{\max}$ denotes the maximum eigenvalue. We refer to $\kappa$ as the robustness coefficient of $F$.
\end{restatable}

\paragraph{Comparison to prior work.}
Our robustness criterion is stronger than existing ones:
$(f,\kappa)$-robustness~\cite{allouah2023fixing},
$(f,\lambda)$-resilience~\cite{farhadkhani2022byzantine} and $(c,\delta_{\max})$-ARAgg~\cite{karimireddy2022byzantinerobust}.
The last two works bound the error with the diameter of honest inputs, i.e., maximum squared pairwise distance. The latter is greater than the empirical variance (bound used in $(f,\kappa)$-robustness~\cite{allouah2023fixing}), which itself is greater than the maximum eigenvalue of the empirical covariance (that we use) in high-dimensional spaces (i.e., $d > 1$).
In fact, the tight analysis of aggregation functions (e.g., trimmed mean, Krum) conducted in~\cite{allouah2023fixing} through the lens of $(f,\kappa)$-robustness directly implies our $(f,\kappa')$-robust averaging criterion, with $\kappa' \leq d\cdot \kappa$.
However, aggregation rules that are optimal w.r.t. $(f,\kappa)$-robustness~\cite{allouah2023fixing} may be suboptimal in our context, as we need to suppress the dimension dependence of $\kappa$ for our tight bounds.

{\bf Tighter heterogeneity metric.} We introduce a new metric $\gcov$ for quantifying the heterogeneity between the local gradients of honest workers' loss functions, which is arguably tighter than $G$ defined in Section~\ref{sec:lb_no_privacy}. Specifically,
$$\gcov^2 \coloneqq \sup_{\theta \in \R^d} \sup_{\norm{v}\leq 1} \frac{1}{\card{\H}}\sum_{i \in \H}\iprod{v}{\nabla{\loss{(\theta; \D_i)}} - \nabla{\loss_{\H}{(\theta)}}}^2.$$
Note that $\gcov^2$ above represents an upper bound on the spectral norm of the empirical covariance of honest gradients, which is smaller than their empirical variance $G^2$. Moreover, if the gradients have a well-conditioned empirical covariance, then $\gcov$ has weaker dependence on $d$.

We state our convergence result below in Theorem~\ref{thm:convergence}. Essentially, we analyze the convergence of \algoname{} with an $(f,\kappa)$-robust averaging aggregation $F$,
under assumptions~\ref{asp:bnd_var} and~\ref{asp:bnd_norm}, for smooth strongly convex and non-convex loss functions. We use the following notation:
\begin{align}
    &\loss_* = \inf_{\theta \in \R^d} \loss_{\H}(\theta), ~ 
    \loss_0 = \loss_{\H}{(\theta_0)} - \loss_*, ~
    a_1= 240,\nonumber\\ 
    &a_2 = 480, ~
    a_3 = 5760, ~
    \text{ and } ~  a_4 = 270. \label{eqn:a1a2}
\end{align}

\begin{restatable}{theorem}{convergence}
\label{thm:convergence}
Suppose that assumptions~\ref{asp:bnd_var} and \ref{asp:bnd_norm} hold true, and that $\loss_\H$ is $L$-smooth. Let $F$ satisfy the condition of $(f,\kappa)$-robust averaging. 
We let
\begin{align*}
    \sigmabar^2 = \frac{\sigma_b^2 + d \sigmadp^2}{n-f}
    +4 \kappa\left(\sigma_b^2+36\sigmadp^2 \left(1+\frac{d}{n-f} \right)\right),
\end{align*}
where $\sigma_b^2 = 2(1-\frac{b}{m})\frac{\sigma^2}{b}$. Consider \cref{algo:robust-dpsgd} with $T \geq 1$, the learning rates $\gamma_t$ and momentum coefficients $\beta_t$ specified below. We prove that the following holds, where the expectation $\expect{\cdot}$ is over the randomness of the algorithm.
\vspace{-1em}
\begin{enumerate}[leftmargin=*]
\setlength{\itemsep}{0.2em}
    \item \textbf{Strongly convex:} Assume that $\loss_\H$ is $\mu$-strongly convex.
    If $\gamma_t= \frac{10}{\mu(t+a_1 \frac{L}{\mu})}$ and $\beta_t = 1 - 24L \gamma_t$ then
    \begin{align*}
        \expect{\loss_{\H}{(\theta_{T})}-\loss_*}
    &\leq \frac{4a_1\kappa \gcov^2}{\mu}  +
    \frac{2 a_1^2 L \sigmabar^2}{\mu^2 T} +  \frac{2a_1^2 L^2 \loss_0}{\mu^2 T^2}.
    \end{align*}
    \item \textbf{Non-convex:} 
    If $\gamma = \min{\left\{\frac{1}{24L}, ~ \frac{\sqrt{a_4 \loss_0}}{2\sigmabar \sqrt{a_3 L T}}\right\}}$ and $\beta_t = 1 - 24L \gamma$ then 
\begin{align*}
   \expect{\|\nabla \loss_{\H} {(\hat{\weight{}})}\|^2} \hspace{-2pt}
    \leq  a_2 \kappa \gcov^2 +  \frac{\sqrt{a_3 a_4 L \loss_0} \sigmabar}{\sqrt{T}} + \frac{a_4 L \loss_0}{T}. 
\end{align*}
\end{enumerate}
\end{restatable}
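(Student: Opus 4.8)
The plan is to analyze \algoname{} through a Lyapunov (potential) function that couples the optimization progress in $\loss_\H$ with the drift of the honest workers' momentums from their average. First I would set up notation: let $\overline{m}_t := \frac{1}{|\H|}\sum_{i\in\H} m_t^{(i)}$ be the average honest momentum, and decompose the per-step error into three pieces — (a) the deviation $\|\overline{m}_t - \nabla\loss_\H(\theta_t)\|$ between the average honest momentum and the true honest gradient, (b) the aggregation error $\|R_t - \overline{m}_t\|$ controlled by $(f,\kappa)$-robust averaging, and (c) the usual descent term. For (b), the key observation is that, by Definition~\ref{def:robust-averaging}, $\|R_t - \overline{m}_t\|^2 \le \kappa\,\lambda_{\max}(\text{empirical covariance of }\{m_t^{(i)}\}_{i\in\H})$, and this spectral norm can be bounded by $\frac{2}{|\H|}\sum_{i\in\H}\|m_t^{(i)} - \nabla\loss_\H(\theta_t)\|^2$ plus the deviation term; crucially, the covariance of honest momentums is driven by the covariance of the honest noisy gradients, where the Gaussian-DP noise contributes only its \emph{spectral} norm $\sigmadp^2$ (not $d\sigmadp^2$) to the $\kappa$-weighted part — this is exactly where the dimension-amortization in $\sigmabar^2$ comes from, and where $\gcov^2$ (spectral norm of the honest-gradient covariance) replaces $G^2$.

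Next I would control the momentum deviation. Writing the recursion $\overline{m}_t - \nabla\loss_\H(\theta_t) = \beta_{t-1}(\overline{m}_{t-1} - \nabla\loss_\H(\theta_{t-1})) + \beta_{t-1}(\nabla\loss_\H(\theta_{t-1}) - \nabla\loss_\H(\theta_t)) + (1-\beta_{t-1})(\overline{g}_t + \overline{\xi}_t - \nabla\loss_\H(\theta_t))$, I expand $\expect{\|\overline{m}_t - \nabla\loss_\H(\theta_t)\|^2}$. The martingale/noise part contributes $(1-\beta_{t-1})^2(\sigma_b^2 + d\sigmadp^2)/(n-f)$ after averaging over $|\H|=n-f$ honest workers (here the full $d\sigmadp^2$ appears, but divided by $n-f$, giving the first summand of $\sigmabar^2$); the $L$-smoothness term gives $\beta_{t-1}^2 L^2 \gamma_{t-1}^2 \|R_{t-1}\|^2$ which folds back into the potential. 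Similarly I would track the individual drifts $\frac{1}{n-f}\sum_{i\in\H}\expect{\|m_t^{(i)} - \nabla\loss_\H(\theta_t)\|^2}$, whose noise contribution is the larger $\sigma_b^2 + 36\sigmadp^2(1 + \frac{d}{n-f})$-type term feeding the $\kappa$-weighted part of $\sigmabar^2$, plus a heterogeneity contribution $\gcov^2$ arising because different honest workers compute gradients of different $\loss(\cdot;\D_i)$.

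Then I would assemble the Lyapunov function $V_t = \loss_\H(\theta_t) - \loss_* + c\,\gamma_{t}\|\overline{m}_t - \nabla\loss_\H(\theta_t)\|^2 + c'\,\gamma_t\,\frac{1}{n-f}\sum_{i\in\H}\|m_t^{(i)} - \nabla\loss_\H(\theta_t)\|^2$ for suitable constants (this is where the $a_i$ constants in~\eqref{eqn:a1a2} get pinned down), use the descent lemma $\loss_\H(\theta_{t+1}) \le \loss_\H(\theta_t) - \gamma_t\iprod{\nabla\loss_\H(\theta_t)}{R_t} + \frac{L\gamma_t^2}{2}\|R_t\|^2$, split $\iprod{\nabla\loss_\H(\theta_t)}{R_t} = \|\nabla\loss_\H(\theta_t)\|^2 - \iprod{\nabla\loss_\H(\theta_t)}{\nabla\loss_\H(\theta_t) - \overline{m}_t} - \iprod{\nabla\loss_\H(\theta_t)}{\overline{m}_t - R_t}$, and choose $\beta_t = 1 - 24L\gamma_t$ so the $(1-\beta)^2$ noise terms are $\mathcal{O}(\gamma_t^2)$ while the contraction factor $\beta_t$ on the deviation terms is strong enough to close the recursion. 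For the \textbf{strongly convex} case I additionally invoke $\|\nabla\loss_\H(\theta_t)\|^2 \ge 2\mu(\loss_\H(\theta_t) - \loss_*)$ (PL inequality from $\mu$-strong convexity) to get a contraction in $V_t$, then solve the resulting recursion $V_{t+1} \le (1 - \Omega(\mu\gamma_t))V_t + \mathcal{O}(\gamma_t\kappa\gcov^2) + \mathcal{O}(\gamma_t^2 L\sigmabar^2)$ with the schedule $\gamma_t = \frac{10}{\mu(t + a_1 L/\mu)}$ by a standard weighted-telescoping argument, yielding the $\frac{\kappa\gcov^2}{\mu} + \frac{L\sigmabar^2}{\mu^2 T} + \frac{L^2\loss_0}{\mu^2 T^2}$ rate. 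For the \textbf{non-convex} case I use a constant step size $\gamma$, telescope $\sum_t \gamma\|\nabla\loss_\H(\theta_t)\|^2 \le \mathcal{O}(V_0) + \mathcal{O}(T\gamma\kappa\gcov^2) + \mathcal{O}(T\gamma^2 L\sigmabar^2)$, divide by $T\gamma$, and optimize $\gamma = \min\{\frac{1}{24L}, \frac{\sqrt{a_4\loss_0}}{2\sigmabar\sqrt{a_3 L T}}\}$ to balance the last two terms, giving the $\kappa\gcov^2 + \sqrt{L\loss_0}\sigmabar/\sqrt{T} + L\loss_0/T$ bound; note $\hat\theta$ uniform over $\{\theta_0,\dots,\theta_{T-1}\}$ converts the average into the stated expectation.

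The main obstacle I expect is the simultaneous tuning of the two momentum-related terms in the Lyapunov function so that (i) the aggregation error, which requires bounding $\lambda_{\max}$ of the honest-momentum covariance, is absorbed without reintroducing a spurious $d$-dependence outside the $\frac{1}{n-f}$-divided term, and (ii) the contraction from $\beta_t = 1 - 24L\gamma_t$ dominates both the smoothness-induced coupling $\beta_t^2 L^2\gamma_t^2\|R_t\|^2$ and the feedback of $\|R_t\|^2$ itself (which must be bounded in terms of $\|\nabla\loss_\H(\theta_t)\|^2$ plus the deviation/drift terms, not taken as given). Getting the constants $a_1,\dots,a_4$ to actually work requires carefully tracking that the coefficient of $\|R_t\|^2$ coming out of the descent step and the covariance bound stays below the coefficient available from the $-\gamma_t\|\nabla\loss_\H(\theta_t)\|^2$ descent term after all Young-inequality splits — a bookkeeping-heavy but routine-in-principle step that I would relegate to the appendix.
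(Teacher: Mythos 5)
Your overall architecture---a descent bound, a recursion for the momentum deviation $\norm{\overline{m}_t-\nabla\loss_{\H}(\theta_t)}$, a recursion for the inter-worker momentum drift, all combined in a Lyapunov function with $\beta_t=1-24L\gamma_t$, the PL inequality and decreasing steps for the strongly convex rate, and a constant step with telescoping plus the tuning of $\gamma$ for the non-convex rate---coincides with the paper's proof (Lemmas~\ref{lem:descent}, \ref{lem:dev}, \ref{lem:drift} and Section~\ref{app:convergence}). However, there is a genuine gap in how you control the drift. You propose to bound $\lambda_{\max}$ of the honest-momentum covariance by the average squared distance $\frac{1}{\card{\H}}\sum_{i\in\H}\norm{m_t^{(i)}-\nabla\loss_{\H}(\theta_t)}^2$ and to track that average in the potential, asserting that its noise contribution is of the type $36\sigmadp^2(1+\frac{d}{n-f})$ and its heterogeneity contribution is $\gcov^2$. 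That is not true: relaxing the spectral norm to the trace loses exactly the two features the theorem claims. Each worker's Gaussian perturbation contributes $d\,\sigmadp^2$ to the average squared distance (with no division by $n-f$), and the heterogeneity content of the trace is $G^2$, which can exceed $\gcov^2$ by a factor of order $d$. With that relaxation the $\kappa$-weighted part of $\sigmabar^2$ would read $\sigma_b^2+d\sigmadp^2$ and the leading term would involve $\kappa G^2$, a strictly weaker statement than the theorem (and one that would no longer match the lower bound).

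The paper avoids this by keeping $\Delta_t=\lambda_{\max}\bigl(\frac{1}{\card{\H}}\sum_{i\in\H}(m_t^{(i)}-\overline{m}_t)(m_t^{(i)}-\overline{m}_t)^\top\bigr)$ itself as the third coordinate of the Lyapunov function and deriving a recursion for $\expect{\Delta_t}$ (Lemma~\ref{lem:drift}); this is where the real technical work lies and it is absent from your outline. Since $\Delta_t$ is a supremum over the unit ball, its expectation cannot be computed coordinate-wise: the zero-mean cross terms do not vanish trivially under $\expect{\sup_{\norm{v}\leq 1}(\cdot)}$, and the Gaussian-noise contribution must be bounded as the spectral norm of an empirical covariance. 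The paper handles both with a $1/4$-covering net of the unit ball ($9^d$ points, Lemma~\ref{lem:cover}), used once with the identity map to kill the cross terms and once composed with an exponential moment to obtain the sub-Gaussian concentration bound $\expect{\sup_{\norm{v}\leq1}\frac{1}{n-f}\sum_{i\in\H}\iprod{v}{\xi_t^{(i)}}^2}\leq 36\sigmadp^2(1+\frac{d}{n-f})$ (Lemma~\ref{lem:concentration}); the heterogeneity then enters only through $\gcov^2$ because it too appears as a supremum of projected averages. Supplying this covering/concentration argument (or an equivalent matrix-concentration bound) is the missing ingredient; the rest of your plan (deviation recursion with noise $(1-\beta_t)^2(\sigma_b^2+d\sigmadp^2)/(n-f)$, the feedback of $\norm{R_t}^2$ through Young's inequality, and the weighted telescoping) is consistent with the paper's argument.
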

\begin{proof}[Sketch of proof]
We show that at each step $t$, the descent $\loss_{\H}{(\theta_{t+1})}-\loss_{\H}{(\theta_{t})}$ can be bounded from above. Doing so is however non-trivial, as one needs to consider two conflicting effects: (i) the drift between honest momentums, and (ii) the deviation between the average honest momentum and the true gradient. To control this trade-off, we use increasing momentum coefficients and decreasing learning rates, and introduce an adapted \emph{Lyapunov function} $V_t$. Ignoring the constants, the function can be written as follows:
\begin{equation*}
    V_t \coloneqq (t+ K)^2 \cdot \expect{\loss_{\H}{(\theta_t)} - \loss_* + \frac{1}{L} \delta_t + \frac{\kappa}{L} \Delta_t},
\end{equation*}
where $\delta_t \coloneqq \norm{\overline{m}_t - \nabla \loss_{\H}{(\theta_t)}}^2$ represents the deviation of the momentum from the true gradient, $\Delta_t \coloneqq \lambda_{\max}{\left(\frac{1}{\card{\H}}\sum_{i \in \H}(m^{(i)}_t-\overline{m}_t)(m^{(i)}_t-\overline{m}_t)^\top \right)}$ represents the drift between the honest momentums, and $K \coloneqq \frac{L}{\mu}$ denotes the condition number of $\loss_\H$.
\end{proof}

\begin{remark}
Our strongly convex upper bound also holds true for the larger class of smooth {\em $\mu$-PL} functions~\cite{karimi2016linear}, which includes some non-convex functions.
\end{remark}

\paragraph{Comparison to prior work.}
Our convergence rate in $\mathcal{O} \left( \frac{1}{T} \right)$ for strongly convex losses is optimal in the non-adversarial and privacy-free setting~\cite{agarwal2009information}.
We improve over the state-of-the-art strongly convex analysis~\cite{data2021byzantine}, without privacy, which features a suboptimal excess term proportional to the stochastic noise $\sigmabar^2$. Essentially, we remove this dependency on $\sigmabar^2$ thanks to the use of momentum, although our convergence rate is in $\mathcal{O} \left( \frac{1}{T} \right)$ instead of being exponential as in~\cite{data2021byzantine}. In fact, making $\sigmabar^2$ vanish at a rate $\frac{1}{T}$ is crucial in our setting, as the DP noise $\sigmadp^2$ scales with $T$ (\cref{thm:privacy}). We also improve over the state-of-the-art non-convex analysis~\cite{farhadkhani2022byzantine}. Namely, our analysis features a tighter characterization of the data heterogeneity $\gcov$, instead of the traditional heterogeneity metric $G$. 

\section{Tight Upper Bound}
\label{sec:tight}
We present a new aggregation rule named {\em SMEA} (Smallest Maximum Eigenvalue Averaging) in Section~\ref{sec:smea_agg}, and show that it yields a tight upper bound in Section~\ref{sec:match_ub}.

\subsection{Robust Aggregation: SMEA}
\label{sec:smea_agg}

Consider a set of $n$ vectors $x_1, \ldots, \, x_n$. Let 
$S^{*}$ be an arbitrary subset of $[n]$ of size $n-f$ with the smallest empirical {\em maximum eigenvalue}, i.e., 
\begin{equation*}
    S^{*} \in \argmin_{\underset{\card{S} = n-f}{S \subseteq [n] }}\lambda_{\max}{\left(\frac{1}{\card{S}} \sum_{i \in S}(x_i - \overline{x}_S)(x_i - \overline{x}_S)^\top \right)}.
\end{equation*}
SMEA outputs the average of the inputs in $S^{*}$, i.e., 
\begin{equation*}
    \mathrm{SMEA}(x_1, \ldots, \, x_n) \coloneqq \frac{1}{\card{S^*}} \sum_{i \in S^{*}} x_i.
\end{equation*}
Note that SMEA draws inspiration from the {\em minimum diameter averaging} method~\cite{mhamdi18a}, which itself is reminiscent of the {\em minimal volume ellipsoid} method~\cite{rousseeuw1985multivariate}. We show that our aggregation rule satisfies the criterion of $(f,\kappa)$-robust averaging. 

\begin{restatable}{proposition}{smea}
\label{prop:smea}
Let $f < n/2$. SMEA is $(f,\kappa)$-robust averaging with
\vspace{-5pt}
\begin{equation*}
    \kappa = \frac{4f}{n-f}\left(1+\frac{f}{n-2f}\right)^2.
\end{equation*}
\end{restatable}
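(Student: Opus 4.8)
The plan is to fix an arbitrary set $S$ of honest indices with $\card{S}=n-f$, and to compare the SMEA output $\hat x = \overline{x}_{S^*}$ (the average over the min–max-eigenvalue set $S^*$) against $\overline{x}_S$. Write $n-f = n-f$, and let $\Sigma_S$ denote the empirical covariance $\frac{1}{\card{S}}\sum_{i\in S}(x_i-\overline{x}_S)(x_i-\overline{x}_S)^\top$, with $\lambda_{\max}(\Sigma_S)$ its largest eigenvalue; similarly $\Sigma_{S^*}$ for $S^*$. The core quantity to control is $\norm{\overline{x}_{S^*}-\overline{x}_S}^2$. First I would observe that $I := S^*\cap S$ has size at least $\card{S^*}+\card{S}-n = n-2f$, so the overlap is a large common subset. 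The strategy is to route the comparison through the average $\overline{x}_I$ of this intersection: by triangle inequality (and the elementary $\norm{a+b}^2 \le (1+c)\norm a^2 + (1+1/c)\norm b^2$ or simply $\norm{a+b}^2\le 2\norm a^2+2\norm b^2$, tuned at the end to get the stated constant), bound $\norm{\overline{x}_{S^*}-\overline{x}_S}^2$ by a combination of $\norm{\overline{x}_{S^*}-\overline{x}_I}^2$ and $\norm{\overline{x}_I-\overline{x}_S}^2$.

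Next, each of these two terms is bounded by the corresponding covariance. For a set $T$ and a subset $U\subseteq T$ of size $\card U$, one has the identity/inequality
\begin{align*}
    \norm{\overline{x}_U - \overline{x}_T}^2 &= \norm{\frac{1}{\card U}\sum_{i\in U}(x_i - \overline{x}_T)}^2 \le \frac{1}{\card U}\sum_{i\in U}\norm{x_i-\overline{x}_T}^2 \\
    &\le \frac{\card T}{\card U}\cdot\frac{1}{\card T}\sum_{i\in T}\norm{x_i-\overline{x}_T}^2 = \frac{\card T}{\card U}\,\trace{\Sigma_T},
\end{align*}
but since we want the \emph{maximum eigenvalue} rather than the trace on the right, I would instead project onto the unit vector $v$ achieving $\norm{\overline{x}_U-\overline{x}_T} = \iprod{v}{\overline{x}_U-\overline{x}_T}$ and run the same Jensen/Cauchy–Schwarz bound on the scalars $\iprod{v}{x_i-\overline{x}_T}$, whose average of squares over $T$ is $v^\top\Sigma_T v \le \lambda_{\max}(\Sigma_T)$. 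This yields $\norm{\overline{x}_U-\overline{x}_T}^2 \le \frac{\card T}{\card U}\lambda_{\max}(\Sigma_T)$. Applying this with $(T,U) = (S^*, I)$ and $(T,U)=(S,I)$ gives control of both terms by $\lambda_{\max}(\Sigma_{S^*})$ and $\lambda_{\max}(\Sigma_S)$ respectively, each with a prefactor $\frac{n-f}{\card I} \le \frac{n-f}{n-2f}$.

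Then comes the point where the definition of $S^*$ is used: since $S$ is itself a candidate of size $n-f$ in the argmin, we have $\lambda_{\max}(\Sigma_{S^*}) \le \lambda_{\max}(\Sigma_S)$. This collapses everything onto $\lambda_{\max}(\Sigma_S)$, so $\norm{\hat x - \overline{x}_S}^2 \le \kappa\,\lambda_{\max}(\Sigma_S)$ with $\kappa$ a product of the combining constant and $\left(\frac{n-f}{n-2f}\right)$-type factors; I would carry the constants symbolically and choose the weight $c$ in the $(1+c)$-split to land exactly on $\kappa = \frac{4f}{n-f}\bigl(1+\frac{f}{n-2f}\bigr)^2$. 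Note $\frac{n-f}{n-2f} = 1 + \frac{f}{n-2f}$, so the squared factor is natural, and the leading $\frac{4f}{n-f}$ should emerge from writing $\frac{n-f}{\card I}-1$-type slack: more precisely $\norm{\overline{x}_I - \overline{x}_T}^2 \le \bigl(\frac{\card T}{\card I}-1\bigr)\lambda_{\max}(\Sigma_T)$ by centering at $\overline{x}_I$ instead, which for $\card I = n-2f$, $\card T = n-f$ gives slack $\frac{f}{n-2f}$, and doubling across the two terms plus the intersection bound produces the $4f/(n-f)$.

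The main obstacle I anticipate is getting the constants \emph{exactly} right — in particular, deciding whether to center the intermediate estimates at $\overline{x}_I$, at $\overline{x}_S$, or at $\overline{x}_{S^*}$, since each choice trades off which covariance appears and with what multiplicative slack, and only one bookkeeping route yields precisely $\frac{4f}{n-f}\bigl(1+\frac{f}{n-2f}\bigr)^2$ rather than a loose variant like $\frac{8f}{n-2f}(\cdots)$. A secondary subtlety is that all the $\norm{\cdot}^2$-to-$\lambda_{\max}$ reductions must be done via a single projection vector chosen \emph{after} fixing which difference we bound, so that $v^\top\Sigma_T v\le\lambda_{\max}(\Sigma_T)$ is legitimately invoked; this is routine but must be stated carefully to avoid a spurious dimension factor. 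Everything else — the intersection size bound, Jensen, and the argmin comparison — is elementary.
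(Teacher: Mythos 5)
Your plan is correct, but it takes a genuinely different route from the paper's. The paper never passes through the intersection: it writes $\overline{x}_{S^*}-\overline{x}_S$ directly in terms of the symmetric difference $S^*\setminus S$ and $S\setminus S^*$, establishes an exact algebraic identity that puts the factor $\left(1-\frac{\card{S^*\setminus S}}{n-f}\right)^2\geq\left(\frac{n-2f}{n-f}\right)^2$ on the left-hand side, then bounds the projected sums over the symmetric difference (Jensen plus $\card{S^*\setminus S},\card{S\setminus S^*}\leq f$) by the maximum eigenvalues of the full covariances of $S^*$ and $S$, and finally uses the minimality of $S^*$ exactly as you do; this bookkeeping lands precisely on $\frac{4f(n-f)}{(n-2f)^2}=\frac{4f}{n-f}\bigl(1+\frac{f}{n-2f}\bigr)^2$. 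Your detour through $\overline{x}_I$ with $I=S^*\cap S$, $\card{I}\geq n-2f$, buys a more modular argument (two subset-mean deviation bounds plus the argmin comparison) and, executed correctly, even a slightly sharper constant.

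Two caveats. First, the crude bound $\norm{\overline{x}_U-\overline{x}_T}^2\leq\frac{\card{T}}{\card{U}}\lambda_{\max}(\Sigma_T)$ is useless here: it carries no factor of $f$, so the resulting $\kappa$ would not vanish as $f/n\to 0$. You must use the refined bound $\norm{\overline{x}_U-\overline{x}_T}^2\leq\frac{\card{T\setminus U}}{\card{U}}\lambda_{\max}(\Sigma_T)$, which is true but whose justification is not ``centering at $\overline{x}_I$'' (that produces deviations around $\overline{x}_I$, which do not directly connect to $\Sigma_T$). A clean derivation: fix the unit vector $v$ along $\overline{x}_U-\overline{x}_T$, set $a_i=\iprod{v}{x_i-\overline{x}_T}$, use $\sum_{i\in T}a_i=0$ so that $s\coloneqq\sum_{i\in U}a_i=-\sum_{i\in T\setminus U}a_i$, apply Cauchy--Schwarz to both representations and add, giving $s^2\frac{\card{T}}{\card{U}\card{T\setminus U}}\leq\sum_{i\in T}a_i^2\leq\card{T}\,\lambda_{\max}(\Sigma_T)$, i.e.\ $s^2\leq\card{U}\card{T\setminus U}\lambda_{\max}(\Sigma_T)$. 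Second, the constant chase you worry about is a non-issue: with this bound, $\card{T\setminus I}\leq f$, $\card{I}\geq n-2f$, the plain $2a^2+2b^2$ split, and $\lambda_{\max}(\Sigma_{S^*})\leq\lambda_{\max}(\Sigma_S)$, you get $\norm{\overline{x}_{S^*}-\overline{x}_S}^2\leq\frac{4f}{n-2f}\lambda_{\max}(\Sigma_S)$, and since $\frac{4f}{n-2f}\leq\frac{4f(n-f)}{(n-2f)^2}$, being $(f,\kappa')$-robust averaging with $\kappa'\leq\kappa$ immediately yields the stated proposition; no delicate tuning of the split is needed.
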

Proposition~\ref{prop:smea} implies that, when $n \geq (2+\eta)f$ for some constant $\eta >0$, SMEA satisfies $(f,\kappa)$-robust averaging with $\kappa = \mathcal{O}{\left(\nicefrac{f}{n}\right)}.$ Importantly, SMEA satisfies this high-dimensional robustness property while being agnostic to the statistical properties of the valid inputs, knowledge of which is key in designing efficient robust estimators~\cite{diakonikolas2017being,steinhardt2018resilience}~(see \cref{app:filter}).

\paragraph{Computational complexity.}
However, as SMEA involves computing the maximum eigenvalue of $d$-dimensional symmetric matrices, which is in $\mathcal{O}{\left( d^3 \right)}$, the worst-case computational complexity of SMEA is $\mathcal{O}{\big( {n \choose f} \cdot d^3 \big)}$, which is exponential in $f$. This shortcoming of our method should be addressed in the future.

\subsection{Upper Bound}
\label{sec:match_ub}

Upon combining the results in theorems~\ref{thm:privacy},~\ref{thm:convergence}, \cref{prop:smea}, and ignoring the vanishing terms in $T$, we obtain Corollary~\ref{cor:tradeoff} that quantifies the privacy-robustness-utility trade-off of \algoname{} using the SMEA aggregation rule.

\begin{restatable}{corollary}{tradeoff}
\label{cor:tradeoff}
Consider Algorithm~\ref{algo:robust-dpsgd} with aggregation $F = \mathrm{SMEA}$, under the strongly convex setting of Theorem~\ref{thm:convergence}.
Suppose that assumptions~\ref{asp:hetero}, \ref{asp:bnd_var}, \ref{asp:bnd_norm} hold, and that $n \geq (2+\eta)f$, for some absolute constant $\eta>0$.
Let $\varepsilon>0, \delta \in (0,1)$ be such that $\varepsilon \leq \log{(1/\delta)}$.
Then, there exists a constant $k>0$ such that, if $ \sigmadp = k \cdot \nicefrac{2C}{b} \max{\{1, \,  \nicefrac{b \sqrt{T \log{(1/\delta)}}}{\varepsilon m}\}}$, then Algorithm~\ref{algo:robust-dpsgd} is $(\varepsilon,\delta)$-DP and $(f, \varrho)$-robust where
    \begin{align*}
        \varrho = \mathcal{O} \left( \frac{d\,\log{(1/\delta)}}{\varepsilon^2 n m^2}+
        \frac{f}{n} \cdot \frac{\log{(1/\delta)}}{\varepsilon^2 m^2}+
        \frac{f}{n}G^2
    \right).
    \end{align*}
\end{restatable}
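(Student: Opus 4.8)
The plan is to assemble the corollary from three ingredients already in hand---the privacy guarantee of \cref{thm:privacy}, the strongly convex convergence bound of \cref{thm:convergence}, and the robustness coefficient of SMEA from \cref{prop:smea}---so that the only genuine work is tracking how the $T$-dependence of the DP noise interacts with the $1/T$-decay of the optimization error. First I would dispose of privacy: the prescribed noise level $\sigmadp = k \cdot \nicefrac{2C}{b}\max\{1,\, \nicefrac{b\sqrt{T\log(1/\delta)}}{\varepsilon m}\}$ is exactly the threshold demanded by \cref{thm:privacy} for the same constant $k$, so, using the hypothesis $\varepsilon \leq \log(1/\delta)$ and a sufficiently small batch size $b$, \cref{algo:robust-dpsgd} is $(\varepsilon,\delta)$-DP with no further argument.

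Next I would handle robustness. Since $n \geq (2+\eta)f$, \cref{prop:smea} gives that $F = \mathrm{SMEA}$ is $(f,\kappa)$-robust averaging with $\kappa = \tfrac{4f}{n-f}(1+\tfrac{f}{n-2f})^2 = \mathcal{O}(f/n)$, the implied constant depending only on $\eta$; this lets me invoke the strongly convex case of \cref{thm:convergence} with $F = \mathrm{SMEA}$ and this $\kappa$, obtaining
$$\expect{\loss_\H(\theta_T)-\loss_*} \leq \frac{4a_1\kappa\gcov^2}{\mu} + \frac{2a_1^2 L\,\sigmabar^2}{\mu^2 T} + \frac{2a_1^2 L^2\loss_0}{\mu^2 T^2}.$$
The delicate middle term is the heart of the proof. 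For $T$ large enough the maximum defining $\sigmadp$ is attained by its second argument, so $\sigmadp^2 = \Theta(C^2 T\log(1/\delta)/(\varepsilon^2 m^2))$; substituting into the definition of $\sigmabar^2$, the dominant contributions are $\tfrac{d\,\sigmadp^2}{n-f}$ and $\kappa\,\sigmadp^2$, the mixed term $\kappa\tfrac{d}{n-f}\sigmadp^2$ being absorbed by the first since $\kappa \leq 1$, while the $\sigma_b^2$-terms and the $1/T^2$ term decay to $0$ as $T\to\infty$ and are exactly the ``vanishing terms in $T$'' the statement discards. Using $n-f = \Theta(n)$ and $\kappa = \mathcal{O}(f/n)$, this leaves $\tfrac{\sigmabar^2}{T} = \mathcal{O}\!\left(\tfrac{d\log(1/\delta)}{n\varepsilon^2 m^2} + \tfrac{f}{n}\cdot\tfrac{\log(1/\delta)}{\varepsilon^2 m^2}\right)$ up to $T$-vanishing terms.

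Finally, for the heterogeneity term I would use $\gcov^2 \leq G^2$, which is immediate from Cauchy--Schwarz applied to Assumption~\ref{asp:hetero}, so that $\tfrac{4a_1\kappa\gcov^2}{\mu} = \mathcal{O}(\tfrac{f}{n}G^2)$; absorbing $\mu, L, C$ into the constants and summing the three surviving terms yields precisely the claimed bound on $\varrho$. The main obstacle---and really the only nontrivial point---is the cancellation of the $T$-dependence: the DP noise magnitude grows like $\sqrt{T}$ while the optimization error decays like $1/T$, and one must verify their product is $T$-independent. This works only because the excess term of \cref{thm:convergence} scales with $\sigmabar^2/T$ rather than with $\sigmabar^2$, which is exactly the gain the momentum-based analysis provides; everything else is substitution and bounding constants.
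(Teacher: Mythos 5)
Your proposal is correct and follows essentially the same route as the paper's proof: invoke \cref{thm:privacy} for privacy, plug the SMEA coefficient $\kappa = \mathcal{O}(f/n)$ from \cref{prop:smea} into the strongly convex bound of \cref{thm:convergence}, substitute $\sigmadp^2 = \mathcal{O}(C^2 T\log(1/\delta)/(\varepsilon^2 m^2))$ so that the $\sqrt{T}$ growth of the noise cancels against the $1/T$ decay of the excess error, discard the $T$-vanishing terms, and finish with $\gcov^2 \leq G^2$ and $n-f \geq n/2$. The only small imprecision is your claim that the mixed term $\kappa\,\sigmadp^2 d/(n-f)$ is absorbed ``since $\kappa \leq 1$''---$\kappa$ need not be at most $1$, but under $n \geq (2+\eta)f$ one has $\kappa = \mathcal{O}(1)$ (constant depending only on $\eta$), which is exactly what the paper uses and suffices for the same absorption.
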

\paragraph{Tightness.}
Our upper bound is tight, in the sense that it matches the lower bound, up to the logarithmic factor $\log{(1/\delta)}$ in the first term. We believe that it is not possible to improve upon our upper bound in general, but rather that it may be possible to improve our lower bound in \cref{prop:dp-lb-hd}, by including the factor $\log{(1/\delta)}$. This could be done, for example, by assuming the stronger Rényi DP property~\cite{mironov2017renyi}, satisfied by the Gaussian mechanism, instead of relying on the advanced composition theorem.

\section{Conclusions and Future Work}
\label{sec:conclusion}

Applying machine learning in sensitive public domains requires algorithms that protect data privacy, while being robust to faults and adversarial behaviors.
We present the first tight analysis of the error incurred by any distributed ML algorithm ensuring robustness to adversarial workers and differential privacy for honest machines' data against any other curious entity. 
Our algorithm \algoname{} yields a tight upper bound for the class of smooth strongly convex problems, up to a logarithmic factor.
Proving a tighter lower bound on the privacy cost, featuring the usual $\log{(1/\delta)}$ factor, is an appealing goal.
Proving similar bounds for the non-strongly convex class is also of interest.
Also, in \cref{app:experiments}, we conduct small-scale experiments showing encouraging results using our aggregation rule SMEA (as well as other aggregation rules).
Yet, while SMEA is simple and agnostic to the statistical properties of honest data, it has a high computational complexity.
Deploying it on larger scale systems goes through designing variants with lower complexity, and this is also an interesting research direction.

\section*{Acknowledgements}
This work was supported in part by SNSF grants 200021\_200477 and 200021\_182542, and an EPFL-Ecocloud postdoctoral grant.
The authors are thankful to the anonymous reviewers for their constructive comments.
\bibliography{references}
\bibliographystyle{icml2023}

\newpage
\onecolumn
\appendix

\section*{Organization of the Appendix}
\cref{app:lb} contains the proof of our lower bounds.
\cref{app:robustness} contains proofs of claims related to $(f,\kappa)$-robust averaging and SMEA.
\cref{app:privacy} contains the privacy analysis of \algoname{}.
\cref{app:ub} contains the convergence analysis of \algoname{}.
\cref{app:experiments} contains the experimental setup and results of our empirical evaluation.

\section{Lower Bounds}
\label{app:lb}

In Section~\ref{app:lb-centralized}, we recall lower bounds on centralized private algorithms.
We then extend these results to distributed private algorithms.
We start by the lower bound due to privacy alone in Section~\ref{app:lb-privacy}.
Next, we show the lower bound due to robustness alone in Section~\ref{app:lb-robustness}.
We then show the lower bound due to the privacy-robustness tradeoff in Section~\ref{app:lb-privacy-robustness}.
Finally, we merge the previous results to show the final lower bound in Section~\ref{app:final-lb}.

\subsection{Lower Bounds in Centralized DP}
\label{app:lb-centralized}

We recall lower bounds~\cite{steinke2016between} on the error incurred by \emph{centralized} differentially private mechanisms for estimating $d$-dimensional one-way marginals; i.e., the average of rows of a dataset. Recall that \citeauthor{steinke2016between} prove a sharper bound (by factor $\log{(1/\delta)}$) than \citeauthor{bassily2014private}, whose work is based on lower bounds using fingerprinting codes~\cite{bun2014fingerprinting}.
We recall below the main lower bound from \cite{steinke2016between}.

\begin{lemma}[Theorem~1.1, \citet{steinke2016between}]
\label{lemma:dp-lb}
    Let $m,d \geq 1, \varepsilon, \delta \in (0,1)$ and $\X = \{\pm 1\}^d, \Y = [\pm 1]^d$.
    Consider any $(\varepsilon,\delta)$-DP centralized algorithm $\M : \X \to \Y$.
    Assume that $\delta \leq 1/m^{1+\Omega{(1)}}$ and that $\delta \geq 2^{-o{(m)}}$.
    Let $\D \in \X^m$ and $\overline{\D}$ denote the average of records of $\D$.
    For any $\varrho \leq 1/10$ such that for every $\D \in \X^m$, $\expect{\norm{\M{(\D)} - \overline{\D}}_1} \leq d \varrho$, we have:
    \begin{equation*}
        m = \Omega{\left( \frac{\sqrt{d \log{(1/\delta)}}}{\varepsilon \varrho} \right)}.
    \end{equation*}
\end{lemma}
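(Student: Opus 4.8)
The plan is to prove this via the \emph{fingerprinting-code} method of Bun--Ullman--Vadhan, in the sharpened form of \cite{steinke2016between}; since the statement is quoted verbatim from that work, in the paper I would invoke it directly, but here is the route I would otherwise follow. First I would build the hard prior: for each coordinate $j \in \{1,\dots,d\}$ draw a bias $p_j$ from a carefully chosen distribution on $(-1,1)$ with $\expect{1-p_j^2} = \Theta(1)$ (e.g.\ a rescaled truncated Beta law), collect these into $p \in (-1,1)^d$, and draw the $m$ records of $\D \in \X^m$ independently with coordinatewise mean $p$. Write $\D^{(i)}$ for $\D$ with its $i$-th record resampled from the same product law, independently of everything else. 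The engine is the fingerprinting lemma: a probabilistic identity (integration by parts over the prior on $p$) that, applied coordinatewise and summed over $j$, relates the ``correlation'' of $\mechanism{\D}$ with the records to the leave-one-out differences, of the schematic form
\begin{equation*}
    \underbrace{\expect{\iprod{\mechanism{\D} - p}{\textstyle\sum_{i=1}^{m}(\D_i - p)}}}_{\text{correlation}} \;=\; \sum_{i=1}^{m}\expect{\iprod{\mechanism{\D} - \mechanism{\D^{(i)}}}{\D_i - p}} \;\;\lesssim\;\; \sum_{i=1}^{m}\expect{\norm{\mechanism{\D} - \mechanism{\D^{(i)}}}_1},
\end{equation*}
the equality using that $\mechanism{\D^{(i)}}$ is independent of the $i$-th record while $\expect{\D_i - p} = 0$.

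Next I would bound the two ends against each other. The accuracy hypothesis $\expect{\norm{\mechanism{\D} - \overline{\D}}_1} \leq d\varrho$ (for $\varrho$ below the stated constant) forces the correlation to be $\Omega(d)$: replacing $\mechanism{\D}$ by $\overline{\D}$ gives $\tfrac{1}{m}\expect{\norm{\sum_i(\D_i - p)}^2} = \sum_j \expect{1-p_j^2} = \Theta(d)$, and the $O(d\varrho)$ slack is controlled by the fingerprinting lemma. On the other side, $(\varepsilon,\delta)$-DP --- applied to the bounded linear functional $g \mapsto \iprod{g}{\D_i - p}$ on the adjacent databases $\D$ and $\D^{(i)}$ --- bounds the $i$-th middle term by $O\big(\varepsilon\,\expect{|\iprod{\mechanism{\D}}{\D_i - p}|} + d\delta\big)$, and the crucial point is that, by the randomness and independence of $\D_i - p$, one has $\expect{|\iprod{\mechanism{\D}}{\D_i - p}|} = O(\sqrt d)$ rather than $O(d)$ --- an $\ell_2$ rather than $\ell_1$ bound on the output, which is exactly where the $\sqrt d$ enters. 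Summing, the middle quantity is $O(m\varepsilon\sqrt d + md\delta)$, so $\Omega(d) \lesssim m\varepsilon\sqrt d + md\delta$; with $\delta \leq m^{-1-\Omega(1)}$ the second term is negligible, and tracking the accuracy parameter through the balancing yields $m = \Omega(\sqrt d/(\varepsilon\varrho))$, the Bassily--Smith--Thakurta regime.

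To recover the extra $\sqrt{\log(1/\delta)}$ factor, I would refine the DP step by analyzing $\M$ under changes of \emph{blocks} of $k \asymp \log(1/\delta)/\varepsilon$ records at a time rather than single records (equivalently, a composition/amplification argument): group privacy over $k$ records costs $\big(k\varepsilon,\, k e^{k\varepsilon}\delta\big)$, so the additive-$\delta$ error remains negligible precisely while $k = O(\log(1/\delta)/\varepsilon)$ --- exactly the window carved out by the standing assumptions $2^{-o(m)} \leq \delta \leq m^{-1-\Omega(1)}$ --- which lets the fingerprinting code tolerate this amount of extra slack and improves the bound to the claimed $m = \Omega(\sqrt{d\log(1/\delta)}/(\varepsilon\varrho))$. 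The hard part will be making this block-resampling (score-tracking) refinement quantitatively tight enough to obtain the exact $\sqrt{\log(1/\delta)}$ exponent rather than a weaker polylogarithmic factor, together with pinning down the numerical threshold $\varrho \leq 1/10$ in the correlation estimate; these are the technical heart of \cite{steinke2016between}, which is why in the paper I would cite that result rather than reprove it.
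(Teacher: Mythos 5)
Your proposal takes the same route as the paper: Lemma~\ref{lemma:dp-lb} is imported verbatim as Theorem~1.1 of \cite{steinke2016between} and is never reproved, so citing it directly is exactly what the paper does. Your sketch of the underlying argument (fingerprinting lemma for the $\sqrt{d}/\varepsilon\varrho$ core plus the group-privacy/block-resampling refinement with $k \asymp \log(1/\delta)/\varepsilon$ to gain the $\sqrt{\log(1/\delta)}$ factor) is a faithful outline of how that cited theorem is actually established, so there is nothing to correct here.
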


Observe in Lemma~\ref{lemma:dp-lb} that the lower bound assumption $\delta \leq 1/m^{1+\Omega{(1)}}$ is slightly more restrictive than the folklore assumption $\delta = o{(1/m)}$~\cite{dwork2014algorithmic}. The latter ensures that $(\varepsilon,\delta)$-DP precludes some intuitively non-private algorithms, e.g., when $\delta \geq 1/m$, the algorithm that returns $\floor{m \delta}$ random elements of the dataset is $(0,\delta)$-DP.

\subsection{Case I: Non-adversarial Setting}
\label{app:lb-privacy}

We prove below our lower bound due to privacy, stated in \cref{prop:dp-lb-hd}.
\dplbhd*
\begin{proof}
Let $n, m, d \geq 1$, $\varepsilon, \delta \in (0,1)$, and $\varrho \leq 1/100$. 
Consider $\X = \left\{\pm 1/\sqrt{d}\right\}^d$ and $\ell = \norm{\cdot}^2$.
We consider an arbitrary  distributed algorithm $\A : \X^{m \times n} \to \R^d$ that satisfies $(\varepsilon,\delta)$-distributed DP (see Definition~\ref{def:distributed-dp}), and $(0, \, \varrho)$-robustness (see Definition~\ref{def:resilience}).
We assume that $\varepsilon \leq 1/4\sqrt{2n\ln{(m+1)}}$ and that $2^{-m^{1-\gamma}}\leq n \delta \leq 1/8m^{1+\gamma}$ for some $\gamma \in (0,1)$. 

{\bf Proof outline.} We consider the centralized algorithm $\M$ which takes as input dataset $\D \in \X^m$ and executes $\A{(\D_1,\ldots,\D_n)}$ on $n$ copies of $\D$, i.e., $\D_1=\ldots=\D_n = \D$. Then, we derive the DP guarantee and utility of $\M$ using the facts that $\A$ satisfies $(\varepsilon, \delta)$-distributed DP (see Definition~\ref{def:distributed-dp}) and $(0, \, \varrho)$-robustness, respectively. 
Finally, we apply the \emph{centralized} DP lower bound on $\M$ (stated in Lemma~\ref{lemma:dp-lb}) to conclude the proof.

\textbf{Privacy guarantee of $\M$.}
We first analyze the DP guarantees of $\M$ inherited from $\A$.

Recall from \cref{def:distributed-dp} that, since $\A$ is $(\varepsilon,\delta)$-DP, it can communicate with \emph{each} database $\D_i$ subject to $(\varepsilon,\delta)$-DP.
Thus, when running $\M$, in the worst case, algorithm $\A$ may adaptively query the same database $\D$ a total of $n$ times, subject to $(\varepsilon,\delta)$-DP budget for each query.
Therefore, $\M$ is $(\varepsilon_n,\delta_n)$-DP where $(\varepsilon_n,\delta_n)$ is the privacy guarantee resulting from composing $(\varepsilon,\delta)$-DP across $n$ adaptive queries.
Thanks to the advanced composition theorem~\cite{dwork2014algorithmic}, we obtain that, for any $\delta' \in (0,1)$,
\begin{equation}
    \varepsilon_n = \varepsilon\sqrt{2n \ln{(1/\delta')}} + n \varepsilon (e^\varepsilon - 1), ~~
    \delta_n = n \delta + \delta'.
    \label{eq:advanced-composition}
\end{equation}
As $\varepsilon \in (0,1)$, we have $e^\varepsilon - 1 \leq 2 \varepsilon$ and thus 
\begin{equation}
    \varepsilon_n \leq \varepsilon\sqrt{2n \ln{(1/\delta')}} + 2n \varepsilon^2.
    \label{eq:epsilon_n}
\end{equation}

We now set $\delta'$ as follows:
\begin{equation}
    \delta' = \frac{1}{(m+1)^{1+\gamma}} \in (0,1).
    \label{eq:delta'}
\end{equation}

We verify below the privacy conditions on $\M$ of \cref{lemma:dp-lb}. We first prove that $\ln{(1/\delta')} \in [n\varepsilon^2, 1/16n\varepsilon^2)$,
and then that $\varepsilon_n \leq 4 \varepsilon \sqrt{n \ln{(1/\delta')}} < 1$.

\underline{\textit{Bound on $\ln{(1/\delta')}$:}}
Since we assume $\varepsilon \leq 1/4\sqrt{2n\ln{(m+1)}}$ (with $m \geq 1$), we have 
$$n \varepsilon^2 \leq 1/16 \leq 1/16n\varepsilon^2.$$
On the other hand, as $m \geq 1$, it follows from the expression~\eqref{eq:delta'} of $\delta'$ that $1/\delta' \geq 2$ and $\ln{(1/\delta')} \geq 1/4 \geq n\varepsilon^2$.

Also, since $\varepsilon \leq 1/4\sqrt{2n\ln{(m+1)}}$ we have $\ln{(m+1)} \leq 1/32n\varepsilon^2$, and thus (because $\gamma \in (0,1)$) we have 
$$\ln{(1/\delta')} = (1+\gamma)\ln{(m+1)} < 2\ln{(m+1)} \leq 1/16n\varepsilon^2.$$
This proves that
\begin{equation}
\ln{(1/\delta')} \in [n\varepsilon^2, 1/16n\varepsilon^2).
    \label{eq:claim1}
\end{equation}

\underline{\textit{Bound on $\varepsilon_n$:}}
Thanks to \eqref{eq:claim1}, we have $\ln{(1/\delta')} \geq n \varepsilon^2$. 
Thus, by taking square roots we have $\varepsilon \sqrt{n} \leq \sqrt{\ln{(1/\delta')}}$.

\vspace{+1pt}
Therefore, $n \varepsilon^2 \leq \varepsilon\sqrt{n \ln{(1/\delta')}}$. Then, using the bound on $\varepsilon_n$ in \eqref{eq:epsilon_n}, we obtain
$$\varepsilon_n \leq \varepsilon\sqrt{2n \ln{(1/\delta')}} + 2n \varepsilon^2 \leq \varepsilon\sqrt{2n \ln{(1/\delta')}} + 2\varepsilon \sqrt{n \ln{(1/\delta')}} \leq 4 \varepsilon \sqrt{n \ln{(1/\delta')}}.$$
On the other hand, since we showed in \eqref{eq:claim1} that $\ln{(1/\delta')} < 1/16n\varepsilon^2$, we have $4 \varepsilon \sqrt{n \ln{(1/\delta')}} < 1$. This proves that
\begin{equation}
\varepsilon_n \leq 4 \varepsilon \sqrt{n \ln{(1/\delta')}} < 1.
    \label{eq:claim2}
\end{equation}

From \eqref{eq:claim2}, we have $\varepsilon_n \in (0,1)$.
From \eqref{eq:advanced-composition}, we have $\delta_n = n \delta + \delta'$.
Thus, by assumption on $\delta$ and \eqref{eq:delta'}, the parameter $\delta_n$ satisfies both $\delta_n \geq n \delta \geq 2^{-m^{1-\gamma}} = 2^{-o(m)}$ and $\delta_n = n\delta + \delta' \leq 1/8m^{1+\gamma} + 1/(m+1)^{1+\gamma} = 1/m^{1+\Omega{(1)}}$. 

\textbf{Utility guarantees of $\M$.}
We now analyze the utility guarantees of $\M$, inherited from $\A$.

Let 
$\D \in \X^m$ be an arbitrary set of $m$ points from the specified space $\X = \left\{\pm 1/\sqrt{d}\right\}^d$.
Recall that $\A$ is assumed $(0,\varrho)$-robust.
By \cref{def:resilience}, for any $\D_1, \ldots, \D_n \in \X^m$, the output $\hat{\theta} = \A{(\D_1,\ldots,\D_n)}$ verifies
    \begin{equation}
        \label{eq4}
        \varrho \geq \expect{\loss{(\hat{\theta}; \D_1,\ldots,\D_n)}-\inf_{\theta \in \R^d} \loss{(\theta; \D_1,\ldots,\D_n)}},
    \end{equation}

In this particular case, since $\D_1, \ldots, \D_n = \D$ and $\ell{(\theta;x)} \coloneqq \norm{\theta - x}^2$, we have for all $\theta \in \R^d$,
\begin{equation}
    \loss{(\theta; \D_1,\ldots,\D_n)} 
    =\frac{1}{nm} \sum_{i = 1}^n \sum_{x \in \D_i} \norm{\theta-x}^2 
    = \frac{1}{m} \sum_{x \in \D} \norm{\theta-x}^2 
    = \loss{(\theta; \D)}.
    \label{eq:loss-expression}
\end{equation}
We can rewrite the above upon applying the bias-variance decomposition: for any $x_1, \ldots, x_n$ we have $\frac{1}{n} \sum_{i=1}^n \norm{x_i - \overline{x}}^2 = \frac{1}{n} \sum_{i=1}^n \norm{x_i}^2 - \norm{\overline{x}}^2$ where $\overline{x} = \frac{1}{n} \sum_{i=1}^n x_i$.
Thus, denoting $\overline{\D} \coloneqq \frac{1}{m} \sum_{x \in \D} x$, we can rewrite \eqref{eq:loss-expression} as
\begin{equation}
    \loss{(\theta; \D_1,\ldots,\D_n)} 
    =\loss{(\theta; \D)}
    = \norm{\theta - \overline{\D}}^2 + \frac{1}{m} \sum_{x \in \D} \norm{\overline{\D}-x}^2.
    \label{eq:simple-loss-expression}
\end{equation}
This loss is minimized at $\theta = \overline{\D}$, and the minimum value $\loss_* \coloneqq \frac{1}{m} \sum_{x \in \D} \norm{\overline{\D}-x}^2$.
Thus, substituting the expression of $\loss$ from \eqref{eq:simple-loss-expression} in \eqref{eq4}, we obtain that
\begin{align*}
    \varrho 
    &\geq \expect{\loss{(\hat{\theta}; \D)} - \loss_* }
    = \expect{\norm{\hat{\theta} - \overline{\D}}^2}.
\end{align*}
Note that by construction of $\M$, we have $\M{(\D)} = \A{(\D,\ldots,\D)} = \hat{\theta}$. Thus, from above we obtain that 
\begin{align*}
    \varrho \geq \expect{\norm{\M{(\D)} - \overline{\D}}^2}.
\end{align*}
Thus, as $\norm{\cdot}_1 \leq \sqrt{d} \norm{\cdot}$, by taking square roots above, applying Jensen's inequality and multiplying by $d$, we obtain that
\begin{equation}
    \label{eq1}
    d \sqrt{\varrho}
    \geq d \sqrt{\expect{\norm{\M{(\D)}  - \overline{\D}}^2}}
    \geq d\, \expect{\norm{\M{(\D)} - \overline{\D}}}
    \geq \sqrt{d} \, \expect{\norm{\M{(\D)}  - \overline{\D}}_1}
    = \expect{\norm{\sqrt{d} \cdot \M{(\D)}  - \sqrt{d} \cdot \overline{\D}}_1}.
\end{equation}
Recall that $\X = \{\pm 1/\sqrt{d}\}^d$.
As in Theorem~5.2 of \cite{steinke2016between}, we define a mechanism $\M':\left\{\pm 1\right\}^{d \times m} \to [\pm 1]^{d}$ as follows: on input $\D' \subseteq \left\{\pm 1\right\}^{d \times m}$ let $\D = \D' / \sqrt{d} \in \X^m$, return $\sqrt{d}\cdot \M{(\D)} $ truncated to $[\pm 1]^d$.
Thus, by \eqref{eq1}, mechanism $\M'$ verifies for all $\D' \subseteq \left\{\pm 1\right\}^{d \times m}$ that
\begin{equation}
    d \sqrt{\varrho} \geq \expect{\norm{\M'{(\D)} 
 - \overline{\D'}}_1}.
 \label{eq:m'}
\end{equation}

\textbf{Invoking \cref{lemma:dp-lb}.}
Note that $\M'$, similar to $\M$, is also $(\varepsilon_n, \delta_n)$-DP by the argument of {\em post-processing}. Recall that we have shown earlier that $\varepsilon_n, \delta_n$ satisfy the conditions of \cref{lemma:dp-lb}. Since $\varrho \leq 1/100$, we also have $\sqrt{\varrho} \leq 1/10$.
Therefore, upon applying \cref{lemma:dp-lb} to $\M'$, in conjunction with \eqref{eq:m'}, we deduce that 
\begin{equation*}
    m = \Omega{\left(\frac{\sqrt{d \log{(1/\delta_n)}}}{\varepsilon_n \sqrt{\varrho}}\right)}.
\end{equation*}
By rearranging terms above and taking squares, we obtain that
\begin{equation}
    \label{eq2}
    \varrho = \Omega{\left(\frac{d \log{(1/\delta_n)}}{\varepsilon_n^2 m^2}\right)}.
\end{equation}
Recall that we have already shown in \eqref{eq:claim2} and \eqref{eq:advanced-composition}, respectively, that $\varepsilon_n \leq 4 \varepsilon \sqrt{n \ln{(1/\delta')}}$ and $\delta_n = n\delta + \delta'$, where $\delta' = 1/(m+1)^{1+\gamma}$ (defined in \eqref{eq:delta'}).
Therefore,~\eqref{eq2} yields
\begin{equation}
    \label{eq3}
    \varrho = \Omega{\left(\frac{d \log{(1/(n\delta + \delta'))}}{\varepsilon^2 n m^2 \log{(1/\delta')}}\right)}.
\end{equation}
As $\ln(1+x) \leq x$, 
substituting $\delta'$ from \eqref{eq:delta'}, 
and using the assumption that $\delta \leq 1/8nm^{1+\gamma}$, $\gamma \in (0,1), m \geq 1$, we obtain that
\begin{align*}
    \frac{\ln{(1/(n\delta + \delta'))}}{\ln{(1/\delta')}}
    &= \frac{\ln{(1/\delta'(1+n\delta/\delta'))}}{\ln{(1/\delta')}}
    = 1 + \frac{\ln{(1/(1+n\delta/\delta'))}}{\ln{(1/\delta')}}\\
    &= 1 - \frac{\ln{(1+n\delta/\delta')}}{\ln{(1/\delta')}}
    \geq 1 - \frac{n\delta}{\delta'\ln{(1/\delta')}}
    = 1 - \frac{n\delta (m+1)^{\gamma+1}}{(1+\gamma)\ln{(m+1)}}\\
    &\geq 1- \frac{(m+1)^{\gamma+1}}{8(1+\gamma)m^{1+\gamma}\ln{(m+1)}}
    \geq 1- \frac{(2m)^{\gamma+1}}{8(1+\gamma)m^{1+\gamma}\ln{(m+1)}}\\
    &=1- \frac{2^{\gamma+1}}{8(1+\gamma)\ln{(m+1)}}
    \geq 1- \frac{4}{8\ln{(m+1)}} \geq 1 - \frac{1}{2 \ln{(2)}} 
    = \Omega{(1)}.
\end{align*}
Finally, substituting from above in~\cref{eq3} proves the desired result, i.e., 
\begin{equation*}
    \varrho = \Omega{\left(\frac{d}{\varepsilon^2 n m^2}\right)}.
\end{equation*}
\end{proof}

\subsection{Case II: No Privacy}
\label{app:lb-robustness}

We prove below the lower bound due to robustness stated in Proposition~\ref{prop:heterogeneity}.
\heterolb*
\begin{proof}
The proof is similar to that of Theorem~III~\cite{karimireddy2022byzantinerobust}.
Let $n \geq 1$, $1 \leq f < n/2$, $\kappa = \frac{16f(n-2f)}{(n-f)^2}$, and $G > 0$. 
Consider $\X = \{\pm \frac{G}{\sqrt{\kappa d}}\}^d$ and $\ell = \norm{\cdot}^2$.
Let Assumption~\ref{asp:hetero} hold.
Assume that algorithm $\mathcal{A}$ is $(f,\varrho)$-robust.

Denote by $x = \frac{G}{\sqrt{\kappa d}} \cdot \ones \in \R^d$, where $\ones \in \R^d$ is the vector of ones.
Consider the following datasets $\D_1 = \ldots = \D_{n-f} = \{x\}^m$ (i.e. all rows are $x$) and $\D_{n-f+1} = \ldots = \D_n = \{-x\}^m$ (i.e. all rows are $-x$).
Consider the two situations of honest identities $\H_1 = \{1,\ldots,n-f\}$ and $\H_2 = \{f+1,\ldots,n\}$.

We first show that the loss functions $\loss{(\cdot~;\D_1)}, \ldots, \loss{(\cdot~;\D_n)}$ (defined using $\ell$ in \cref{sec:problem}) satisfy Assumption~\ref{asp:hetero} in both situations.
This is straightforward in situation $\H_1$ since honest losses are identical.
In situation $\H_2$, we have for all $\theta \in \R^d$, 
\begin{align*}
  \nabla{\loss_{\H_2}{(\theta)}} = \frac{1}{n-f}\sum_{i \in \H_2} \nabla{\loss{(\theta;\D_i)}} = \frac{n-2f}{n-f} 2(\theta-x) + \frac{f}{n-f}2(\theta + x) = 2\left(\theta - \frac{n-3f}{n-f}x \right).
\end{align*}
Observe that, as $n > 2f$, the intersection $\H_1 \cap \H_2 = \{f+1,\ldots,n-f\}$ is non-empty.
Therefore, thanks to the choice of $x$, we now show that Assumption~\ref{asp:hetero} holds, as for all $\theta \in \R^d$ we have
\begin{align*}
    \frac{1}{\card{\H_2}} \sum_{i \in \H_2} \norm{\nabla{\loss{(\theta;\D_i)}} - \nabla{\loss_{\H_2}{(\theta)}}}^2
    &= \frac{\card{\H_1 \cap \H_2}}{n-f}\norm{\nabla{\loss{(\theta;\D_{f+1})}}-\nabla{\loss_{\H_2}{(\theta)}}}^2 \\
    &\quad+ \frac{\card{\H_2 \setminus \H_1}}{n-f}\norm{\nabla{\loss{(\theta;\D_{n})}}-\nabla{\loss_{\H_2}{(\theta)}}}^2\nonumber\\
    &= \frac{n-2f}{n-f}\norm{2(\theta-x)-2(\theta - \frac{n-3f}{n-f}x)}^2\\ 
    &\quad + \frac{f}{n-f}\norm{2(\theta+x)-2(\theta - \frac{n-3f}{n-f}x)}^2\nonumber\\
    &= \frac{4(n-2f)}{n-f}\norm{\frac{-2f}{n-f}x}^2 + \frac{4f}{n-f}\norm{\frac{2(n-2f)}{n-f}x}^2
    = \frac{16f(n-2f)}{(n-f)^2}\norm{x}^2 \\
    &= \kappa \norm{x}^2
    = G^2.
\end{align*}
Now, denote $\loss_{*,\H_1} \coloneqq \inf_{\R^d} \loss_{\H_1}$ and $\loss_{*, \H_2} \coloneqq \inf_{\R^d} \loss_{\H_2}$. 
Since learning algorithm $\mathcal{A}$ is $(f,\varrho)$-robust, 
it outputs $\hat{\theta}$ such that $\expect{\loss_{\H_1}{(\hat{\theta})}-\loss_{*,\H_1}} \leq \varrho$ and $\expect{\loss_{\H_2}{(\hat{\theta})}-\loss_{*,\H_2}} \leq \varrho$.
Note that situations $\H_1$ and $\H_2$ are indistinguishable to algorithm $\mathcal{A}$ because it ignores the honest identities, and thus $\hat{\theta}$ is the same in both situations.

Recall that the expression of loss $\loss_{\H_1}$ is
\begin{equation*}
    \loss_{\H_1} = \frac{1}{\card{\H_1}} \sum_{i \in \H_1} \loss{(\theta;\D_i)}
    = \frac{1}{\card{\H_1}} \sum_{i \in \H_1} \norm{\theta-x}^2 = \norm{\theta-x}^2.
\end{equation*}
Therefore, the loss is minimized at $\theta = x$ and we have $\loss_{*,\H_1} = \loss_{\H_1}(x) = 0$. Thus, we have
\begin{equation*}
    \expect{\loss_{\H_1}{(\hat{\theta})}-\loss_{*,\H_1}} = \expect{\norm{\hat{\theta}-x}^2}.
\end{equation*}
On the other hand, after some algebraic manipulations, the expression of loss $\loss_{\H_2}$ is 
\begin{align*}
    \loss_{\H_2}{(\theta)} &= \frac{1}{\card{\H_2}} \sum_{i \in \H_2} \loss{(\theta;\D_i)}
    = \frac{\card{\H_1 \cap \H_2}}{n-f} \cdot \norm{\theta-x}^2 + \frac{\card{\H_2 \setminus \H_1}}{n-f} \cdot \norm{\theta+x}^2\\
    &= \frac{n-2f}{n-f} \cdot (\norm{\theta}^2+\norm{x}^2 - 2 \iprod{\theta}{x}) + \frac{f}{n-f} \cdot (\norm{\theta}^2+\norm{x}^2 + 2 \iprod{\theta}{x})\\
    &= \norm{\theta - \frac{n-3f}{n-f}x}^2 + \kappa \norm{x}^2.
\end{align*}
Therefore, the loss is minimized at $\theta = \frac{n-3f}{n-f}x$ and we have $\loss_{*,\H_2} = \kappa \norm{x}^2$. Thus, we obtain
\begin{equation*}
    \expect{\loss_{\H_2}{(\hat{\theta})}-\loss_{*,\H_2}} = \expect{\norm{\hat{\theta} - \frac{n-3f}{n-f}x}^2}.
\end{equation*}

Recall that $\kappa = \frac{16f(n-2f)}{(n-f)^2}$.
Therefore, invoking Jensen's inequality, we have
\begin{align}
\varrho 
&\geq \max{\left\{\expect{\loss_{\H_1}{(\hat{\theta})}-\loss_{*,\H_1}},\expect{\loss_{\H_2}{(\hat{\theta})}-\loss_{*,\H_2}}\right\}}
\geq \frac{1}{2} \left(\expect{\loss_{\H_1}{(\hat{\theta})}-\loss_{*,\H_1}}+\expect{\loss_{\H_2}{(\hat{\theta})}-\loss_{*,\H_2}} \right)\nonumber\\
&= \frac{1}{2} \left(\norm{\hat{\theta}-x}^2 + \norm{\hat{\theta} - \frac{n-3f}{n-f}x}^2 \right)
\geq \frac{1}{4} \norm{\frac{2f}{n-f}x}^2 
= \left(\frac{f}{n-f}\right)^2 \frac{G^2}{\kappa}
= \frac{1}{16} \cdot \frac{f}{n-2f} G^2.
\label{eq:lb_noprivacy_last}
\end{align}
Since $n -2f \leq n$, we obtain $\varrho \geq \frac{1}{16} \cdot \frac{f}{n}\, G^2$, which concludes the proof.
\end{proof}

\subsection{Case III: Adversarial Setting}
\label{app:lb-privacy-robustness}

We show below the lower bound from \cref{prop:dp-lb-byz} due to the privacy-robustness tradeoff. 

\dplbbyz*
\begin{proof}

Let $n \geq 3$, $1 \leq f < n/2$, $ m\geq 1$, $d \geq 1$, $\varepsilon, \delta \in (0,1)$, $\kappa = \frac{16f(n-2f)}{(n-f)^2}$, 
and $\varrho \leq \frac{f+1}{100(n-f)}$.
Consider $\X = \{\pm 1/\sqrt{d}\}^d \cup \{\pm 1/\sqrt{\kappa d}\}^d$ and $\ell = \norm{\cdot}^2$.
We consider a distributed algorithm $\A : \X^{m \times n} \to \R^d$ that satisfies $(\varepsilon,\delta)$-distributed DP where $2^{-o{(m)}} \leq \delta \leq 1/m^{1+\Omega{(1)}}$ and $(f, \varrho)$-robustness. 

We consider the following datasets. Let $\ones$ denote the vector of ones in $\R^d$. For $i \in \{2, \ldots, n-f\}$, we set $$\D_i = \D_+ \coloneqq \{+\frac{1}{\sqrt{d}} \cdot \ones \}^m,$$ i.e., all rows are $+\frac{1}{\sqrt{d}} \cdot \ones \in \R^d$. For $i \in \{n-f+1, \ldots, n\}$ we set $$\D_i = \D_- \coloneqq \{-\frac{1}{\sqrt{d}} \cdot \ones\}^m,$$ i.e., all rows are $-\frac{1}{\sqrt{d}} \cdot \ones \in \R^d$.
Finally, we fix $\D_1 \in \X^m$ to be an arbitrary dataset with every element having identical coordinates. That is, for arbitrary $\alpha_{1,1}, \ldots, \alpha_{1,m} \in \{\pm 1\}$, we set
$$\D_1 = \left\{\frac{\alpha_{1,1}}{\sqrt{d}} \cdot \ones, \ldots, \frac{\alpha_{1,m}}{\sqrt{d}} \cdot \ones \right\}.$$

{\bf Proof outline.} We consider the centralized algorithm $\M : \X^{m} \to \R^d$ which takes as input dataset $\D_1 \in \X^m$ and executes $\A{(\D_1, \D_2, \ldots,\D_n)}$, where the datasets $\D_2, \ldots,\D_n$ are fixed above. 
We first derive the DP and utility guarantees $\M$ inherits from $\A$, which satisfies $(\varepsilon, \delta)$-distributed DP (see Definition~\ref{def:distributed-dp}) and $(f, \varrho)$-robustness, and then conclude the proof by applying the centralized lower bound \cref{lemma:dp-lb} to $\M$.

\textbf{Privacy guarantees of $\M$.}
We first state the privacy guarantees of $\M$ inherited from $\A$.

As per Definition~\ref{def:distributed-dp}, since $\A$ is $(\varepsilon,\delta)$-DP, all communications with worker $w_1$ (whose dataset is $\D_1$) are $(\varepsilon,\delta)$-DP. 
It follows directly that $\M$ is $(\varepsilon,\delta)$-DP by post-processing.

\textbf{Utility guarantees of $\M$.}
We now analyze the utility guarantees of $\M$ inherited from $\A$.

Since $\A$ is $(f,\varrho)$-robust (\cref{def:resilience}), the output $\hat{\theta} = \M{(\D_1)} = \A{(\D_1,\ldots,\D_n)}$ verifies
    \begin{equation}
        \label{eq5}
        \varrho \geq \expect{\loss_{\H}{(\hat{\theta})}-\loss_*},
    \end{equation}
for any set of honest identities $\H \subseteq \{1, \ldots, n\}, \card{\H}=n-f$, where we denote $\loss_{*} \coloneqq \inf_{\R} \loss_{\H}$.

\begin{tcolorbox}[colback=gray!10,boxrule=0pt,breakable,enhanced]
\underline{\textit{Reduction to one-dimensional space:}}
We now show that we can simply consider $d=1$, without loss of generality.
For this, we develop the RHS of \eqref{eq5}. We have for any $\theta \in \R^d$ and $\H \subseteq \{1, \ldots, n\}, \card{\H}=n-f$:
\begin{align}
    \loss_{\H}{(\theta)}
    &= \frac{1}{\card{\H}} \sum_{i \in \H} \frac{1}{m} \sum_{x \in \D_i} \norm{\theta-x}^2.
    \label{eq:loss-full-0}
\end{align}
The above function is minimized at $\theta^*_{\H} \coloneqq \frac{1}{\card{\H}} \sum_{i \in \H}\overline{\D}_i$ the average of one-way marginals $\overline{\D}_i \coloneqq \frac{1}{m} \sum_{x \in \D_i}x$. Therefore, the minimum of $\loss_\H$ is $\loss_{*,\H} \coloneqq \loss_{\H}{(\theta^*_\H)}$.\\
Recall the following bias-variance decomposition: for any $x_1, \ldots, x_n \in \R^d$ we have $\frac{1}{n} \sum_{i=1}^n \norm{x_i - \overline{x}}^2 = \frac{1}{n} \sum_{i=1}^n \norm{x_i}^2 - \norm{\overline{x}}^2$, where we denoted $\overline{x} \coloneqq \frac{1}{n} \sum_{i=1}^n x_i$.
Therefore, recalling \eqref{eq:loss-full-0} and $\theta^*_{\H} = \frac{1}{\card{\H}} \sum_{i \in \H}\overline{\D}_i$, we have
\begin{align}
    \loss_{\H}{(\theta)} - \loss_{*,\H}
    &= \loss_{\H}{(\theta)} - \loss_{\H}{(\frac{1}{\card{\H}} \sum_{i \in \H}\overline{\D}_i)}
    = \norm{\theta - \frac{1}{\card{\H}} \sum_{i \in \H}\overline{\D}_i}^2.
    \label{eq:loss-full-1}
\end{align}
Recall our setting of datasets in the beginning of the proof: in particular, for every $i \in \{1, \ldots, n\}$, each element of dataset $\D_i$ has identical coordinates. Thus, there is $\alpha_i \in [\pm1]$ such that $\overline{\D}_i = \frac{\alpha_i}{\sqrt{d}} \cdot \ones$.
Plugging this in \eqref{eq:loss-full-1} yields:
\begin{align}
    \loss_{\H}{(\theta)} - \loss_{*,\H}
    &= \norm{\theta - \frac{1}{\card{\H}} \sum_{i \in \H}\overline{\D}_i}^2
    = \norm{\theta - \frac{1}{\sqrt{d} \cdot \card{\H}} \sum_{i \in \H}\alpha_i \ones}^2
    = \sum_{k=1}^d \absv{\theta_k - \frac{1}{\sqrt{d} \cdot \card{\H}} \sum_{i \in \H}\alpha_i}^2 \nonumber\\
    &= \frac{1}{d} \sum_{k=1}^d \absv{\sqrt{d} \cdot \theta_k - \frac{1}{\card{\H}} \sum_{i \in \H}\alpha_i}^2,
\end{align}
where $\theta_k$ denotes the $k$-th coordinate of $\theta \in \R^d$.
Upon applying \eqref{eq5} and then Jensen's inequality, we obtain
\begin{align}
    \varrho 
    \geq \expect{\loss_{\H}{(\hat{\theta})} - \loss_{*,\H}}
    &= \frac{1}{d} \sum_{k=1}^d \expect{\absv{\sqrt{d} \cdot \hat{\theta}_k - \frac{1}{\card{\H}} \sum_{i \in \H}\alpha_i}^2}
    \geq \expect{\absv{\frac{1}{d}\sum_{k=1}^d \sqrt{d} \cdot \hat{\theta}_k-\frac{1}{\card{\H}} \sum_{i \in \H}\alpha_i}^2} \nonumber\\
    &= \expect{\absv{\sum_{k=1}^d \frac{\hat{\theta}_k}{\sqrt{d}}-\frac{1}{\card{\H}} \sum_{i \in \H}\alpha_i}^2}.
    \label{eq:loss-full-2}
\end{align}
Therefore, everything happens as if $d=1$. That is, data universe $\X = \{\pm 1\}$, and datasets $\D_+ = \{+1\}^m$, $\D_- = \{-1\}^m$, and $\D_1 = \{\alpha_{1,1},\ldots, \alpha_{1,m}\}$ being arbitrary in $\X^m$.
Indeed, denote $\tilde{\theta} \coloneqq \sum_{k=1}^d \frac{\hat{\theta}_k}{\sqrt{d}} \in \R$. 
Recall that, now that $d=1$, each $\alpha_i \in [\pm1]$ is such that $\overline{\D}_i = \alpha_i$.
In this one-dimensional setting of datasets, we develop the RHS of \eqref{eq:loss-full-2}, by using the aforementioned bias-variance decomposition backwards:
\begin{align*}
    \varrho \geq \expect{\absv{\tilde{\theta}-\frac{1}{\card{\H}} \sum_{i \in \H}\alpha_i}^2}
    &= \expect{\frac{1}{\card{\H}}\sum_{i \in \H} \frac{1}{m} \sum_{x \in \D_i} \absv{\tilde{\theta} - x}^2} - \expect{\frac{1}{\card{\H}} \sum_{i \in \H} \frac{1}{m} \sum_{x \in \D_i}\absv{x- \frac{1}{\card{\H}} \sum_{i \in \H}\overline{\D}_i}^2}\nonumber\\
    &= \expect{\loss_{\H}{(\tilde{\theta})}-\loss_{*,\H}}.
\end{align*}
Thus, \eqref{eq5} holds with loss $\ell$ being the one-dimensional quadratic loss and mechanism $\widetilde{\M}$ returning $\tilde{\theta}$ instead of $\hat{\theta}$.
Since $\tilde{\theta}$ is a function of $\hat{\theta}$ without access to $\D_1$, $\widetilde{\M}$ is also $(\varepsilon, \delta)$-DP by post-processing.
{\bf Throughout the remainder of the proof, we set $d=1$ without loss of generality.}
\end{tcolorbox}

We consider below the RHS of \eqref{eq5}. We have for any $\theta \in \R$:
\begin{align}
    \loss_{\H}{(\theta)}
    &= \frac{1}{\card{\H}} \sum_{i \in \H} \frac{1}{m} \sum_{x \in \D_i} \absv{\theta-x}^2.
    \label{eq:loss-full}
\end{align}
The above function is minimized at $\theta^*_{\H} \coloneqq \frac{1}{\card{\H}} \sum_{i \in \H}\overline{\D}_i$ the average of one-way marginals $\overline{\D}_i \coloneqq \frac{1}{m} \sum_{x \in \D_i}x$.

Next, following \eqref{eq7}, we consider {\bf two possible cases} of honest identities, a priori indistinguishable to the algorithm. In the first case, 
we consider the set of honest identities $\H$ to be $\H_1 = \{1, \ldots, \, n-f\}$. In the second case, 
we consider the set of honest identities $\H$ to be $\H_2 \coloneqq \{1\} \cup  \{f+2, \ldots, n\}$. As $\card{\H} = n-f$, upon invoking \cref{def:resilience} in both the cases, we obtain a upper bound on $\expect{|\hat{\theta} - \overline{\D}_1|^2}$ in terms of $\varrho$.

\textit{\underline{First case}:}
Consider $\H$ to be $\H_1 = \{1, \ldots, n-f\}$.
Recall that $\D_i = \D_+$ for all $i \in \{2, \ldots, n-f\}$.
By \eqref{eq:loss-full}, we have for all $\theta \in \R$:
\begin{align*}
    \loss_{\H_1}{(\theta)}
    &= \frac{1}{\card{\H_1}} \sum_{i \in \H_1} \frac{1}{m} \sum_{x \in \D_i} \absv{\theta-x}^2
    = \frac{1}{\card{\H_1}} \frac{1}{m} \sum_{x \in \D_1} \absv{\theta-x}^2 + \frac{\card{\H_1}-1}{\card{\H_1}} \frac{1}{m} \sum_{x \in \D_+} \absv{\theta-x}^2 \nonumber\\
    &= \frac{1}{n-f} \frac{1}{m} \sum_{x \in \D_1} \absv{\theta-x}^2 + (1-\frac{1}{n-f})\absv{\theta-\overline{\D}_+}^2 \nonumber\\
    &\geq \frac{1}{n-f} \absv{\theta-\overline{\D}_1}^2 + (1-\frac{1}{n-f})\absv{\theta-\overline{\D}_+}^2. &(\text{Jensen's inequality})
\end{align*}
Thus, from above we obtain that 
\begin{align}
    \label{sit1-eq1}
    \expect{\loss_{\H_1}{(\hat{\theta})}}
    &\geq \frac{1}{n-f} \expect{|\hat{\theta}-\overline{\D}_1|^2} + (1-\frac{1}{n-f})\expect{|\hat{\theta}-\overline{\D}_+|^2}.
\end{align}
Now, recall the following bias-variance decomposition: for any $x_1, \ldots, x_n \in \R$ we have $\frac{1}{n} \sum_{i=1}^n \absv{x_i - \overline{x}}^2 = \frac{1}{n} \sum_{i=1}^n \absv{x_i}^2 - \absv{\overline{x}}^2$ where $\overline{x} \coloneqq \frac{1}{n} \sum_{i=1}^n x_i$.
Thus, from~\eqref{eq:loss-full} we obtain that $\theta^*_{\H_1} = \frac{1}{\card{\H_1}} \sum_{i \in \H_1} \overline{\D}_i $. Thus, as $\absv{x}^2=1$ for all $x \in \X$, we have
\begin{align*}
    \loss_{*,\H_1} 
    = \loss_{\H_1}{(\theta^*_{\H_1})}
    &= \frac{1}{m \card{\H_1}} \sum_{i \in \H_1}\sum_{x \in \D_i} \absv{\theta^*_{\H_1}-x}^2
    = \frac{1}{m \card{\H_1}} \sum_{i \in \H_1} \sum_{x \in \D_i} \absv{x}^2 - \absv{\theta^*_{\H_1}}^2 \nonumber\\
    &= 1 - \absv{\theta^*_{\H_1}}^2
    = 1 - \absv{\frac{1}{\card{\H_1}}\sum_{i \in \H_1}\overline{\D}_i}^2
    = 1 - \absv{\frac{1}{n-f}\overline{\D}_1 + (1-\frac{1}{n-f})\overline{\D}_+}^2 \nonumber\\
    &= 1 - \absv{\frac{1}{n-f}\overline{\D}_1 + 1-\frac{1}{n-f}}^2
    = 1 - \frac{1}{(n-f)^2}\absv{\overline{\D}_1 + n-f-1}^2.
\end{align*}
Note that, as $\D_1 \in \X^m=\{\pm 1\}^m$, we have $\overline{\D}_1 \in [\pm 1]$. Also, since $f<n/2$ and $n \geq 3$, we have $n-f-2 \geq 0$. Therefore, $ \absv{\overline{\D}_1 + n-f-1}^2 \geq \absv{n-f-2}^2$. 
Substituting this in the above, we obtain that
\begin{align}
    \label{sit1-eq2}
    \loss_{*,\H_1} 
    &= 1 - \frac{1}{(n-f)^2}\absv{\overline{\D}_1 + n-f-1}^2
    \leq 1 - \frac{1}{(n-f)^2}\absv{n-f-2}^2
    = 1 - \absv{1-\frac{2}{n-f}}^2 \nonumber\\
    &=\frac{2}{n-f}(2-\frac{2}{n-f}) 
    = \frac{4}{n-f}(1-\frac{1}{n-f})
    \leq \frac{4}{n-f} \leq \frac{4(f+1)}{n-f}.
\end{align}
Substituting from~\eqref{sit1-eq1} and~\eqref{sit1-eq2} in~\eqref{eq5} we obtain that
\begin{align}
    \label{sit1-eq3}
    \varrho + \frac{4(f+1)}{n-f} 
    &\geq  \varrho + \loss_{*,\H_1} 
    \geq \expect{\loss_{\H_1}{(\hat{\theta})}}
    \geq \frac{1}{n-f} \expect{|\hat{\theta}-\overline{\D}_1|^2} + (1-\frac{1}{n-f})\expect{|\hat{\theta}-\overline{\D}_+|^2}.
\end{align}

\textit{\underline{Second case}:}
Consider $\H$ to be $\H_2 = \{1\} \cup  \{f+2, \ldots, n\}$.
Recall that $\D_i = \D_-$ for all $i \in \{n-f+1, \ldots, n\}$.
By \eqref{eq:loss-full}, we have for all $\theta \in \R$:
\begin{align*}
    \loss_{\H_2}{(\theta)}
    &= \frac{1}{\card{\H_2}} \sum_{i \in \H_2} \frac{1}{m} \sum_{x \in \D_i} \absv{\theta-x}^2 \nonumber\\
    &= \frac{1}{\card{\H_2}} \frac{1}{m} \sum_{x \in \D_1} \absv{\theta-x}^2 + \left(\frac{\card{\H_2}-1-f}{\card{\H_2}} \right) \frac{1}{m} \sum_{x \in \D_+} \absv{\theta-x}^2 + \left(\frac{f}{\card{\H_2}} \right) \frac{1}{m} \sum_{x \in \D_-} \absv{\theta-x}^2 \nonumber\\
    &= \left(\frac{1}{n-f} \right) \frac{1}{m} \sum_{x \in \D_1} \absv{\theta-x}^2 + \left(\frac{n-2f-1}{n-f} \right)\absv{\theta-\overline{\D}_+}^2 + \frac{f}{n-f} \absv{\theta-\overline{\D}_-}^2 \nonumber\\
    &\geq \left(\frac{1}{n-f} \right) \frac{1}{m} \sum_{x \in \D_1} \absv{\theta-x}^2 + \frac{f}{n-f} \absv{\theta-\overline{\D}_-}^2 &(n\geq2f+1) \nonumber\\
    &\geq  \frac{1}{n-f} \absv{\theta-\overline{\D}_1}^2 + \frac{f}{n-f} \absv{\theta-\overline{\D}_-}^2. &(\text{Jensen's inequality})
\end{align*}
Substituting $\theta = \hat{\theta}$, and taking expectation yields
\begin{align}
    \label{sit2-eq1}
    \expect{\loss_{\H_2}{(\hat{\theta})}}
    &\geq \frac{1}{n-f} \expect{|\hat{\theta}-\overline{\D}_1|^2} + \frac{f}{n-f}\expect{|\hat{\theta}-\overline{\D}_-|^2}.
\end{align}
Now, recall the following bias-variance decomposition: for any $x_1, \ldots, x_n \in \R$ we have $\frac{1}{n} \sum_{i=1}^n \absv{x_i - \overline{x}}^2 = \frac{1}{n} \sum_{i=1}^n \absv{x_i}^2 - \absv{\overline{x}}^2$, where we denoted $\overline{x} \coloneqq \frac{1}{n} \sum_{i=1}^n x_i$.
Using this in \eqref{eq:loss-full}, and that $\forall x \in \X, \absv{x}^2=1$, we get
\begin{align*}
    \loss_{*,\H_2} 
    = \loss_{\H_2}{(\theta^*_{\H_2})}
    &= \frac{1}{m \card{\H_2}} \sum_{i \in \H_2}\sum_{x \in \D_i} \absv{\theta^*_{\H_2}-x}^2
    = \frac{1}{m \card{\H_2}} \sum_{i \in \H_2} \sum_{x \in \D_i} \absv{x}^2 - \absv{\theta^*_{\H_2}}^2 \nonumber\\
    &= 1 - \absv{\theta^*_{\H_2}}^2
    = 1 - \absv{\frac{1}{\card{\H_2}}\sum_{i \in \H_2}\overline{\D}_i}^2
    = 1 - \absv{\frac{1}{n-f}\overline{\D}_1 + \frac{n-2f-1}{n-f}\overline{\D}_+ + \frac{f}{n-f}\overline{\D}_-}^2 \nonumber\\
    &= 1 - \absv{\frac{1}{n-f}\overline{\D}_1 + \frac{n-2f-1}{n-f} -\frac{f}{n-f}}^2
    = 1 - \absv{1 + \frac{1}{n-f}\overline{\D}_1 -\frac{2f+1}{n-f}}^2\\
    &= \left(1- 1 - \frac{1}{n-f}\overline{\D}_1 +\frac{2f+1}{n-f}\right)\left(1+1 + \frac{1}{n-f}\overline{\D}_1 -\frac{2f+1}{n-f}\right)\\
    &= \left(\frac{2f+1-\overline{\D}_1}{n-f}\right)\left(2 - \frac{2f+1-\overline{\D}_1}{n-f}\right).
\end{align*}
Note that, as $\D_1 \in \X^m=\{\pm 1\}^m$, we have $\overline{\D}_1 \in [\pm 1]$. This, together with $n \geq 2f+1$, implies that both the terms in the product above are non-negative.
Moreover, as $\overline{\D}_1 \geq -1$, the first term can be bounded by $$\frac{2f+1-\overline{\D}_1}{n-f} \leq \frac{2(f+1)}{n-f}.$$ 
Similarly, as $\overline{\D}_1 \leq 1$, the second term can be bounded by $$2-\frac{2f+1-\overline{\D}_1}{n-f} \leq 2 - \frac{2f}{n-f} \leq 2.$$
Consequently, we have
\begin{equation}
    \label{sit2-eq2}
    \loss_{*,\H_2} \leq \frac{4(f+1)}{n-f}.
\end{equation}

Invoking \eqref{eq5} with the set of honest identities $\H_2$, and using the bounds shown in \eqref{sit2-eq1}, \eqref{sit2-eq2} yields:
\begin{align}
    \label{sit2-eq3}
    \varrho + \frac{4(f+1)}{n-f} 
    &\geq  \varrho + \loss_{*,\H_2} 
    \geq \expect{\loss_{\H_2}{(\hat{\theta})}}
    \geq \frac{1}{n-f} \expect{|\hat{\theta}-\overline{\D}_1|^2} + \frac{f}{n-f}\expect{|\hat{\theta}-\overline{\D}_-|^2}.
\end{align}

\textit{\underline{Final step}:}
We deduce from \eqref{sit1-eq3}, \eqref{sit2-eq3} that 
\begin{align}
    \varrho + \frac{4(f+1)}{n-f} 
    &\geq \max\Big\{
    \frac{1}{n-f} \expect{|\hat{\theta}-\overline{\D}_1|^2} + (1-\frac{1}{n-f})\expect{|\hat{\theta}-\overline{\D}_+|^2}, \nonumber\\
    &\qquad \qquad \frac{1}{n-f} \expect{|\hat{\theta}-\overline{\D}_1|^2} + \frac{f}{n-f}\expect{|\hat{\theta}-\overline{\D}_-|^2}
    \Big\} \nonumber\\
    &= \frac{1}{n-f} \expect{|\hat{\theta}-\overline{\D}_1|^2} + \max\left\{
    (1-\frac{1}{n-f})\expect{|\hat{\theta}-\overline{\D}_+|^2}, \frac{f}{n-f}\expect{|\hat{\theta}-\overline{\D}_-|^2}
    \right\} \nonumber \\
    &\geq \frac{1}{n-f} \expect{|\hat{\theta}-\overline{\D}_1|^2} + \frac{f}{n-f} \max\left\{
    \expect{|\hat{\theta}-\overline{\D}_+|^2}, \expect{|\hat{\theta}-\overline{\D}_-|^2}
    \right\},
    \label{eq:final-step1}
\end{align}
where the last inequality is due to $f < \frac{n}{2}$, which implies that $1 - \frac{1}{n-f} \geq \frac{f}{n-f}$.
Besides, observe that, as $\D_1 \in \X^m = \{ \pm 1 \}^m$, we have $\overline{\D}_1 \in [\pm 1]$.
Recall that $\overline{\D}_+ = +1$ and $\overline{\D}_- = -1$.
Thus, it holds that
\begin{equation}
    \label{eq7}
    \expect{|\hat{\theta} - \overline{\D}_1|^2}
    \leq \max{(\expect{|\hat{\theta} - \overline{\D}_+|^2}, \expect{|\hat{\theta} - \overline{\D}_-|^2})}.
\end{equation}
Indeed, since $\overline{\D}_1 \in [\pm 1]$, we can write $\overline{\D}_1 = \lambda \times (+1) + (1-\lambda) \times (-1)$ for some $\lambda \in [0,1]$.
Thus, using Jensen's inequality and then taking expectations, we have
$\expect{|\hat{\theta} - \overline{\D}_1|^2} \leq \lambda \expect{|\hat{\theta} - 1|^2} + (1-\lambda) \expect{|\hat{\theta} +1|^2} \leq \max{(\expect{|\hat{\theta} - 1|^2},  \expect{|\hat{\theta} +1|^2})}$.

Using \eqref{eq7} in~\eqref{eq:final-step1}, we obtain, for every $\D_1 \in \X^m$, that
\begin{align}
    \label{eq8}
    \varrho + \frac{4(f+1)}{n-f} 
    \geq \frac{f+1}{n-f}\expect{|\hat{\theta}-\overline{\D}_1|^2}.
\end{align}
Before concluding, recall that $1\leq f \leq \frac{n}{2}$, thus applying Proposition~\ref{prop:heterogeneity} with $G=1$ yields 
\begin{equation}
    \varrho = \Omega{\left( \frac{f}{n}\right)} = \Omega{\left( \frac{f+1}{n-f}\right)}.
    \label{eq:hetero-rho}
\end{equation}
Indeed, since the data universe considered in the proof includes $\{\pm \frac{1}{\sqrt{\kappa d}}\}^d$, we can apply Proposition~\ref{prop:heterogeneity}.
Plugging this back in \eqref{eq8}, we have for every $\D_1 \in \X^m$ that
\begin{equation*}
    \varrho = \Omega{\left(\frac{f+1}{n-f}\expect{|\hat{\theta}-\overline{\D}_1|^2}\right)}.
\end{equation*}

\textbf{Invoking \cref{lemma:dp-lb}.}
Hence, since $\varrho \leq \frac{f+1}{100(n-f)}$, we can proceed in the same way as in the proof of \cref{prop:dp-lb-hd} to leverage \cref{lemma:dp-lb} (with $d=1$) for showing
\begin{equation*}
    \frac{n-f}{f+1}\varrho = \Omega{\left(\frac{\log{(1/\delta)}}{\varepsilon^2 m^2}\right)}.
\end{equation*}
We finally conclude the desired result by rearranging terms and ignoring absolute constants:
\begin{equation*}
    \varrho = \Omega{\left(\frac{f+1}{n-f} \cdot \frac{\log{(1/\delta)}}{\varepsilon^2 m^2}\right)}.
\end{equation*}
\end{proof}

\subsection{Final Lower Bound}
\label{app:final-lb}
We prove below the final lower bound stated in Theorem~\ref{th:finallb}.
\finalLB*
\begin{proof}
The proof consists in showing that the setting we consider in the above theorem allows us to merge the lower bounds from propositions~\ref{prop:dp-lb-hd}, \ref{prop:dp-lb-byz}, and \ref{prop:heterogeneity}. First, we remark that the case $f=0$ corresponds to simply showing that $\varrho = \widetilde{\Omega} \left(\frac{d}{\varepsilon^2 n m^2}\right)$, which follows immediately from Proposition~\ref{prop:dp-lb-hd} directly (see Step 1 below for verifying the applicability of the proposition). In the remainder of the proof, we will assume $f > 0$ and $\eta > 0$. Let $\H$ denote the set of honest nodes of size $n-f$.

\textit{\underline{Step 1}:} 
To derive the first term in $\Omega \left( \frac{d}{\varepsilon^2 n m^2} \right)$, we remark that all the conditions of Proposition~\ref{prop:dp-lb-hd} on $\varepsilon, \delta, \varrho, n, m$ hold under the assumptions stated in the theorem. 
Consider $\mathcal{D}_1, \ldots, \mathcal{D}_n \in \{\pm 1/\sqrt{8d}\}^{d\times m} \subset \mathcal{X}^m$. Note that in this case, we have
\begin{align*}
\frac{1}{\card{\H}} \sum_{i \in \H} \norm{\nabla{\loss{(\theta;\mathcal{D}_i)}} - \nabla{\loss_{\H}{(\theta)}}}^2
\leq 1 \leq G^2.
\end{align*}
Hence, $\mathcal{D}_1, \ldots, \mathcal{D}_n$ is a valid collection of datasets with regard to the theorem statement.
Since $\mathcal{A}$ is assumed to be $(f,\varrho)$-robust, it guarantees an error less than or equal to $\varrho$ on the honest global loss $\loss{(\theta; \mathcal{D}_i, \, i \in \H)}$. Using the proof technique of Proposition~\ref{prop:dp-lb-hd}, we can show that (as $f<n/2$ and $\card{\H}=n-f \leq n$)
\begin{equation}
    \varrho = \Omega \left(\frac{d}{\varepsilon^2 \card{\H} m^2}\right) = \Omega \left(\frac{d}{\varepsilon^2 n m^2}\right) .  \label{eq:first}
\end{equation}

\textit{\underline{Step 2}:} 
To derive the second term in $\Omega(\frac{f}{n}\cdot\frac{1}{\varepsilon^2 m^2})$, we remark that all conditions of Proposition~\ref{prop:dp-lb-byz} on $\varepsilon, \delta, \varrho, n, f, m$, and $\mathcal{A}$ are verified. Note also that, similar to Step 1, the datasets considered in the proof Proposition~\ref{prop:heterogeneity}, scaled by a constant, are also valid instances with regard to the theorem statement. Using the proof technique of Proposition~\ref{prop:heterogeneity} we can show that (since $0 < f < n/2$, we have $f+1 \geq f$ and $n-f \leq n$)
\begin{equation}
    \varrho = \Omega{\left(\frac{f+1}{n-f} \cdot \frac{\log{(1/\delta)}}{\varepsilon^2 m^2}\right)} = \widetilde{\Omega}{\left(\frac{f}{n} \cdot \frac{1}{\varepsilon^2 m^2}\right)}, \label{eq:second}
\end{equation}
where we ignore the logarithmic term in $\widetilde{\Omega}(\cdot)$.

\textit{\underline{Step 3}:}  
To obtain the third term in $\Omega \left(\frac{f}{n}\cdot G^2\right)$, we first remark that Assumption~\ref{asp:hetero} 
holds, as well as all the conditions in Proposition~\ref{prop:heterogeneity} on $n,f,m$ and $\mathcal{A}$. As the input domain in Proposition~~\ref{prop:heterogeneity} is a subset of $\mathcal{X}$, using the proof technique of Proposition~~\ref{prop:heterogeneity} we can show that
\begin{equation}
        \label{eq:third}
        \varrho = \Omega{\left(\frac{f}{n}\cdot G^2\right)}.
    \end{equation}

\textit{\underline{Final step}:}  
Combining~\eqref{eq:first},~\eqref{eq:second}, and~\eqref{eq:third} proves the theorem, i.e., we obtain that
\begin{align*}
    \varrho &= \widetilde{\Omega}{\left(\max{\left\{ \frac{d}{\varepsilon^2 n m^2}, \frac{f}{n} \cdot \frac{1}{\varepsilon^2 m^2}, \frac{f}{n} \cdot G^2\right\}}\right)} =\widetilde{\Omega} \left(\frac{d}{\varepsilon^2 n m^2} + \frac{f}{n} \cdot \frac{1}{\varepsilon^2 m^2} + \frac{f}{n} \cdot G^2\right).
\end{align*}
\end{proof}

\clearpage
\section{Robustness Analysis}
\label{app:robustness}

In this section, we prove all our claims related to $(f,\kappa)$-robustness and SMEA.
In Section~\ref{app:smea}, we analyze SMEA.
In Section~\ref{app:filter}, we discuss Filter~\cite{diakonikolas2017being,steinhardt2018resilience}, a related algorithm.

We first recall the definition of our robustness criterion:
\robustness*

\subsection{Smallest Maximum Eigenvalue Averaging (SMEA)}
\label{app:smea}

Given a set of $n$ vectors $x_1, \ldots, \, x_n \in \R^d$, the SMEA algorithm first searches for a set $S^{*}$ of cardinality $n-f$ with the smallest empirical {\em maximum eigenvalue}, i.e., 
\begin{equation}
    S^{*} \in \argmin_{\underset{\card{S} = n-f}{S \subseteq \{ 1, \ldots, \, n} \} } \lambda_{\max}{\left(\frac{1}{\card{S}} \sum_{i \in S}(x_i - \overline{x}_S)(x_i - \overline{x}_S)^\top \right)}. \label{eqn:def_S*}
\end{equation}
Then the algorithm outputs the average of the inputs in set $S^{*}$:  
\begin{equation}
    \mathrm{SMEA}(x_1, \ldots, \, x_n) \coloneqq \frac{1}{\card{S^*}} \sum_{i \in S^{*}} x_i.
    \label{eq:smea-output}
\end{equation}

\smea*
\begin{proof}
Let $n \geq 1$ and $0 \leq f < n/2$.
Fix a set $S \subseteq \left \{ 1, \dots, n \right \}$ such that $\card{S}=n-f$.
Recall the definition of $S^*$ in \eqref{eqn:def_S*}.
Denote by $\overline{x}_{S^*}$ the output of SMEA defined in \eqref{eq:smea-output}:
\begin{equation}
    \overline{x}_{S^*} \coloneqq \frac{1}{\card{S^*}} \sum_{i \in S^{*}} x_i.
    \label{eq:smea-output-2}
\end{equation}

From \eqref{eq:smea-output-2}, we have
\begin{align*}
    \norm{\overline{x}_{S^*}-\overline{x}_{S}}^2
    &= \norm{\frac{1}{n-f}\sum_{i\in S^*} x_i - \frac{1}{n-f}\sum_{i\in S} x_i}^2
    = \norm{\frac{1}{n-f}\sum_{i\in S^*\setminus S} x_i - \frac{1}{n-f}\sum_{i\in S\setminus S^*} x_i}^2 \\
    &= \norm{\frac{1}{n-f}\sum_{i\in S^*\setminus S} (x_i-\overline{x}_{S^*}) - \frac{1}{n-f}\sum_{i\in S\setminus S^*} (x_i-\overline{x}_{S}) + \frac{\card{S^*\setminus S}}{n-f}(\overline{x}_{S^*}-\overline{x}_{S})}^2 \\
    &= \norm{\frac{1}{n-f}\sum_{i\in S^*\setminus S} (x_i-\overline{x}_{S^*}) - \frac{1}{n-f}\sum_{i\in S\setminus S^*} (x_i-\overline{x}_{S})}^2
    + \frac{\card{S^*\setminus S}^2}{(n-f)^2}\norm{\overline{x}_{S^*}-\overline{x}_{S}}^2 \\
    &\quad+2\frac{\card{S^*\setminus S}}{n-f}\iprod{\overline{x}_{S^*}-\overline{x}_{S}}{\frac{1}{n-f}\sum_{i\in S^*\setminus S} (x_i-\overline{x}_{S^*}) - \frac{1}{n-f}\sum_{i\in S\setminus S^*} (x_i-\overline{x}_{S})}.
\end{align*}
However, notice that
\begin{align*}
    \frac{1}{n-f}\sum_{i\in S^*\setminus S} (x_i-\overline{x}_{S^*}) - \frac{1}{n-f}\sum_{i\in S\setminus S^*} (x_i-\overline{x}_{S})
    &= \frac{1}{n-f}\sum_{i\in S^*\setminus S}x_i - \frac{1}{n-f}\sum_{i\in S\setminus S^*}x_i
    -\frac{\card{S^* \setminus S}}{n-f}(\overline{x}_{S^*}-\overline{x}_{S}) \\
    &= \frac{1}{n-f}\sum_{i\in S^*}x_i - \frac{1}{n-f}\sum_{i\in S}x_i
    -\frac{\card{S^* \setminus S}}{n-f}(\overline{x}_{S^*}-\overline{x}_{S}) \\
    &= \left(1-\frac{\card{S^* \setminus S}}{n-f}\right)(\overline{x}_{S^*}-\overline{x}_{S}).
\end{align*}
This implies that
\begin{align*}
    \norm{\overline{x}_{S^*}-\overline{x}_{S}}^2
    &= \norm{\frac{1}{n-f}\sum_{i\in S^*\setminus S} (x_i-\overline{x}_{S^*}) - \frac{1}{n-f}\sum_{i\in S\setminus S^*} (x_i-\overline{x}_{S})}^2 \\
    &\quad + \left [ \frac{\card{S^*\setminus S}^2}{(n-f)^2} + 2 \frac{\card{S^*\setminus S}}{n-f}\left(1-\frac{\card{S^* \setminus S}}{n-f}\right) \right] \norm{\overline{x}_{S^*}-\overline{x}_{S}}^2 \\
    &= \norm{\frac{1}{n-f}\sum_{i\in S^*\setminus S} (x_i-\overline{x}_{S^*}) - \frac{1}{n-f}\sum_{i\in S\setminus S^*} (x_i-\overline{x}_{S})}^2 
    + \left [ 1 - \left (1- \frac{\card{S^*\setminus S}}{n-f} \right)^2 \right] \norm{\overline{x}_{S^*}-\overline{x}_{S}}^2
\end{align*}
By rearranging the terms, applying Jensen's inequality, and using the fact that $\sup_{\norm{v}\leq 1} \absv{\iprod{v}{x}} = \norm{x}$, we obtain
\begin{align}
    \label{eq:mva-1}
    \left (1- \frac{\card{S^*\setminus S}}{n-f} \right)^2 \norm{\overline{x}_{S^*}-\overline{x}_{S}}^2
    &= \norm{\frac{1}{n-f}\sum_{i\in S^*\setminus S} (x_i-\overline{x}_{S^*}) - \frac{1}{n-f}\sum_{i\in S\setminus S^*} (x_i-\overline{x}_{S})}^2 \nonumber\\
    &= \sup_{\norm{v} \leq 1}
    \absv{\iprod{v}{\frac{1}{n-f}\sum_{i\in S^*\setminus S} (x_i-\overline{x}_{S^*}) - \frac{1}{n-f}\sum_{i\in S\setminus S^*} (x_i-\overline{x}_{S})}}^2 \nonumber\\
    &= \sup_{\norm{v} \leq 1}
    \absv{\frac{1}{n-f}\sum_{i\in S^*\setminus S} \iprod{v}{x_i-\overline{x}_{S^*}}- \frac{1}{n-f}\sum_{i\in S\setminus S^*} \iprod{v}{x_i-\overline{x}_{S}}}^2 \nonumber\\
    &\leq \frac{\card{S^*\setminus S}+\card{S \setminus S^*}}{(n-f)^2}
    \sup_{\norm{v} \leq 1} 
    \left[ \sum_{i\in S^*\setminus S} \absv{\iprod{v}{x_i-\overline{x}_{S^*}}}^2+ \sum_{i\in S\setminus S^*} \absv{\iprod{v}{x_i-\overline{x}_{S}}}^2  \right] \nonumber\\
    &\leq \frac{\card{S^*\setminus S}+\card{S \setminus S^*}}{(n-f)^2}
    \left[ \sup_{\norm{v} \leq 1} \sum_{i\in S^*\setminus S} \absv{\iprod{v}{x_i-\overline{x}_{S^*}}}^2
    + \sup_{\norm{v} \leq 1}\sum_{i\in S\setminus S^*} \absv{\iprod{v}{x_i-\overline{x}_{S}}}^2  \right] \nonumber\\
    &\leq \frac{2f}{(n-f)^2}
    \left[ \sup_{\norm{v} \leq 1} \sum_{i\in S^*\setminus S} \absv{\iprod{v}{x_i-\overline{x}_{S^*}}}^2
    + \sup_{\norm{v} \leq 1}\sum_{i\in S\setminus S^*} \absv{\iprod{v}{x_i-\overline{x}_{S}}}^2  \right],
\end{align}
where the last inequality is due to the fact that $\card{S^*}=\card{S}=n-f$, as we must have
\begin{equation}
    \card{S \setminus S^*} = \card{S^* \setminus S} = \card{S \cup S^*} - \card{S} \leq n - (n-f) = f.
    \label{eq:quorum}
\end{equation}

The first term on the RHS of \eqref{eq:mva-1} can be bounded by construction of $S^*$, and using the fact that $\sup_{\norm{v}\leq1}\iprod{v}{Mv} = \lambda_{\max}{(M)}$:
\begin{align*}
    \sup_{\norm{v} \leq 1} \sum_{i\in S^*\setminus S} \absv{\iprod{v}{x_i-\overline{x}_{S^*}}}^2
    &\leq \sup_{\norm{v} \leq 1} \sum_{i\in S^*} \absv{\iprod{v}{x_i-\overline{x}_{S^*}}}^2
    = \sup_{\norm{v} \leq 1} \iprod{v}{\sum_{i \in S^*}(x_i - \overline{x}_{S^*})(x_i - \overline{x}_{S^*})^\top v}\\
    &= \lambda_{\max}{\left(\sum_{i \in S^*}(x_i - \overline{x}_{S^*})(x_i - \overline{x}_{S^*})^\top\right)}
    \leq \lambda_{\max}{\left(\sum_{i \in S}(x_i - \overline{x}_S)(x_i - \overline{x}_S)^\top\right)}.
\end{align*}
The second term on the RHS of \eqref{eq:mva-1} can be bounded similarly:
\begin{align*}
    \sup_{\norm{v} \leq 1} \sum_{i\in S\setminus S^*} \absv{\iprod{v}{x_i-\overline{x}_{S}}}^2 \leq \sup_{\norm{v} \leq 1} \sum_{i\in S} \absv{\iprod{v}{x_i-\overline{x}_{S}}}^2
    = \lambda_{\max}{\left(\sum_{i \in S}(x_i - \overline{x}_S)(x_i - \overline{x}_S)^\top\right)}.
\end{align*}
Plugging these two bounds back in \eqref{eq:mva-1}, we obtain
\begin{align*}
    \left (1- \frac{\card{S^*\setminus S}}{n-f} \right)^2 \norm{\overline{x}_{S^*}-\overline{x}_{S}}^2 \leq \frac{4f}{n-f} \frac{1}{n-f} \lambda_{\max}{\left(\sum_{i \in S}(x_i - \overline{x}_S)(x_i - \overline{x}_S)^\top\right)}.
\end{align*}
Finally, since $\card{S^*\setminus S}\leq f$ (see \eqref{eq:quorum}), we have $\left (1- \frac{\card{S^*\setminus S}}{n-f} \right)^2 \geq \left (1- \frac{f}{n-f} \right)^2 = \left (\frac{n-2f}{n-f} \right)^2$.
We can therefore obtain
\begin{align*}
    \norm{\overline{x}_{S^*}-\overline{x}_{S}}^2 \leq \frac{4f(n-f)}{(n-2f)^2} \cdot \lambda_{\max}{\left(\frac{1}{\card{S}} \sum_{i \in S}(x_i - \overline{x}_S)(x_i - \overline{x}_S)^\top\right)}.
\end{align*}
The proof concludes by noticing that $\frac{4f(n-f)}{(n-2f)^2} = \frac{4f}{n-f}\left(1+\frac{f}{n-2f}\right)^2$.
\end{proof}

\subsection{Filter Algorithm}
\label{app:filter}

In this section, we present the Filter algorithm~\cite{diakonikolas2017being,steinhardt2018robust} in \cref{algo:filter} and discuss its robustness properties, stated in \cref{prop:filter}, in the distributed ML context we consider. Recall that Filter was also used in~\cite{data2021byzantine}.

\begin{algorithm}[ht!]
\caption{Filter algorithm~\cite{diakonikolas2017being,steinhardt2018robust}}
\textbf{Input:} 
vectors $x_1, \ldots, x_n \in \R^d$,
spectral norm bound $\sigma_0^2$,
constant factor $\eta > 0$.
\begin{algorithmic}[1]
\label{algo:filter}
\STATE Initialize $c_1, \ldots, c_n = 1$, $\hat{\sigma}_c = +\infty$.
\WHILE{True}
\STATE Compute the empirical mean 
$\hat{\mu}_c = \sum_{i=1}^n c_i x_i/\sum_{i=1}^n c_i$.
\STATE Compute the empirical covariance 
$\hat{\Sigma}_c = \sum_{i=1}^n c_i (x_i-\hat{\mu}_c)(x_i-\hat{\mu}_c)^\top/\sum_{i=1}^n c_i$.
\STATE Compute maximum eigenvalue $\hat{\sigma}_c^2$ of $\hat{\Sigma}_c$ and an associated eigenvector $\hat{v}_c$.
\IF{$\hat{\sigma}_c^2 > \eta \cdot \sigma_0^2$}
\STATE \textbf{return} $\hat{\mu}_c$
\ELSE
\STATE Compute weight $\tau_i = \iprod{\hat{v}_c}{x_i-\hat{\mu}_c}^2$.
\STATE Update $c_i \gets c_i (1-\tau_i/\tau_{\max})$, where $\tau_{\max} = \max_{1 \leq i \leq n} \tau_i$.
\ENDIF
\ENDWHILE
\end{algorithmic}
\end{algorithm}

In \cref{prop:filter}, we recall the robustness guarantees of the Filter procedure (\cref{algo:filter}). The proposition is followed by a discussion further below.
\begin{proposition}
\label{prop:filter}
Let $n \geq 1, 0 \leq f < n/2$, $x_1, \ldots, x_n \in \R^n$, and $S \subseteq [n], \card{S}=n-f$.
Denote $\overline{x}_S \coloneqq \frac{1}{\card{S}}\sum_{i \in S}x_i$.

Set the parameters 
$$\sigma_0^2 \geq \lambda_{\max}{\left(\frac{1}{\card{S}} \sum_{i \in S}(x_i - \overline{x}_S)(x_i - \overline{x}_S)^\top\right)}$$ 
and 
$$\eta = 2n(n-f)/(n-2f)^2.$$
\\
Then, the output $\widehat{x}$ of the Filter procedure (\cref{algo:filter}) with parameters $\sigma_0^2$ and $\eta$ satisfies
\begin{equation*}
    \norm{\widehat{x} - \overline{x}_S}^2 \leq \kappa \cdot \sigma_0^2,
\end{equation*}
with $\kappa = \frac{4fn}{(n-2f)^2} + \frac{2f}{n-f} = \frac{6f}{n-2f}\left(1+\frac{f}{n-2f}\right)$.
\end{proposition}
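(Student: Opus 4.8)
The plan is to adapt the classical analysis of iterative filtering / soft‑downweighting estimators to our distributed setting, in which the only regularity assumption on the reference set $S$ is the spectral bound $\lambda_{\max}\!\left(\frac{1}{\card{S}}\sum_{i\in S}(x_i-\overline{x}_S)(x_i-\overline{x}_S)^\top\right)\le \sigma_0^2$. The argument has three parts: a weight‑decrement invariant, a well‑definedness/termination argument, and a final error bound at termination.

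\textbf{Weight invariant.} Write $c_i^{(t)}$ for the weights after $t$ iterations and set $\delta_S^{(t)}\coloneqq (n-f)-\sum_{i\in S}c_i^{(t)}$ and $\delta_{\bar S}^{(t)}\coloneqq f-\sum_{i\notin S}c_i^{(t)}$, the total mass removed from $S$ and from $\{1,\dots,n\}\setminus S$. I would prove $\delta_S^{(t)}\le \delta_{\bar S}^{(t)}$ for all $t$ by induction. In one filtering step the mass removed from a set $A$ equals $\tau_{\max}^{-1}\sum_{i\in A}c_i\tau_i$ with $\tau_i=\iprod{\hat{v}_c}{x_i-\hat{\mu}_c}^2$, so it suffices to show $\sum_{i\in S}c_i\tau_i\le \sum_{i\notin S}c_i\tau_i$ whenever a filtering step occurs; since $\hat{v}_c$ is the top eigenvector of $\hat{\Sigma}_c$ one has $\sum_{i=1}^n c_i\tau_i=\bigl(\sum_j c_j\bigr)\hat{\sigma}_c^2$, so this is equivalent to $2\sum_{i\in S}c_i\tau_i\le\bigl(\sum_j c_j\bigr)\hat{\sigma}_c^2$. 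For the left side, expand $\iprod{\hat{v}_c}{x_i-\hat{\mu}_c}=\iprod{\hat{v}_c}{x_i-\overline{x}_S}-\iprod{\hat{v}_c}{\hat{\mu}_c-\overline{x}_S}$ and use $c_i\le 1$, $\sum_{i\in S}(x_i-\overline{x}_S)=0$, $\sum_{i\in S}\iprod{\hat{v}_c}{x_i-\overline{x}_S}^2\le (n-f)\sigma_0^2$, together with the induction hypothesis (which gives $\sum_{i\in S}c_i\ge n-2f$ and $\delta_S^{(t)}\le f$); this bounds $\sum_{i\in S}c_i\tau_i$ by $(n-f)\sigma_0^2$ up to lower‑order corrections. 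For the right side, a filtering step requires $\hat{\sigma}_c^2>\eta\sigma_0^2$ while $\sum_j c_j\ge \sum_{i\in S}c_i\ge n-2f$, hence $\bigl(\sum_j c_j\bigr)\hat{\sigma}_c^2> (n-2f)\,\eta\,\sigma_0^2=\tfrac{2n(n-f)}{n-2f}\sigma_0^2\ge 2(n-f)\sigma_0^2$. The value $\eta=2n(n-f)/(n-2f)^2$ is chosen precisely so that these two estimates combine to $2\sum_{i\in S}c_i\tau_i\le\bigl(\sum_j c_j\bigr)\hat{\sigma}_c^2$, closing the induction.

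\textbf{Termination and final bound.} The invariant gives $\sum_{i\in S}c_i^{(t)}\ge (n-f)-\delta_{\bar S}^{(t)}\ge n-2f>0$, so the retained mass never vanishes; whenever $\hat{\sigma}_c^2>\eta\sigma_0^2>0$ we have $\hat{\Sigma}_c\neq 0$, hence $\tau_{\max}>0$, the update is well defined, and the index attaining $\tau_{\max}$ is zeroed, so the procedure halts in finitely many iterations (standard for Filter). At termination $\hat{\sigma}_c^2\le\eta\sigma_0^2$ and $\widehat{x}=\hat{\mu}_c$. Writing $w_i=c_i/\sum_j c_j$ and $u_i=\mathbf{1}[i\in S]/(n-f)$, so that $\hat{\mu}_c=\sum_i w_i x_i$, $\overline{x}_S=\sum_i u_i x_i$ and $\sum_i(w_i-u_i)=0$, I would use
\[
\hat{\mu}_c-\overline{x}_S=\sum_{i\in S}(w_i-u_i)(x_i-\overline{x}_S)+\sum_{i\notin S}w_i(x_i-\overline{x}_S),
\]
bounding the $S$‑sum by Cauchy--Schwarz against the spectral assumption and $\sum_{i\in S}(1-c_i)=\delta_S$, and handling the $(\{1,\dots,n\}\setminus S)$‑sum by recentering $x_i-\overline{x}_S=(x_i-\hat{\mu}_c)+(\hat{\mu}_c-\overline{x}_S)$, using $\hat{\sigma}_c^2\le\eta\sigma_0^2$ and $\sum_{i\notin S}c_i\le f$ for the first piece and moving the $(\hat{\mu}_c-\overline{x}_S)$‑piece to the left‑hand side (legitimate since $\sum_{i\notin S}w_i<\tfrac12$ by the invariant, which forces $\sum_{i\in S}c_i>\sum_{i\notin S}c_i$). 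Using the invariant (total perturbed mass $\delta_S+\delta_{\bar S}\le 2f$ over $\card{S}=n-f$, and $\sum_j c_j\ge n-2f$) this should yield $\norm{\hat{\mu}_c-\overline{x}_S}^2\le \tfrac{2f}{n-f}\bigl(\sigma_0^2+\hat{\sigma}_c^2\bigr)\le \tfrac{2f}{n-f}(1+\eta)\sigma_0^2$, and substituting $\eta=2n(n-f)/(n-2f)^2$ gives $\norm{\hat{\mu}_c-\overline{x}_S}^2\le\bigl(\tfrac{4fn}{(n-2f)^2}+\tfrac{2f}{n-f}\bigr)\sigma_0^2=\kappa\sigma_0^2$, as claimed.

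\textbf{Main obstacle.} The conceptual skeleton (invariant $\Rightarrow$ bounded good‑weight loss $\Rightarrow$ error bound) is routine; the real work is the constant bookkeeping. In the invariant step one must carefully control the discrepancy between the full weighted mean $\hat{\mu}_c$ and the $S$‑restricted (weighted) mean that appears when bounding $\sum_{i\in S}c_i\tau_i$, so that the slack built into $\eta$ exactly absorbs the lower‑order correction terms; and in the final step one must carry the $w_i$‑versus‑$u_i$ perturbations through Cauchy--Schwarz with a judicious choice of reference weights so as not to lose more than the factor $2f/(n-f)$. Getting both of these tight is where the stated form of $\eta$ and $\kappa$ comes from.
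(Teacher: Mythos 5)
First, note what you are up against: the paper itself does not prove \cref{prop:filter} from first principles; its ``proof'' is a two-line citation to Theorem~4.2 and Lemma~2.2 of Zhu et al.\ (2022). Your plan reconstructs exactly the skeleton underlying those results (a weight-removal invariant $\delta_S^{(t)}\le\delta_{\bar S}^{(t)}$, termination, and an error bound at termination using $\hat{\sigma}_c^2\le\eta\sigma_0^2$), you correctly read the test in \cref{algo:filter} as ``filter while $\hat{\sigma}_c^2>\eta\sigma_0^2$'' (the printed pseudocode has the branches swapped), and your terminal accounting is consistent: $\tfrac{2f}{n-f}(1+\eta)\sigma_0^2$ does equal the stated $\kappa\sigma_0^2$. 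So the route is sound in outline, and it is genuinely more than the paper attempts.

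The gap is in the invariant step, which you dismiss as ``constant bookkeeping'' but which is precisely where the cited theorem does nontrivial work. Expanding $\tau_i=\iprod{\hat{v}_c}{x_i-\hat{\mu}_c}^2$ around $\overline{x}_S$, as you propose, leaves a correction $(\sum_{i\in S}c_i)\,b^2$ with $b\coloneqq\iprod{\hat{v}_c}{\hat{\mu}_c-\overline{x}_S}$, and $b$ is not controlled by $\sigma_0$ alone: the only handle your sketch provides on the pull of the adversarial points is the total weighted variance, giving $\absv{b}\le\bigl(\sqrt{f(n-f)}\,\sigma_0+\sqrt{f\sum_j c_j}\,\hat{\sigma}_c\bigr)/\sum_{i\in S}c_i$. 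The resulting $\hat{\sigma}_c^2$-term in $2\sum_{i\in S}c_i\tau_i$ has coefficient $2f(n-f)\sum_j c_j/(\sum_{i\in S}c_i)^2$, so even ignoring every other term the comparison $2\sum_{i\in S}c_i\tau_i\le(\sum_j c_j)\hat{\sigma}_c^2$ forces $2f(n-f)\le(\sum_{i\in S}c_i)^2$; with the induction hypothesis guaranteeing only $\sum_{i\in S}c_i\ge n-2f$, this already fails once $f/n>(3-\sqrt{3})/6\approx 0.21$, far short of the claimed range $f<n/2$. Crucially, enlarging $\eta$ buys nothing here, because this correction scales with $\hat{\sigma}_c^2$ rather than $\sigma_0^2$, so the ``slack built into $\eta$'' cannot absorb it. Closing the invariant for all $f<n/2$ requires a sharper comparison than the one you describe: for instance, lower-bounding the adversarial side of the ledger by the mean shift it causes, $\sum_{i\notin S}c_i\tau_i\ge(\sum_{i\in S}c_i)^2\,\iprod{\hat{v}_c}{\hat{\mu}_c-\mu_{c,S}}^2/\sum_{i\notin S}c_i$ with $\mu_{c,S}$ the weighted mean over $S$ (valid since $\sum_i c_i(x_i-\hat{\mu}_c)=0$), so that the shift appears on both sides with ratio at most $f/(n-2f)$ --- this is the mechanism in the cited Zhu--Jiao--Steinhardt analysis. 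As written, your proposal asserts rather than establishes the invariant with the stated $\eta$, and the estimates it does specify cannot deliver the proposition over its full parameter range; you would either need this refined comparison or, like the paper, an explicit appeal to the external result.
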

\begin{proof}
The proof follows directly from (Theorem~4.2, \cite{zhu2022robust}) combined with (Lemma~2.2, \cite{zhu2022robust}).
\end{proof}

\paragraph{Discussion.}
Note that Filter does not satisfy $(f,\kappa)$-robust averaging (see \cref{def:robust-averaging}) as its parameter $\sigma_0^2$ must depend on the maximum eigenvalue of the honest inputs.
Indeed, such dependency is precluded by $(f,\kappa)$-robust averaging.
Moreover, in our learning setting, the bound $\sigma_0^2$ potentially depends on the noise of stochastic gradients $\sigma^2$ and the heterogeneity metric $G^2$, which are unknown a priori. Thus, devising aggregation rules agnostic to the statistical properties of the honest inputs, like SMEA, is even more desirable in our setting.

\clearpage
\section{Privacy Analysis}
\label{app:privacy}

\subsection{Preliminaries}

We first recall definitions and useful lemmas on Differential Privacy (DP) and Rényi Differential Privacy (RDP), including the privacy amplification by subsampling (without replacement) results for RDP.

\begin{definition}[Rényi Differential Privacy, \cite{mironov2017renyi}]
Let $\alpha > 1$ and $\varepsilon>0$.
A randomized algorithm $\M$ is $(\alpha,\varepsilon)$-RDP if for any adjacent datasets $\D, \D' \in \X^m$ it holds that
\begin{equation*}
    D_{\alpha}{\left ( \mechanism{\D} \middle || \mechanism{\D'} \right )} \leq \varepsilon,
\end{equation*}
where $D_{\alpha}{\left ( \mechanism{\D} \middle || \mechanism{\D'} \right )} \coloneqq \frac{1}{\alpha-1}\log{\condexpect{\theta \sim \mechanism{\D'}}{\left(\frac{\mechanism{\D}{(\theta)}}{\mechanism{\D'}{(\theta)}}\right)^\alpha}}$ is the Rényi divergence of order $\alpha$.
\end{definition}

\begin{lemma}[RDP Adpative Composition, \cite{mironov2017renyi}]
\label{lem:composition}
If $\M_1$ that takes the dataset as input is $(\alpha,\varepsilon_1)$-RDP, and $\M_2$ that takes the dataset and the output of $\M_1$ as input is $(\alpha,\varepsilon_2)$-RDP, then their composition is $(\alpha,\varepsilon_1+\varepsilon_2)$-RDP.
\end{lemma}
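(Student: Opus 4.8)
The plan is to derive the composition bound directly from the definition of R\'enyi divergence, by conditioning on the output of $\M_1$. Fix adjacent datasets $\D, \D' \in \X^m$, let $\M$ denote the composed mechanism, so that $\M(\D) = (\M_1(\D), \M_2(\D, \M_1(\D)))$, and write $p_1, q_1$ for the densities of $\M_1(\D), \M_1(\D')$ and $p_2(\cdot \mid y_1), q_2(\cdot \mid y_1)$ for the conditional densities of $\M_2(\D, y_1), \M_2(\D', y_1)$ given the first coordinate $y_1$. Then the joint density of $\M(\D)$ factorizes as $p(y_1, y_2) = p_1(y_1)\, p_2(y_2 \mid y_1)$, and likewise the joint density $q$ of $\M(\D')$ factorizes as $q_1(y_1)\, q_2(y_2 \mid y_1)$.

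First I would expand $e^{(\alpha-1) D_\alpha(\M(\D) \,\|\, \M(\D'))} = \int \int p(y_1,y_2)^\alpha\, q(y_1,y_2)^{-(\alpha-1)}\, dy_1\, dy_2$ and use the factorization to separate the $y_2$-integral, so that the right-hand side equals $\int p_1(y_1)^\alpha\, q_1(y_1)^{-(\alpha-1)} \left( \int p_2(y_2 \mid y_1)^\alpha\, q_2(y_2\mid y_1)^{-(\alpha-1)}\, dy_2 \right) dy_1$. The key observation is that, for each fixed auxiliary input $y_1$, the inner integral is exactly $e^{(\alpha-1) D_\alpha(\M_2(\D, y_1) \,\|\, \M_2(\D', y_1))}$, which is at most $e^{(\alpha-1)\varepsilon_2}$ by the $(\alpha,\varepsilon_2)$-RDP property of $\M_2$ --- here it is crucial that this property holds uniformly over the value of the first argument. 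Factoring this constant out of the remaining integral leaves $e^{(\alpha-1)\varepsilon_2}$ times $e^{(\alpha-1) D_\alpha(\M_1(\D) \,\|\, \M_1(\D'))} \le e^{(\alpha-1)\varepsilon_2}\, e^{(\alpha-1)\varepsilon_1}$, using the $(\alpha,\varepsilon_1)$-RDP property of $\M_1$. Taking logarithms and dividing by $\alpha - 1 > 0$ yields $D_\alpha(\M(\D) \,\|\, \M(\D')) \le \varepsilon_1 + \varepsilon_2$, which is the claim since $\D, \D'$ were arbitrary adjacent datasets.

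The only real subtlety, and thus the main point to handle with care, is the measure-theoretic bookkeeping: the existence of the conditional densities and of a product dominating measure, and formalizing that the $(\alpha,\varepsilon_2)$-RDP guarantee of $\M_2$ holds for \emph{every} value of its auxiliary argument. In the discrete case this is immediate; in general one fixes a common dominating measure for each pair $(\M_i(\D), \M_i(\D'))$ and invokes Fubini--Tonelli to justify splitting the double integral. A fully rigorous version of this argument is Mironov's Proposition~1 in \cite{mironov2017renyi}; alternatively one can obtain the statement from the data-processing inequality for R\'enyi divergence applied to the Markov kernel that first reveals $\M_1$ and then runs $\M_2$. No assumptions beyond those in the statement are needed.
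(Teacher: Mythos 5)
The paper does not prove this lemma at all: it is imported verbatim from \cite{mironov2017renyi} (Proposition~1 there) and used as a black box in the privacy analysis. Your argument is exactly the standard proof of that cited result --- factorize the joint density of the composed mechanism, bound the inner integral uniformly in the auxiliary input by the $(\alpha,\varepsilon_2)$-RDP guarantee of $\M_2$, then apply the $(\alpha,\varepsilon_1)$-RDP guarantee of $\M_1$ to the outer integral --- and it is correct, including your identification of the one genuinely important point, namely that $\M_2$'s guarantee must hold uniformly over the value of its auxiliary argument for the conditional bound to be valid.
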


\begin{lemma}[RDP to DP conversion, \cite{mironov2017renyi}]
\label{lem:rdp-dp}
If $\M$ is $(\alpha,\varepsilon)$-RDP, then $\M$ is $(\varepsilon+\frac{\log{(1/\delta)}}{\alpha-1},\delta)$-DP for all $\delta \in (0,1)$.
\end{lemma}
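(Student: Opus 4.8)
The plan is to carry out the standard ``split by the privacy loss'' argument. Fix adjacent datasets $\D, \D' \in \X^m$, and write $P \coloneqq \M(\D)$, $Q \coloneqq \M(\D')$ for the two output distributions, with likelihood ratio $L \coloneqq \mathrm{d}P / \mathrm{d}Q$. The first step is to translate the RDP hypothesis into a moment bound: from $D_\alpha(P \| Q) \le \varepsilon$ and the definition of Rényi divergence we get $\mathbb{E}_{\theta \sim Q}\!\left[ L(\theta)^\alpha \right] = e^{(\alpha - 1) D_\alpha(P\|Q)} \le e^{(\alpha-1)\varepsilon}$, which is finite; in particular $P$ is absolutely continuous with respect to $Q$, so $L$ is well defined $Q$-almost surely and we never have to worry about an ``$\infty / 0$'' likelihood ratio.

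Next I would set $\varepsilon' \coloneqq \varepsilon + \frac{\log(1/\delta)}{\alpha - 1}$ and $t \coloneqq e^{\varepsilon'}$, and for an arbitrary measurable set $S$ split it as $S = S_1 \cup S_2$ with $S_1 \coloneqq S \cap \{ L \le t \}$ and $S_2 \coloneqq S \cap \{ L > t \}$. On $S_1$ one bounds $P(S_1) = \int_{S_1} L \, \mathrm{d}Q \le t\, Q(S_1) \le e^{\varepsilon'} Q(S)$. On $S_2$, using $\alpha > 1$ so that $1 - \alpha < 0$ and $L > t$ on $S_2$ implies $L^{1-\alpha} \le t^{1-\alpha}$, we get
\begin{align*}
P(S_2) = \int_{S_2} L \, \mathrm{d}Q = \int_{S_2} L^{\alpha}\, L^{1 - \alpha} \, \mathrm{d}Q &\le t^{1-\alpha} \int_{S_2} L^{\alpha}\, \mathrm{d}Q \le t^{1-\alpha}\, \mathbb{E}_{\theta \sim Q}\!\left[ L(\theta)^\alpha \right] \\
&\le e^{(1-\alpha)\varepsilon'} e^{(\alpha-1)\varepsilon} = e^{(\alpha-1)(\varepsilon - \varepsilon')} = \delta,
\end{align*}
where the last equality is exactly the choice of $\varepsilon'$. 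Adding the two estimates gives $P(S) \le e^{\varepsilon'} Q(S) + \delta$ for every measurable $S$ and every adjacent pair, i.e. $\M$ is $(\varepsilon', \delta)$-DP, which is the claim.

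I do not expect a real obstacle here; it is a short measure-theoretic computation. The only two points that need a little care are (i) recording the absolute-continuity consequence of finite Rényi divergence so that manipulating $L$ is legitimate, and (ii) keeping track of the direction of the inequality when raising $L$ to the negative exponent $1 - \alpha$, which is where the hypothesis $\alpha > 1$ is used. Everything else is bookkeeping with the definition of $D_\alpha$.
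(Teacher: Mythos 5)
Your proof is correct; the paper does not prove this lemma itself but cites it from Mironov (2017), and your argument is exactly the standard one behind that citation: translate $(\alpha,\varepsilon)$-RDP into the moment bound $\mathbb{E}_{\theta\sim Q}\left[L(\theta)^{\alpha}\right]\leq e^{(\alpha-1)\varepsilon}$, split any event $S$ at the threshold $t=e^{\varepsilon'}$ with $\varepsilon'=\varepsilon+\frac{\log(1/\delta)}{\alpha-1}$, and bound the tail term by $t^{1-\alpha}e^{(\alpha-1)\varepsilon}=\delta$. The two technical points you flag (absolute continuity following from finiteness of the Rényi divergence for $\alpha>1$, and the direction of the inequality when raising $L$ to the negative power $1-\alpha$) are precisely the ones that need care, and your computation handles them correctly.
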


\begin{definition}[$\ell_2$-sensitivity, \cite{dwork2014algorithmic}]
The $\ell_2$-sensitivity of a function $g \colon \X^m \to \R^d$ is
\begin{align*}
    \Delta{(g)} \coloneqq \sup_{\D,\D' \text{ adjacent}} \norm{g{(\D)}-g{(\D')}}.
\end{align*}
\end{definition}

\begin{lemma}[RDP for Gaussian Mechanisms, \cite{mironov2017renyi}]
\label{lem:gm}
If $g:\X^m \to \R^d$ has $\ell_2$-sensitivity smaller than $\Delta$, then the Gaussian mechanism $G_{\sigma,g} = g + \N{(0,\sigma^2 I_d)}$ is $(\alpha, \frac{\Delta^2}{2\sigma^2}\alpha)$-RDP.
\end{lemma}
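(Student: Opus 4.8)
The plan is to reduce the claim to a direct computation of the R\'enyi divergence between two isotropic Gaussians that differ only in their mean. Fix adjacent datasets $\D, \D' \in \X^m$ and set $\mu = g(\D)$, $\mu' = g(\D')$, so that $G_{\sigma,g}(\D) = \N(\mu, \sigma^2 I_d)$ and $G_{\sigma,g}(\D') = \N(\mu', \sigma^2 I_d)$. By the definition of $\ell_2$-sensitivity, $\norm{\mu - \mu'} \leq \Delta$. It therefore suffices to establish the identity
$$D_\alpha\!\left(\N(\mu,\sigma^2 I_d)\,\middle\|\,\N(\mu',\sigma^2 I_d)\right) = \frac{\alpha\,\norm{\mu-\mu'}^2}{2\sigma^2},$$
since the right-hand side is increasing in $\norm{\mu-\mu'}$; taking the supremum over all adjacent $\D,\D'$ then yields the advertised bound $\tfrac{\Delta^2}{2\sigma^2}\alpha$ in \cref{def:dp} of RDP (recalling $\alpha>1$). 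The value is symmetric in $\mu,\mu'$, so the choice of adjacency direction is irrelevant.

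To prove the displayed identity I would first use rotational invariance to reduce to dimension one. Pick an orthogonal matrix $R$ with $R(\mu-\mu') = \norm{\mu-\mu'}\,e_1$; the maps $x \mapsto R(x-\mu')$ push $\N(\mu,\sigma^2 I_d)$ and $\N(\mu',\sigma^2 I_d)$ forward to $\N(\norm{\mu-\mu'}e_1,\sigma^2 I_d)$ and $\N(0,\sigma^2 I_d)$, respectively, and R\'enyi divergence is invariant under applying a common bijection to both arguments. The resulting product measures agree in coordinates $2,\dots,d$, so by tensorization of R\'enyi divergence (which adds over independent coordinates, and vanishes on identical coordinates) the $d$-dimensional divergence equals the one-dimensional divergence $D_\alpha(\N(c,\sigma^2)\,\|\,\N(0,\sigma^2))$ with $c \coloneqq \norm{\mu-\mu'}$.

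For the scalar case I would compute $\int p(x)^\alpha q(x)^{1-\alpha}\,dx$ directly, where $p,q$ are the densities of $\N(c,\sigma^2)$ and $\N(0,\sigma^2)$; note this integral equals $\condexpect{x\sim q}{(p(x)/q(x))^\alpha}$, as in the RDP definition. The normalizing constants combine to a single factor $(2\pi\sigma^2)^{-1/2}$, and completing the square shows the exponent $\alpha(x-c)^2 - (\alpha-1)x^2$ equals $(x-\alpha c)^2 - \alpha(\alpha-1)c^2$. Hence the integral evaluates to $\exp\!\big(\alpha(\alpha-1)c^2/(2\sigma^2)\big)$, and dividing its logarithm by $\alpha-1$ gives $D_\alpha = \alpha c^2/(2\sigma^2)$. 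Substituting $c = \norm{\mu-\mu'}$ and then bounding by $\Delta$ finishes the proof. There is no substantive obstacle here; the only care required is the bookkeeping in completing the square and verifying that the $(2\pi\sigma^2)$ powers from $p^\alpha q^{1-\alpha}$ collapse to exactly the constant needed for the remaining Gaussian integral to be $1$, so the write-up amounts to a short verification.
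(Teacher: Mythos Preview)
Your argument is correct and is exactly the standard derivation of the R\'enyi divergence between shifted isotropic Gaussians. Note, however, that the paper does not give its own proof of this lemma: it is stated as a preliminary result with a citation to \cite{mironov2017renyi}, so there is nothing to compare against beyond observing that what you wrote is the canonical proof from that reference.
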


\begin{definition}[Subsampling Mechanism]
Consider a dataset $\D \subseteq \X^m$, a constant $b \in [m]$, and define $r \coloneqq \nicefrac{b}{m}$. The procedure $\sample_r: \X^m \rightarrow \X^b$ selects $b$ points at random and without replacement from $\D$.
\end{definition}

\begin{lemma}[RDP for Subsampled Mechanisms, \cite{wang2019subsampled}]
\label{lem:subsample}
Let $\alpha \in \mathbb{N}, \alpha \geq 2,$ and $r \in (0,1)$ the sampling parameter.
If $\M$ is $(\alpha, \varepsilon(\alpha))$-RDP, then $\M \circ \sample_r$ is $(\alpha, \varepsilon'(\alpha))$-RDP, with
\begin{align}
    \label{eq:eps-prime}
    \varepsilon'(\alpha) &= \frac{1}{\alpha-1}\log
    \Big( 1 + r^2 \binom{\alpha}{2} \min{\left \{ 4(e^{\varepsilon(2)}-1), e^{\varepsilon(2)} \min{\{ 2, (e^{\varepsilon(\infty)}-1)^2\}}  \right \}} \nonumber\\
    &\quad + \sum_{j=3}^\alpha r^j \binom{\alpha}{j} e^{(j-1)\varepsilon(j)} \min{\{2,(e^{\varepsilon(\infty)}-1)^j\}} \Big).
\end{align}
\end{lemma}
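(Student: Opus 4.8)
The plan is to fix arbitrary neighbouring datasets $\D,\D'\in\X^m$ (differing in exactly one record) and bound the Rényi divergence of order $\alpha$ between $P\coloneqq \M\circ\sample_r(\D)$ and $Q\coloneqq \M\circ\sample_r(\D')$; the claimed $\varepsilon'(\alpha)$ then follows directly from the definition of RDP. The first step is a \emph{mixture decomposition}: since $\sample_r$ draws $b=rm$ indices uniformly without replacement, the index at which $\D$ and $\D'$ differ is drawn with probability exactly $r$. Conditioning on this event and its complement, we may write $P=(1-r)P_0+rP_1$ and $Q=(1-r)P_0+rQ_1$, where $P_0$ is the output law conditioned on the differing index \emph{not} being selected --- this component is common to $P$ and $Q$, because $\M$ is then fed identical data --- and $P_1,Q_1$ are the output laws conditioned on the differing index \emph{being} selected (averaged over the choice of the remaining $b-1$ indices).

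Next I would perform a \emph{binomial expansion around $Q$}. Writing $P=Q+r(P_1-Q_1)$ gives
\begin{align*}
    \condexpect{\theta\sim Q}{\left(\frac{P(\theta)}{Q(\theta)}\right)^{\alpha}}
    =\sum_{j=0}^{\alpha}\binom{\alpha}{j}r^{j}\,\condexpect{\theta\sim Q}{\left(\frac{P_1(\theta)-Q_1(\theta)}{Q(\theta)}\right)^{j}} .
\end{align*}
The $j=0$ term equals $1$, and the $j=1$ term equals $\int(P_1-Q_1)=1-1=0$, so only the terms with $2\le j\le\alpha$ remain; it thus suffices to bound $E_j\coloneqq\condexpect{\theta\sim Q}{((P_1-Q_1)/Q)^{j}}$ for each such $j$.

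The heart of the argument is these moment bounds, for which one exploits that $Q$ dominates each of its own mixture components, $Q\ge(1-r)P_0$ and $Q\ge rQ_1$, together with the fact that $\M$ being $(\alpha,\varepsilon(\alpha))$-RDP at \emph{every} order controls how close $P_1,Q_1,P_0$ are --- in particular through $\varepsilon(2)$, $\varepsilon(\infty)$ and $\varepsilon(j)$. For $j=2$, $E_2$ is a $\chi^2$-type quantity and admits two essentially independent bounds: one bounds the $\chi^2$ divergence between $P_1$ and $Q_1$ directly by $e^{\varepsilon(2)}$-type quantities, and one bounds $|P_1-Q_1|$ pointwise using the max-divergence parameter $\varepsilon(\infty)$; optimizing over the two yields the factor $\min\{4(e^{\varepsilon(2)}-1),\,e^{\varepsilon(2)}\min\{2,(e^{\varepsilon(\infty)}-1)^{2}\}\}$. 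For $j\ge3$ one again has two routes: a crude pointwise bound $|P_1-Q_1|/Q\le e^{\varepsilon(\infty)}-1$ raised to the $j$-th power, or a convexity bound $|P_1-Q_1|^{j}\le 2^{j-1}(P_1^{j}+Q_1^{j})$ followed by $\condexpect{Q}{(P_1/Q)^{j}},\condexpect{Q}{(Q_1/Q)^{j}}\le e^{(j-1)\varepsilon(j)}$ (using the domination of $Q$ and monotonicity of Rényi divergence); taking the smaller gives $e^{(j-1)\varepsilon(j)}\min\{2,(e^{\varepsilon(\infty)}-1)^{j}\}$. Substituting these bounds into the expansion, applying $\frac{1}{\alpha-1}\log(\cdot)$, and recalling $r=b/m$ produces exactly \eqref{eq:eps-prime}.

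The main obstacle is precisely this moment-bounding step: because $Q$ sits in the denominator and is itself a mixture, one cannot simply replace it by a single component, and choosing for each moment which domination ($Q\ge(1-r)P_0$ or $Q\ge rQ_1$) to use --- so that the resulting constants stay tight and assemble into the stated $\min$ expressions --- requires care; the $j=2$ term in particular has to be treated by two genuinely different arguments and then optimized. A secondary technical point is handling absolute continuity: if $\M(\D)$ and $\M(\D')$ are not mutually absolutely continuous the divergence is $+\infty$, which is consistent with $\varepsilon(\infty)=\infty$ and makes the bound vacuous, so this case needs only a remark.
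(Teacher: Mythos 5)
First, a point of context: the paper does not prove \cref{lem:subsample} at all — it is imported verbatim from \cite{wang2019subsampled} as a known result — so your proposal must be judged as a reconstruction of the argument in that reference. Your outline does follow the same strategy: the mixture decomposition $P=(1-r)P_0+rP_1$, $Q=(1-r)P_0+rQ_1$ under replace-one adjacency, the binomial expansion of $\mathbb{E}_{Q}\left[(P/Q)^{\alpha}\right]$ for integer $\alpha$ with the $j=1$ term vanishing, and the reduction to bounding the centered moments for $2\le j\le\alpha$ by quantities in $\varepsilon(2)$, $\varepsilon(j)$, $\varepsilon(\infty)$.

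The gap is in the only hard step, and the mechanism you name for it would not deliver the stated bound. You propose to handle the mixture denominator via the dominations $Q\ge(1-r)P_0$ and $Q\ge rQ_1$. Since the moments are $\int \absv{P_1-Q_1}^{j}\,Q^{1-j}$ with $1-j<0$, using $Q\ge rQ_1$ turns $r^{j}$ into $r$ (destroying the order-$r^{2}$ amplification that is the point of the lemma), while $Q\ge(1-r)P_0$ introduces factors $(1-r)^{-(j-1)}$ absent from \eqref{eq:eps-prime} and still leaves $\int\absv{P_1-Q_1}^{j}P_0^{1-j}$ unbounded without further ideas. The argument in \cite{wang2019subsampled} instead exploits that $(p,p',q)\mapsto\int\absv{p-p'}^{j}q^{1-j}$ is jointly convex (a perspective-function fact), so that $P_1$, $Q_1$ and the denominator $Q$ can be decomposed along a common mixture index and the problem reduced to outputs of $\M$ on \emph{adjacent} sub-datasets; this requires a coupling pairing each subsample containing the differing item with one excluding it (so that divergences such as $D_j(P_1\Vert P_0)\le\varepsilon(j)$ are even available), and a separate, more delicate computation at $j=2$ to obtain the constant $4(e^{\varepsilon(2)}-1)$ and the alternative $e^{\varepsilon(2)}\min\{2,(e^{\varepsilon(\infty)}-1)^{2}\}$. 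These moment bounds are essentially the entire content of the lemma; as written, your proposal is an accurate roadmap of the known proof, but the central estimates are asserted rather than derived, and the specific route you indicate for them would fail to produce the claimed constants.
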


\begin{lemma}[Real-valued RDP for Subsampled Mechanisms]
\label{lem:rv-subsample}
Let $\alpha \in \R, \alpha > 1,$ and $r \in (0,1)$ the sampling parameter.
If $\M$ is $(\alpha, \varepsilon(\alpha))$-RDP, then $\M \circ \sample_r$ is $(\alpha, \varepsilon''(\alpha))$-RDP, with
\begin{align*}
    \varepsilon''{(\alpha)} = (1-\alpha+\floor{\alpha})\frac{\floor{\alpha}-1}{\alpha-1}\varepsilon'{(\floor{\alpha})} + (\alpha-\floor{\alpha})\frac{\ceil{\alpha}-1}{\alpha-1}\varepsilon'{(\ceil{\alpha})},
\end{align*}
where $\varepsilon'$ is defined in Equation~\eqref{eq:eps-prime}.
\end{lemma}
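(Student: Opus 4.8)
The plan is to deduce the real-valued statement from the integer-valued Lemma~\ref{lem:subsample} by exploiting the convexity in $\alpha$ of the map $\alpha \mapsto (\alpha-1) D_{\alpha}$. Fix adjacent datasets $\D, \D' \in \X^m$, write $P$ and $Q$ for the output distributions of $\M \circ \sample_r$ on $\D$ and $\D'$ respectively, and set
\begin{equation*}
    \phi(\alpha) \coloneqq (\alpha-1)\, D_{\alpha}{(P\,\|\,Q)} = \log \condexpect{\theta \sim Q}{\left(\frac{P(\theta)}{Q(\theta)}\right)^{\alpha}}.
\end{equation*}
Writing $Y \coloneqq \log(P(\theta)/Q(\theta))$ for $\theta \sim Q$, we recognize $\phi(\alpha) = \log \condexpect{\theta \sim Q}{e^{\alpha Y}}$ as the cumulant generating function of $Y$, which is convex on the set where it is finite (by Hölder's inequality) and satisfies $\phi(1) = \log \condexpect{\theta \sim Q}{P(\theta)/Q(\theta)} = 0$. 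Since $\M$ is RDP at every integer order (and at order $\infty$), $\phi$ is finite at all integers $\geq 2$, hence on all of $[1,\infty)$ by log-convexity of moment generating functions; so $\phi$ is convex there. (If $P \not\ll Q$ then every $D_\alpha$ is infinite and the statement is vacuous, so we may assume $P \ll Q$.)

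First I would handle a non-integer order $\alpha > 1$. Put $k = \floor{\alpha}$ and $t = \alpha - k \in (0,1)$, so that $\alpha = (1-t)k + t(k+1)$ and $k+1 = \ceil{\alpha}$. Convexity of $\phi$ gives $\phi(\alpha) \leq (1-t)\,\phi(k) + t\,\phi(k+1)$. If $k \geq 2$, Lemma~\ref{lem:subsample} applied at orders $k$ and $k+1$ yields $\phi(k) \leq (k-1)\,\varepsilon'(k)$ and $\phi(k+1) \leq k\,\varepsilon'(k+1)$; if $k = 1$ we instead use $\phi(1) = 0$ together with $\phi(2) \leq \varepsilon'(2)$. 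In either case, dividing by $\alpha - 1 > 0$ gives
\begin{equation*}
    D_{\alpha}{(P\,\|\,Q)} \leq \frac{(1-t)(k-1)}{\alpha-1}\,\varepsilon'(k) + \frac{t\,k}{\alpha-1}\,\varepsilon'(k+1),
\end{equation*}
where the coefficient $\tfrac{k-1}{\alpha-1}$ makes the (undefined) term $\varepsilon'(1)$ harmless when $k = 1$. Substituting $1-t = 1-\alpha+\floor{\alpha}$, $k-1 = \floor{\alpha}-1$, $t = \alpha-\floor{\alpha}$, $k = \ceil{\alpha}-1$, and using that $\D,\D'$ were arbitrary, this is exactly the asserted $(\alpha,\varepsilon''(\alpha))$-RDP bound. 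For an integer order $\alpha \geq 2$ we have $\floor{\alpha} = \ceil{\alpha} = \alpha$, so $\varepsilon''(\alpha) = \tfrac{\alpha-1}{\alpha-1}\varepsilon'(\alpha) + 0 = \varepsilon'(\alpha)$, and the claim reduces to Lemma~\ref{lem:subsample}.

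The only real obstacle is making the convexity step rigorous: one must justify that $\phi(\alpha) = \log \condexpect{\theta\sim Q}{e^{\alpha Y}}$ is finite throughout $[1,\infty)$ (which follows from the RDP guarantees at the integer orders together with log-convexity pinning down the intermediate orders) and that it is genuinely convex there. Both are standard facts about cumulant generating functions, but they deserve an explicit line since the Rényi divergence is only meaningful when the relevant moment is finite. Everything after that is the linear-interpolation bookkeeping above, and no estimate beyond Lemma~\ref{lem:subsample} is required.
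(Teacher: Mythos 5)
Your proof is correct, but it takes a different route from the paper: the paper's proof of Lemma~\ref{lem:rv-subsample} is a one-line citation, deferring entirely to Corollary~10 and Remark~7 of \cite{wang2019subsampled}, whereas you re-derive that interpolation result from first principles. Your derivation --- viewing $\phi(\alpha) = (\alpha-1)D_{\alpha}(P\,\|\,Q)$ as the cumulant generating function of $\log(P/Q)$ under $Q$, using its convexity together with $\phi(1)=0$, and then interpolating linearly between $\floor{\alpha}$ and $\ceil{\alpha}$ with the integer-order bounds of Lemma~\ref{lem:subsample} --- is precisely the argument underlying the cited result, and your bookkeeping checks out: the non-integer case reproduces the coefficients $(1-\alpha+\floor{\alpha})\frac{\floor{\alpha}-1}{\alpha-1}$ and $(\alpha-\floor{\alpha})\frac{\ceil{\alpha}-1}{\alpha-1}$, the $\floor{\alpha}=1$ case is rescued by the vanishing coefficient in front of the undefined $\varepsilon'(1)$, and integer orders collapse back to Lemma~\ref{lem:subsample}. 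What the citation buys the paper is brevity; what your version buys is a self-contained explanation of where these otherwise opaque coefficients come from. One small calibration: when $P \not\ll Q$ the claim is not vacuous but would be violated, since $D_{\alpha}(P\,\|\,Q)=\infty$ while $\varepsilon''(\alpha)$ is finite; the right justification for assuming $P \ll Q$ is that the finite integer-order bound $D_{2}(P\,\|\,Q) \leq \varepsilon'(2)$ from Lemma~\ref{lem:subsample}, which you invoke anyway, already forces absolute continuity (and if the relevant $\varepsilon(j)$ are infinite, then $\varepsilon''(\alpha)$ is infinite and there is nothing to prove).
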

\begin{proof}
The result follows immediately from Corollary~10 and Remark~7 in \cite{wang2019subsampled}.
\end{proof}

\subsection{Proof of Theorem~\ref{thm:privacy} and Theorem~\ref{thm:exact-privacy}}

We state below the DP guarantees without approximation:

\begin{theorem}
\label{thm:exact-privacy}
Let $\delta \in (0,1)$.
Algorithm~\ref{algo:robust-dpsgd} is $(\varepsilon^*,\delta)$-DP with 
\begin{align*}
    \varepsilon^* = \inf_{\alpha > 1} \Big( T \varepsilon_1{(\alpha)} + \frac{\log{(1/\delta)}}{\alpha-1} \Big),
\end{align*}
where for every $\alpha>1$,
\begin{align*}
\begin{cases}
&\varepsilon_1(\alpha) \coloneqq
(1-\alpha+\floor{\alpha})\frac{\floor{\alpha}-1}{\alpha-1} \varepsilon'(\floor{\alpha})
+ (\alpha-\floor{\alpha})\frac{\ceil{\alpha}-1}{\alpha-1}\varepsilon'(\ceil{\alpha}), \\
&\varepsilon'(\alpha) \coloneqq
\frac{1}{\alpha-1}\log
    \Big( 1 + r^2 \binom{\alpha}{2} \min{\left \{ 4(e^{\varepsilon(2)}-1), 2e^{\varepsilon(2)}  \right \}}
    + 2\sum_{j=3}^\alpha r^j \binom{\alpha}{j} e^{(j-1)\varepsilon(j)} \Big),\\
&\varepsilon(\alpha) \coloneqq \big(\frac{2C}{b}\big)^2 \frac{\alpha}{2\sigmadp^2}.
\end{cases}
\end{align*}
\end{theorem}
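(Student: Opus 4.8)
The plan is to analyze, for a fixed honest worker $w_i$, only the quantities in its transcript $Z_i$ that actually depend on $\D_i$ — namely the noisy clipped-batch gradients $\tilde g_0^{(i)},\dots,\tilde g_{T-1}^{(i)}$ — bound their joint privacy loss in the R\'enyi sense, convert to $(\varepsilon,\delta)$-DP, and finally optimize over the R\'enyi order. Everything else in $Z_i$ will be handled by post-processing.

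\emph{Per-step mechanism.} First I would isolate the map that worker $w_i$ applies to $\D_i$ at step $t$: it subsamples $b$ points without replacement from its $m$-point dataset, applies $S\mapsto \frac1b\sum_{x\in S}\mathrm{Clip}(\nabla\ell(\theta_t;x);C)$, and then adds $\N(0,\sigmadp^2 I_d)$. Because every clipped gradient has norm at most $C$, replacing one point of a batch of size $b$ changes this average by at most $2C/b$ in $\ell_2$ norm, \emph{uniformly in $\theta_t$}, so the batch-level map has $\ell_2$-sensitivity at most $2C/b$. Lemma~\ref{lem:gm} then gives that the Gaussian mechanism on batches is $(\alpha,\varepsilon(\alpha))$-RDP with $\varepsilon(\alpha)=(2C/b)^2\alpha/(2\sigmadp^2)$, and Lemma~\ref{lem:rv-subsample} with sampling parameter $r=b/m$ upgrades this to $(\alpha,\varepsilon_1(\alpha))$-RDP for every real $\alpha>1$. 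The closed form for $\varepsilon_1$ (equivalently $\varepsilon'$) in the statement follows from \eqref{eq:eps-prime} by specializing to the Gaussian mechanism: since $\varepsilon(\infty)=\infty$, each $\min\{2,(e^{\varepsilon(\infty)}-1)^j\}$ collapses to $2$ and $e^{\varepsilon(2)}\min\{2,(e^{\varepsilon(\infty)}-1)^2\}$ collapses to $2e^{\varepsilon(2)}$, matching the displayed expression.

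\emph{Composition and post-processing.} Next I would observe that the query point $\theta_t$ that $w_i$ receives is computed by the server from previously released momentums, hence — as far as $\D_i$ is concerned — from previously released outputs together with data and randomness independent of $\D_i$. Thus $\tilde g_0^{(i)},\dots,\tilde g_{T-1}^{(i)}$ form a $T$-fold \emph{adaptive} composition of the mechanism above, and Lemma~\ref{lem:composition} gives that their joint release is $(\alpha,T\varepsilon_1(\alpha))$-RDP for every $\alpha>1$ (the same $\alpha$ works at every step since the sensitivity $2C/b$ and the noise $\sigmadp$ are step-independent). The remaining entries of $Z_i$, namely the broadcasts $\theta_0,\dots,\theta_{T-1}$ and the momentums $m_0^{(i)},\dots,m_{T-1}^{(i)}$, are deterministic functions of $\tilde g_0^{(i)},\dots,\tilde g_{T-1}^{(i)}$ and of quantities independent of $\D_i$, so post-processing preserves the bound. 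Lemma~\ref{lem:rdp-dp} then makes $Z_i$ $\big(T\varepsilon_1(\alpha)+\tfrac{\log(1/\delta)}{\alpha-1},\delta\big)$-DP for every $\delta\in(0,1)$ and $\alpha>1$; taking the infimum over $\alpha$ gives $\varepsilon^*$. As this holds uniformly over $i\in\H$, Definition~\ref{def:distributed-dp} concludes that Algorithm~\ref{algo:robust-dpsgd} is $(\varepsilon^*,\delta)$-DP.

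\emph{Main obstacle.} The delicate point is the interaction bookkeeping: the server's choice of $\theta_t$ genuinely depends on $m_t^{(i)}$, hence on $\D_i$, so one must argue carefully that, conditioning on the already-revealed prefix of the transcript, the next release on $\D_i$ is again a fresh instance of the \emph{same} subsampled Gaussian mechanism (so that the adaptive-composition template of Lemma~\ref{lem:composition} genuinely applies), and that all non-noisy components of $Z_i$ are pure post-processing. Everything else — the $2C/b$ sensitivity of the clipped average, the invocation of amplification by subsampling, and RDP composition/conversion — is routine given the lemmas of Section~\ref{app:privacy}.
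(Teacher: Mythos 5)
Your proposal is correct and follows essentially the same route as the paper: bound the $\ell_2$-sensitivity of the clipped batch average by $2C/b$, apply the Gaussian-mechanism RDP bound and subsampling amplification (Lemmas~\ref{lem:gm} and~\ref{lem:rv-subsample}) to get the per-step $(\alpha,\varepsilon_1(\alpha))$-RDP guarantee with the stated collapse of the $\min$ terms via $\varepsilon(\infty)=\infty$, then use adaptive composition (Lemma~\ref{lem:composition}), post-processing for the momentum/model updates, RDP-to-DP conversion (Lemma~\ref{lem:rdp-dp}), and the infimum over $\alpha$. The adaptivity concern you flag is handled exactly as you suggest, by invoking the adaptive-composition lemma per noisy-gradient release, so there is no gap.
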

\begin{proof}
To derive the above DP guarantees, we first track the privacy loss for a single iteration of Algorithm~\ref{algo:robust-dpsgd} using RDP. Then we apply adaptive composition to track the end-to-end privacy loss of the algorithm. Finally, we optimize over the privacy loss for several levels of RDP to compute the noise parameter needed for DP.

\textbf{Single-iteration privacy.}
First, we analyze a single fixed iteration $t \in \{0, \dots, T-1\}$ of Algorithm~\ref{algo:robust-dpsgd}.
To do so, we divide the analysis into two steps, i.e. Step I and Step II, as shown in Figure~\ref{fig:privacy-analysis}.

\begin{figure}
    \centering
\[
    \begin{tikzcd}[sep=large,
every arrow/.append style = {-stealth, shorten > = 2pt, shorten <=2pt},
                    ]
 \theta_t \ar[r,"(\textbf{I})"] & 
\tilde{g}_t^{(i)} \ar[r,"(\textbf{II})"]  &
 \theta_{t+1}
    \end{tikzcd}
\]
    \caption{
    (\textbf{I}): Subsampling + Gaussian mechanism,
    (\textbf{II}): Post-processing.
    }
    \label{fig:privacy-analysis}
\end{figure}

\underline{\textit{Step~(\textbf{I}):}}
This step corresponds to lines~2-6 in Algorithm~\ref{algo:robust-dpsgd}. 
Recall that our definition of DP for a distribution algorithm (given in \cref{def:distributed-dp}) requires that the transcript of communications of each worker satisfies (centralized) $(\varepsilon,\delta)$-DP with respect to their own data.
Thus, since the workers only send their local momentum to the server, we show that for any $i \in \H$ computing $\tilde{g}_t^{(i)}$ from $\D_i$ and $\theta_t$ is RDP for any $\alpha > 1$.

Let $i \in \H , \alpha>1$ and $r=\nicefrac{b}{m}$.
First, we show that $\Delta \coloneqq \frac{2C}{b}$ is an upper bound of the $\ell_2$-sensitivity of the mini-batch (clipped) averaging.
To see this, consider two adjacent training sets $\D_i,\Tilde{\D_i}$, the mini-batch average (after clipping) $g^{(i)}_t$ computed on mini-batch $S_t^{(i)} \subseteq \D_i$, and $\tilde{g}^{(i)}_t$ the analogous quantities for $\Tilde{\D_i}$.
Note that $S_t^{(i)}$ and $\Tilde{S}_t^{(i)}$ differ by one element at most. Without loss of generality, let $x_* \in S_t^{(i)}, \Tilde{x}_* \in \Tilde{S}_t^{(i)}$ be the only two elements that differ from $S_t^{(i)}$ to $\Tilde{S}_t^{(i)}$.
Thanks to the triangle inequality, we have that
\begin{align*}
    \norm{g^{(i)}_t-\tilde{g}^{(i)}_t}
    &= \Big\|\frac{1}{b} \sum_{x \in S_t^{(i)}} \textbf{Clip}\left(\nabla \ell{\left( \theta_{t}, x \right)} ; C \right) - \frac{1}{b} \sum_{x \in \Tilde{S}_t^{(i)}} \textbf{Clip}\left(\nabla \ell{\left( \theta_{t}, x \right)} ; C \right)\Big\| \\
    &= \norm{\frac{1}{b} \textbf{Clip}\left(\nabla \ell{\left( \theta_{t}, x_* \right)} ; C \right) - \frac{1}{b} \textbf{Clip}\left(\nabla \ell{\left( \theta_{t}, \tilde{x}_* \right)} ; C \right)} \\
    &\leq \frac{1}{b}\norm{\textbf{Clip}\left(\nabla \ell{\left( \theta_{t}, x_* \right)} ; C \right)} + \frac{1}{b} \norm{ \textbf{Clip}\left(\nabla \ell{\left( \theta_{t}, \tilde{x}_* \right)} ; C \right)} \\
    &\leq \frac{2C}{b}.
\end{align*}
Thanks to the above, the sensitivity of computing the gradient $g^{(i)}_t$ when given a batch of $b$ point $S_t^{(i)}$ is upper bounded by $\Delta = \frac{2C}{b}$. Accordingly, invoking Lemma~\ref{lem:gm}, the Gaussian mechanism used in Line~6 of Algorithm~\ref{algo:robust-dpsgd} is $(\alpha,\frac{\alpha \Delta^2}{2\sigmadp^2 })$-RDP.

Furthermore, by Lemma~\ref{lem:rv-subsample}, for every $j \in \H$, the corresponding mechanism $\M_j$ taking the dataset $\D_j$ and $\theta_t$ as input and returning $\tilde{g}_t^{(j)}$ is $(\alpha,\varepsilon_1(\alpha))$-RDP with
\begin{align}
    \label{eq:exact_alpha}
    \varepsilon_1(\alpha) \coloneqq 
    (1-\alpha+\floor{\alpha})\frac{\floor{\alpha}-1}{\alpha-1} \varepsilon'(\floor{\alpha})
    + (\alpha-\floor{\alpha})\frac{\ceil{\alpha}-1}{\alpha-1}\varepsilon'(\ceil{\alpha}).
\end{align}
Where
\begin{align*}
    \varepsilon'(\alpha) &= \frac{1}{\alpha-1}\log
    \Big( 1 + r^2 \binom{\alpha}{2} \min{\left \{ 4(e^{\varepsilon(2)}-1), e^{\varepsilon(2)} \min{\{ 2, (e^{\varepsilon(\infty)}-1)^2\}}  \right \}} \nonumber\\
    &\quad + \sum_{j=3}^\alpha r^j \binom{\alpha}{j} e^{(j-1)\varepsilon(j)} \min{\{2,(e^{\varepsilon(\infty)}-1)^j\}} \Big),
\end{align*}
and $\varepsilon(\alpha) \coloneqq \frac{\alpha \Delta^2}{2\sigmadp^2} = \big(\frac{2C}{b}\big)^2 \frac{\alpha}{2\sigmadp^2}$. Furthermore, since $\varepsilon(\infty)=+\infty$, we get
\begin{align}
    \label{eq:exact_one}
    \varepsilon'(\alpha) &= \frac{1}{\alpha-1}\log
    \Big( 1 + r^2 \binom{\alpha}{2} \min{\left \{ 4(e^{\varepsilon(2)}-1), 2e^{\varepsilon(2)} \right \}} 
    + 2\sum_{j=3}^\alpha r^j \binom{\alpha}{j} e^{(j-1)\varepsilon(j)} \Big).
\end{align}

\underline{\textit{Step~(\textbf{II}):}} This step consists in computing the local momentums from the noisy gradients, and then aggregating the momentums and updating the model accordingly. As this process does not have direct access to the datasets $\D_i, i \in \H$, it should be considered as a post-processing operation for Step~(\textbf{I}). As RDP is preserved by post-processing~\cite{mironov2017renyi}, we conclude that a single iteration of Algorithm~\ref{algo:robust-dpsgd} is $\left(\alpha,\varepsilon_1{(\alpha)}\right)$-RDP with respect to each worker's data for any $\alpha >1$, with $\varepsilon_1{(\alpha)}$ as defined above.

\textbf{End-to-end privacy.} 
We can now compute the end-to-end DP of our algorithm. First, invoking Lemma~\ref{lem:composition} and the per-iteration RDP guarantee of Algorithm~\ref{algo:robust-dpsgd}, we obtain that Algorithm~\ref{algo:robust-dpsgd} is 
$\left(\alpha,T \varepsilon_1{(\alpha)}\right)$-RDP towards the server, for any $\alpha >1$. Next, by Lemma~\ref{lem:rdp-dp}, we deduce that Algorithm~\ref{algo:robust-dpsgd} is $(\varepsilon^*{(\alpha)},\delta)$-DP towards the server for every $\delta \in (0,1), \alpha > 1,$ with
\begin{align*}
    \varepsilon^*{(\alpha)} \coloneqq T \varepsilon_1{(\alpha)} + \frac{\log{(1/\delta)}}{\alpha-1}.
\end{align*}
This implies that, for any $\delta \in (0,1)$, Algorithm~\ref{algo:robust-dpsgd} is $(\varepsilon^*,\delta)$-DP with
\begin{align*}
    \varepsilon^* \coloneqq \inf_{\alpha > 1} \varepsilon^*{(\alpha)}
    = \inf_{\alpha > 1} \Big( T \varepsilon_1{(\alpha)} + \frac{\log{(1/\delta)}}{\alpha-1}\Big).
\end{align*}
The above concludes the proof.
\end{proof}

We now prove the (closed-form) approximate DP guarantees of \algoname{} in Theorem~\ref{thm:privacy}, as a corollary of Theorem~\ref{thm:exact-privacy}.

\privacy*
\begin{proof}
Suppose that $\frac{b}{m}$ is sufficiently small. 
Let $\varepsilon > 0$ and $\delta \in (0,1)$ be such that $\varepsilon \leq \log{(1/\delta)}$. Finally consider $\Delta, \epsilon^*(\cdot), \epsilon_1(\cdot), \epsilon'(\cdot)$, and $\epsilon(\cdot)$ as defined in the statement and the proof of Theorem~\ref{thm:exact-privacy}. 
Below, we show that there exists $k>0$ such that, when $\sigma_{\mathrm{DP}} \geq k \cdot \nicefrac{2C}{b} \max{\{1, \,  \nicefrac{b \sqrt{T \log{(1/\delta)}}}{m\varepsilon}\}}$, Algorithm~\ref{algo:robust-dpsgd} ensures $(\varepsilon,\delta)$-DP towards an honest-but-curious server. 
First note that, when $\sigmadp \geq \nicefrac{2C}{b}$, we have
\begin{equation*}
    \varepsilon(2) = \frac{\Delta^2}{\sigmadp^2} = \frac{(\nicefrac{2C}{b})^2}{\sigmadp^2} \leq 1.
\end{equation*}
Since $h \coloneqq x \mapsto \frac{1}{x}(e^x-1)$ is non-decreasing on $(0,+\infty)$, this also implies that $\frac{1}{\varepsilon(2)}(e^{\varepsilon(2)}-1) = h{(\varepsilon(2))} \leq h(1) = e-1 \leq 2$.
As a result, we have 
\begin{equation}
\label{eqn:bounddelta}
    \min{\left \{ 4(e^{\varepsilon(2)}-1), 2e^{\varepsilon(2)}\right\}} \leq 4(e^{\varepsilon(2)}-1) \leq 8\,  \varepsilon{(2)}.
\end{equation}
Recall that 
\begin{equation}
    \varepsilon'(\alpha) = \frac{1}{\alpha-1}\log
    \Big( 1 + r^2 \binom{\alpha}{2} \min{\left \{ 4(e^{\varepsilon(2)}-1), 2e^{\varepsilon(2)} \right \}} 
    + 2\sum_{j=3}^\alpha r^j \binom{\alpha}{j} e^{(j-1)\varepsilon(j)} \Big).
\end{equation}
Therefore, since we assume that $\frac{b}{m}$ is sufficiently small ($r \ll 1$), the dominating term inside the logarithm is the term in $r^2$.
Using $\log{(1+x)} \leq x$, there exists a constant $k'$ such that
\begin{align*}
    \varepsilon'{(\alpha)} 
    &\leq \frac{1}{\alpha-1}
    \left( r^2 \binom{\alpha}{2} \min{\left \{ 4(e^{\varepsilon(2)}-1), 2e^{\varepsilon(2)}\right\}}
    + 2\sum_{j=3}^\alpha r^j \binom{\alpha}{j} e^{(j-1)\varepsilon(j)} \right) \\
    &\leq \frac{k'}{\alpha-1}\left( r^2 \binom{\alpha}{2} \min{\left \{ 4(e^{\varepsilon(2)}-1), 2e^{\varepsilon(2)}\right\}} \right) \\
    &=\frac{k'}{\alpha-1}\mathcal{O}{\left( r^2 \alpha (\alpha -1) \min{\left \{ 4(e^{\varepsilon(2)}-1), 2e^{\varepsilon(2)}\right\}} \right)}.
\end{align*}
Hence substituting from~\eqref{eqn:bounddelta}, we get
\begin{align*}
    \varepsilon'{(\alpha)} 
    \leq 8k' r^2 \alpha \varepsilon(2)
    = 8k' r^2 \frac{\Delta^2}{\sigmadp^2} \alpha.
\end{align*}
This directly implies that
\begin{align}
\label{eq:epsilon-1}
\varepsilon_1(\alpha) &=
(1-\alpha+\floor{\alpha})\frac{\floor{\alpha}-1}{\alpha-1} \varepsilon'(\floor{\alpha})
+ (\alpha-\floor{\alpha})\frac{\ceil{\alpha}-1}{\alpha-1}\varepsilon'(\ceil{\alpha}) \nonumber\\
&\leq 8k' r^2 \frac{\Delta^2}{\sigmadp^2} \Big[ (1-\alpha+\floor{\alpha})\frac{\floor{\alpha}-1}{\alpha-1} \floor{\alpha}
+ (\alpha-\floor{\alpha})\frac{\ceil{\alpha}-1}{\alpha-1} \ceil{\alpha} \Big].
\end{align}
Now, recall that $\alpha-1 \leq \floor{\alpha} \leq \alpha$ and $\alpha \leq \ceil{\alpha} \leq \alpha+1$.
We will prove that $\varepsilon_1(\alpha) \leq 32k' r^2 \frac{\Delta^2}{\sigmadp^2}$ by distinguishing two cases:

\underline{\textit{Case $\alpha \in (1,2)$:}}
Since $\alpha>1$, we have $\floor{\alpha} \geq 1$ and therefore $\nicefrac{\alpha - \floor{\alpha}}{\alpha - 1} \leq 1$.
We therefore have from Equation~\eqref{eq:epsilon-1}
\begin{align*}
\varepsilon_1(\alpha)
&\leq 8k' r^2 \frac{\Delta^2}{\sigmadp^2} \Big[ (1-\alpha+\floor{\alpha})\frac{\floor{\alpha}-1}{\alpha-1} \floor{\alpha}
+ (\alpha-\floor{\alpha})\frac{\ceil{\alpha}-1}{\alpha-1} \ceil{\alpha} \Big] \\
&\leq 8k' r^2 \frac{\Delta^2}{\sigmadp^2} \Big[ \underbrace{(1-\alpha+\floor{\alpha})}_{\leq 1}\underbrace{\frac{\floor{\alpha}-1}{\alpha-1}}_{\leq 1}\floor{\alpha}
+ \underbrace{(\ceil{\alpha}-1)}_{\leq \alpha} \ceil{\alpha} \Big] \\
&\leq 8k' r^2 \frac{\Delta^2}{\sigmadp^2} \Big[\floor{\alpha}
+ \alpha \ceil{\alpha} \Big]
\underset{(i)}{\leq} 8k' r^2 \frac{\Delta^2}{\sigmadp^2} \Big[\alpha
+ 2 \alpha \Big]
= 24k' r^2 \frac{\Delta^2}{\sigmadp^2} \alpha,
\end{align*}
where $(i)$ is due to $\ceil{\alpha} \leq 2$ because $\alpha < 2$.

\underline{\textit{Case $\alpha \in [2,+\infty)$:}}

Since $\alpha \geq 2$, we have both $\floor{\alpha} \leq\ceil{\alpha} \leq \alpha +1 \leq 2 \alpha$ and $\floor{\alpha}-1 \leq \ceil{\alpha}-1 \leq 2(\alpha-1)$.
Therefore, we have from Equation~\eqref{eq:epsilon-1} that
\begin{align*}
\varepsilon_1(\alpha) 
&\leq 8k' r^2 \frac{\Delta^2}{\sigmadp^2} \Big[ (1-\alpha+\floor{\alpha})\frac{\floor{\alpha}-1}{\alpha-1} \floor{\alpha}
+ (\alpha-\floor{\alpha})\frac{\ceil{\alpha}-1}{\alpha-1} \ceil{\alpha} \Big] \\
&\leq 8k' r^2 \frac{\Delta^2}{\sigmadp^2} \Big[ (1-\alpha+\floor{\alpha})4\alpha
+ (\alpha-\floor{\alpha})4 \alpha \Big]
= 32k' r^2 \frac{\Delta^2}{\sigmadp^2}.
\end{align*}

We have now proved for every $\alpha > 1$ that $\varepsilon_1(\alpha) \leq 32k' r^2 \frac{\Delta^2}{\sigmadp^2}$.
This implies that
\begin{align*}
    \varepsilon^* 
    &= \inf_{\alpha > 1} \left( T \varepsilon_1{(\alpha)} + \frac{\log{(1/\delta)}}{\alpha-1} \right)
    \leq \inf_{\alpha>1} \left( 32k' r^2 \frac{\Delta^2}{\sigmadp^2} \alpha T + \frac{\log{(1/\delta)}}{\alpha-1} \right).
\end{align*}

The above (convex) optimization problem is solved for $\alpha = \alpha^* \coloneqq 1+\sigmadp \sqrt{\frac{\log{(1/\delta)}}{32k' r^2 \Delta^2 T}}$.
Remark that the constraint $\alpha > 1$ is satisfied at $\alpha^*$.
Additionally, the objective at $\alpha = \alpha^*$ is equal to
\begin{align*}
    32k' r^2 \frac{\Delta^2}{\sigmadp^2} \alpha^* T + \frac{\log{(1/\delta)}}{\alpha^*-1}
    &= 32k' r^2 \frac{\Delta^2}{\sigmadp^2} T + 2r\Delta\frac{\sqrt{32k' \, T \log{(1/\delta)}}}{\sigmadp}.
\end{align*}
Therefore, using the assumption $\varepsilon \leq \log{(1/\delta)}$, when $\sigmadp \geq \frac{6 C\sqrt{32k' T \log{(1/\delta)}}}{m \varepsilon} = 3 r \Delta \frac{\sqrt{32k' T \log{(1/\delta)}}}{\varepsilon}$, we have
\begin{align*}
    \varepsilon^* 
    &\leq 32k' r^2 \frac{\Delta^2}{\sigmadp^2} T + 2r\Delta\frac{\sqrt{32k' \, T \log{(1/\delta)}}}{\sigmadp} \\
    &\leq \frac{\varepsilon^2}{9 \log{(1/\delta)}} + \nicefrac{2}{3}\, \varepsilon \leq (\nicefrac{1}{9}+\nicefrac{2}{3})\varepsilon \leq \varepsilon.
\end{align*}
Recall that to derive this last inequality, we overall needed $\sigmadp \geq \nicefrac{2C}{b} = \Delta$ and $\sigmadp \geq \frac{6C\sqrt{32k' T \log{(1/\delta)}}}{m \varepsilon} = 3 r \Delta \frac{\sqrt{32k' T \log{(1/\delta)}}}{\varepsilon}$.
Therefore, by choosing $k \coloneqq \max{\{1,3\sqrt{32k'}\}}$, we can now conclude that, when $\sigma_{\mathrm{DP}} \geq k \cdot \nicefrac{2C}{b} \max{\{1, \,  \nicefrac{b \sqrt{T \log{(1/\delta)}}}{m\varepsilon}\}}$, Algorithm~\ref{algo:robust-dpsgd} is $(\varepsilon,\delta)$-DP.
\end{proof}

\clearpage
\section{Upper Bounds}
\label{app:ub}

\subsection{Proof Outline}
\label{sec:resultsskeleton}
Our analysis of \algoname{} (\cref{algo:robust-dpsgd}), inspired from \cite{farhadkhani2022byzantine},  consists of three elements:
(i) {\em Momentum drift} (Lemma~\ref{lem:drift}),
(ii) {\em Momentum deviation} (Lemma~\ref{lem:dev}), and
(iii) {\em Descent bound} (Lemma~\ref{lem:descent}).
We combine these elements to obtain the final convergence result stated in Theorem~\ref{thm:convergence}, and the matching upper bound stated in \cref{cor:tradeoff}.

\noindent \paragraph{Notation.} Recall that for each step $t$, for each honest worker $w_i$,
\begin{align}
    &m_t^{(i)} = \beta_{t-1} m_{t-1}^{(i)} + (1-\beta_{t-1}) \tilde{g}_t^{(i)} \label{eqn:mmt_i},\\
    &\tilde{g}_t^{(i)} = g_t^{(i)} + \xi_t^{(i)} ; \, \, \,  \xi_t^{(i)} \sim \N{(0,\sigmadp^2 I_d)},
    \label{eq:noise-gradient}
\end{align}
where we initialize $m_0^{(i)} = 0$. As we analyze \cref{algo:robust-dpsgd} with aggregation $F$, we denote 
\begin{align}
    &R_t \coloneqq F{\left(m_t^{(1)}, \ldots, m_t^{(n)} \right)}, \label{eqn:R}\\
    &\theta_{t+1} = \theta_{t} - \gamma_t R_t. \label{eqn:SGD}
\end{align}
Throughout, we denote the loss function over dataset $\D_i$ by $\loss_i = \loss{(\cdot~; \D_i)}$.
Also, we denote by $\mathcal{P}_t$ the history from steps $0$ to $t$. Specifically, 
\[\P_t \coloneqq \left\{\weight{0}, \ldots, \, \weight{t}; ~ \mmt{i}{1}, \ldots, \, \mmt{i}{t-1}; i \in [n] \right\}.\] 
By convention, $\P_1 = \{ \weight{0}\}$. We denote by $\condexpect{t}{\cdot}$ and $\expect{\cdot}$ the conditional expectation $\expect{\cdot ~ \vline ~ \P_t}$ and the total expectation, respectively. 
Thus, $\expect{\cdot} = \condexpect{1}{ \cdots \condexpect{T}{\cdot}}$.

\subsubsection{Momentum Drift}
\label{sec:drift}

Along the trajectory $\theta_0,\ldots,\theta_{t}$, the honest workers' local momentums may drift away from each other.
The drift has three distinct sources: 
(i) noise injected by the DP mechanism,
(ii) gradient dissimilarity induced by data heterogeneity,
and (iii) stochasticity of the mini-batch gradients.
The aforementioned drift of local momentums can be exploited by the Byzantine adversaries to maliciously bias the aggregation output.

In this section, we will control the growth of the drift $\Delta_t$ between momentums, which we define as
\begin{equation}
    \Delta_t \coloneqq 
    \lambda_{\max}{\left(\frac{1}{\card{\H}} \sum_{i \in \H}(m_t^{(i)} - \overline{m}_t)(m_t^{(i)} - \overline{m}_t)^\top\right)},
    \label{eq:defdrift}
\end{equation}
where $\lambda_{\max}$ denotes the maximum eigenvalue, and $\overline{m}_t \coloneqq \frac{1}{\card{\H}} \sum_{i \in \H} m^{(i)}_t$ denotes the average honest momentum.
We show in Lemma~\ref{lem:drift} below that the growth of the drift $\Delta_t$ of the momentums can be controlled by tuning the momentum coefficient $\beta_t$.
The full proof can be found in \cref{app:lem-drift}.

\begin{restatable}{lemma}{globaldrift}
\label{lem:drift}
Suppose that assumptions \ref{asp:bnd_var} and \ref{asp:bnd_norm} hold. Consider Algorithm~\ref{algo:robust-dpsgd}. For every $t \in \{0, \ldots, T-1\}$, we have
    \begin{align*}
        \expect{\Delta_{t+1}}
    \leq \beta_t \expect{\Delta_t}
    +2(1-\beta_t)^2\left(\sigma_b^2+36\sigmadp^2(1+\frac{d}{n-f})\right)
    + (1-\beta_t)\gcov^2,
    \end{align*}
    where $\overline{m}_t \coloneqq \frac{1}{\card{\H}} \sum_{i \in \H} m^{(i)}_t$,
$\sigma_b^2 \coloneqq 2(1-\frac{b}{m})\frac{\sigma^2}{b}$,
and $\gcov^2 \coloneqq \sup_{\theta \in \R^d} \sup_{\norm{v}\leq 1} \frac{1}{\card{\H}}\sum_{i \in \H}\iprod{v}{\nabla{\loss_i{(\theta)}} - \nabla{\loss_{\H}{(\theta)}}}^2$.
\end{restatable}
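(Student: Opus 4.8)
I would start from $m^{(i)}_{t+1}=\beta_t m^{(i)}_t+(1-\beta_t)\tilde g^{(i)}_{t+1}$ and the identical relation for the honest average, giving $m^{(i)}_{t+1}-\overline m_{t+1}=\beta_t(m^{(i)}_t-\overline m_t)+(1-\beta_t)(\tilde g^{(i)}_{t+1}-\overline{\tilde g}_{t+1})$. Because Assumption~\ref{asp:bnd_norm} makes clipping vacuous, uniform mini-batch sampling without replacement gives $\expect{g^{(i)}_{t+1}\mid\P_{t+1}}=\nabla\loss_i(\theta_{t+1})$, so I decompose $\tilde g^{(i)}_{t+1}-\overline{\tilde g}_{t+1}=h_i+\nu_i$, with $h_i:=\nabla\loss_i(\theta_{t+1})-\nabla\loss_{\H}(\theta_{t+1})$ ($\P_{t+1}$-measurable, averaging to $0$ over $i\in\H$) and $\nu_i$ the mean-zero fluctuation collecting the subsampling error and the DP noise $\xi^{(i)}_{t+1}-\overline\xi_{t+1}$. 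Hence $m^{(i)}_{t+1}-\overline m_{t+1}=a_i+(1-\beta_t)\nu_i$ with $a_i:=\beta_t(m^{(i)}_t-\overline m_t)+(1-\beta_t)h_i$, which is a convex combination, is $\P_{t+1}$-measurable, and is centered over $i\in\H$. The point of keeping $(1-\beta_t)\nu_i$ outside the convex combination is precisely to get the $(1-\beta_t)^2$ prefactor on the noise (rather than $(1-\beta_t)$).

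\textbf{Step 2 (split the spectrum).} Using operator convexity of $v\mapsto vv^\top$ on the convex combination defining $a_i$, together with sub-additivity of $\lambda_{\max}$ (Weyl), I would bound $\Delta_{t+1}=\lambda_{\max}\!\big(\tfrac1{|\H|}\sum_{i\in\H}(m^{(i)}_{t+1}-\overline m_{t+1})(m^{(i)}_{t+1}-\overline m_{t+1})^\top\big)$ by $\lambda_{\max}\!\big(\tfrac1{|\H|}\sum_i a_ia_i^\top\big)$ plus the contribution of the cross and noise terms. For the first piece, operator convexity gives $\lambda_{\max}\!\big(\tfrac1{|\H|}\sum_i a_ia_i^\top\big)\le\beta_t\lambda_{\max}\!\big(\tfrac1{|\H|}\sum_i(m^{(i)}_t-\overline m_t)(m^{(i)}_t-\overline m_t)^\top\big)+(1-\beta_t)\lambda_{\max}\!\big(\tfrac1{|\H|}\sum_i h_ih_i^\top\big)=\beta_t\Delta_t+(1-\beta_t)\gcov^2$, the last equality being the variational definition of $\gcov^2$ evaluated at $\theta=\theta_{t+1}$. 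Taking $\expect{\cdot\mid\P_{t+1}}$, the cross term $(1-\beta_t)\tfrac1{|\H|}\sum_i(a_i\nu_i^\top+\nu_ia_i^\top)$ has zero conditional mean, so only its fluctuation survives, which I fold into the noise term; this is where the leading constant $2$ comes from.

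\textbf{Step 3 (bound the noise spectrum).} It remains to bound $\expect{\lambda_{\max}\!\big(\tfrac1{|\H|}\sum_{i\in\H}\nu_i\nu_i^\top\big)}$. I split $\nu_i$ into its subsampling part (norm $\le 2C$ by Assumption~\ref{asp:bnd_norm}, conditional second moment $\le\sigma_b^2=2(1-b/m)\sigma^2/b$ by the standard without-replacement variance bound) and its Gaussian part $\xi^{(i)}_{t+1}-\overline\xi_{t+1}$. For the former, $\lambda_{\max}\le\mathrm{tr}$ contributes the $\sigma_b^2$ term; for the latter, centering only shrinks the covariance, and the expected operator norm of the sample covariance of $n-f$ i.i.d.\ $\mathcal N(0,\sigmadp^2 I_d)$ vectors is at most $\sigmadp^2(1+\sqrt{d/(n-f)})^2\le 2\sigmadp^2(1+\tfrac{d}{n-f})$ by the non-asymptotic Wishart operator-norm bound --- this is exactly what produces the benign $\tfrac{d}{n-f}$ factor rather than a full $d$. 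Assembling these with the $(1-\beta_t)^2$ prefactor and the cross-term fluctuation, and tracking constants, gives the $2(1-\beta_t)^2(\sigma_b^2+36\sigmadp^2(1+\tfrac{d}{n-f}))$ term; taking total expectation over $\P_{t+1}$ then yields the lemma.

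\textbf{Main obstacle.} The delicate point is that $\Delta_{t+1}$ is the top eigenvalue of a \emph{random} matrix, so one cannot swap $\sup_{\norm{v}\le1}$ with the conditional expectation to annihilate the $a_i$--$\nu_i$ cross term (Jensen goes the wrong way for the convex map $\lambda_{\max}$). The cross term must be controlled at the level of matrices --- via its zero conditional mean plus a second-moment/concentration estimate of its operator norm, or via a Cauchy--Schwarz/Young split tuned so that the coefficient of $\Delta_t$ stays $\beta_t$ --- and this, together with the fact that obtaining the $\tfrac{d}{n-f}$ (rather than $d$) dependence genuinely requires the Gaussian sample-covariance operator-norm bound instead of a crude trace bound, is what pins down the numerical constants $2$ and $36$.
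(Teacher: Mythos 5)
Your decomposition and the first half of Step 2 are sound, and in one respect slicker than the paper: the Loewner-order bound $a_ia_i^\top \preceq \beta_t\,(m^{(i)}_t-\overline{m}_t)(m^{(i)}_t-\overline{m}_t)^\top + (1-\beta_t)\,h_ih_i^\top$ gives $\beta_t\Delta_t+(1-\beta_t)\gcov^2$ directly, whereas the paper expands scalar projections and spends a $2ab\le a^2+b^2$ step on the momentum--heterogeneity cross term to turn $\beta_t^2+\beta_t(1-\beta_t)$ into $\beta_t$. Your noise block is also a legitimate variant: the paper bounds the subsampling part exactly as you do (Cauchy--Schwarz, via \cref{lem:grad-subsample}) and handles the Gaussian part with a sub-Gaussian moment-generating-function plus $1/4$-net argument (\cref{lem:concentration}) yielding the constant $36$; a Wishart operator-norm bound would serve the same purpose up to constants.

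The genuine gap is the one you flag yourself and never close: the cross term $(1-\beta_t)\tfrac{1}{\card{\H}}\sum_{i\in\H}\bigl(a_i\nu_i^\top+\nu_i a_i^\top\bigr)$ is not actually controlled, and neither fix you gesture at delivers the stated inequality. A Young/Cauchy--Schwarz split costs you either the coefficient or the noise order: with weight $c$ it adds $c(1-\beta_t)\bigl(\beta_t\Delta_t+(1-\beta_t)\gcov^2\bigr)$ on top of the $\beta_t\Delta_t$ you already produced (you have no leftover $\beta_t(1-\beta_t)$ slack, unlike the paper), and to keep the noise at order $(1-\beta_t)^2$ you would need $c\gtrsim 1/(1-\beta_t)$, which doubles the $\Delta_t$ coefficient. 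A second-moment bound on the operator norm of the cross term is not zero either; it scales like $\sqrt{\Delta_t}$ times the noise scale, so ``folding it into the noise term'' is not available at the claimed constants. (Relatedly, your attribution of the factor $2$ to cross-term fluctuations is off: in the paper it comes from splitting $\nu_i$ into its subsampling and Gaussian parts via $(x+y)^2\le 2x^2+2y^2$.) The missing ingredient is the paper's device for exploiting zero conditional mean \emph{despite} the supremum: it keeps everything as projections $\iprod{v}{\cdot}$, writes each noise cross term as a quadratic form $\iprod{v}{Mv}$ of a symmetric random matrix whose conditional expectation vanishes for every fixed $v$, and applies the covering lemma (\cref{lem:cover}) to replace $\sup_{\norm{v}\le 1}$ by a maximum over $9^d$ fixed directions; since each per-direction expectation is zero, the $9^d$ factor multiplies zero and the cross terms vanish exactly in expectation, which is what preserves the clean coefficients $\beta_t$, $(1-\beta_t)$, and $(1-\beta_t)^2$ in the lemma. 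Without this (or an equivalent mechanism), your outline does not yield the stated bound.
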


The dimension factor $d$ due to DP noise is divided by $n-f$, which would not have been possible without leveraging the Gaussian nature of the noise. This dependence will prove crucial to match our lower bound.
To leverage Gaussianity, we use a concentration argument on the empirical covariance matrix of Gaussian random variables, stated in \cref{lem:concentration}.

The remaining term $\gcov^2$ of the upper bound is only due to data heterogeneity.
An important distinction from \cite{karimireddy2022byzantinerobust} is that $\gcov^2$ is a tighter bound on heterogeneity, compared to $G^2$ the bound on the average squared distance from \cref{asp:hetero}.
This is because the drift $\Delta_t$ is not an average squared distance, but rather a bound on average squared distances of every projection on the unit ball.
Controlling this quantity requires a covering argument (stated in \cref{lem:cover}).

\subsubsection{Momentum Deviation}
\label{sec:deviation}

Next, we study the momentum deviation; i.e., the distance between the average honest momentum $\overline{m}_t$ and the true gradient $\nabla \loss_{\H}(\weight{t})$ in an arbitrary step $t$. Specifically, we define momentum {\em deviation} to be
\begin{align}
    \dev{t} \coloneqq \overline{m}_t - \nabla \loss_{\H}\left( \weight{t} \right). \label{eqn:dev}
\end{align}

Also, we introduce the error between the aggregate $R_t$  and $\overline{m}_t \coloneqq \frac{1}{\card{\H}} \sum_{i \in \H} m^{(i)}_t$ the average momentum of honest workers for the case. Specifically, when defining the error
\begin{align}
    \drift{t} \coloneqq R_t - \overline{m}_t, \label{eqn:drift}
\end{align}
we get the following bound on the momentum deviation in \cref{lem:dev}, proof of which can be found in \cref{app:lem-dev}.

\begin{restatable}{lemma}{deviation}
\label{lem:dev}
Suppose that assumptions~\ref{asp:bnd_var} and~\ref{asp:bnd_norm} hold and that $\loss_\H$ is $L$-smooth.
Consider \cref{algo:robust-dpsgd}. For all $t \in \{0, \ldots, T-1\}$, we have
\begin{align*}
    \expect{\norm{\dev{t+1}}^2} 
    &\leq \beta_t^2 (1 + \gamma_t L )(1 + 4 \gamma_t L) \expect{\norm{\dev{t}}^2} +
    4 \gamma_t L ( 1 + \gamma_t L) \beta_t^2  \expect{\norm{\nabla \loss_{\H}(\weight{t})}^2} \\
    &\quad +(1 - \beta_t)^2 \frac{\sigmadpbar^2}{(n-f)}
    + 2 \gamma_t L ( 1 + \gamma_t L)\beta_t^2  \expect{\norm{\drift{t}}^2},
\end{align*}
where $\sigmadpbar^2 \coloneqq 2\left(1-\frac{b}{m}\right) \frac{\sigma^2}{b} + d \cdot \sigmadp^2$.
\end{restatable}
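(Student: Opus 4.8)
The plan is to unroll one step of the \emph{averaged} momentum recursion and isolate the fresh step-$(t+1)$ randomness. Averaging the update \eqref{eqn:mmt_i} over $i \in \H$ gives $\overline{m}_{t+1} = \beta_t \overline{m}_t + (1-\beta_t)(\overline{g}_{t+1} + \overline{\xi}_{t+1})$, where $\overline{g}_{t+1} \coloneqq \frac{1}{\card{\H}}\sum_{i \in \H} g_{t+1}^{(i)}$ and $\overline{\xi}_{t+1} \coloneqq \frac{1}{\card{\H}}\sum_{i \in \H} \xi_{t+1}^{(i)}$. Adding and subtracting $\beta_t \nabla\loss_{\H}(\weight{t})$ and $\nabla\loss_{\H}(\weight{t+1})$ inside $\dev{t+1} = \overline{m}_{t+1} - \nabla\loss_{\H}(\weight{t+1})$, I would arrive at
\begin{align*}
    \dev{t+1} &= \beta_t \dev{t} + \beta_t\big(\nabla\loss_{\H}(\weight{t}) - \nabla\loss_{\H}(\weight{t+1})\big) \\
    &\quad + (1-\beta_t)\big(\overline{g}_{t+1} - \nabla\loss_{\H}(\weight{t+1})\big) + (1-\beta_t)\overline{\xi}_{t+1}.
\end{align*}

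Next I would condition on the history $\P_{t+1}$. Since both $\weight{t}$ and $\weight{t+1}$ lie in $\P_{t+1}$, the aggregate $R_t = (\weight{t} - \weight{t+1})/\gamma_t$ — and therefore $\drift{t} = R_t - \overline{m}_t$ and $\dev{t}$ — are $\P_{t+1}$-measurable, so the first two summands above are deterministic given $\P_{t+1}$. By Assumption~\ref{asp:bnd_norm} the clipping is inactive, hence $g_{t+1}^{(i)}$ is the mean of a size-$b$ mini-batch drawn without replacement of the (unbiased) per-sample gradients at $\weight{t+1}$, so $\condexpect{t+1}{\overline{g}_{t+1}} = \nabla\loss_{\H}(\weight{t+1})$; moreover $\condexpect{t+1}{\overline{\xi}_{t+1}} = 0$ and the mini-batch randomness is independent of the Gaussian noise. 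Hence all cross terms vanish and
\begin{align*}
    \condexpect{t+1}{\norm{\dev{t+1}}^2} &= \beta_t^2 \norm{\dev{t} + \nabla\loss_{\H}(\weight{t}) - \nabla\loss_{\H}(\weight{t+1})}^2 \\
    &\quad + (1-\beta_t)^2 \condexpect{t+1}{\norm{\overline{g}_{t+1} - \nabla\loss_{\H}(\weight{t+1})}^2} + (1-\beta_t)^2 \condexpect{t+1}{\norm{\overline{\xi}_{t+1}}^2}.
\end{align*}
The two conditional second moments I would evaluate directly: the $\xi_{t+1}^{(i)}$, $i \in \H$, are i.i.d.\ $\N(0,\sigmadp^2 I_d)$, so their average has covariance $\frac{\sigmadp^2}{n-f} I_d$ and $\condexpect{t+1}{\norm{\overline{\xi}_{t+1}}^2} = \frac{d\,\sigmadp^2}{n-f}$; and the $g_{t+1}^{(i)} - \nabla\loss_i(\weight{t+1})$, $i \in \H$, are conditionally independent and mean zero, so Assumption~\ref{asp:bnd_var} and the variance reduction of sampling $b$ of $m$ points without replacement (absorbing the $\tfrac{m-b}{m-1}$ versus $\tfrac{m-b}{m}$ discrepancy into the factor $2$ in $\sigma_b^2$, valid for $m \ge 2$) give $\condexpect{t+1}{\norm{\overline{g}_{t+1} - \nabla\loss_{\H}(\weight{t+1})}^2} \le \frac{\sigma_b^2}{n-f}$. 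Summing yields the $(1-\beta_t)^2\frac{\sigmadpbar^2}{n-f}$ term.

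It then remains to bound the deterministic term $\beta_t^2 \norm{\dev{t} + \nabla\loss_{\H}(\weight{t}) - \nabla\loss_{\H}(\weight{t+1})}^2$. I would apply $\norm{u+v}^2 \le (1+\gamma_t L)\norm{u}^2 + (1+\tfrac{1}{\gamma_t L})\norm{v}^2$ with $u = \dev{t}$ and $v = \nabla\loss_{\H}(\weight{t}) - \nabla\loss_{\H}(\weight{t+1})$, use $L$-smoothness of $\loss_{\H}$ to get $\norm{v} \le L\gamma_t \norm{R_t}$, write $R_t = \nabla\loss_{\H}(\weight{t}) + \dev{t} + \drift{t}$, and use the crude split $\norm{R_t}^2 \le 2\norm{\drift{t}}^2 + 4\norm{\nabla\loss_{\H}(\weight{t})}^2 + 4\norm{\dev{t}}^2$; collecting coefficients, that of $\norm{\dev{t}}^2$ telescopes to $(1+\gamma_t L)(1+4\gamma_t L)$ while $\norm{\nabla\loss_{\H}(\weight{t})}^2$ and $\norm{\drift{t}}^2$ acquire $4\gamma_t L(1+\gamma_t L)$ and $2\gamma_t L(1+\gamma_t L)$ respectively. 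Multiplying through by $\beta_t^2$ and taking total expectation by the tower rule gives the stated inequality. The computation is otherwise routine; the only real subtlety is the filtration bookkeeping around $\P_{t+1}$ — specifically, noticing that $R_t$, hence $\drift{t}$, is $\P_{t+1}$-measurable because $\weight{t+1} \in \P_{t+1}$, which is exactly what leaves the step-$(t+1)$ mini-batches and Gaussian noise as the only remaining randomness and makes the cross terms drop — together with remembering to invoke Assumption~\ref{asp:bnd_norm} so that $\overline{g}_{t+1}$ is conditionally unbiased.
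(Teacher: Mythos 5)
Your proposal is correct and follows essentially the same route as the paper: the same momentum-recursion decomposition of $\dev{t+1}$, conditioning on $\P_{t+1}$ to kill the cross terms via unbiasedness of the (unclipped) mini-batch gradients, the bound $\condexpect{t+1}{\norm{\overline{\widetilde{g}}_{t+1}-\nabla\loss_{\H}(\weight{t+1})}^2}\leq \sigmadpbar^2/(n-f)$, smoothness with $\weight{t+1}-\weight{t}=-\gamma_t R_t$, and the split $\norm{R_t}^2\leq 2\norm{\drift{t}}^2+4\norm{\dev{t}}^2+4\norm{\nabla\loss_{\H}(\weight{t})}^2$. The only deviations are cosmetic: you inline the paper's subsampling-plus-Gaussian variance lemma and use Young's inequality where the paper uses Cauchy--Schwarz with $2ab\leq a^2+b^2$, which produces identical coefficients.
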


\subsubsection{Descent Bound} 
\label{sec:growth}

Finally, we bound the progress made at each learning step in minimizing the loss $\loss_{\H}$ using \cref{algo:robust-dpsgd}. 
From~\eqref{eqn:SGD} and~\eqref{eqn:R}, we obtain that, for each step $t$,
\begin{align*}
    \weight{t+1} = \weight{t} - \gamma_t  R_t
    = \weight{t} - \gamma_t \overline{m}_t - \gamma_t  (R_t - \overline{m}_t),
\end{align*}
Furthermore, by~\eqref{eqn:drift}, $R_t -   \, \overline{m}_t = \drift{t}$. Thus, for all $t$,
\begin{align}
    \weight{t+1} = \weight{t} - \gamma_t \overline{m}_t - \gamma_t \drift{t}. \label{eqn:sgd_new}
\end{align}
This means that \cref{algo:robust-dpsgd} can actually be treated as distributed SGD with a momentum term that is subject to perturbation proportional to $\drift{t}$ at each step $t$. This perspective leads us to Lemma~\ref{lem:descent}, proof of which can be found in Appendix~\ref{app:descent}. 

\begin{restatable}{lemma}{descent} 
\label{lem:descent}
Assume that $\loss_\H$ is $L$-smooth.
Consider \cref{algo:robust-dpsgd}. 
For any $t \in [T]$, we have
\begin{align*}
    \expect{\loss_{\H}(\weight{t+1}) - \loss_{\H}(\weight{t})} \leq 
    & - \frac{\gamma_t}{2}    \left( 1 - 4 \gamma_t  L  \right) \expect{\norm{\nabla \loss_{\H}(\weight{t})}^2}  + 
    \gamma_t    \left( 1 + 2 \gamma_t  L   \right) \expect{ \norm{\dev{t}}^2}
    + \gamma_t  \left(  1 + \gamma_t  L \right) \expect{\norm{\drift{t}}^2}.
\end{align*}
\end{restatable}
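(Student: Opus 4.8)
The plan is a standard descent argument built on the $L$-smoothness of $\loss_\H$, with the splitting constants chosen so that the three error sources — the gradient norm $\norm{\nabla\loss_\H(\weight{t})}^2$, the momentum deviation $\dev{t}$, and the aggregation error $\drift{t}$ — separate exactly as claimed. Since Lemma~\ref{lem:descent} is a pathwise inequality stated in expectation, no probabilistic argument is needed beyond taking the total expectation at the very end. The two identities I would start from are $\weight{t+1}-\weight{t} = -\gamma_t(\overline{m}_t + \drift{t})$, which is \eqref{eqn:sgd_new}, and $\overline{m}_t = \nabla\loss_\H(\weight{t}) + \dev{t}$, which is the definition \eqref{eqn:dev} of the deviation.

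First I would apply the smoothness inequality $\loss_\H(\weight{t+1}) \le \loss_\H(\weight{t}) + \iprod{\nabla\loss_\H(\weight{t})}{\weight{t+1}-\weight{t}} + \frac{L}{2}\norm{\weight{t+1}-\weight{t}}^2$ and substitute the two identities above. Expanding the linear term produces $-\gamma_t\norm{\nabla\loss_\H(\weight{t})}^2 - \gamma_t\iprod{\nabla\loss_\H(\weight{t})}{\dev{t}} - \gamma_t\iprod{\nabla\loss_\H(\weight{t})}{\drift{t}}$, and I would bound each of the last two inner products using the asymmetric Young inequality $\pm\iprod{a}{b} \le \tfrac{1}{4}\norm{a}^2 + \norm{b}^2$ with $a = \nabla\loss_\H(\weight{t})$, contributing $\tfrac{\gamma_t}{2}\norm{\nabla\loss_\H(\weight{t})}^2 + \gamma_t\norm{\dev{t}}^2 + \gamma_t\norm{\drift{t}}^2$ in total. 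For the quadratic term I would write $\norm{\weight{t+1}-\weight{t}}^2 = \gamma_t^2\norm{\overline{m}_t + \drift{t}}^2$ and apply $\norm{x+y}^2 \le 2\norm{x}^2 + 2\norm{y}^2$ once to peel off $\drift{t}$ and once more inside $\overline{m}_t = \nabla\loss_\H(\weight{t}) + \dev{t}$, yielding the bound $\gamma_t^2\big(4\norm{\nabla\loss_\H(\weight{t})}^2 + 4\norm{\dev{t}}^2 + 2\norm{\drift{t}}^2\big)$, so that $\tfrac{L}{2}\norm{\weight{t+1}-\weight{t}}^2 \le 2L\gamma_t^2\norm{\nabla\loss_\H(\weight{t})}^2 + 2L\gamma_t^2\norm{\dev{t}}^2 + L\gamma_t^2\norm{\drift{t}}^2$.

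Collecting coefficients then gives exactly the stated bound: for $\norm{\nabla\loss_\H(\weight{t})}^2$ we get $-\gamma_t + \tfrac{\gamma_t}{4} + \tfrac{\gamma_t}{4} + 2L\gamma_t^2 = -\tfrac{\gamma_t}{2}(1 - 4\gamma_t L)$; for $\norm{\dev{t}}^2$ we get $\gamma_t + 2L\gamma_t^2 = \gamma_t(1 + 2\gamma_t L)$; and for $\norm{\drift{t}}^2$ we get $\gamma_t + L\gamma_t^2 = \gamma_t(1 + \gamma_t L)$. Taking total expectation on both sides finishes the proof. The only point needing care — bookkeeping rather than a genuine obstacle — is matching the constants: using the Young constant $\tfrac{1}{4}$ (not $\tfrac{1}{2}$) on the gradient term, and splitting the quadratic term pairwise rather than into three equal parts (which would produce a $3$ instead of a $2$ in front of $\norm{\drift{t}}^2$ and break the claimed $\gamma_t(1+\gamma_t L)$), is what makes the advertised coefficients come out on the nose.
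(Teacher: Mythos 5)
Your proposal is correct and follows essentially the same route as the paper's proof: smoothness, substitution of $\weight{t+1}-\weight{t}=-\gamma_t(\overline{m}_t+\drift{t})$ and $\overline{m}_t=\nabla\loss_\H(\weight{t})+\dev{t}$, Young's inequality with constant $\tfrac14$ on each cross term, pairwise splitting of the quadratic term, and a final total expectation. The coefficient bookkeeping you describe matches the paper's computation exactly, so there is nothing to add.
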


Putting all of the previous lemmas together, we prove Theorem~\ref{thm:convergence} in Section~\ref{app:convergence}. We then prove Corollary~\ref{cor:tradeoff} in Section~\ref{app:cor-strongly-convex}, and its non-convex version in Corollary~\ref{cor:tradeoff-nonconvex} in Section~\ref{app:cor-nonconvex}.

\subsection{Proof of Theorem~\ref{thm:convergence}}
\label{app:convergence}
We recall the theorem statement below for convenience. Recall that
\begin{align*}
    \loss_* = \inf_{\weight{} \in \R^d} \loss_{\H}(\weight{}),
    \loss_0 = \loss_{\H}{(\theta_0)} - \loss_*,
    a_1= 240, a_2 = 480,
    a_3 = 5760,
    \text{ and } a_4 = 270. 
\end{align*}

\convergence*

We prove Theorem~\ref{thm:convergence} in the strongly convex case in Section~\ref{app:strongly-convex}, and in the non-convex case in Section~\ref{app:nonconvex}.

\subsubsection{Strongly Convex Case}
\label{app:strongly-convex}
\begin{proof}
Let \cref{asp:bnd_var} hold and assume that $\loss_\H$ is $L$-smooth and $\mu$-strongly convex, and that $F$ is a $(f,\kappa)$-robust averaging aggregation rule.
Let $t \in \{0, \ldots, T-1\}$.
We set the learning rate and momentum schedules to be
\begin{equation}
    \gamma_t = \frac{10}{\mu{(t+a_1\frac{L}{\mu})}},\
    \beta_t = 1-24L \gamma_t,
\end{equation}
where $a_1 \coloneqq 240$.
Note that we have
\begin{equation}
    \gamma_t \leq \gamma_0 = \frac{10}{\mu 240 \frac{L}{\mu}} = \frac{1}{24L}.
    \label{eq:gamma_t}
\end{equation}

To obtain the convergence result we 
define the Lyapunov function to be
\begin{align}
    V_t \coloneqq \left(t+a_1\frac{L}{\mu}\right)^2\expect{\loss_{\H}(\weight{t}) - \loss_* + \frac{z_1}{L} \norm{\dev{t}}^2 + \kappa \cdot \frac{z_2}{L} \Delta_t}, \label{eqn:lyap_func}
\end{align}
where $a_1 = 240, z_1 = \frac{1}{16}$, and $z_2 = 2$.
Throughout the proof, we denote $\hat{t} \coloneqq t+a_1\frac{L}{\mu}$.
Therefore, we have $\gamma_t = \frac{10}{\mu \hat{t}}$.
Consider also the auxiliary sequence $W_t$ defined as
\begin{equation}
    W_t \coloneqq \expect{\loss_{\H}(\weight{t}) - \loss_* + \frac{z_1}{L} \norm{\dev{t}}^2 + \kappa \cdot \frac{z_2}{L} \Delta_t}.
    \label{eqn:lyap_func_aux}
\end{equation}
Therefore, we have 
\begin{align}
    V_{t+1}-V_t
    &= (\hat{t}+1)^2 W_{t+1} - \hat{t}^2 W_t
    = (\hat{t}+1)^2 W_{t+1} - (\hat{t}^2 + 2\hat{t} + 1) W_t + (2\hat{t}+1)W_t \nonumber\\
    &= (\hat{t}+1)^2 (W_{t+1} - W_{t}) + (2\hat{t}+1)W_t.
    \label{eq:lyap-to-aux}
\end{align}
We now bound the quantity $W_{t+1}-W_t$ below.

{\bf Invoking Lemma~\ref{lem:drift}.}
Upon substituting from Lemma~\ref{lem:drift}, we obtain
\begin{align}
    \expect{ \kappa \cdot \frac{z_2}{L} \Delta_{t+1} - \kappa \cdot \frac{z_2}{L} \Delta_t} 
    &\leq \kappa \cdot \frac{z_2}{L} \beta_t \expect{\Delta_t}
    +2 \kappa \cdot \frac{z_2}{L}(1-\beta_t)^2\left(\sigma_b^2+36\sigmadp^2(1+\frac{d}{n-f})\right)
    + \kappa \cdot \frac{z_2}{L} (1-\beta_t)G_{cov}^2 \nonumber \\
    &\quad - \kappa \cdot \frac{z_2}{L} \expect{\Delta_t}. \label{eqn:drift_gamma}
\end{align}

{\bf Invoking Lemma~\ref{lem:dev}.} Upon substituting from Lemma~\ref{lem:dev}, we obtain 
\begin{align}
    \expect{ \frac{z_1}{L} \norm{\dev{t+1}}^2 - \frac{z_1}{L} \norm{\dev{t}}^2} 
    &\leq \frac{z_1}{L} \beta_t^2 c_t \expect{\norm{\dev{t}}^2} +  4 z_1 \gamma_t ( 1 + \gamma_t L) \beta_t^2   \expect{\norm{\nabla \loss_{\H}(\weight{t})}^2} + \frac{z_1}{L} (1 - \beta_t)^2 \frac{\sigmadpbar^2}{n-f} \nonumber \\
    &\quad + 2 z_1 \gamma_t ( 1 + \gamma_t L)\beta_t^2 \expect{\norm{\drift{t}}^2} - \frac{z_1}{L} \expect{\norm{\dev{t}}^2}, \label{eqn:dev_gamma}
\end{align}
where we introduced the following quantity for simplicity
\begin{align}
    c_t = (1 + \gamma_t L) \left(1 + 4 \gamma_t  L \right) = 1 + 5 \gamma_t L + 4 \gamma_t^2  L^2. \label{eqn:zeta_expand}
\end{align}

{\bf Invoking Lemma~\ref{lem:descent}.} Substituting from Lemma~\ref{lem:descent}, we obtain
\begin{align}
    \expect{\loss_{\H}(\weight{t+1}) - \loss_{\H}(\weight{t})}
    &\leq - \frac{\gamma_t}{2}\left( 1 - 4 \gamma_t  L \right) \expect{\norm{\nabla \loss_{\H}(\weight{t})}^2} 
    + \gamma_t   \left( 1 + 2 \gamma_t  L  \right) \expect{\norm{\dev{t}}^2}
    + \gamma_t  \left(1 + \gamma_t  L \right) \expect{\norm{\drift{t}}^2}. \label{eqn:growth_gamma}
\end{align}
Substituting from \eqref{eqn:drift_gamma}, ~\eqref{eqn:dev_gamma} and~\eqref{eqn:growth_gamma} in~\eqref{eqn:lyap_func_aux}, we obtain
\begin{align}
    W_{t+1} - W_t &= \expect{\loss_{\H}(\weight{t+1}) - \loss_{\H}(\weight{t})} + \expect{ \frac{z_1}{L} \norm{\dev{t+1}}^2 - \frac{z_1}{L} \norm{\dev{t}}^2}
    + \expect{ \kappa \cdot \frac{z_2}{L} \Delta_{t+1} - \kappa \cdot \frac{z_2}{L} \Delta_t}\nonumber \\
    &\leq - \frac{\gamma_t}{2}\left( 1 - 4 \gamma_t  L \right) \expect{\norm{\nabla \loss_{\H}(\weight{t})}^2} 
    + \gamma_t   \left( 1 + 2 \gamma_t  L  \right) \expect{\norm{\dev{t}}^2}
    + \gamma_t  \left(  1 + \gamma_t  L \right) \expect{\norm{\drift{t}}^2} \nonumber \\
    &\quad +  \frac{z_1}{L} \beta_t^2 c_t \expect{\norm{\dev{t}}^2} +  4 z_1 \gamma_t ( 1 + \gamma_t L) \beta_t^2   \expect{\norm{\nabla \loss_{\H}(\weight{t})}^2} + \frac{z_1}{L} (1 - \beta_t)^2 \frac{\sigmadpbar^2}{n-f} \nonumber \\
    &\quad + 2 z_1 \gamma_t ( 1 + \gamma_t L)\beta_t^2 \expect{\norm{\drift{t}}^2} - \frac{z_1}{L} \expect{\norm{\dev{t}}^2}\nonumber\\
    &\quad +  \kappa \cdot \frac{z_2}{L} \beta_t \expect{\Delta_t}
    +2 \kappa \cdot \frac{z_2}{L}(1-\beta_t)^2\left(\sigma_b^2+36\sigmadp^2(1+\frac{d}{n-f})\right)
    + \kappa \cdot \frac{z_2}{L} (1-\beta_t)G_{cov}^2 \nonumber \\
    &\quad - \kappa \cdot \frac{z_2}{L} \expect{\Delta_t}.
    \label{eqn:Vt-t}
\end{align}
Upon rearranging the R.H.S.~in~\eqref{eqn:Vt-t} we obtain that
\begin{align}
    W_{t+1} - W_t &\leq - \frac{\gamma_t}{2} \left(  \left( 1 - 4 \gamma_t L \right) - 8 z_1( 1 + \gamma_t L) \beta_t^2   \right) \expect{\norm{\nabla \loss_{\H}(\weight{t})}^2} +  \frac{z_1}{L} (1 - \beta_t)^2 \frac{\sigmadpbar^2}{n-f} \nonumber \\
    &\quad - z_1 \gamma_t\left(-\frac{1}{z_1}\left( 1 + 2 \gamma_t L  \right) -  \frac{1}{\gamma_t L} \beta_t^2 c_t + \frac{1}{\gamma_t L} \right)  \expect{\norm{\dev{t}}^2}  + \gamma_t \left( 1 + \gamma_t L + 2z_1 (1 + \gamma_t L) \beta_t^2 \right) \expect{\norm{\drift{t}}^2}\nonumber\\
    &\quad - \kappa \cdot \frac{z_2}{L}(1-\beta_t) \expect{\Delta_t} +2 \kappa \cdot \frac{z_2}{L}(1-\beta_t)^2\left(\sigma_b^2+36\sigmadp^2(1+\frac{d}{n-f})\right)
    + \kappa \cdot \frac{z_2}{L} (1-\beta_t)G_{cov}^2.
    \label{eq:before-kappa}
\end{align}
Since we assume $F$ to be $(f,\kappa)$-robust averaging, we can bound $\expect{\norm{\drift{t}}^2}$ as follows.
Starting from the definition of $\drift{t}$, we have
\begin{align*}
    \norm{\drift{t}}^2 
    &= \norm{R_t - \overline{m}_t}^2
    = \norm{F{(m_t^{(1)}, \ldots, m_t^{(n)})} - \overline{m}_t}^2
    \leq \kappa \cdot \lambda_{\max}{\left(\frac{1}{\card{\H}} \sum_{i \in \H}(m_t^{(i)} - \overline{m}_t)(m_t^{(i)} - \overline{m}_t)^\top\right)}
    = \kappa \cdot \Delta_t.
\end{align*}
Then taking total expectations above gives the bound 
\begin{equation*}
    \expect{\norm{\drift{t}}^2} \leq \kappa \cdot \expect{\Delta_t}.
\end{equation*}
Using the bound above in \cref{eq:before-kappa}, and then rearranging terms, yields
\begin{align*}
    W_{t+1} - W_t &\leq - \frac{\gamma_t}{2} \left(  \left( 1 - 4 \gamma_t L \right) - 8 z_1( 1 + \gamma_t L) \beta_t^2   \right) \expect{\norm{\nabla \loss_{\H}(\weight{t})}^2} +  \frac{z_1}{L} (1 - \beta_t)^2 \frac{\sigmadpbar^2}{n-f} \nonumber \\
    &\quad - z_1 \gamma_t\left(-\frac{1}{z_1}\left( 1 + 2 \gamma_t L  \right) -  \frac{1}{\gamma_t L} \beta_t^2 c_t + \frac{1}{\gamma_t L} \right)  \expect{\norm{\dev{t}}^2}  + \kappa \gamma_t \left( 1 +  \gamma_t L + 2z_1 (1 + \gamma_t L) \beta_t^2 \right) \expect{\Delta_t}\nonumber\\
    &\quad - \kappa \cdot \frac{z_2}{L}(1-\beta_t) \expect{\Delta_t} +2 \kappa \cdot \frac{z_2}{L}(1-\beta_t)^2\left(\sigma_b^2+36\sigmadp^2(1+\frac{d}{n-f})\right)
    + \kappa \cdot \frac{z_2}{L} (1-\beta_t)G_{cov}^2 \nonumber\\
    &= - \frac{\gamma_t}{2} \left(  \left( 1 - 4 \gamma_t L \right) - 8 z_1( 1 + \gamma_t L) \beta_t^2   \right) \expect{\norm{\nabla \loss_{\H}(\weight{t})}^2} +  \frac{z_1}{L} (1 - \beta_t)^2 \frac{\sigmadpbar^2}{n-f} \nonumber \\
    &\quad - z_1 \gamma_t\left(-\frac{1}{z_1}\left( 1 + 2 \gamma_t L  \right) -  \frac{1}{\gamma_t L} \beta_t^2 c_t + \frac{1}{\gamma_t L} \right)  \expect{\norm{\dev{t}}^2} \nonumber\\
    &\quad - \kappa z_2 \gamma_t \left(\frac{1}{\gamma_t L}(1-\beta_t) - \frac{1}{z_2}\left( 1 +  \gamma_t L + 2z_1 (1 + \gamma_t L) \beta_t^2 \right) \right)\expect{\Delta_t}\nonumber\\ 
    &\quad +2 \kappa \cdot \frac{z_2}{L}(1-\beta_t)^2\left(\sigma_b^2+36\sigmadp^2(1+\frac{d}{n-f})\right)
    + \kappa \cdot \frac{z_2}{L} (1-\beta_t)G_{cov}^2.
\end{align*}

For simplicity, we define
\begin{align}
    A \coloneqq \frac{1}{2}\left( 1 - 4 \gamma_t L \right) - 8 z_1( 1 + \gamma_t L) \beta_t^2 , \label{eqn:def_At}
\end{align}
\begin{align}
    B \coloneqq -\frac{1}{z_1}\left( 1 + 2 \gamma_t L  \right) -  \frac{1}{\gamma_t L} \beta_t^2 c_t + \frac{1}{\gamma_t L}, \label{eqn:def_Bt}
\end{align}
and
\begin{align}
    C \coloneqq \frac{1}{\gamma_t L}(1-\beta_t) - \frac{1}{z_2}\left( 1 +  \gamma_t L + 2z_1 (1 + \gamma_t L) \beta_t^2 \right), \label{eqn:def_Ct}
\end{align}
Denote also
\begin{align*}
    \sigmabar^2 \coloneqq \frac{\sigma_b^2 + d \sigmadp^2}{n-f}
    +4 \kappa\left(\sigma_b^2+36\sigmadp^2(1+\frac{d}{n-f})\right).
\end{align*}
Recall that, as $z_1 = \frac{1}{16}$ and $z_2 = 2$, and $\sigmadpbar^2 = \sigma_b^2 + d \sigmadp^2$, we have
\begin{align*}
    \sigmabar^2 
    \geq z_1 \frac{\sigmadpbar^2}{n-f}
    +2 \kappa \cdot z_2\left(\sigma_b^2+36\sigmadp^2(1+\frac{d}{n-f})\right).
\end{align*}
Thus, substituting the above variables, we obtain
\begin{align}
    W_{t+1} - W_t 
    &\leq - A \gamma_t \expect{\norm{\nabla \loss_{\H}(\weight{t})}^2} - z_1 B \gamma_t \expect{\norm{\dev{t}}^2}  - \kappa \cdot z_2 C \gamma_t  \expect{\Delta_t} \nonumber\\ 
    &\quad +  \frac{1}{L} (1-\beta_t)^2 \sigmabar^2
    + \kappa \cdot \frac{z_2}{L} (1-\beta_t)G_{cov}^2. \label{eqn:after_At}
\end{align}
We now analyze below the terms $A$, $B$ and $C$ on the RHS of \eqref{eqn:after_At}.

{\bf Term $A$.} Recall from~\eqref{eq:gamma_t} that $ \gamma_t \leq \frac{1}{24L} $. Upon using this in~\eqref{eqn:def_At}, and the facts that $z_1 = \frac{1}{16}$ and $\beta_t^2 \leq 1$,  we obtain that
\begin{align}
    A \geq \frac{1}{2}\left( 1 - 4 \gamma_t L \right) - 8 z_1( 1 + \gamma_t L) 
    \geq \frac{1}{2}(1 - 4 \times \frac{1}{24})  - \frac{8}{16} ( 1 + \frac{1}{24} ) 
    \geq \frac{1}{10}. \label{eqn:At_3}
\end{align}

{\bf Term $B$.} 
Substituting $c_t$ from~\eqref{eqn:zeta_expand} in~\eqref{eqn:def_Bt} we obtain that
\begin{align*}
    B &= -\frac{1}{z_1}\left( 1 + 2\gamma_t L \right) - \frac{1}{\gamma_t L} \beta_t^2 \left( 1 + 5 \gamma_t L + 4 \gamma_t^2  L^2 \right) + \frac{1}{\gamma_t L} \\
    & =  \frac{1}{\gamma_t L}\left(1 - \beta^2 \right) 
    -\frac{1}{z_1} \left( 1 + 2\gamma_t L +5 z_1 \beta_t^2 + 4 z_1 \beta_t^2\gamma_t L \right).
\end{align*}
Using the facts that $\beta_t \leq 1$ and $\gamma_t \leq \frac{1}{24L}$, and then substituting  $z_1 = \frac{1}{16}$ we obtain
\begin{align}
    B &\geq \frac{1}{\gamma_t L}(1-\beta_t^2) - 16 \left(1 + \frac{2}{24} + \frac{5}{16} + \frac{4}{24 \times 16} \right)  
    \geq \frac{1}{\gamma_t L}(1-\beta_t^2) - 23
    \geq  \frac{1}{\gamma_t L}(1-\beta_t) - 23 = 1.
    \label{eqn:Bt_2} 
\end{align}
where the last equality follows from the fact that $1 - \beta_t = 24 \gamma_t L$.

{\bf Term $C$.} Substituting $z_1 = \frac{1}{16}, z_2 = 2$ in~\eqref{eqn:def_Ct}, and then using the facts that $\beta_t \leq 1$ and $\gamma_t \leq \frac{1}{24L}$, we obtain
\begin{align}
    C &= \frac{1}{\gamma_t L}(1-\beta_t) - \frac{1}{2}\left( 1 +  \gamma_t L + (2 \times 16) (1 + \gamma_t L) \beta_t^2 \right)
    \geq  \frac{1}{\gamma_t L}(1-\beta_t) - \frac{1}{2}\left( 1 +  \frac{1}{24} + 32 (1 + \frac{1}{24}) \right)\nonumber\\
    &\geq \frac{1}{\gamma_t L}(1-\beta_t) - 18
    = 6,
    \label{eqn:Ct_2}
\end{align}
where the last equality follows from the fact that $1 - \beta_t = 24 \gamma_t L$.

{\bf Combining terms $A$, $B$, and $C$.} Finally, substituting from~\eqref{eqn:At_3},~\eqref{eqn:Bt_2}, and ~\eqref{eqn:Ct_2} in~\eqref{eqn:after_At} (and recalling that $z_2 = 2$) we obtain that
\begin{align}
    W_{t+1} - W_t &\leq - \frac{\gamma_t}{10}\expect{\norm{\nabla \loss_{\H}(\weight{t})}^2} - z_1 \gamma_t \expect{\norm{\dev{t}}^2}  - 6\kappa z_2 \gamma_t  \expect{\Delta_t} \nonumber\\ 
    &\quad +  \frac{1}{L} (1-\beta_t)^2 \sigmabar^2
    + \kappa \cdot \frac{2}{L} (1-\beta_t)G_{cov}^2.
    \label{eq:bound-lyap-aux}
\end{align}
Since $\loss_{\H}$ is $\mu$-strongly convex, we have~\cite{karimi2016linear} for any $\theta \in \R^d$ that
\begin{equation}
    \norm{\nabla \loss_{\H}(\theta)}^2 \geq 2\mu(\loss{(\theta)} - \loss_*).
    \label{eq:pl-inequality}
\end{equation}
Plugging \eqref{eq:pl-inequality} in \eqref{eq:bound-lyap-aux} above, and then recalling that $L \geq \mu$, yields
\begin{align*}
    W_{t+1} - W_t &\leq - \frac{\mu \gamma_t}{5}\expect{\loss_{\H}(\weight{t}) - \loss_*} - z_1 \gamma_t \expect{\norm{\dev{t}}^2}  - 6\kappa z_2 \gamma_t  \expect{\Delta_t} \nonumber\\ 
    &\quad +  \frac{1}{L} (1-\beta_t)^2 \sigmabar^2
    + \kappa \cdot \frac{2}{L} (1-\beta_t)G_{cov}^2 \nonumber\\
    &\leq - \frac{\mu \gamma_t}{5}\expect{\loss_{\H}(\weight{t}) - \loss_* + \frac{z_1}{\mu} \norm{\dev{t}}^2 + \kappa \cdot \frac{z_2}{\mu} \Delta_t}
    +  \frac{1}{L} (1-\beta_t)^2 \sigmabar^2
    + \kappa \cdot \frac{2}{L} (1-\beta_t)G_{cov}^2
    \nonumber\\
    & \leq - \frac{\mu \gamma_t}{5}\expect{\loss_{\H}(\weight{t}) - \loss_* + \frac{z_1}{L} \norm{\dev{t}}^2 + \kappa \cdot \frac{z_2}{L} \Delta_t}
    +  \frac{1}{L} (1-\beta_t)^2 \sigmabar^2
    + \kappa \cdot \frac{2}{L} (1-\beta_t)G_{cov}^2
    \nonumber\\
    &=- \frac{\mu \gamma_t}{5} W_t
    +  \frac{1}{L} (1-\beta_t)^2 \sigmabar^2
    + \kappa \cdot \frac{2}{L} (1-\beta_t)G_{cov}^2.
\end{align*}
Upon plugging the above bound back in \cref{eq:lyap-to-aux}, rearranging terms and substituting $1-\beta_t = 24L\gamma_t$, we obtain
\begin{align*}
    V_{t+1} - V_t 
    &\leq (\hat{t}+1)^2 \left[ - \frac{\mu \gamma_t}{5} W_t
    +  \frac{1}{L} (1-\beta_t)^2 \sigmabar^2
    + \kappa \cdot \frac{2}{L} (1-\beta_t)G_{cov}^2 \right] + (2\hat{t}+1)W_t \nonumber\\
    &= -\left[ (\hat{t}+1)^2 \frac{\mu \gamma_t}{5} - (2\hat{t}+1) \right]W_t + \frac{(\hat{t}+1)^2}{L}(24L\gamma_t)^2\sigmabar^2
    + \kappa \cdot \frac{2(\hat{t}+1)^2}{L} (24L\gamma_t)G_{cov}^2.
\end{align*}
Recall however that $\gamma_t = \frac{10}{\mu \hat{t}}$ as $\hat{t} = t+a_1\frac{L}{\mu}$.
Recall that we denote $a_1 = 24 \times 10 = 240$.
Substituting $\gamma_t$ above yields
\begin{align*}
    V_{t+1} - V_t 
    &\leq (\hat{t}+1)^2 \left[ - \frac{\mu \gamma_t}{5} W_t
    +  \frac{1}{L} (1-\beta_t)^2 \sigmabar^2
    + \kappa \cdot \frac{2}{L} (1-\beta_t)G_{cov}^2 \right] + (2\hat{t}+1)W_t \nonumber\\
    &= -\left[ 2\frac{(\hat{t}+1)^2}{\hat{t}} - (2\hat{t}+1) \right]W_t + a_1^2 L\frac{(\hat{t}+1)^2}{\mu^2 \hat{t}^2}\sigmabar^2
    + 2a_1 \kappa \cdot \frac{(\hat{t}+1)^2}{\mu \hat{t}}G_{cov}^2.
\end{align*}
Observe that $2\frac{(\hat{t}+1)^2}{\hat{t}} \geq 2(\hat{t}+1) > 2\hat{t}+1$, implying that the first term above is negative:
\begin{align*}
    V_{t+1} - V_t 
    &\leq a_1^2 L\frac{(\hat{t}+1)^2}{\mu^2 \hat{t}^2}\sigmabar^2
    + 2a_1 \kappa \cdot \frac{(\hat{t}+1)^2}{\mu \hat{t}}G_{cov}^2.
\end{align*}
Observe now that, as $\hat{t} = t+a_1\frac{L}{\mu} \geq a_1 = 240$ (because $L \geq \mu$), we have $(\hat{t}+1)^2 \leq (1+\frac{1}{240})^2 \hat{t}^2 \leq 2 \hat{t}^2$.
Plugging this bound in the inequality above gives
\begin{align*}
    V_{t+1} - V_t 
    &\leq \frac{2 a_1^2 L}{\mu^2}\sigmabar^2
    + 4a_1 \kappa \cdot \frac{\hat{t}}{\mu}G_{cov}^2.
\end{align*}
Therefore, we have for every $t \in \{0,\ldots,T-1\}$ that
\begin{align*}
    V_{t+1} - V_0 = \sum_{k=0}^{t} (V_{k+1}-V_k) \leq (t+1) \frac{2 a_1^2 L}{\mu^2}\sigmabar^2 + \left(\sum_{k=0}^{t}\hat{k}\right) \frac{4a_1 \kappa}{\mu}G_{cov}^2.
\end{align*}
Since $\sum_{k=0}^{t}\hat{k} = \sum_{k=0}^{t}(k+a_1\frac{L}{\mu}) = \sum_{k=0}^{t}k + a_1(t+1)\frac{L}{\mu} = \frac{t(t+1)}{2}+ a_1(t+1)\frac{L}{\mu}$, we obtain
\begin{align*}
    V_{t+1} - V_0 &= \sum_{k=0}^{t} (V_{k+1}-V_k) \leq (t+1) \frac{2 a_1^2 L}{\mu^2}\sigmabar^2 + \left(\frac{t(t+1)}{2}+ a_1(t+1)\frac{L}{\mu}\right) \frac{4a_1 \kappa}{\mu}G_{cov}^2 \nonumber\\
    &= (t+1) \frac{2 a_1^2 L}{\mu^2}\sigmabar^2 + (t+1)\left(\frac{t}{2}+ a_1\frac{L}{\mu}\right) \frac{4a_1 \kappa}{\mu}G_{cov}^2.
\end{align*}
However, recalling the definition \eqref{eqn:lyap_func} of $V_t$, we obtain
\begin{align*}
    (t+1+a_1\frac{L}{\mu})^2 \expect{\loss_{\H}{(\theta_{t+1})}-\loss_*} 
    \leq V_{t+1} \leq V_0 + (t+1) \frac{2 a_1^2 L}{\mu^2}\sigmabar^2 + (t+1)\left(\frac{t}{2}+ a_1\frac{L}{\mu}\right) \frac{4a_1 \kappa}{\mu}G_{cov}^2.
\end{align*}
By rearranging terms, and using the fact that $\frac{L}{\mu} \geq 1$, we then get
\begin{align}
    \expect{\loss_{\H}{(\theta_{t+1})}-\loss_*} 
    &\leq \frac{V_0}{(t+1+a_1\frac{L}{\mu})^2} + \frac{t+1}{(t+1+a_1\frac{L}{\mu})^2 } \frac{2 a_1^2 L \sigmabar^2}{\mu^2}  
    + \frac{(t+1)\left(\frac{t}{2}+ a_1\frac{L}{\mu}\right)}{(t+1+a_1\frac{L}{\mu})^2 }\frac{4a_1 \kappa}{\mu}G_{cov}^2 \nonumber\\
    &\leq \frac{V_0}{(t+1+a_1\frac{L}{\mu})^2} + \frac{1}{t+1+a_1\frac{L}{\mu}} \frac{2 a_1^2 L \sigmabar^2}{\mu^2}  
    + \frac{4a_1 \kappa}{\mu}G_{cov}^2.
    \label{eq:penultimate}
\end{align}
It remains to bound $V_0$.
By definition, we have
\begin{align*}
    V_0 = \left(a_1\frac{L}{\mu}\right)^2 \left[\loss_{\H}(\weight{0}) - \loss_* + \frac{z_1}{L}\norm{\dev{0}}^2 + \frac{z_2}{L}\Delta_0\right].
\end{align*}
 By definition of $\AvgMmt{t} = \frac{1}{\card{\H}} \sum_{i \in \H} m^{(i)}_t$ and the initializations $m_0^{(i)} = 0$ for all $i \in \H$, we have $\Delta_0 = \lambda_{\max}{\left(\frac{1}{\card{\H}} \sum_{i \in \H}(m_0^{(i)} - \overline{m}_0)(m_0^{(i)} - \overline{m}_0)^\top\right)} = 0$.
 Therefore, we have
 \begin{equation*}
     V_0 = \left(a_1\frac{L}{\mu}\right)^2 \left[\loss_{\H}(\weight{0}) - \loss_* + \frac{z_1}{L}\norm{\dev{0}}^2\right].
 \end{equation*}
 Moreover, by definition of $\dev{t}$ in~\eqref{eqn:dev},
 we obtain that
\begin{align*}
    \norm{\dev{0}}^2 = \norm{\AvgMmt{0} - \nabla \loss_{\H}(\weight{0})}^2
    = \norm{\nabla \loss_{\H}(\weight{0})}^2. 
\end{align*}
Recall that $\loss_\H$ is $L$-smooth. Thus, $\norm{\nabla \loss_{\H}(\weight{0})}^2 \leq 2L(\loss_{\H}{(\weight{0})}-\loss^*)$ (see~\cite{nesterov2018lectures}, Theorem 2.1.5).
Therefore, substituting $z_1 = \frac{1}{16}$, we have
\begin{equation*}
    V_0 \leq \left(a_1\frac{L}{\mu}\right)^2 \left[\loss_{\H}(\weight{0}) - \loss_* + \frac{2L}{16L}(\loss_{\H}(\weight{0}) - \loss_*)\right] = \leq \left(a_1\frac{L}{\mu}\right)^2\frac{9}{8}(\loss_{\H}(\weight{0}) - \loss_*) 
    \leq 2\left(a_1\frac{L}{\mu}\right)^2(\loss_{\H}(\weight{0}) - \loss_*).
\end{equation*}
Plugging the above bound back in \cref{eq:penultimate}, rearranging terms, and then recalling that $a_1 \frac{L}{\mu} \geq 0$, yields 
\begin{align*}
    \expect{\loss_{\H}{(\theta_{t+1})}-\loss_*}
    &\leq \frac{4a_1}{\mu} \kappa G_{cov}^2 +
    \frac{2 a_1^2 L \sigmabar^2}{\mu^2(t+1+a_1\frac{L}{\mu})} +
    \frac{2a_1 L^2(\loss_{\H}(\weight{0}) - \loss_*)}{\mu^2(t+1+a_1\frac{L}{\mu})^2}\\
    &\leq  \frac{4a_1}{\mu} \kappa G_{cov}^2 +
    \frac{2 a_1^2 L \sigmabar^2}{\mu^2(t+1)} +
    \frac{2a_1 L^2(\loss_{\H}(\weight{0}) - \loss_*)}{\mu^2(t+1)^2}.
\end{align*}

Specializing the inequality above for $t=T-1$ and denoting $\loss_0 \coloneqq \loss_{\H}(\weight{0}) - \loss_*$ proves the theorem:
\begin{align*}
    \expect{\loss_{\H}{(\theta_{T})}-\loss_*}
    &\leq \frac{4a_1}{\mu} \kappa G_{cov}^2 +
    \frac{2 a_1^2 L \sigmabar^2}{\mu^2 T} +
    \frac{2a_1^2 L^2 \loss_0}{\mu^2 T^2}.
\end{align*}
\end{proof}

\begin{remark}
In the proof of the strongly convex case of Theorem~\ref{thm:convergence} above, we do not need the function $\loss_\H$ to be $\mu$-strongly convex. In fact, it is sufficient for $\loss_\H$ to satisfy the $\mu$-PL inequality stated in \eqref{eq:pl-inequality}.
Accordingly, our results not only apply to smooth $\mu$-strongly convex functions, but more generally to the class of smooth $\mu$-PL functions, which may be non-convex~\cite{karimi2016linear}.
\end{remark}

\subsubsection{Non-convex Case}
\label{app:nonconvex}
\begin{proof}
Let \cref{asp:bnd_var} hold and assume that $\loss_\H$ is $L$-smooth, and that $F$ is a $(f,\kappa)$-robust averaging aggregation rule.
Let $t \in \{0, \ldots, T-1\}$.
We set the learning rate and momentum to constant as follows:
\begin{equation}
    \gamma_t = \gamma \coloneqq \min{\left\{\frac{1}{24L}, ~ \frac{\sqrt{a_4 \loss_0}}{2\sigmabar \sqrt{a_3 L T}}\right\}},~
    \beta_t = \beta \coloneqq 1-24L \gamma,
\end{equation}
where $a_1 \coloneqq 240$.
Note that we have
\begin{equation}
    \gamma_t = \gamma \leq \frac{1}{24L}.
    \label{eq:gamma_t_nonconvex}
\end{equation}

To obtain the convergence result we 
define the Lyapunov function to be
\begin{align}
    V_t \coloneqq \expect{\loss_{\H}(\weight{t}) - \loss_* + \frac{z_1}{L} \norm{\dev{t}}^2 + \kappa \cdot \frac{z_2}{L} \Delta_t}, \label{eqn:lyap_func_nonconvex}
\end{align}
where $z_1 = \frac{1}{16}$, and $z_2 = 2$.
Note that $V_t$ corresponds to the sequence $W_t$ defined in \cref{eqn:lyap_func_aux}, and analyzed in \cref{app:strongly-convex} under the assumption that $\gamma_t \leq \frac{1}{24L}$.
Since the latter holds by \cref{eq:gamma_t_nonconvex}, we directly apply the bound obtained in \cref{eq:bound-lyap-aux}:
\begin{align*}
    V_{t+1} - V_t &\leq - \frac{\gamma_t}{10}\expect{\norm{\nabla \loss_{\H}(\weight{t})}^2} - z_1 \gamma_t \expect{\norm{\dev{t}}^2}  - 6\kappa z_2 \gamma_t  \expect{\Delta_t} \nonumber\\ 
    &\quad +  \frac{1}{L} (1-\beta_t)^2 \sigmabar^2
    + \kappa \cdot \frac{2}{L} (1-\beta_t)G_{cov}^2.
\end{align*}
In turn, substituting $\gamma_t = \gamma, \beta_t = \beta$ and bounding the second and third terms on the RHS by zero, this implies that
\begin{align*}
    V_{t+1} - V_t &\leq - \frac{\gamma}{10}\expect{\norm{\nabla \loss_{\H}(\weight{t})}^2}
    +  \frac{1}{L} (1-\beta)^2 \sigmabar^2
    + \kappa \cdot \frac{2}{L} (1-\beta)G_{cov}^2.
\end{align*}
By rearranging terms and then averaging over $t \in \{0, \ldots, T-1\}$, we obtain
\begin{align*}
    \frac{1}{T} \sum_{t=0}^{T-1}\expect{\norm{\nabla \loss_{\H}(\weight{t})}^2} \leq \frac{10}{\gamma T} \sum_{t=0}^{T-1}(V_t-V_{t+1})+ \frac{10}{\gamma L} (1-\beta)^2 \sigmabar^2 + \kappa \cdot \frac{20}{\gamma L}(1-\beta)G_{cov}^2.
\end{align*}
We now substitute $\beta = 1-24\gamma L$.
Denoting $a_3 = 10 \times 24^2  = 5760, a_2 = 20 \times 24 = 480 $, we obtain
\begin{align}
    \frac{1}{T} \sum_{t=0}^{T-1}\expect{\norm{\nabla \loss_{\H}(\weight{t})}^2} 
    &\leq \frac{10}{\gamma T} \sum_{t=0}^{T-1}(V_t-V_{t+1})+ \frac{(10 \times 24^2)}{\gamma L} (\gamma L)^2 \sigmabar^2 + \kappa \cdot \frac{(20 \times 24)}{\gamma L}(\gamma L)G_{cov}^2\nonumber\\
    &= \frac{10}{\gamma T} (V_0-V_T)+ a_3 \gamma L \sigmabar^2 + a_2 \kappa G_{cov}^2.
    \label{eq:penultimate-nonconvex}
\end{align}
We now bound $V_0 - V_T$.
First recall that $V_T \geq 0$ as a sum of non-negative terms (see \eqref{eqn:lyap_func_nonconvex}).
Therefore, we have
\begin{align*}
    V_0 - V_T \leq V_0 = \loss_{\H}(\weight{0}) - \loss_* + \frac{z_1}{L}\norm{\dev{0}}^2 + \frac{z_2}{L}\Delta_0.
\end{align*}
 By definition of $\AvgMmt{t} = \frac{1}{\card{\H}} \sum_{i \in \H} m^{(i)}_t$ and the initializations $m_0^{(i)} = 0$ for all $i \in \H$, we have $\Delta_0 = \lambda_{\max}{\left(\frac{1}{\card{\H}} \sum_{i \in \H}(m_0^{(i)} - \overline{m}_0)(m_0^{(i)} - \overline{m}_0)^\top\right)} = 0$.
 Therefore, we have
 \begin{equation*}
     V_0 = \loss_{\H}(\weight{0}) - \loss_* + \frac{z_1}{L}\norm{\dev{0}}^2.
 \end{equation*}
 Moreover, by definition of $\dev{t}$ in~\eqref{eqn:dev},
 we obtain that
\begin{align*}
    \norm{\dev{0}}^2 = \norm{\AvgMmt{0} - \nabla \loss_{\H}(\weight{0})}^2
    = \norm{\nabla \loss_{\H}(\weight{0})}^2. 
\end{align*}
Recall that $\loss_\H$ is $L$-smooth. Thus, $\norm{\nabla \loss_{\H}(\weight{0})}^2 \leq 2L(\loss_{\H}{(\weight{0})}-\loss^*)$ (see~\cite{nesterov2018lectures}, Theorem 2.1.5).
Therefore, substituting $z_1 = \frac{1}{16}$, we have
\begin{equation*}
    V_0 - V_T \leq V_0 \leq \loss_{\H}(\weight{0}) - \loss_* + \frac{2L}{16L}(\loss_{\H}(\weight{0}) - \loss_*) 
    = \frac{9}{8}(\loss_{\H}(\weight{0}) - \loss_*).
\end{equation*} 
By plugging this bound back in \eqref{eq:penultimate-nonconvex}, and denoting $a_4 \coloneqq 24 \times 10 \times (\frac{9}{8}) = 270$ and $\loss_0 \coloneqq \loss_{\H}(\weight{0}) - \loss_*$, we obtain
\begin{align}
    \frac{1}{T} \sum_{t=0}^{T-1}\expect{\norm{\nabla \loss_{\H}(\weight{t})}^2} 
    &\leq \frac{10 \times (\frac{9}{8})}{\gamma T} (\loss_{\H}(\weight{0}) - \loss_*)+ a_3 \gamma L \sigmabar^2 + a_2 \kappa G_{cov}^2 \nonumber\\
    &= \frac{a_4 \loss_0}{24\gamma T} + a_3 \gamma L \sigmabar^2 + a_2 \kappa G_{cov}^2.
    \label{eq:penultimate2-nonconvex}
\end{align}

Recall that by definition
\begin{align*}
    \gamma = \min{\left\{\frac{1}{24L}, ~ \frac{\sqrt{a_4 \loss_0}}{2\sigmabar \sqrt{a_3 L T}}\right\}},
\end{align*}
and thus $\frac{1}{\gamma} = \max{\left\{24L, \frac{2}{\sqrt{a_4 \loss_0}}\sigmabar\sqrt{a_3LT}\right\}} \leq 24L + \frac{2}{\sqrt{a_4 \loss_0}}\sigmabar\sqrt{a_3LT}$.
Therefore, we have
\begin{equation*}
    \frac{a_4 \loss_0}{24\gamma T} 
    \leq \frac{a_4 \loss_0}{24T}\left(24L + \frac{2}{\sqrt{a_4 \loss_0}}\sigmabar\sqrt{a_3LT}\right)
    = \frac{a_4 L \loss_0}{T} + \frac{\sqrt{a_3 a_4 L \loss_0} \sigmabar}{12\sqrt{T}}.
\end{equation*}

Upon using the above, and that $\gamma \leq \frac{\sqrt{a_4 \loss_0}}{2\sigmabar \sqrt{a_3 L T}}$, in~\eqref{eq:penultimate2-nonconvex}, we obtain that
\begin{align*}
    \frac{1}{T}\sum_{t = 0}^{T-1} \expect{\norm{\nabla \loss_{\H}(\weight{t})}^2} 
    &\leq \frac{a_4 L \loss_0}{T} + \frac{\sqrt{a_3 a_4 L \loss_0} \sigmabar}{12\sqrt{T}} + \frac{\sqrt{a_3 a_4 L \loss_0} \sigmabar}{2\sqrt{T}} + a_2 \kappa G_{cov}^2 
    \leq a_2 \kappa G_{cov}^2 +  \frac{\sqrt{a_3 a_4 L \loss_0} \sigmabar}{\sqrt{T}} + \frac{a_4 L \loss_0}{T}.
\end{align*}
Finally, recall from Algorithm~\ref{algo:robust-dpsgd} that $\hat{\weight{}}$ is chosen randomly from the set of parameter vectors $\left(\weight{0}, \ldots, \, \weight{T-1} \right)$. Thus, $\expect{\norm{\nabla \loss_{\H}\left(\hat{\weight{}} \right)}^2} = \frac{1}{T}\sum_{t = 0}^{T-1} \expect{\norm{\nabla \loss_{\H}(\weight{t})}^2}$. Substituting this above proves the theorem.
\end{proof}

\subsection{Proof of \cref{cor:tradeoff}}
\label{app:cor-strongly-convex}
We now state the proof of Corollary~\ref{cor:tradeoff} below.

\tradeoff*
\begin{proof}
Assume that $\loss_\H$ is $L$-smooth and $\mu$-strongly convex.
Consider \cref{algo:robust-dpsgd} with aggregation $F = \mathrm{SMEA}$, learning rate $\gamma_t= \frac{10}{\mu(t+a_1 \frac{L}{\mu})}$, and momentum coefficient $\beta_t = 1 - 24L \gamma_t$. 
By Theorem~\ref{thm:privacy}, the condition on $\sigmadp$ ensures that Algorithm~\ref{algo:robust-dpsgd} is $(\varepsilon,\delta)$-DP.
In the remaining, we prove that Algorithm~\ref{algo:robust-dpsgd} is $(f,\varrho)$-robust as stated in the corollary. 

First, note that, by \cref{prop:smea}, SMEA is $(f,\kappa)$-robust averaging with $\kappa = \frac{4f}{n-f}(1+\frac{f}{n-2f})^2$.
In fact, as we assume $n \geq (2+\eta)f$ where $\eta>0$ is an absolute constant, we have 
\begin{equation}
    \kappa \leq \frac{4f}{n-f}(1+\frac{1}{\eta})^2 = \mathcal{O}{(\frac{f}{n-f})}.
    \label{eq:kappa-smea}
\end{equation}
Therefore, thanks to Theorem~\ref{thm:convergence}, we have
\begin{align}
\label{eq:cor-eq0}
   \expect{\loss_{\H}{(\theta_{T})}-\loss_*}
    &\leq 4a_1\frac{\kappa G_{cov}^2}{\mu}  +
    \frac{2 a_1^2 L \sigmabar^2}{\mu^2 T} +  \frac{2a_1^2 L^2 \loss_0}{\mu^2 T^2},
\end{align}
where the constant $a_1$ is defined as in~\eqref{eqn:a1a2}, and
\begin{equation*}
    \sigmabar^2 \coloneqq \frac{\sigma_b^2 + d \sigmadp^2}{n-f}
    +4 \kappa\left(\sigma_b^2+36\sigmadp^2(1+\frac{d}{n-f})\right), \quad
    \sigma_b^2 \coloneqq 2(1-\frac{b}{m})\frac{\sigma^2}{b}.
\end{equation*}
We now analyze independently the terms of~\eqref{eq:cor-eq0} that depend on $T$, i.e. the last two terms on the RHS of \eqref{eq:cor-eq0}. 
Recall that, asymptotically in $T$, the condition on $\sigmadp$ implies
\begin{align}
    \label{eq:condition}
    \sigmadp &= k \cdot \frac{2C}{b} \max{\{1, \,  \nicefrac{b \sqrt{T \log{(1/\delta)}}}{m\varepsilon}\}}
    = \mathcal{O}{\left(\frac{C\sqrt{T \log{(1/\delta)}}}{m\varepsilon}\right)}.
\end{align}

\textbf{Term $\frac{2 a_1^2 L \sigmabar^2}{\mu^2 T}$.}
Recalling the expression of $\sigmabar^2$, and using \eqref{eq:condition} and \eqref{eq:kappa-smea} and the facts that $\sigma_b^2$ is independent of $T$ and $f < n-f$, we obtain
\begin{align*}
    \sigmabar^2 &= \frac{\sigma_b^2 + d \sigmadp^2}{n-f}
    +4 \kappa\left(\sigma_b^2+36\sigmadp^2(1+\frac{d}{n-f})\right)
    =\mathcal{O}{\left(\frac{d \sigmadp^2}{n-f} + \frac{f}{n-f} \cdot \sigmadp^2(1+\frac{d}{n-f}) \right)}\nonumber\\
    &=\mathcal{O}{\left(\frac{d \sigmadp^2}{n-f} + \frac{f}{n-f} \cdot \sigmadp^2 \right)}
    = \mathcal{O}{\left( \frac{C^2  d\, T \log{(1/\delta)}}{m^2 (n-f) \varepsilon^2}+
        \frac{f}{n-f} \cdot \frac{C^2 T \log{(1/\delta)}}{m^2\varepsilon^2} \right)}.
\end{align*}
As a result, we obtain
\begin{equation}
    \frac{2 a_1^2 L \sigmabar^2}{\mu^2 T} = \mathcal{O}{\left( \frac{C^2 d \,\log{(1/\delta)}}{m^2 (n-f) \varepsilon^2}+
        \frac{f}{n-f} \cdot \frac{C^2 \log{(1/\delta)}}{m^2\varepsilon^2} \right)}.
\end{equation}

\textbf{Term $\frac{2a_1^2 L^2 \loss_0}{\mu^2 T^2}$.}
This term is independent of $\sigmadp$ and vanishes with $T$.

Going back to \eqref{eq:cor-eq0}, and ignoring terms vanishing in $T$, and using \eqref{eq:kappa-smea}, we obtain
\begin{equation*}
    \expect{\loss_{\H}{(\theta_{T})}-\loss_*} = \mathcal{O} \left( \frac{C^2 d\,\log{(1/\delta)}}{m^2(n-f)\varepsilon^2}+
        \frac{f}{n-f}\frac{C^2 \log{(1/\delta)}}{m^2\varepsilon^2}+
        \frac{f}{n-f}G_{cov}^2 \right).
\end{equation*}
Finally, note that $G_{cov}^2 \leq G^2$.
Indeed, using the definition of $G_{cov}^2$ and \cref{asp:hetero}, together with Cauchy-Schwartz, we have
\begin{align*}
    G_{cov}^2 =& \sup_{\theta \in \R^d} \sup_{\norm{v}\leq 1} \frac{1}{\card{\H}}\sum_{i \in \H}\iprod{v}{\nabla{\loss_i{(\theta)}} - \nabla{\loss_{\H}{(\theta)}}}^2 \leq \sup_{\theta \in \R^d} \frac{1}{\card{\H}}\sum_{i \in \H}\norm{\nabla{\loss_i{(\theta)}} - \nabla{\loss_{\H}{(\theta)}}}^2 \leq G^2.
\end{align*}
Using the fact above in the last inequality, together with the fact that $n-f \geq \frac{n}{2}$ (as $n >2f$), we conclude
\begin{align*}
    \expect{\loss_{\H}{(\theta_{T})}-\loss_*} = \mathcal{O} \left( \frac{C^2 d\,\log{(1/\delta)}}{m^2 n\varepsilon^2}+
        \frac{f}{n}\frac{C^2 \log{(1/\delta)}}{m^2\varepsilon^2}+
        \frac{f}{n}G^2 \right).
\end{align*}
Ignoring the constant $C$ above concludes the proof.
\end{proof}
\subsection{Proof of \cref{cor:tradeoff-nonconvex}}
\label{app:cor-nonconvex}

We now state the robustness and DP guarantees of \algoname{} with SMEA in the non-convex case in Corollary~\ref{cor:tradeoff-nonconvex} below.

\begin{corollary}
\label{cor:tradeoff-nonconvex}
Consider Algorithm~\ref{algo:robust-dpsgd} with aggregation $F = \mathrm{SMEA}$, under the non-convex setting of Theorem~\ref{thm:convergence}.
Suppose that assumptions~\ref{asp:hetero}, \ref{asp:bnd_var}, \ref{asp:bnd_norm} hold, that $\loss_\H$ is $L$-smooth, and that $n \geq (2+\eta)f$, for some absolute constant $\eta>0$.
Let $\varepsilon>0, \delta \in (0,1)$ be such that $\varepsilon \leq \log{(1/\delta)}$.
Then, there exists a constant $k>0$ such that, if $ \sigmadp = k \cdot \nicefrac{2C}{b} \max{\{1, \,  \nicefrac{b \sqrt{T \log{(1/\delta)}}}{\varepsilon m}\}}$, then Algorithm~\ref{algo:robust-dpsgd} is $(\varepsilon,\delta)$-DP and $(f, \varrho)$-robust, where
\begin{align*}
   \varrho = \mathcal{O} \left( \frac{\sqrt{d\,\log{(1/\delta)}}}{\varepsilon \sqrt{n} m}+
        \sqrt{\frac{f}{n}} \cdot \frac{\sqrt{\log{(1/\delta)}}}{\varepsilon m}+
        \frac{f}{n}G^2 \right).
\end{align*}
\end{corollary}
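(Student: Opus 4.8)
The plan is to mirror the proof of \cref{cor:tradeoff}, substituting the non-convex part of \cref{thm:convergence} for the strongly convex one; here the quantity denoted $\varrho$ is the stationarity error $\expect{\norm{\nabla\loss_{\H}(\hat{\theta})}^2}$, as in the non-convex case of \cref{thm:convergence}. The $(\varepsilon,\delta)$-DP guarantee follows directly from \cref{thm:privacy} and the stated condition on $\sigmadp$. For the utility bound, I would first invoke \cref{prop:smea} to obtain that $\mathrm{SMEA}$ is $(f,\kappa)$-robust averaging with $\kappa = \frac{4f}{n-f}\left(1+\frac{f}{n-2f}\right)^2$, and then use $n \geq (2+\eta)f$ to conclude $\kappa = \mathcal{O}{(f/n)}$. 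Plugging this into the non-convex case of \cref{thm:convergence} yields
\begin{align*}
    \expect{\norm{\nabla \loss_{\H}(\hat{\theta})}^2} \leq a_2 \kappa \gcov^2 + \frac{\sqrt{a_3 a_4 L \loss_0}\, \sigmabar}{\sqrt{T}} + \frac{a_4 L \loss_0}{T},
\end{align*}
with $\sigmabar^2 = \frac{\sigma_b^2 + d \sigmadp^2}{n-f} + 4\kappa\left(\sigma_b^2 + 36\sigmadp^2\left(1+\frac{d}{n-f}\right)\right)$ and $\sigma_b^2 = 2(1-\frac{b}{m})\frac{\sigma^2}{b}$.

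The heart of the computation is controlling the $T$-dependent middle term after substituting the asymptotic-in-$T$ value $\sigmadp = \mathcal{O}{\left(C\sqrt{T\log(1/\delta)}/(m\varepsilon)\right)}$ implied by the condition on $\sigmadp$. Since $\sigma_b^2$ does not depend on $T$ and $f < n-f$, the expression for $\sigmabar^2$ reduces to $\mathcal{O}{\left(\frac{d\sigmadp^2}{n-f} + \frac{f}{n-f}\sigmadp^2\right)}$, so, using $\sqrt{x+y}\leq\sqrt{x}+\sqrt{y}$, we get $\sigmabar = \mathcal{O}{\left(\sigmadp\left(\sqrt{\frac{d}{n-f}} + \sqrt{\frac{f}{n-f}}\right)\right)}$. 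The key point is that $\sigmadp/\sqrt{T} = \mathcal{O}{\left(C\sqrt{\log(1/\delta)}/(m\varepsilon)\right)}$ is independent of $T$, hence the middle term is $\mathcal{O}{\left(\frac{\sqrt{d\log(1/\delta)}}{\varepsilon\sqrt{n-f}\,m} + \sqrt{\frac{f}{n-f}}\cdot\frac{\sqrt{\log(1/\delta)}}{\varepsilon m}\right)}$, which I would then simplify using $n-f\geq n/2$. The term $\frac{a_4 L\loss_0}{T}$ vanishes in $T$ and is dropped, and the heterogeneity term is handled exactly as in \cref{cor:tradeoff}: Cauchy--Schwarz gives $\gcov^2\leq G^2$, hence $a_2\kappa\gcov^2 = \mathcal{O}{(\frac{f}{n}G^2)}$. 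Collecting the three contributions and suppressing the absolute constant $C$ yields the claimed expression for $\varrho$.

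The only genuine difference from \cref{cor:tradeoff} --- and the step that demands the most attention --- is that in the non-convex analysis the stochastic and privacy noise enters the excess error through $\sigmabar/\sqrt{T}$ rather than through $\sigmabar^2/T$. Consequently only a single factor $\sqrt{T}$ (not $T$) cancels against the $\sqrt{T}$-growth of $\sigmadp$, and what survives are the square roots of $d/n$ and $f/n$ instead of $d/n$ and $f/n$ themselves --- which is precisely why the non-convex bound is, in the privacy and robustness parameters, the ``square root'' of the strongly convex one. No new technical machinery beyond what already underlies \cref{cor:tradeoff} and the non-convex part of \cref{thm:convergence} is required; the careful bookkeeping with $\sqrt{x+y}\leq\sqrt{x}+\sqrt{y}$ and with the $T$-asymptotics of $\sigmadp$ is the main obstacle.
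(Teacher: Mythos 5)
Your proposal matches the paper's own proof essentially step for step: DP via Theorem~\ref{thm:privacy}, $\kappa = \mathcal{O}(f/n)$ via Proposition~\ref{prop:smea} and $n \geq (2+\eta)f$, the non-convex bound of Theorem~\ref{thm:convergence}, the reduction of $\sigmabar^2$ to $\mathcal{O}\big(\tfrac{d\sigmadp^2}{n-f} + \tfrac{f}{n-f}\sigmadp^2\big)$, the $\sqrt{x+y}\leq\sqrt{x}+\sqrt{y}$ split, cancellation of $\sqrt{T}$ against the growth of $\sigmadp$, $\gcov^2 \leq G^2$ by Cauchy--Schwarz, and $n-f \geq n/2$. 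Your observation that the noise enters through $\sigmabar/\sqrt{T}$ rather than $\sigmabar^2/T$ (hence the square-root dependence on $d/n$ and $f/n$) is exactly the point the paper's computation turns on, so the proof is correct and takes the same route.
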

\begin{proof}
Assume that $\loss_\H$ is $L$-smooth.
Consider \cref{algo:robust-dpsgd} with aggregation $F = \mathrm{SMEA}$, learning rate $\gamma_t = \gamma = \min{\left\{\frac{1}{24L}, ~ \frac{\sqrt{a_4 \loss_0}}{2\sigmabar \sqrt{a_3 L T}}\right\}}$, and momentum coefficient $\beta_t = \beta = 1 - 24 L \gamma$. 
By Theorem~\ref{thm:privacy}, the condition on $\sigmadp$ ensures that Algorithm~\ref{algo:robust-dpsgd} is $(\varepsilon,\delta)$-DP.
In the remaining, we prove that Algorithm~\ref{algo:robust-dpsgd} is $(f,\varrho)$-robust as stated in the corollary. 

First, note that, by \cref{prop:smea}, SMEA is $(f,\kappa)$-robust averaging with $\kappa = \frac{4f}{n-f}(1+\frac{f}{n-2f})^2$.
In fact, as we assume $n \geq (2+\eta)f$ where $\eta>0$ is an absolute constant, we have 
\begin{equation}
    \kappa \leq \frac{4f}{n-f}(1+\frac{1}{\eta})^2 = \mathcal{O}{(\frac{f}{n-f})}.
    \label{eq:kappa-smea-nonconvex}
\end{equation}
Therefore, thanks to Theorem~\ref{thm:convergence}, we have
\begin{align}
\label{eq:cor-eq0-nonconvex}
   \expect{\|\nabla \loss_{\H} {(\hat{\weight{}})}\|^2} 
    \leq  a_2 \kappa G_{cov}^2 +  \frac{\sqrt{a_3 a_4 L \loss_0} \sigmabar}{\sqrt{T}} + \frac{a_4 L \loss_0}{T},
\end{align}
where the constants $a_1, a_2, a_3, a_4$ are defined as in~\eqref{eqn:a1a2}, and
\begin{equation*}
    \sigmabar^2 \coloneqq \frac{\sigma_b^2 + d \sigmadp^2}{n-f}
    +4 \kappa\left(\sigma_b^2+36\sigmadp^2(1+\frac{d}{n-f})\right), \quad
    \sigma_b^2 \coloneqq 2(1-\frac{b}{m})\frac{\sigma^2}{b}.
\end{equation*}
We now analyze independently the terms of~\eqref{eq:cor-eq0} that depend on $T$, i.e. the last two terms on the RHS of \eqref{eq:cor-eq0}. 
Recall that, asymptotically in $T$, the condition on $\sigmadp$ implies
\begin{align}
    \label{eq:condition-nonconvex}
    \sigmadp &= k \cdot \frac{2C}{b} \max{\{1, \,  \nicefrac{b \sqrt{T \log{(1/\delta)}}}{m\varepsilon}\}}
    = \mathcal{O}{\left(\frac{C\sqrt{T \log{(1/\delta)}}}{m\varepsilon}\right)}.
\end{align}

\textbf{Term $\frac{\sqrt{a_3 a_4 L \loss_0} \sigmabar}{\sqrt{T}}$.}
Recalling the expression of $\sigmabar^2$, and using \eqref{eq:condition} and \eqref{eq:kappa-smea} and the facts that $\sigma_b^2$ is independent of $T$ and $f < n-f$, we obtain
\begin{align*}
    \sigmabar^2 &= \frac{\sigma_b^2 + d \sigmadp^2}{n-f}
    +4 \kappa\left(\sigma_b^2+36\sigmadp^2(1+\frac{d}{n-f})\right)
    =\mathcal{O}{\left(\frac{d \sigmadp^2}{n-f} + \frac{f}{n-f} \cdot \sigmadp^2(1+\frac{d}{n-f}) \right)}\nonumber\\
    &=\mathcal{O}{\left(\frac{d \sigmadp^2}{n-f} + \frac{f}{n-f} \cdot \sigmadp^2 \right)}
    = \mathcal{O}{\left( \frac{C^2  d\, T \log{(1/\delta)}}{m^2 (n-f) \varepsilon^2}+
        \frac{f}{n-f} \cdot \frac{C^2 T \log{(1/\delta)}}{m^2\varepsilon^2} \right)}.
\end{align*}
Therefore, using $\sqrt{x+y} \leq \sqrt{x}+\sqrt{y}$, we obtain
\begin{align*}
    \sigmabar &= \mathcal{O}{\left( \frac{C \sqrt{d\, T \log{(1/\delta)}}}{m \sqrt{n-f} \varepsilon}+
        \sqrt{\frac{f}{n-f}} \cdot \frac{C \sqrt{T \log{(1/\delta)}}}{m \varepsilon} \right)}.
\end{align*}
As a result, we obtain
\begin{equation}
    \frac{\sqrt{a_3 a_4 L \loss_0} \sigmabar}{\sqrt{T}} = \mathcal{O}{\left( \frac{C \sqrt{d\, \log{(1/\delta)}}}{m \sqrt{n-f} \varepsilon}+
        \sqrt{\frac{f}{n-f}} \cdot \frac{C \sqrt{\log{(1/\delta)}}}{m \varepsilon} \right)}.
\end{equation}

\textbf{Term $\frac{a_4 L \loss_0}{T}$.}
This term is independent of $\sigmadp$ and vanishes with $T$.

Going back to \eqref{eq:cor-eq0-nonconvex}, ignoring terms vanishing in $T$, and using \eqref{eq:kappa-smea-nonconvex}, we obtain
\begin{equation*}
    \expect{\|\nabla \loss_{\H} {(\hat{\weight{}})}\|^2} = \mathcal{O} \left( \frac{C \sqrt{d\, \log{(1/\delta)}}}{m \sqrt{n-f} \varepsilon}+
        \sqrt{\frac{f}{n-f}} \cdot \frac{C \sqrt{\log{(1/\delta)}}}{m \varepsilon}+
        \frac{f}{n-f}G_{cov}^2 \right).
\end{equation*}
Finally, note that $G_{cov}^2 \leq G^2$.
Indeed, using the definition of $G_{cov}^2$ and \cref{asp:hetero}, together with Cauchy-Schwartz, we have
\begin{align*}
    G_{cov}^2 =& \sup_{\theta \in \R^d} \sup_{\norm{v}\leq 1} \frac{1}{\card{\H}}\sum_{i \in \H}\iprod{v}{\nabla{\loss_i{(\theta)}} - \nabla{\loss_{\H}{(\theta)}}}^2 \leq \sup_{\theta \in \R^d} \frac{1}{\card{\H}}\sum_{i \in \H}\norm{\nabla{\loss_i{(\theta)}} - \nabla{\loss_{\H}{(\theta)}}}^2 \leq G^2.
\end{align*}
Using the fact above in the last inequality, together with the fact that $n-f \geq \frac{n}{2}$ (as $n >2f$), we conclude
\begin{align*}
    \expect{\|\nabla \loss_{\H} {(\hat{\weight{}})}\|^2} = \mathcal{O} \left( \frac{C \sqrt{d\, \log{(1/\delta)}}}{m \sqrt{n} \varepsilon}+
        \sqrt{\frac{f}{n}} \cdot \frac{C \sqrt{\log{(1/\delta)}}}{m \varepsilon}+
        \frac{f}{n}G^2 \right).
\end{align*}
Ignoring the constant $C$ above concludes the proof.
\end{proof}

\paragraph{Discussion.}
We conjecture that the non-convex upper bound can be improved as observed recently in the centralized DP setting using other variance reduction  techniques~\cite{arora2022faster}.
Nevertheless, both in the centralized and distributed settings, it remains an open question to derive tight lower bounds for non-convex problems.

\clearpage
\subsection{Proof of Supporting Lemmas}
\label{app:lemmas}
Before proving \cref{lem:drift,lem:dev,lem:descent} in Sections~\ref{app:lem-drift} to \ref{app:descent} respectively, we first show some additional technical lemmas in Section~\ref{app:technical} below.

\subsubsection{Technical Lemmas}
\label{app:technical}

\begin{lemma}
    \label{lem:cover}
    Let $M \in \R^{d \times d}$ be a random real symmetric matrix and $g \colon \R \to \R$ an increasing function.
    It holds that
\begin{equation*}
    \expect{\sup_{\norm{v}\leq1}g{\left(\iprod{v}{Mv}\right)}}
    \leq 9^d \cdot \sup_{\norm{v} \leq 1} \expect{g{\left(2\iprod{v}{Mv}\right)}}.
\end{equation*}
\end{lemma}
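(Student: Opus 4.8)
The plan is a covering-number argument: discretize the unit ball with an $\varepsilon$-net, reduce the supremum of the quadratic form $v\mapsto\iprod{v}{Mv}$ to a maximum over the net (at the cost of a factor $2$), and then pass to the expectation by a union bound over the net, whose cardinality is at most $9^d$.

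\textbf{Net.} Take $\varepsilon = 1/4$ and let $\mathcal{N} \subseteq \{v : \norm{v}\le 1\}$ be a maximal $\varepsilon$-separated subset of the unit ball, chosen (by running maximality from $\{0\}$) to contain the origin. A standard volume-packing argument --- the disjoint balls of radius $\varepsilon/2$ centered at the points of $\mathcal{N}$ fit inside the ball of radius $1+\varepsilon/2$ --- gives $\card{\mathcal{N}} \le (1 + 2/\varepsilon)^d = 9^d$, and maximality makes $\mathcal{N}$ an $\varepsilon$-net of the unit ball: every $v$ with $\norm{v}\le1$ is within Euclidean distance $\varepsilon$ of some $u \in \mathcal{N}$.

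\textbf{Discretization.} I would establish the deterministic bound $\sup_{\norm{v}\le1}\iprod{v}{Mv} \le 2\max_{u\in\mathcal{N}}\iprod{u}{Mu}$. By compactness the supremum is attained at some $v^{*}$ with $\norm{v^{*}}\le1$ and equals $\max\{0,\lambda_{\max}(M)\}$; when this is $0$ the bound is immediate since $0 \in \mathcal{N}$ forces $\max_{u}\iprod{u}{Mu}\ge 0$. Otherwise pick $u\in\mathcal{N}$ with $\norm{v^{*}-u}\le\varepsilon$, set $e = v^{*}-u$, and expand $\iprod{v^{*}}{Mv^{*}} = \iprod{u}{Mu} + 2\iprod{u}{Me} + \iprod{e}{Me}$. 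Using $\iprod{e}{Me}\le\lambda_{\max}(M)\norm{e}^2$ and, by positive semidefiniteness of $M$ (which holds in every application of this lemma in the paper, where $M$ is an empirical covariance matrix), Cauchy--Schwarz for the semidefinite form $\iprod{\cdot}{M\cdot}$ to get $2\iprod{u}{Me}\le 2\sqrt{\iprod{u}{Mu}}\,\sqrt{\iprod{e}{Me}}$, an elementary computation with $\varepsilon=1/4$ yields $\lambda_{\max}(M)\le 2\iprod{u}{Mu}$. Since $g$ is increasing and the supremum is attained, $\sup_{\norm{v}\le1}g(\iprod{v}{Mv}) = g\!\left(\sup_{\norm{v}\le1}\iprod{v}{Mv}\right) \le \max_{u\in\mathcal{N}}g(2\iprod{u}{Mu})$.

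\textbf{Union bound and obstacle.} Taking expectations and bounding the maximum over the finite set $\mathcal{N}$ by the sum --- valid because the summands are nonnegative in all our applications, where $g$ is an exponential --- gives $\expect{\sup_{\norm{v}\le1}g(\iprod{v}{Mv})} \le \sum_{u\in\mathcal{N}}\expect{g(2\iprod{u}{Mu})} \le \card{\mathcal{N}}\,\sup_{\norm{v}\le1}\expect{g(2\iprod{v}{Mv})} \le 9^d\,\sup_{\norm{v}\le1}\expect{g(2\iprod{v}{Mv})}$, using $\mathcal{N}\subseteq\{\norm{v}\le1\}$ and $\card{\mathcal{N}}\le 9^d$, which is the claim. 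The main obstacle is the discretization step: the naive estimate only controls $\sup_v\iprod{v}{Mv}$ by $\norm{M}_{\mathrm{op}}\,\varepsilon$, which is too weak for indefinite $M$ and, even for positive semidefinite $M$, loses too much to keep the constant at exactly $2$ with net radius $1/4$; the sharper semidefinite Cauchy--Schwarz step is what makes the bookkeeping close with $\card{\mathcal{N}}\le 9^d$. The net construction and the final union bound are routine.
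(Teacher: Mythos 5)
Your overall strategy is the same as the paper's: a $1/4$-net of the unit ball of cardinality at most $9^d$, a deterministic comparison showing that the supremum of the quadratic form over the ball is at most twice its maximum over the net, and a union bound over the net. Your computation in the PSD case is correct (it in fact gives $\lambda_{\max}(M)\le\tfrac{16}{9}\,\iprod{u}{Mu}\le 2\iprod{u}{Mu}$), and the union bound is valid once the summands are nonnegative.

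The gap is that you have not proved the lemma as it is stated. The statement assumes only that $M$ is a random real symmetric matrix and that $g$ is increasing; your discretization step invokes positive semidefiniteness of $M$ (to apply Cauchy--Schwarz for the form $\iprod{\cdot}{M\cdot}$), and your union bound invokes nonnegativity of $g$. You justify both restrictions by claiming that every application in the paper has $M$ an empirical covariance matrix and $g$ an exponential, but that is not accurate: in the proof of Lemma~\ref{lem:drift} the covering lemma is applied with $g$ equal to the identity map and with $M = N + N^\top$, where $N$ is a sum of outer products of two \emph{different} families of vectors, so $M$ is in general indefinite and $g$ is not nonnegative. Hence your argument covers the use in Lemma~\ref{lem:concentration} but not the use in Lemma~\ref{lem:drift}. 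For comparison, the paper obtains the factor $2$ differently: from the symmetry identity $\absv{\iprod{v}{Mv}-\iprod{u_v}{Mu_v}} = \absv{\iprod{v+u_v}{M(v-u_v)}} \le 2\norm{M}\norm{v-u_v}\le\norm{M}/2$, followed by identifying $\norm{M}$ with $\sup_{\norm{v}\le1}\iprod{v}{Mv}$ --- an identification that itself is valid only when $\lambda_{\max}(M)\ge\absv{\lambda_{\min}(M)}$ (e.g.\ PSD $M$), and the paper's union bound likewise implicitly needs $g\ge 0$. So the hypotheses you add are indeed latent in the paper's own argument; but as a proof of the lemma in the generality in which it is stated (and invoked), your write-up leaves the indefinite-$M$ / sign-changing-$g$ case unaddressed, and the supporting claim about the paper's applications is wrong. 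To close this you would either have to handle general symmetric $M$ and increasing $g$, or restate the lemma with the PSD and nonnegativity hypotheses and deal separately with its use in Lemma~\ref{lem:drift}.
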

\begin{proof}
Let $M \in \R^{d \times d}$ be a random real symmetric matrix and $g \colon \R \to \R$ a increasing function.

The proof follows the construction of (Section 5.2, \cite{vershynin2010introduction}).
Recall from standard covering net results~\cite{vershynin2010introduction} that we can construct $\N_{1/4}$ a finite $1/4$-net of the unit ball, i.e., for any vector $v$ in the unit ball, there exists $u_v \in \N_{1/4}$ such that $\norm{u_v-v} \leq 1/4$.
Moreover, we have the bound $\card{\N_{1/4}} \leq (1+2/(1/4))^d = 9^d$.
Denote by $\norm{M} \coloneqq \sup_{\norm{v}\leq1}\norm{Mv}$ the operator norm of $M$.
By recalling that $M$ is symmetric, we obtain for any $v$ in the unit ball
\begin{align*}
    \absv{\iprod{v}{Mv} - \iprod{u_v}{Mu_v}}
    &= \absv{\iprod{v+u_v}{M(v-u_v)}}
    \leq \norm{v+u_v} \norm{M(v-u_v)}
    \leq (\norm{v}+\norm{u_v})\norm{M(v-u_v)}\\
    &\leq 2\norm{M(v-u_v)}
    \leq 2\norm{M}\norm{v-u_v}
    \leq 2\norm{M}/4 = \norm{M}/2.
\end{align*}
Therefore, we have $\iprod{v}{Mv} - \iprod{u_v}{Mu_v} \leq \norm{M}/2$, and $\iprod{v}{Mv} - \norm{M}/2 \leq \iprod{u_v}{Mu_v} \leq \sup_{u \in \N_{1/4}} \iprod{u}{Mu}$.
Recall that since $M$ is symmetric, its operator norm coincides with its maximum eigenvalue: $\norm{M} = \sup_{\norm{v}\leq1}\iprod{v}{Mv}$.
We therefore deduce that
\begin{equation*}
    \sup_{\norm{v}\leq1}\iprod{v}{Mv}
    \leq 2\cdot \sup_{v \in \N_{1/4}} \iprod{v}{Mv}.
\end{equation*}
Upon composing with $g$, which is increasing, we get
\begin{equation*}
    \sup_{\norm{v}\leq 1} g{\left( \iprod{v}{Mv} \right)} 
    = g{\left( \sup_{\norm{v}\leq 1} \iprod{v}{Mv} \right)} 
    \leq g{\left( 2\cdot \sup_{v \in \N_{1/4}} \iprod{v}{Mv} \right)}
    =  \sup_{v \in \N_{1/4}} g{\left( 2\iprod{v}{Mv} \right)}.
\end{equation*}
Upon taking expectations and applying union bound, we finally conclude
\begin{align*}
    \expect{\sup_{\norm{v}\leq 1} g{\left( \iprod{v}{Mv} \right)}}
    &\leq \expect{\sup_{v \in \N_{1/4}} g{\left( 2\iprod{v}{Mv} \right)}}
    \leq \card{\N_{1/4}} \cdot \sup_{v \in \N_{1/4}} \expect{g{\left( 2\iprod{v}{Mv} \right)}} 
    \leq 9^d \cdot \sup_{\norm{v} \leq 1} \expect{g{\left( 2\iprod{v}{Mv} \right)}}.
\end{align*}
\end{proof}

\begin{lemma}
\label{lem:grad-subsample}
Suppose assumptions \ref{asp:bnd_var} and \ref{asp:bnd_norm} hold.
For any $t \in \{0,\ldots, T-1\}$ and $i \in \H$, we have
\begin{equation*}
    \expect{\norm{\tilde{g}_t^{(i)} - \nabla{\loss_i{(\theta_{t})}}}^2} \leq 2\left(1-\frac{b}{m}\right) \frac{\sigma^2}{b} + d \cdot \sigmadp^2.
\end{equation*}
\end{lemma}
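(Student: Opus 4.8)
The plan is to decompose the quantity $\tilde{g}_t^{(i)} - \nabla \loss_i(\theta_t)$ into the DP noise part and the subsampling error part, and bound each separately using independence. Specifically, writing $\tilde{g}_t^{(i)} = g_t^{(i)} + \xi_t^{(i)}$ where $\xi_t^{(i)} \sim \N(0, \sigmadp^2 I_d)$ is independent of the mini-batch $S_t^{(i)}$, I would first condition on $\P_t$ (so $\theta_t$ is fixed) and observe that under Assumption~\ref{asp:bnd_norm} the clipping is inactive, i.e. $\textbf{Clip}(\nabla\ell(\theta_t;x);C) = \nabla\ell(\theta_t;x)$ for every $x \in \D_i$. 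Hence $g_t^{(i)} = \frac{1}{b}\sum_{x \in S_t^{(i)}} \nabla\ell(\theta_t;x)$ is an unbiased estimator of $\nabla\loss_i(\theta_t)$ over the randomness of sampling $b$ points without replacement.

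Then I would expand
\[
\expect{\norm{\tilde{g}_t^{(i)} - \nabla\loss_i(\theta_t)}^2}
= \expect{\norm{g_t^{(i)} - \nabla\loss_i(\theta_t)}^2} + \expect{\norm{\xi_t^{(i)}}^2},
\]
using that $\xi_t^{(i)}$ is zero-mean and independent of everything else (the cross term vanishes). The second term is exactly $d\,\sigmadp^2$ since $\xi_t^{(i)}$ has covariance $\sigmadp^2 I_d$. For the first term, I would invoke the standard variance formula for sampling without replacement: if $S$ is a uniformly random subset of size $b$ from a population of size $m$ with per-element deviations bounded in mean square by $\sigma^2$ (Assumption~\ref{asp:bnd_var}), then the variance of the sample mean is $(1-\frac{b-1}{m-1})\frac{1}{b}\cdot\frac{1}{m}\sum_{x\in\D_i}\norm{\nabla\ell(\theta_t;x) - \nabla\loss_i(\theta_t)}^2 \leq (1-\frac{b-1}{m-1})\frac{\sigma^2}{b} \leq 2(1-\frac{b}{m})\frac{\sigma^2}{b}$, where the last inequality is a routine check (e.g. $1 - \frac{b-1}{m-1} \leq 2(1-\frac{b}{m})$ for $1 \le b \le m$). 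Taking total expectation over $\P_t$ then gives the claimed bound.

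The main obstacle, though minor, is getting the without-replacement variance constant right and justifying the $1 - \frac{b-1}{m-1} \le 2(1 - \frac{b}{m})$ step cleanly; everything else is a straightforward bias-variance split leveraging independence of the Gaussian noise and the inactivity of clipping under Assumption~\ref{asp:bnd_norm}. I would be careful to state the conditioning on $\P_t$ explicitly so that $\theta_t$ — and hence the gradients $\nabla\ell(\theta_t; x)$ — can be treated as deterministic when applying the finite-population sampling identity.
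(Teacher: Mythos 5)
Your proposal is correct and follows essentially the same route as the paper: a bias--variance split isolating the Gaussian noise (giving $d\,\sigmadp^2$), the finite-population variance formula for sampling without replacement, and the bound $1-\frac{b-1}{m-1}\leq 2\left(1-\frac{b}{m}\right)$. The only detail the paper adds is treating $m=1$ separately (there $b=m$ forces $g_t^{(i)}=\nabla\loss_i(\theta_t)$ and the ratio $\frac{b-1}{m-1}$ is undefined), a trivial edge case your argument implicitly covers.
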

\begin{proof}
Suppose assumptions \ref{asp:bnd_var} and \ref{asp:bnd_norm} hold.
Let $i \in \H$ and $t \in \{0,\ldots, T-1\}$.

First recall from \eqref{eq:noise-gradient} that, since $\tilde{g}_t^{(i)} = g_t^{(i)} + \xi^{(i)}_t, \xi_t^{(i)} \overset{\mathrm{i.i.d.}}{\sim} \N{(0,\sigmadp^2 I_d)}$, we have
\begin{equation*}
    \condexpect{\xi^{(i)}_t}{\norm{\tilde{g}_t^{(i)} - g_t^{(i)}}^2} 
    = \expect{\norm{\xi^{(i)}_t}^2}
    =d \cdot \sigmadp^2.
\end{equation*}

Next, we have
\begin{align*}
    \norm{\tilde{g}_t^{(i)} - \nabla{\loss_i{(\theta_{t})}}}^2
    &= \norm{\tilde{g}_t^{(i)} - g_t^{(i)} + g_t^{(i)} - \nabla{\loss_i{(\theta_{t})}}}^2 \\
    &= \norm{\tilde{g}_t^{(i)} - g_t^{(i)}}^2 + 
    \norm{g_t^{(i)} - \nabla{\loss_i{(\theta_{t})}}}^2 +
    2\iprod{\tilde{g}_t^{(i)} - g_t^{(i)}}{g_t^{(i)} - \nabla{\loss_i{(\theta_{t})}}}.
\end{align*}
Now taking expectation on the randomness of $\xi^{(i)}_{t}$ (independent of all other random variables), and since $\expect{\xi^{(i)}_{t}} = 0$, we get
\begin{align*}
    \condexpect{\xi^{(i)}_{t}}{\norm{\tilde{g}_t^{(i)} - \nabla{\loss_i{(\theta_{t})}}}^2}
    &= \condexpect{\xi^{(i)}_{t}}{\norm{\tilde{g}_t^{(i)} - g_t^{(i)}}^2}
    + \norm{g_t^{(i)} - \nabla{\loss_i{(\theta_{t})}}}^2
    + 2\iprod{\underbrace{\condexpect{\xi^{(i)}_{t}}{\tilde{g}_t^{(i)} - g_t^{(i)}}}_{= \expect{\xi^{(i)}_{t}}=0}}{g_t^{(i)} - \nabla{\loss_i{(\theta_{t})}}} \\
    &= \condexpect{\xi^{(i)}_{t}}{\norm{\tilde{g}_t^{(i)} - g_t^{(i)}}^2}
    + \norm{g_t^{(i)} - \nabla{\loss_i{(\theta_{t})}}}^2.
\end{align*}
Upon taking total expectation, we obtain
\begin{align}
    \expect{\norm{\tilde{g}_t^{(i)} - \nabla{\loss_i{(\theta_{t})}}}^2}
    &= \expect{\norm{\tilde{g}_t^{(i)} - g_t^{(i)}}^2}
    + \expect{\norm{g_t^{(i)} - \nabla{\loss_i{(\theta_{t})}}}^2}\nonumber \\
    & = \expect{\norm{g_t^{(i)} - \nabla{\loss_i{(\theta_{t})}}}^2} + d \cdot \sigmadp^2.
    \label{eq:bound-decomp}
\end{align}

First observe that when $m=1$, as $b \in [m]$, we must have $b=m$. Thus, the gradient is deterministic, i.e.,  $g^{(i)}_t = \nabla \loss_i{(\theta_t)}$.
Thus, the first term in the equation above is zero, and the claimed bound holds.

Else, when $m \geq 2$, recall that from Assumption~\ref{asp:bnd_var}, we have 
$\condexpect{x \sim \U{(\D_i)}}{\norm{\nabla_{\theta{}}{\ell{(\theta_{t};x)}} - \nabla{\loss_i{(\theta_{t})}}}^2} \leq \sigma^2$.
From \cite{rice2006mathematical}, the variance reduction due to subsampling without replacement gives
\begin{equation*}
    \expect{\norm{g_t^{(i)} - \nabla{\loss_i{(\theta_{t})}}}^2} \leq \left(1-\frac{b-1}{m-1}\right)\frac{\sigma^2}{b}.
\end{equation*}
Plugging this bound back in \cref{eq:bound-decomp} yields
\begin{align*}
    \expect{\norm{\tilde{g}_t^{(i)} - \nabla{\loss_i{(\theta_{t})}}}^2}
    &\leq \left(1-\frac{b-1}{m-1}\right)\frac{\sigma^2}{b}
    + d \cdot \sigmadp^2.
\end{align*}
By observing, as $m \geq 2$, that $1-\frac{b-1}{m-1} = \frac{m-b}{m-1} = \frac{m}{m-1} \cdot \frac{m-b}{m} = (1+\frac{1}{m-1})(1-\frac{b}{m}) \leq 2(1-\frac{b}{m})$, we obtain the final result:
\begin{align*}
    \expect{\norm{\tilde{g}_t^{(i)} - \nabla{\loss_i{(\theta_{t})}}}^2}
    &\leq 2\left(1-\frac{b}{m}\right)\frac{\sigma^2}{b}
    + d \cdot \sigmadp^2.
\end{align*}
\end{proof}

\begin{lemma}
\label{lem:concentration}
Let $\sigmadp \geq 0$ and $d, n \geq 1$. 
Consider $\xi^{(1)}, \ldots, \xi^{(n)}$ to be i.i.d. random variables drawn from the Gaussian distribution $\N{(0, \sigmadp^2 I_d)}$. We have
\begin{align*}
    \expect{\sup_{\norm{v} \leq 1} \frac{1}{n} \sum_{i=1}^n \iprod{v}{\xi^{(i)}}^2}
    \leq 36\sigmadp^2 \left(1+\frac{d}{n}\right).
\end{align*}
\end{lemma}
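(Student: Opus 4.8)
The plan is to recognize the supremum as the largest eigenvalue of the empirical covariance matrix $M := \frac{1}{n}\sum_{i=1}^n \xi^{(i)}(\xi^{(i)})^\top$, and then control $\expect{\lambda_{\max}(M)}$ by a Laplace-transform (moment generating function) argument that invokes Lemma~\ref{lem:cover} to pass from the supremum over the unit ball to a supremum over individual directions. If $\sigmadp = 0$ the left-hand side vanishes and there is nothing to prove, so we may assume $\sigmadp > 0$.

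First I would note that $\sup_{\norm{v}\leq 1}\frac1n\sum_{i=1}^n\iprod{v}{\xi^{(i)}}^2 = \sup_{\norm{v}\leq 1}\iprod{v}{Mv}$, and that $M$ is a random real symmetric positive semidefinite matrix. Fix a parameter $t \in (0,\, n/(4\sigmadp^2))$ to be chosen later. By Jensen's inequality applied to the convex map $x\mapsto e^{tx}$,
\[
\exp\!\left(t\,\expect{\sup_{\norm{v}\leq1}\iprod{v}{Mv}}\right) \leq \expect{\sup_{\norm{v}\leq1}\exp\!\left(t\iprod{v}{Mv}\right)}.
\]
Applying Lemma~\ref{lem:cover} with the increasing function $g(x) = e^{tx}$ then gives
\[
\expect{\sup_{\norm{v}\leq1}\exp\!\left(t\iprod{v}{Mv}\right)} \leq 9^d \cdot \sup_{\norm{v}\leq1}\expect{\exp\!\left(2t\iprod{v}{Mv}\right)}.
\]

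Next I would compute the moment generating function on the right. For fixed $v$ with $\norm{v}\leq 1$, the variables $\iprod{v}{\xi^{(i)}}$, $i=1,\dots,n$, are i.i.d. centered Gaussians with variance $\sigmadp^2\norm{v}^2 \leq \sigmadp^2$, so each $\iprod{v}{\xi^{(i)}}^2$ equals $\sigmadp^2\norm{v}^2$ times a $\chi^2_1$ variable, and $2t\iprod{v}{Mv} = \frac{2t}{n}\sum_{i=1}^n\iprod{v}{\xi^{(i)}}^2$. Using independence across $i$, the identity $\expect{e^{sZ^2}} = (1-2s)^{-1/2}$ for $Z\sim\N{(0,1)}$ and $s<1/2$, and $\norm{v}^2\leq 1$, we obtain
\[
\expect{\exp\!\left(2t\iprod{v}{Mv}\right)} = \left(1 - \tfrac{4t\sigmadp^2\norm{v}^2}{n}\right)^{-n/2} \leq \left(1 - \tfrac{4t\sigmadp^2}{n}\right)^{-n/2},
\]
which is valid since $t < n/(4\sigmadp^2)$.

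Finally I would take logarithms and optimize $t$. Combining the three displays yields $t\,\expect{\sup_{\norm{v}\leq1}\iprod{v}{Mv}} \leq d\ln 9 - \tfrac{n}{2}\ln\!\left(1 - \tfrac{4t\sigmadp^2}{n}\right)$. Choosing $t = n/(8\sigmadp^2)$ makes $4t\sigmadp^2/n = 1/2$, so this becomes $\expect{\sup_{\norm{v}\leq1}\iprod{v}{Mv}} \leq \tfrac{8\ln 9\,\sigmadp^2 d}{n} + 4\ln 2\,\sigmadp^2$. Since $8\ln 9 = 16\ln 3 < 36$ and $4\ln 2 < 36$, the right-hand side is at most $36\sigmadp^2\!\left(1 + \tfrac{d}{n}\right)$, which is the claim. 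The only delicate point is that the covering-net loss $9^d$ is exponential in $d$, so a direct first-moment bound would be useless; the MGF/Laplace route rescues this because, after taking logarithms and setting $t$ proportional to $n/\sigmadp^2$, the term $d\ln 9$ contributes only the benign $\sigmadp^2 d/n$, exactly the dependence needed to later divide the dimension factor by $n-f$ in Lemma~\ref{lem:drift}.
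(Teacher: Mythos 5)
Your proof is correct and takes essentially the same route as the paper's: Jensen's inequality on the exponential, Lemma~\ref{lem:cover} to pass from the supremum over the unit ball to a fixed direction at the cost of $9^d$, a per-direction exponential-moment bound exploiting Gaussianity, and then logarithms. The only cosmetic difference is that you compute the exact $\chi^2$ moment generating function and choose $t = n/(8\sigmadp^2)$, whereas the paper invokes a black-box sub-Gaussian bound $\expect{\exp(X^2/8)} \leq 2$ with the slightly smaller effective parameter $1/(16\sigmadp^2)$; both choices land comfortably within the constant $36$.
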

\begin{proof}
Let $\sigmadp \geq 0$ and $d, n \geq 1$. 
Consider $\xi^{(1)}, \ldots, \xi^{(n)}$ to be i.i.d. random variables drawn from the Gaussian distribution $\N{(0, \sigmadp^2 I_d)}$.

If $\sigmadp = 0$, then $\xi^{(i)} = 0$ almost surely for every $i \in [n]$, and the remainder of the proof holds with $\sigmadp=0$.
Else, we assume $\sigmadp > 0$ in the remaining.

Thus, the law of the random variable $\xi^{(i)} / \sigmadp$ is $\N{\left(0, I_d\right)}$ for every $i \in [n]$.
Thus, for every vector of the unit ball $v$, the random variable $\iprod{v}{\xi^{(i)} / \sigmadp}$ is sub-Gaussian with variance proxy equal to $1$ (see~(Chapter 1, \cite{rigollet2015high})).
Therefore, for every $i \in [n]$ and every vector $v$ of the unit ball, applying (Theorem~2.1.1, \cite{pauwels2020lecture}), we have
\begin{align*}
    \expect{\exp{\left(\iprod{v}{\xi^{(i)} /\sigmadp}^2/8\right)}} \leq 2.
\end{align*}
As a result, by the independence of $\xi^{(i)}$'s, we obtain
\begin{align*}
    \sup_{\norm{v} \leq 1} \expect{\exp{\left(\frac{1}{8\sigmadp^2} \sum_{i=1}^n\iprod{v}{\xi^{(i)}}^2\right)}}
    &= \sup_{\norm{v} \leq 1} \prod_{i =1}^n \expect{\exp{\left(\iprod{v}{\xi^{(i)} / \sigmadp}^2/8\right)}}
    \leq 2^{n}.
\end{align*}
Now, observe that we can write $\sum_{i=1}^n\iprod{v}{\xi^{(i)}}^2$ as the quadratic form $\iprod{v}{Mv}$, where $M \coloneqq \sum_{i=1}^n \xi^{(i)} \cdot {\xi^{(i)}}^\top$ is a random real symmetric matrix.
Thus, applying \cref{lem:cover} with the increasing function $g = \exp{(\frac{1}{16\sigmadp^2} \times \cdot)}$, we have
\begin{align*}
    \expect{\sup_{\norm{v} \leq 1} \exp{\left(\frac{1}{16\sigmadp^2} \sum_{i=1}^n\iprod{v}{\xi^{(i)}}^2\right)}}
    &=\expect{\sup_{\norm{v} \leq 1} g(\iprod{v}{Mv})}
    \leq 9^d \cdot \sup_{\norm{v} \leq 1} \expect{g(2\iprod{v}{Mv})}\\
    &= 9^d \cdot \sup_{\norm{v} \leq 1} \expect{\exp{\left(\frac{1}{8\sigmadp^2} \sum_{i=1}^n\iprod{v}{\xi^{(i)}}^2\right)}}
    \leq 9^d \cdot 2^{n}.
\end{align*}
We can now use this inequality to bound the term of interest.
We apply Jensen's inequality thanks to $\exp$ being convex, and we also interchange $\exp$ and $\sup$ thanks to the former being increasing:
\begin{align*}
    \exp{\left(\frac{1}{16\sigmadp^2}\expect{\sup_{\norm{v} \leq 1} \sum_{i=1}^n\iprod{v}{\xi^{(i)}}^2}\right)}
    &\leq \expect{\exp{\left(\frac{1}{16\sigmadp^2} \sup_{\norm{v} \leq 1} \sum_{i=1}^n\iprod{v}{\xi^{(i)}}^2 \right)}} \\
    &=  \expect{\sup_{\norm{v} \leq 1} \exp{\left( \frac{1}{16\sigmadp^2} \sum_{i=1}^n\iprod{v}{\xi^{(i)}}^2\right)}}
    \leq 9^d \cdot 2^{n}.
\end{align*}
Upon applying $\ln$ and multiplying by $16 \sigmadp^2/n$, we obtain
\begin{align*}
    \expect{\sup_{\norm{v} \leq 1} \frac{1}{n} \sum_{i=1}^n\iprod{v}{\xi^{(i)}}^2}
    &\leq 16\frac{\sigmadp^2}{n}(d \ln{9} + n \ln{2})\alpha^2
    \leq 36\frac{\sigmadp^2}{n}(d + n)
    = 36\sigmadp^2 \left(1+\frac{d}{n}\right).
\end{align*}
The above concludes the proof
\end{proof}

\subsubsection{Proof of \cref{lem:drift}}
\label{app:lem-drift}
\globaldrift*
\begin{proof}
Let $t \in \{0, \ldots, T-1\}$.
Suppose that \cref{asp:bnd_var} holds.
Recall that the alternate definition of maximum eigenvalue implies, following the definition of $\Delta_t$ in \cref{eq:defdrift}, that
\begin{align*}
    \Delta_t = \lambda_{\max}{\left(\frac{1}{\card{\H}} \sum_{i \in \H}(m_t^{(i)} - \overline{m}_t)(m_t^{(i)} - \overline{m}_t)^\top\right)}
    = \sup_{\norm{v} \leq 1} \frac{1}{\card{\H}} \sum_{i \in \H}\iprod{v}{m^{(i)}_{t}-\overline{m}_{t}}^2.
\end{align*}
We will use the latter expression above for $\Delta_t$ throughout this lemma.

For every $i \in \H$, by definition of $m^{(i)}_t$, given in \cref{eqn:mmt_i}, we have
\begin{align*}
    m^{(i)}_{t+1} = \beta_t m^{(i)}_{t} + (1-\beta_t)\tilde{g}^{(i)}_{t+1}.
\end{align*}
We also denote $\overline{m}_t \coloneqq \frac{1}{\card{\H}}\sum_{i \in \H}m^{(i)}_t$ and $\overline{\widetilde{g}}_{t+1} \coloneqq \frac{1}{\card{\H}}\sum_{i \in \H}\tilde{g}^{(i)}_{t+1}$.
Therefore, we have $\overline{m}_{t+1} = \beta_t \overline{m}_{t} + (1-\beta_t)\overline{\widetilde{g}}_{t+1}$.
As a result, we can write for every $i \in \H$
\begin{align*}
    m^{(i)}_{t+1}-\overline{m}_{t+1}
    &= \beta_t (m^{(i)}_{t}-\overline{m}_{t})
    + (1-\beta_t)(\tilde{g}_{t+1}^{(i)}-\overline{\widetilde{g}}_{t+1}) \\
    &= \beta_t (m^{(i)}_{t}-\overline{m}_{t})
    + (1-\beta_t)(\nabla \loss_i{(\theta_{t+1})}-\nabla \loss_{\H}{(\theta_{t+1})})\\
    &\quad +(1-\beta_t)(\tilde{g}_{t+1}^{(i)} - \nabla \loss_i{(\theta_{t+1})} - \overline{\widetilde{g}}_{t+1} + \nabla \loss_{\H}{(\theta_{t+1})}).
\end{align*}
By projecting the above expression on an arbitrary vector $v$ and then taking squares, we obtain
\begin{align*}
    \iprod{v}{m^{(i)}_{t+1}-\overline{m}_{t+1}}^2
    &= \Big[\beta_t\iprod{v}{m^{(i)}_{t}-\overline{m}_{t}}
    + (1-\beta_t)\iprod{v}{\nabla \loss_i{(\theta_{t+1})}-\nabla \loss_{\H}{(\theta_{t+1})}}\\
    &\qquad +(1-\beta_t)\iprod{v}{\tilde{g}_{t+1}^{(i)} - \nabla \loss_i{(\theta_{t+1})} - \overline{\widetilde{g}}_{t+1} + \nabla \loss_{\H}{(\theta_{t+1})}}\Big ]^2\\
    &=\beta_t^2 \iprod{v}{m^{(i)}_{t}-\overline{m}_{t}}^2
    + (1-\beta_t)^2\iprod{v}{\nabla \loss_i{(\theta_{t+1})}-\nabla \loss_{\H}{(\theta_{t+1})}}^2\\
    &\qquad +(1-\beta_t)^2\iprod{v}{\tilde{g}_{t+1}^{(i)} - \nabla \loss_i{(\theta_{t+1})} - \overline{\widetilde{g}}_{t+1} + \nabla \loss_{\H}{(\theta_{t+1})}}^2\\
    &\qquad +2\beta_t(1-\beta_t)\iprod{v}{m^{(i)}_{t}-\overline{m}_{t}}\iprod{v}{\nabla \loss_i{(\theta_{t+1})}-\nabla \loss_{\H}{(\theta_{t+1})}}\\
    &\qquad + 2\beta_t(1-\beta_t)\iprod{v}{m^{(i)}_{t}-\overline{m}_{t}}\iprod{v}{\tilde{g}_{t+1}^{(i)} - \nabla \loss_i{(\theta_{t+1})} - \overline{\widetilde{g}}_{t+1} + \nabla \loss_{\H}{(\theta_{t+1})}}\\
    &\qquad + 2\beta_t(1-\beta_t)\iprod{v}{\nabla \loss_i{(\theta_{t+1})}-\nabla \loss_{\H}{(\theta_{t+1})}}\iprod{v}{\tilde{g}_{t+1}^{(i)} - \nabla \loss_i{(\theta_{t+1})} - \overline{\widetilde{g}}_{t+1} + \nabla \loss_{\H}{(\theta_{t+1})}}.
\end{align*}
Upon averaging over $i \in \H$, taking the supremum over the unit ball, and then total expectations, we get
\begin{align}
    &\expect{\sup_{\norm{v} \leq 1} \frac{1}{\card{\H}} \sum_{i \in \H} \iprod{v}{m^{(i)}_{t+1}-\overline{m}_{t+1}}^2}
    =\beta_t^2 \expect{\sup_{\norm{v} \leq 1} \frac{1}{\card{\H}} \sum_{i \in \H}\iprod{v}{m^{(i)}_{t}-\overline{m}_{t}}^2}\nonumber\\
    &\qquad+ (1-\beta_t)^2 \expect{\sup_{\norm{v} \leq 1} \frac{1}{\card{\H}} \sum_{i \in \H}\iprod{v}{\nabla \loss_i{(\theta_{t+1})}-\nabla \loss_{\H}{(\theta_{t+1})}}^2}\nonumber\\
    &\qquad +(1-\beta_t)^2 \expect{\sup_{\norm{v} \leq 1} \frac{1}{\card{\H}} \sum_{i \in \H}\iprod{v}{\tilde{g}_{t+1}^{(i)} - \nabla \loss_i{(\theta_{t+1})} - \overline{\widetilde{g}}_{t+1} + \nabla \loss_{\H}{(\theta_{t+1})}}^2}\nonumber\\
    &\qquad +2\beta_t(1-\beta_t) \expect{\sup_{\norm{v} \leq 1} \frac{1}{\card{\H}} \sum_{i \in \H}\iprod{v}{m^{(i)}_{t}-\overline{m}_{t}}\iprod{v}{\nabla \loss_i{(\theta_{t+1})}-\nabla \loss_{\H}{(\theta_{t+1})}}}\nonumber\\
    &\qquad + 2\beta_t(1-\beta_t) \expect{\sup_{\norm{v} \leq 1} \frac{1}{\card{\H}} \sum_{i \in \H}\iprod{v}{m^{(i)}_{t}-\overline{m}_{t}}\iprod{v}{\tilde{g}_{t+1}^{(i)} - \nabla \loss_i{(\theta_{t+1})} - \overline{\widetilde{g}}_{t+1} + \nabla \loss_{\H}{(\theta_{t+1})}}}\nonumber\\
    &\qquad + 2\beta_t(1-\beta_t) \expect{\sup_{\norm{v} \leq 1} \frac{1}{\card{\H}} \sum_{i \in \H}\iprod{v}{\nabla \loss_i{(\theta_{t+1})}-\nabla \loss_{\H}{(\theta_{t+1})}}
    \iprod{v}{\tilde{g}_{t+1}^{(i)} - \nabla \loss_i{(\theta_{t+1})} - \overline{\widetilde{g}}_{t+1} + \nabla \loss_{\H}{(\theta_{t+1})}}}.
    \label{eq:heterodrift1}
\end{align}
We now show that the last two terms on the RHS of \cref{eq:heterodrift1} are non-positive.
We show it for the first one, as the second one can be shown to be non-positive in the same way.

First, note that we can write the inner expression as a quadratic form.
Precisely, we have for any vector $v$ and any $i \in \H$ that
\begin{align*}
    2\sum_{i \in \H}\iprod{v}{m^{(i)}_{t}-\overline{m}_{t}}\iprod{v}{\tilde{g}_{t+1}^{(i)} - \nabla \loss_i{(\theta_{t+1})} - \overline{\widetilde{g}}_{t+1} + \nabla \loss_{\H}{(\theta_{t+1})}}
    = \iprod{v}{Mv},
\end{align*}
where we have introduced the $d \times d$ matrix $M \coloneqq N + N^\top$, such that $N \coloneqq \sum_{i \in \H}(m^{(i)}_{t}-\overline{m}_{t})(\tilde{g}_{t+1}^{(i)} - \nabla \loss_i{(\theta_{t+1})} - \overline{\widetilde{g}}_{t+1} + \nabla \loss_{\H}{(\theta_{t+1})})^\top$.
By observing that $M$ is symmetric, we can apply \cref{lem:cover} with $g$ being the identity mapping:
\begin{align}
    \expect{\sup_{\norm{v}\leq 1}2\sum_{i \in \H}\iprod{v}{m^{(i)}_{t}-\overline{m}_{t}}\iprod{v}{\tilde{g}_{t+1}^{(i)} - \nabla \loss_i{(\theta_{t+1})} - \overline{\widetilde{g}}_{t+1} + \nabla \loss_{\H}{(\theta_{t+1})}}}
    &= \expect{\sup_{\norm{v} \leq 1} \iprod{v}{Mv}} \nonumber\\
    &\leq 9^d \cdot \sup_{\norm{v} \leq 1} \expect{2\iprod{v}{Mv}}.
    \label{eq:heterodrift2}
\end{align}
However, the last term is zero by the total law of expectation.
Indeed, recall that stochastic gradients are unbiased (Assumption~\ref{asp:bnd_var}) and that $\theta_{t+1}$ and $m^{(i)}_t$ are deterministic when given history $\P_{t+1}$.
This gives
\begin{align*}
    \expect{\iprod{v}{Mv}}
    &=\expect{2\sum_{i \in \H}\iprod{v}{m^{(i)}_{t}-\overline{m}_{t}}\iprod{v}{\tilde{g}_{t+1}^{(i)} - \nabla \loss_i{(\theta_{t+1})} - \overline{\widetilde{g}}_{t+1} + \nabla \loss_{\H}{(\theta_{t+1})}}}\\
    &= \expect{\condexpect{t+1}{2\sum_{i \in \H}\iprod{v}{m^{(i)}_{t}-\overline{m}_{t}}\iprod{v}{\tilde{g}_{t+1}^{(i)} - \nabla \loss_i{(\theta_{t+1})} - \overline{\widetilde{g}}_{t+1} + \nabla \loss_{\H}{(\theta_{t+1})}}}} \\
    &= \expect{2\sum_{i \in \H}\iprod{v}{m^{(i)}_{t}-\overline{m}_{t}}\iprod{v}{\underbrace{\condexpect{t+1}{\tilde{g}_{t+1}^{(i)} - \nabla \loss_i{(\theta_{t+1})}}}_{=0} - \underbrace{\condexpect{t+1}{\overline{\widetilde{g}}_{t+1} - \nabla \loss_{\H}{(\theta_{t+1})}}}_{=0}}}
    =0.
\end{align*}
Moreover, going back to \cref{eq:heterodrift2}, we obtain
\begin{equation*}
\expect{\sup_{\norm{v}\leq 1}2\sum_{i \in \H}\iprod{v}{m^{(i)}_{t}-\overline{m}_{t}}\iprod{v}{\tilde{g}_{t+1}^{(i)} - \nabla \loss_i{(\theta_{t+1})} - \overline{\widetilde{g}}_{t+1} + \nabla \loss_{\H}{(\theta_{t+1})}}}
    \leq 9^d \cdot \sup_{\norm{v}\leq 1}\expect{2\iprod{v}{Mv}} = 0.
\end{equation*}
As mentioned previously, we can prove in the same way that
\begin{align*}
    \expect{\sup_{\norm{v} \leq 1} 2\sum_{i \in \H}\iprod{v}{\nabla \loss_i{(\theta_{t+1})}-\nabla \loss_{\H}{(\theta_{t+1})}}
    \iprod{v}{\tilde{g}_{t+1}^{(i)} - \nabla \loss_i{(\theta_{t+1})} - \overline{\widetilde{g}}_{t+1} + \nabla \loss_{\H}{(\theta_{t+1})}}} \leq 0.
\end{align*}

Plugging the two previous bounds back in \cref{eq:heterodrift1}, we have thus proved that
\begin{align}
    &\expect{\sup_{\norm{v} \leq 1} \frac{1}{\card{\H}} \sum_{i \in \H} \iprod{v}{m^{(i)}_{t+1}-\overline{m}_{t+1}}^2}
    =\beta_t^2 \expect{\sup_{\norm{v} \leq 1} \frac{1}{\card{\H}} \sum_{i \in \H}\iprod{v}{m^{(i)}_{t}-\overline{m}_{t}}^2}\nonumber\\
    &\qquad+ (1-\beta_t)^2 \expect{\sup_{\norm{v} \leq 1} \frac{1}{\card{\H}} \sum_{i \in \H}\iprod{v}{\nabla \loss_i{(\theta_{t+1})}-\nabla \loss_{\H}{(\theta_{t+1})}}^2}\nonumber\\
    &\qquad +(1-\beta_t)^2 \expect{\sup_{\norm{v} \leq 1} \frac{1}{\card{\H}} \sum_{i \in \H}\iprod{v}{\tilde{g}_{t+1}^{(i)} - \nabla \loss_i{(\theta_{t+1})} - \overline{\widetilde{g}}_{t+1} + \nabla \loss_{\H}{(\theta_{t+1})}}^2}\nonumber\\
    &\qquad +2\beta_t(1-\beta_t) \expect{\sup_{\norm{v} \leq 1} \frac{1}{\card{\H}} \sum_{i \in \H}\iprod{v}{m^{(i)}_{t}-\overline{m}_{t}}\iprod{v}{\nabla \loss_i{(\theta_{t+1})}-\nabla \loss_{\H}{(\theta_{t+1})}}}.
    \label{eq:heterodrift4}
\end{align}
We now bound the two last terms on the RHS of \cref{eq:heterodrift4}.

First, by using the fact that $2ab \leq a^2 + b^2$, we have for any vector $v$ that
\begin{align}
    &\frac{2}{\card{\H}} \sum_{i \in \H}\iprod{v}{m^{(i)}_{t}-\overline{m}_{t}}\iprod{v}{\nabla \loss_i{(\theta_{t+1})}-\nabla \loss_{\H}{(\theta_{t+1})}}
    \leq \frac{1}{\card{\H}} \sum_{i \in \H}\left[\iprod{v}{m^{(i)}_{t}-\overline{m}_{t}}^2 + \iprod{v}{\nabla \loss_i{(\theta_{t+1})}-\nabla \loss_{\H}{(\theta_{t+1})}}^2\right]\nonumber\\
    &\qquad = \frac{1}{\card{\H}} \sum_{i \in \H}\iprod{v}{m^{(i)}_{t}-\overline{m}_{t}}^2 + \frac{1}{\card{\H}} \sum_{i \in \H}\iprod{v}{\nabla \loss_i{(\theta_{t+1})}-\nabla \loss_{\H}{(\theta_{t+1})}}^2.
    \label{eq:heterodrift5.5}
\end{align}
Taking the supremum over the unit ball and then total expectations yields
\begin{align}
    &2\expect{ \sup_{\norm{v} \leq 1} \frac{1}{\card{\H}}  \sum_{i \in \H}\iprod{v}{m^{(i)}_{t}-\overline{m}_{t}}\iprod{v}{\nabla \loss_i{(\theta_{t+1})}-\nabla \loss_{\H}{(\theta_{t+1})}}} \nonumber\\
    &\qquad \leq \expect{ \sup_{\norm{v} \leq 1}\frac{1}{\card{\H}} \sum_{i \in \H}\iprod{v}{m^{(i)}_{t}-\overline{m}_{t}}^2} 
    + \expect{ \sup_{\norm{v} \leq 1}\frac{1}{\card{\H}} \sum_{i \in \H}\iprod{v}{\nabla \loss_i{(\theta_{t+1})}-\nabla \loss_{\H}{(\theta_{t+1})}}^2}.
    \label{eq:heterodrift6}
\end{align}

Second, recall that $\tilde{g}_{t+1}^{(i)} = g_{t+1}^{(i)} + \xi_{t+1}^{(i)}$, where $\xi_{t+1}^{(i)} \sim \N{(0, \sigmadp^2 I_d)}$.
Denote $\overline{\xi}_{t+1} \coloneqq \frac{1}{\card{\H}} \sum_{i \in \H} \xi_{t+1}^{(i)}$.
Therefore, by applying Jensen's inequality, we have
\begin{align*}
    &\expect{\sup_{\norm{v} \leq 1} \frac{1}{\card{\H}} \sum_{i \in \H}\iprod{v}{\tilde{g}_{t+1}^{(i)} - \nabla \loss_i{(\theta_{t+1})} - \overline{\widetilde{g}}_{t+1} + \nabla \loss_{\H}{(\theta_{t+1})}}^2}\\
    &\quad = \expect{\sup_{\norm{v} \leq 1} \frac{1}{\card{\H}} \sum_{i \in \H}\iprod{v}{g_{t+1}^{(i)} - \nabla \loss_i{(\theta_{t+1})} - \overline{g}_{t+1} + \nabla \loss_{\H}{(\theta_{t+1})} + \xi_{t+1}^{(i)} - \overline{\xi}_{t+1}}^2}\\
    &\leq 2\expect{\sup_{\norm{v} \leq 1} \frac{1}{\card{\H}} \sum_{i \in \H} \left[\iprod{v}{g_{t+1}^{(i)} - \nabla \loss_i{(\theta_{t+1})}-\overline{g}_{t+1} + \nabla \loss_{\H}{(\theta_{t+1})}}^2 + \iprod{v}{\xi_{t+1}^{(i)} - \overline{\xi}_{t+1}}^2 \right]}
\end{align*}
Now, recall the following bias-variance decomposition: for any $x_1, \ldots, x_n \in \R$ we have $\frac{1}{n} \sum_{i=1}^n (x_i - \overline{x})^2 = \frac{1}{n} \sum_{i=1}^n x_i^2 - \overline{x}^2 \leq \sum_{i=1}^n x_i^2$, where we denoted $\overline{x} \coloneqq \frac{1}{n} \sum_{i=1}^n x_i$.
Applying this fact above yields
\begin{align}
    &\expect{\sup_{\norm{v} \leq 1} \frac{1}{\card{\H}} \sum_{i \in \H}\iprod{v}{\tilde{g}_{t+1}^{(i)} - \nabla \loss_i{(\theta_{t+1})} - \overline{\widetilde{g}}_{t+1} + \nabla \loss_{\H}{(\theta_{t+1})}}^2}\nonumber\\
    &\leq 2\expect{\sup_{\norm{v} \leq 1} \frac{1}{\card{\H}} \sum_{i \in \H} \left[\iprod{v}{g_{t+1}^{(i)} - \nabla \loss_i{(\theta_{t+1})}}^2 + \iprod{v}{\xi_{t+1}^{(i)}}^2 \right]}\nonumber\\
    &\leq 2\expect{\frac{1}{\card{\H}} \sum_{i \in \H} \norm{g_{t+1}^{(i)} - \nabla \loss_i{(\theta_{t+1})}}^2}
    +2\expect{\sup_{\norm{v} \leq 1} \frac{1}{\card{\H}} \sum_{i \in \H}\iprod{v}{\xi_{t+1}^{(i)}}^2},
    \label{eq:heterodrift7}
\end{align}
where the last inequality is due to the Cauchy-Schwartz inequality.
Recall that, by \cref{asp:bnd_var} and \cref{lem:grad-subsample} applied with zero privacy noise, we have for every $i \in \H$ that $\expect{ \norm{g_{t+1}^{(i)} - \nabla \loss_i{(\theta_{t+1})}}^2} \leq  2(1-\frac{b}{m})\frac{\sigma^2}{b} \eqqcolon \sigma_b^2$.
Therefore, upon averaging over $i \in \H$, we have
\begin{align}
    \expect{\frac{1}{\card{\H}} \sum_{i \in \H}\norm{g_{t+1}^{(i)} - \nabla \loss_i{(\theta_{t+1})}}^2}
    \leq \sigma_b^2.
    \label{eq:heterodrift5}
\end{align}

We now bound the remaining (last) term on the RHS of \cref{eq:heterodrift7}.
By applying \cref{lem:concentration} to the random variables $(\xi^{(i)}_{t+1})_{i \in \H}$ which are drawn i.i.d. from $\N{(0,\sigmadp^2 I_d)}$, we obtain
\begin{align}
    \expect{\sup_{\norm{v} \leq 1} \frac{1}{\card{\H}} \sum_{i \in \H}\iprod{v}{\xi^{(i)}_{t+1}}^2}
    &\leq 36\sigmadp^2 \left(1+\frac{d}{n-f}\right).
    \label{eq:heterodrift8}
\end{align}

Plugging the bounds obtained in \cref{eq:heterodrift5,eq:heterodrift8} back in \cref{eq:heterodrift7}, we get
\begin{align}
    \expect{\sup_{\norm{v} \leq 1} \frac{1}{\card{\H}} \sum_{i \in \H}\iprod{v}{\tilde{g}_{t+1}^{(i)} - \nabla \loss_i{(\theta_{t+1})} - \overline{\widetilde{g}}_{t+1} + \nabla \loss_{\H}{(\theta_{t+1})}}^2}
    \leq 2\left(\sigma_b^2+36\sigmadp^2(1+\frac{d}{n-f})\right).
    \label{eq:heterodrift9}
\end{align}
We can now use the above bound of \cref{eq:heterodrift9} and that of \cref{eq:heterodrift6} to bound the RHS of \cref{eq:heterodrift4}, which yields
\begin{align*}
    &\expect{\sup_{\norm{v} \leq 1} \frac{1}{\card{\H}} \sum_{i \in \H} \iprod{v}{m^{(i)}_{t+1}-\overline{m}_{t+1}}^2}
    \leq \beta_t^2 \expect{\sup_{\norm{v} \leq 1} \frac{1}{\card{\H}} \sum_{i \in \H}\iprod{v}{m^{(i)}_{t}-\overline{m}_{t}}^2}\nonumber\\
    &\qquad+ (1-\beta_t)^2 \expect{\sup_{\norm{v} \leq 1} \frac{1}{\card{\H}} \sum_{i \in \H}\iprod{v}{\nabla \loss_i{(\theta_{t+1})}-\nabla \loss_{\H}{(\theta_{t+1})}}^2}
    +2(1-\beta_t)^2\left(\sigma_b^2+36\sigmadp^2(1+\frac{d}{n-f})\right)\nonumber\\
    &\qquad +\beta_t(1-\beta_t) \expect{\sup_{\norm{v} \leq 1} \frac{1}{\card{\H}} \sum_{i \in \H}\iprod{v}{m^{(i)}_{t}-\overline{m}_{t}}^2 + \frac{1}{\card{\H}} \sum_{i \in \H}\iprod{v}{\nabla \loss_i{(\theta_{t+1})}-\nabla \loss_{\H}{(\theta_{t+1})}}^2}.
\end{align*}
By rearranging terms, and noticing that $\beta_t^2 + \beta_t(1-\beta_t) = \beta_t$ and $(1-\beta_t)^2 + \beta_t(1-\beta_t) = 1-\beta_t$, we obtain
\begin{align*}
    &\expect{\sup_{\norm{v} \leq 1} \frac{1}{\card{\H}} \sum_{i \in \H} \iprod{v}{m^{(i)}_{t+1}-\overline{m}_{t+1}}^2}
    \leq \beta_t \expect{\sup_{\norm{v} \leq 1} \frac{1}{\card{\H}} \sum_{i \in \H}\iprod{v}{m^{(i)}_{t}-\overline{m}_{t}}^2}\nonumber\\
    &\qquad+ (1-\beta_t) \expect{\sup_{\norm{v} \leq 1} \frac{1}{\card{\H}} \sum_{i \in \H}\iprod{v}{\nabla \loss_i{(\theta_{t+1})}-\nabla \loss_{\H}{(\theta_{t+1})}}^2}
    +2(1-\beta_t)^2\left(\sigma_b^2+36\sigmadp^2(1+\frac{d}{n-f})\right).
\end{align*}
Denote $ \gcov^2 \coloneqq \sup_{\theta \in \R^d} \sup_{\norm{v}\leq 1} \frac{1}{\card{\H}}\sum_{i \in \H}\iprod{v}{\nabla{\loss_i{(\theta)}} - \nabla{\loss_{\H}{(\theta)}}}^2$.
Then, the above bound implies
\begin{align*}
    \expect{\sup_{\norm{v} \leq 1} \frac{1}{\card{\H}} \sum_{i \in \H} \iprod{v}{m^{(i)}_{t+1}-\overline{m}_{t+1}}^2}
    &\leq \beta_t \expect{\sup_{\norm{v} \leq 1} \frac{1}{\card{\H}} \sum_{i \in \H}\iprod{v}{m^{(i)}_{t}-\overline{m}_{t}}^2}\\
    &\quad +2(1-\beta_t)^2\left(\sigma_b^2+36\sigmadp^2(1+\frac{d}{n-f})\right)
    + (1-\beta_t)\gcov^2.
\end{align*}
The above inequality concludes the proof.
\end{proof}

\subsubsection{Proof of \cref{lem:dev}}
\label{app:lem-dev}
\deviation*
\begin{proof}
Let $t \in \{0, \ldots, T-1\}$.
Suppose that assumptions~\ref{asp:bnd_var} and~\ref{asp:bnd_norm} hold and that $\loss_\H$ is $L$-smooth.

Recall from~\eqref{eqn:dev} that 
\begin{align*}
    \dev{t+1} \coloneqq \AvgMmt{t+1} - \nabla \loss_{\H}\left( \weight{t+1} \right).
\end{align*}
Denote $\overline{\widetilde{g}}_t \coloneqq \frac{1}{\card{\H}}\sum_{i \in \H} \tilde{g}^{(i)}_t$.
Substituting from~\eqref{eqn:mmt_i} and recalling that $\overline{m}_t = \frac{1}{\card{\H}} \sum_{i \in \H} m^{(i)}_t$, we obtain
\begin{align*}
    \dev{t+1} = \beta_t \, \AvgMmt{t} + (1 - \beta_t) \, \overline{\widetilde{g}}_{t+1} - \nabla \loss_{\H}\left( \weight{t+1} \right).
\end{align*}
Upon adding and subtracting $\beta_t \nabla \loss_{\H}(\weight{t})$ and $\beta_t \nabla \loss_{\H}(\weight{t+1})$ on the R.H.S.~above we obtain that
\begin{align*}
    \dev{t+1} & = \beta_t \, \AvgMmt{t} - \beta_t \nabla \loss_{\H}(\weight{t}) + (1 - \beta_t) \, \overline{\widetilde{g}}_{t+1} - \nabla \loss_{\H}\left( \weight{t+1} \right) + \beta_t \nabla \loss_{\H}(\weight{t+1}) + \beta_t \nabla \loss_{\H}(\weight{t}) - \beta_t \nabla \loss_{\H}(\weight{t+1}) \\
    & = \beta_t \left( \AvgMmt{t} - \nabla \loss_{\H}(\weight{t}) \right) + (1 - \beta_t) \, \overline{\widetilde{g}}_{t+1} - (1 - \beta_t) \nabla \loss_{\H}\left( \weight{t+1} \right) + \beta_t \left( \nabla \loss_{\H}(\weight{t}) - \nabla \loss_{\H}(\weight{t+1})  \right).
\end{align*}
As $\AvgMmt{t} - \nabla \loss_{\H}(\weight{t}) = \dev{t}$ (by~\eqref{eqn:dev}), from above we obtain that
\begin{align*}
    \dev{t+1} = \beta_t \dev{t} + (1 - \beta_t) \, \left( \overline{\widetilde{g}}_{t+1} - \nabla \loss_{\H}\left( \weight{t+1} \right) \right) + \beta_t \left( \nabla \loss_{\H}(\weight{t}) - \nabla \loss_{\H}(\weight{t+1}) \right).
\end{align*}
Therefore, 
\begin{align*}
    \norm{\dev{t+1}}^2 = & \beta_t^2 \norm{\dev{t}}^2 + (1 - \beta_t)^2 \norm{ \overline{\widetilde{g}}_{t+1} - \nabla \loss_{\H}\left( \weight{t+1} \right)}^2 \\
    &+ \beta_t^2 \norm{\nabla \loss_{\H}(\weight{t}) - \nabla \loss_{\H}(\weight{t+1}) }^2 + 2 \beta_t (1 - \beta_t) \iprod{\dev{t}}{\overline{\widetilde{g}}_{t+1} - \nabla \loss_{\H}\left( \weight{t+1} \right)} \\
    & + 2 \beta_t^2 \iprod{\dev{t}}{\nabla \loss_{\H}(\weight{t}) - \nabla \loss_{\H}(\weight{t+1})} + 2 \beta_t ( 1- \beta_t) \iprod{\overline{\widetilde{g}}_{t+1} - \nabla \loss_{\H}\left( \weight{t+1} \right)}{\nabla \loss_{\H}(\weight{t}) - \nabla \loss_{\H}(\weight{t+1})}.
\end{align*}
By taking conditional expectation $\condexpect{t+1}{\cdot}$ on both sides, and recalling that $\dev{t}$, $\weight{t+1}$ and $\weight{t}$ are deterministic values when the history $\P_{t+1}$ is given, we obtain that 
\begin{align*}
    \condexpect{t+1}{\norm{\dev{t+1}}^2} = & \beta_t^2 \norm{\dev{t}}^2 + (1 - \beta_t)^2 \condexpect{t+1}{\norm{ \overline{\widetilde{g}}_{t+1} - \nabla \loss_{\H}\left( \weight{t+1} \right)}^2} + \beta_t^2 \norm{\nabla \loss_{\H}(\weight{t}) - \nabla \loss_{\H}(\weight{t+1}) }^2 + \\
    & 2 \beta_t (1 - \beta_t) \iprod{\dev{t}}{\condexpect{t+1}{\overline{\widetilde{g}}_{t+1}} - \nabla \loss_{\H}\left( \weight{t+1} \right)} + 2 \beta_t^2 \iprod{\dev{t}}{\nabla \loss_{\H}(\weight{t}) - \nabla \loss_{\H}(\weight{t+1})}\\
    & + 2 \beta_t ( 1- \beta_t) \iprod{\condexpect{t+1}{\overline{\widetilde{g}}_{t+1}} - \nabla \loss_{\H}\left( \weight{t+1} \right)}{\nabla \loss_{\H}(\weight{t}) - \nabla \loss_{\H}(\weight{t+1})}.
\end{align*}
Recall that $\overline{\widetilde{g}}_{t+1} \coloneqq \frac{1}{(n-f)} \sum_{j \in \H}\tilde{g}^{(i)}_{t+1}$. Thus, as we ignore clipping by Assumption~\ref{asp:bnd_norm}, we have $\condexpect{t+1}{\overline{\widetilde{g}}_{t+1}} = \nabla \loss_{\H}(\weight{t+1})$. Using this above we obtain that
\begin{align*}
    \condexpect{t+1}{\norm{\dev{t+1}}^2} = & \beta_t^2 \norm{\dev{t}}^2 + (1 - \beta_t)^2 \condexpect{t+1}{\norm{ \overline{\widetilde{g}}_{t+1} - \nabla \loss_{\H}\left( \weight{t+1} \right)}^2} + \beta_t^2 \norm{\nabla \loss_{\H}(\weight{t}) - \nabla \loss_{\H}(\weight{t+1}) }^2 \\
    & + 2 \beta_t^2 \iprod{\dev{t}}{\nabla \loss_{\H}(\weight{t}) - \nabla \loss_{\H}(\weight{t+1})}.
\end{align*}
Now, denote $\sigmadpbar^2 \coloneqq 2\left(1-\frac{b}{m}\right) \frac{\sigma^2}{b} + d \cdot \sigmadp^2$.
By assumptions~\ref{asp:bnd_var} and \ref{asp:bnd_norm}, we can invoke \cref{lem:grad-subsample} which implies, together with the fact that $\gradient{j}{t+1}$'s for $j \in \H$ are independent, that $\condexpect{t+1}{\norm{ \overline{\widetilde{g}}_{t+1} - \nabla \loss_{\H}\left( \weight{t+1} \right)}^2} \leq \frac{\sigmadpbar^2}{n-f}$. Thus,
\begin{align*}
    \condexpect{t+1}{\norm{\dev{t+1}}^2} \leq \beta_t^2 \norm{\dev{t}}^2 + (1 - \beta_t)^2 \frac{\sigmadpbar^2}{(n-f)} + \beta_t^2 \norm{\nabla \loss_{\H}(\weight{t}) - \nabla \loss_{\H}(\weight{t+1}) }^2 + 2 \beta_t^2 \iprod{\dev{t}}{\nabla \loss_{\H}(\weight{t}) - \nabla \loss_{\H}(\weight{t+1})}.
\end{align*}
By the Cauchy-Schwartz inequality, $\iprod{\dev{t}}{\nabla \loss_{\H}(\weight{t}) - \nabla \loss_{\H}(\weight{t+1})} \leq \norm{\dev{t}} \norm{\nabla \loss_{\H}(\weight{t}) - \nabla \loss_{\H}(\weight{t+1})}$. 
Since $\loss_\H$ is $L$-smooth, we have $\norm{ \nabla \loss_{\H}(\weight{t}) - \nabla \loss_{\H}(\weight{t+1})} \leq L \norm{\weight{t+1} - \weight{t}}$. 
Recall from~\eqref{eqn:SGD} that $\weight{t+1} = \weight{t} - \gamma_t  R_t$. Thus,$\norm{\nabla \loss_{\H}(\weight{t}) - \nabla \loss_{\H}(\weight{t+1})} \leq \gamma_t  L \norm{R_t}$. 
Using this above we obtain that
\begin{align*}
    \condexpect{t+1}{\norm{\dev{t+1}}^2} \leq \beta_t^2 \norm{\dev{t}}^2 + (1 - \beta_t)^2 \frac{\sigmadpbar^2}{(n-f)} + \gamma_t^2 \beta_t^2 L^2 \norm{R_t}^2 + 2 \gamma_t  \beta_t^2 L \norm{\dev{t}} \norm{R_t}.
\end{align*}
As $2 ab \leq a^2 + b^2$, from above we obtain that
\begin{align}
    \condexpect{t+1}{\norm{\dev{t+1}}^2} & \leq \beta_t^2 \norm{\dev{t}}^2 + (1 - \beta_t)^2 \frac{\sigmadpbar^2}{(n-f)} + \gamma_t^2 \beta_t^2 L^2 \norm{R_t}^2 + \gamma_t  L \beta_t^2 \left( \norm{\dev{t}}^2 +  \norm{R_t}^2\right) \nonumber \\
    & = (1 + \gamma_t L ) \beta_t^2 \norm{\dev{t}}^2 + (1 - \beta_t)^2 \frac{\sigmadpbar^2}{(n-f)} + \gamma_t L (1 + \gamma_t L) \beta_t^2  \norm{R_t}^2. \label{eqn:dev_before_ab}
\end{align}
By definition of $\drift{t}$ in~\eqref{eqn:drift}, we have $R_t = \drift{t} +  \AvgMmt{t}$. 
Thus, owing to the triangle inequality and the fact that $2 ab \leq a^2 + b^2$, we have $\norm{R_t}^2 \leq 2 \norm{\drift{t}}^2 + 2  \norm{\AvgMmt{t}}^2$. 
Similarly, by definition of $\dev{t}$ in~\eqref{eqn:dev}, we have $\norm{\AvgMmt{t}}^2 \leq 2 \norm{\dev{t}}^2 + 2 \norm{\nabla \loss_{\H}(\weight{t})}^2$. 
Thus, $\norm{R_t}^2 \leq 2 \norm{\drift{t}}^2 + 4  \norm{\dev{t}}^2 + 4  \norm{\nabla \loss_{\H}(\weight{t})}^2$. 
Using this in~\eqref{eqn:dev_before_ab} we obtain that
\begin{align*}
    \condexpect{t+1}{\norm{\dev{t+1}}^2} &\leq (1 + \gamma_t L ) \beta_t^2 \norm{\dev{t}}^2 + (1 - \beta_t)^2 \frac{\sigmadpbar^2}{(n-f)} \\
    &\quad + 2 \gamma_t L( 1 + \gamma_t L)\beta_t^2  \left( \norm{\drift{t}}^2 + 2  \norm{\dev{t}}^2 + 2  \norm{\nabla \loss_{\H}(\weight{t})}^2 \right).
\end{align*}
By rearranging the terms on the R.H.S., we get
\begin{align*}
    \condexpect{t+1}{\norm{\dev{t+1}}^2} \leq & \beta_t^2 (1 + \gamma_t L ) \left(1 + 4 \gamma_t   L \right) \norm{\dev{t}}^2 +  4 \gamma_t  L( 1 + \gamma_t L) \beta_t^2   \norm{\nabla \loss_{\H}(\weight{t})}^2 +(1 - \beta_t)^2 \frac{\sigmadpbar^2}{(n-f)} \\
    & + 2 \gamma_t  L( 1 + \gamma_t L)\beta_t^2 \norm{\drift{t}}^2.
\end{align*}
The proof concludes upon taking total expectation on both sides.
\end{proof}

\subsubsection{Proof of \cref{lem:descent}}
\label{app:descent}
\descent*
\begin{proof}
Let $t \in \{0, \ldots, T-1\}$.
Assuming $\loss_\H$ is $L$-smooth, we have (see~\cite{bottou2018optimization})
\begin{align*}
    \loss_{\H}(\weight{t+1}) - \loss_{\H}(\weight{t}) \leq \iprod{\weight{t+1} - \weight{t}}{\nabla \loss_{\H}(\weight{t})} + \frac{L}{2} \norm{\weight{t+1} - \weight{t}}^2.
\end{align*}
Substituting from~\eqref{eqn:sgd_new}, i.e., $\weight{t+1} = \weight{t} - \gamma_t    \, \AvgMmt{t} - \gamma_t  \drift{t}$, we obtain that
\begin{align*}
    \loss_{\H}(\weight{t+1}) - \loss_{\H}(\weight{t}) &\leq - \gamma_t   \iprod{\AvgMmt{t}}{\nabla \loss_{\H}(\weight{t})} - \gamma_t  \iprod{\drift{t}}{\nabla \loss_{\H}(\weight{t})} + \gamma_t^2 \frac{L}{2} \norm{ \, \AvgMmt{t} + \drift{t}}^2 \\
    & = - \gamma_t   \iprod{\AvgMmt{t} - \nabla \loss_{\H}(\weight{t}) + \nabla \loss_{\H}(\weight{t})}{\nabla \loss_{\H}(\weight{t})} - \gamma_t  \iprod{\drift{t}}{\nabla \loss_{\H}(\weight{t})} + \gamma_t^2 \frac{L}{2} \norm{ \, \AvgMmt{t} + \drift{t}}^2.
\end{align*}
By Definition~\eqref{eqn:dev}, $\AvgMmt{t} - \nabla \loss_{\H}(\weight{t}) = \dev{t}$. Thus, from above we obtain 
\begin{align}
     \loss_{\H}(\weight{t+1}) - \loss_{\H}(\weight{t}) \leq -  \gamma_t   \norm{\nabla \loss_{\H}(\weight{t})}^2 -  \gamma_t   \iprod{\dev{t}}{\nabla \loss_{\H}(\weight{t})} -  \gamma_t  \iprod{\drift{t}}{\nabla \loss_{\H}(\weight{t})} + \frac{1}{2}\gamma_t^2 L \norm{ \, \AvgMmt{t} + \drift{t}}^2. \label{eqn:norm_1}
\end{align}
Now, we consider the last three terms on the R.H.S.~separately. Using Cauchy-Schwartz inequality, and the fact that $2 ab \leq \frac{1}{c} a^2 + c b^2$ for any $c > 0$, we obtain that (by substituting $c = 2$)
\begin{align}
    2 \mnorm{\iprod{\dev{t}}{\nabla \loss_{\H}(\weight{t})}} \leq 2 \norm{\dev{t}} \norm{\nabla \loss_{\H}(\weight{t})} \leq \frac{2}{1} \norm{\dev{t}}^2 + \frac{1}{2} \norm{\nabla \loss_{\H}(\weight{t})}^2 . \label{eqn:rho_1}
\end{align}
Similarly, 
\begin{align}
    2 \mnorm{\iprod{\drift{t}}{\nabla \loss_{\H}(\weight{t})}} \leq 2 \norm{\drift{t}} \norm{\nabla \loss_{\H}(\weight{t})} \leq \frac{ 2}{ 1} \norm{\drift{t}}^2 +  \frac{1}{2} \norm{\nabla \loss_{\H}(\weight{t})}^2. \label{eqn:rho_2}
\end{align}
Finally, using triangle inequality and the fact that $2ab \leq a^2 + b^2$ we have
\begin{align}
    \norm{ \, \AvgMmt{t} + \drift{t}}^2 & \leq 2  \, \norm{\AvgMmt{t}}^2 + 2 \norm{\drift{t}}^2 = 2  \, \norm{\AvgMmt{t} - \nabla \loss_{\H}(\weight{t+1}) + \nabla \loss_{\H}(\weight{t})}^2 + 2 \norm{\drift{t}}^2 \nonumber \\
    & \leq 4  \, \norm{\dev{t}}^2 + 4  \, \norm{\nabla \loss_{\H}(\weight{t})}^2 + 2 \norm{\drift{t}}^2. \quad \quad [\text{since} ~ ~ \AvgMmt{t} - \nabla \loss_{\H}(\weight{t}) = \dev{t}] \label{eqn:last_term}
\end{align}
Substituting from~\eqref{eqn:rho_1},~\eqref{eqn:rho_2} and~\eqref{eqn:last_term} in~\eqref{eqn:norm_1} we obtain that
\begin{align*}
    \loss_{\H}(\weight{t+1}) - \loss_{\H}(\weight{t}) \leq & -  \gamma_t   \norm{\nabla \loss_{\H}(\weight{t})}^2 
    + \frac{1}{2} \gamma_t   \left( 2 \norm{\dev{t}}^2 + \frac{1}{2}\norm{\nabla \loss_{\H}(\weight{t})}^2 \right) 
    + \frac{1}{2}\gamma_t  \left( 2 \norm{\drift{t}}^2 + \frac{1}{2} \norm{\nabla \loss_{\H}(\weight{t})}^2 \right) \nonumber \\
    & + \frac{1}{2}\gamma_t^2 L \left( 4  \, \norm{\dev{t}}^2 + 4  \, \norm{\nabla \loss_{\H}(\weight{t})}^2 + 2 \norm{\drift{t}}^2 \right).
\end{align*}
Upon rearranging the terms in the R.H.S., we obtain that
\begin{align*}
    \loss_{\H}(\weight{t+1}) - \loss_{\H}(\weight{t}) \leq - \frac{\gamma_t}{2}\left( 1 - 4 \gamma_t  L \right) \norm{\nabla \loss_{\H}(\weight{t})}^2  +
    \gamma_t   \left( 1 + 2 \gamma_t  L  \right) \norm{\dev{t}}^2 +
    \gamma_t  \left(  1 + \gamma_t  L \right) \norm{\drift{t}}^2.
\end{align*}
This concludes the proof.
\end{proof}

\clearpage
\section{Experimental Evaluation}
\label{app:experiments}
In Section~\ref{app:exp-setup}, we present our experimental setup.
In Section~\ref{app:exp-results}, we report our empirical results.

\subsection{Experimental Setup}
\label{app:exp-setup}
In our experiments, we test the performance of \algoname{} using SMEA and Filter~\cite{diakonikolas2017being, data2021byzantine} in the server-based architecture and in three privacy regimes.

\paragraph{Dataset, model architecture, and hyperparameters.}
We train a logistic regression model of $d = 69$ parameters on the academic \textit{Phishing}\footnote{\url{https://www.csie.ntu.edu.tw/~cjlin/libsvmtools/datasets/}} dataset. We employ the \textit{binary cross entropy} (bce) loss as well as L2-regularization of parameter $\lambda = 10^{-4}$, making the underlying learning problem strongly convex. We train the model using a fixed learning rate $\gamma = 1$ over a total of $T = 400$ learning steps. We set the clipping threshold $C = 1$ and the batch size $b = 25$. We run all algorithms, except DSGD, with momentum $\beta = 0.99$.

\paragraph{Distributed setup, and privacy accounting.}
We consider a server-based architecture composed of $n = 7$ workers, among which $f = 3$ are adversarial. The honest workers inject a privacy noise $\sigmadp = \frac{2C}{b} \times \sigma_{\mathrm{NM}}$ to their gradients, where $\sigma_{\mathrm{NM}}$ is referred to as the noise multiplier. We consider three privacy regimes in our experiments; namely \textit{low} privacy where $\sigma_{\mathrm{NM}} = 1$, \textit{moderate} privacy where $\sigma_{\mathrm{NM}}=2$, and \textit{high} privacy where $\sigma_{\mathrm{NM}} = 3$.
In order to estimate the privacy budgets achieved at the end of the learning, we use Opacus~\cite{opacus}, a DP library for deep learning in PyTorch~\cite{pytorch}. Using Opacus, the aggregate privacy budgets after $T = 400$ steps of learning are $(\epsilon, \delta) = (1.14, 10^{-4})$ in the \textit{low} privacy regime, $(\epsilon, \delta) = (0.32, 10^{-4})$ in the \textit{moderate} privacy regime, and $(\epsilon, \delta) = (0.19, 10^{-4})$ in the \textit{high} privacy regime.

\paragraph{Evaluation details and reproducibility.}
As a benchmark, we compare the performance of \algoname{} against the DP-DSGD algorithm, i.e., the private version of the adversary-free DSGD. We test \algoname{} using SMEA and Filter. These algorithms are obtained by running Algorithm~\ref{algo:robust-dpsgd} while replacing the aggregation method $F$ with the robust algorithm in question, namely SMEA and Filter. Note that we run Filter with spectral norm bound $\sigma_0^2 = 0$ (see Section~\ref{app:filter}) because it provides the best empirical results, and it cannot be set to its theoretical value since the values of data heterogeneity $G^2$ and stochastic gradient noise $\sigma^2$ are unknown.
We run each experiment with five seeds from 1 to 5 for reproducibility. The code we use to launch the different experiments will be made available.

\paragraph{Adversarial attacks.}
In our experiments, the adversarial workers execute four state-of-the-art attacks from the robust distributed ML literature, namely A Little is Enough (ALIE)~\cite{little}, Fall of Empires (FOE)~\cite{empire}, Sign-flipping (SF)~\cite{allen2020byzantine}, and Label-flipping (LF)~\cite{allen2020byzantine}.\\
The first three attacks rely on the same attack primitive that we explain below, while LF is executed differently.\\
Let $b_t$ be the attack vector in step $t$ and $\tau \geq 0$ a fixed real number. In every step $t$, the adversarial workers send to the server the gradient $B_t = \overline{g}_t + \tau_t b_t$, where $\overline{g}_t$ is an estimation of the true gradient at step $t$. Experimentally, we set $\overline{g}_t = \frac{1}{|\mathcal{H}|} \sum\limits_{i \in \mathcal{H}} g_t^{(i)}$.

\begin{itemize}
    \item \textbf{ALIE:} In this attack, $b_t = \sigma_t$, where $\sigma_t$ is coordinate-wise standard deviation of $\overline{g}_t$. In our experiments on ALIE, $\tau_t$ is chosen through an extensive grid search. Essentially, in each step $t$, we choose the value that results in the worst adversarial vector, i.e, the vector for which the  distance to $\overline{g}_t$ is the largest.
    \item \textbf{FOE:} In this attack, $b_t = - \overline{g}_t$. All adversarial workers thus send $(1 - \tau_t) \overline{g}_t$ in step $t$. Similar to \textit{ALIE}, $\tau_t$ for \textit{FoE} is also estimated through grid searching.
    \item \textbf{SF:} In this attack, $b_t = - \overline{g}_t$, and $\tau_t = 2$. All adversarial workers thus send $B_t = b_t = - \overline{g}_t$ in step $t$.
    \item \textbf{LF:} Every adversarial worker computes its gradient on flipped labels. Since the labels $l$ for Phishing are in $\{0, 1\}$, the adversarial workers flip the labels by computing $l' = 1 - l$ on the batch, where $l'$ is the flipped/modified label.
\end{itemize}
\subsection{Experimental Results}\label{app:exp-results}
We present our results in the \textit{low} privacy regime in Figures~\ref{fig:plots_phishing_1} and~\ref{fig:plots_phishing_2}, in the \textit{mid} privacy regime in Figures~\ref{fig:plots_phishing_3} and~\ref{fig:plots_phishing_4}, and finally in the \textit{high} privacy regime in Figures~\ref{fig:plots_phishing_5} and~\ref{fig:plots_phishing_6}. We then comment on the results below.

\clearpage
\paragraph{Low Privacy Regime ($\sigma_{\mathrm{NM}} = 1$).}
\begin{figure*}[ht!]
    \centering
    \includegraphics[width=0.5\textwidth]{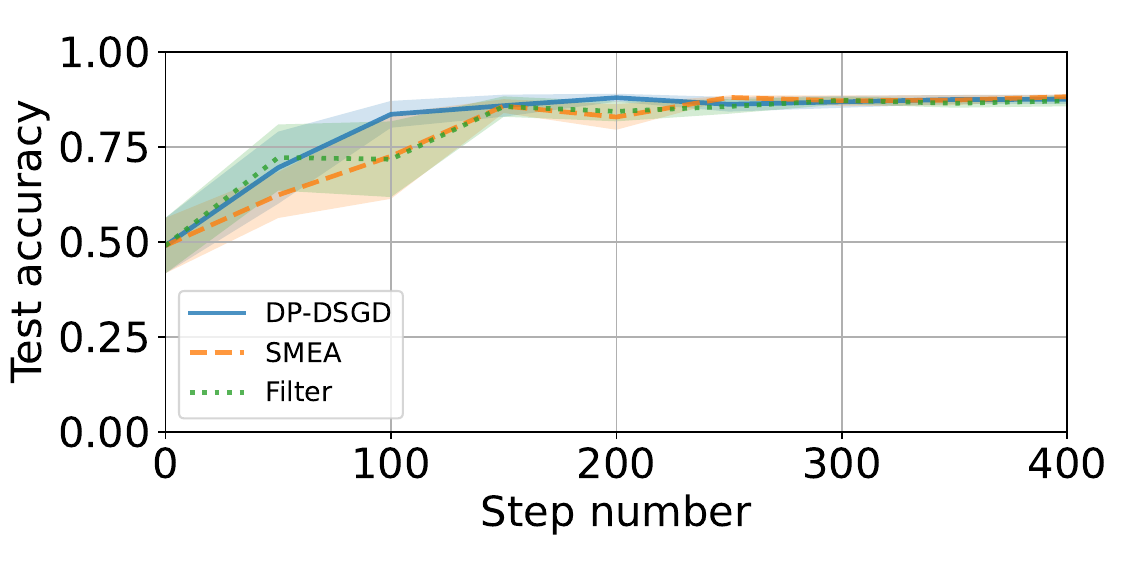}%
    \includegraphics[width=0.5\textwidth]{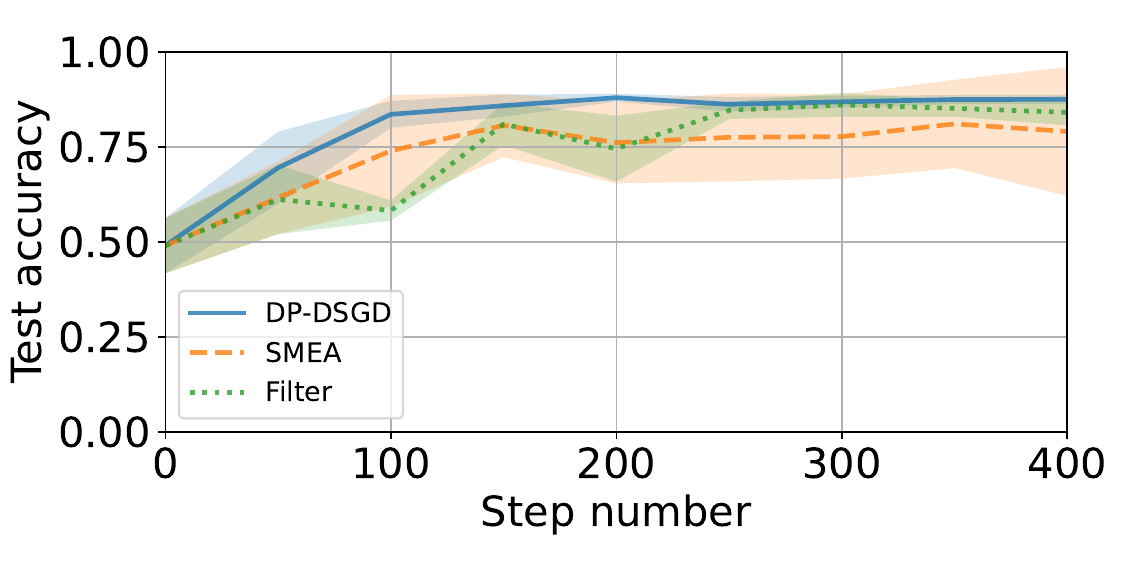}\\%
    \includegraphics[width=0.5\textwidth]{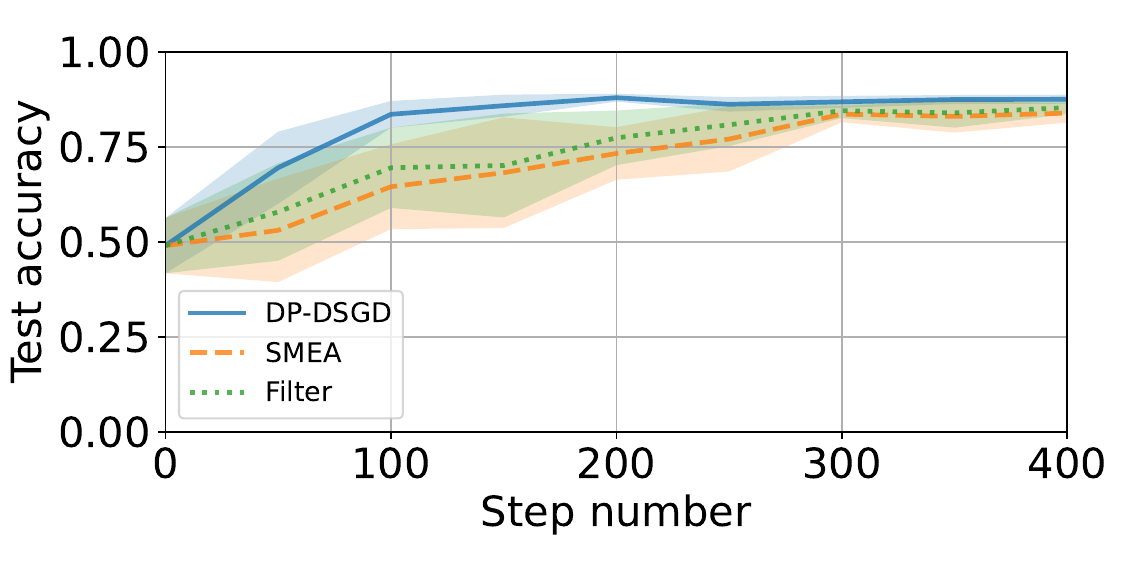}%
    \includegraphics[width=0.5\textwidth]{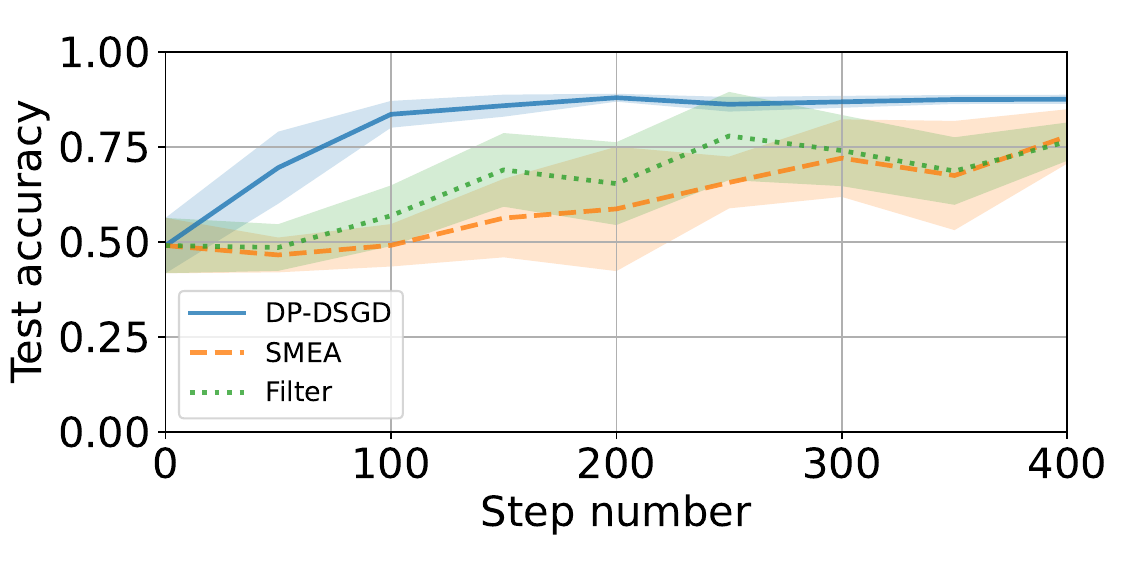}\\%
    \caption{Test accuracy on Phishing with $f = 3$ adversarial workers among $n = 7$ workers, with $\beta = 0.99$. The adversarial workers execute the LF (\textit{row 1, left}), SF (\textit{row 1, right}), ALIE (\textit{row 2, left}), and FOE (\textit{row 2, right}) attacks. Privacy budget after $T = 400$ steps is $(\epsilon, \delta) = (1.14, 10^{-4})$.}
\label{fig:plots_phishing_1}
\end{figure*}

\begin{figure*}[ht!]
    \centering
    \includegraphics[width=0.5\textwidth]{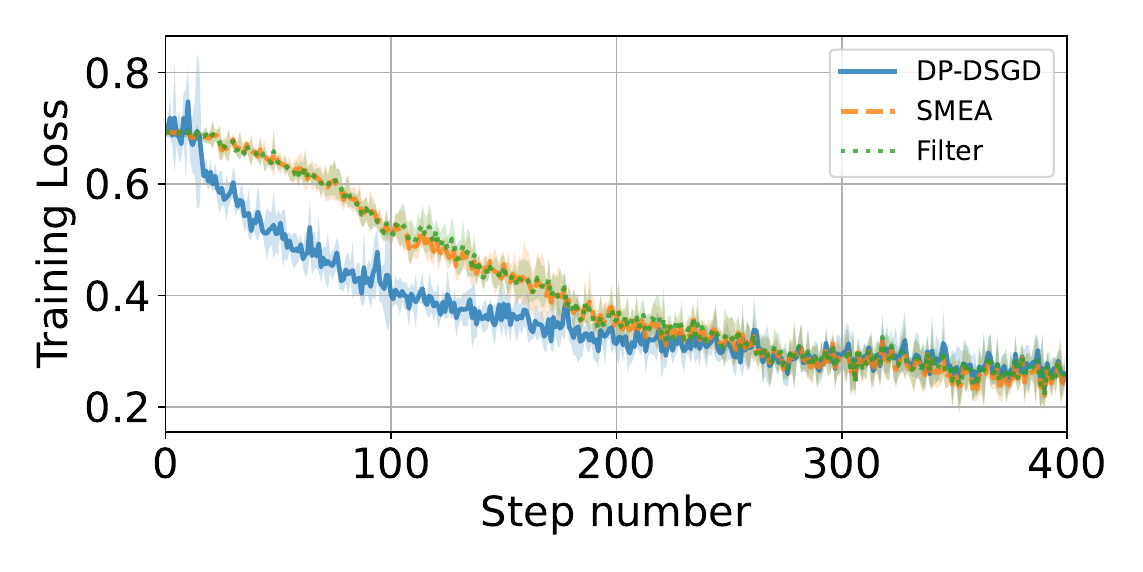}%
    \includegraphics[width=0.5\textwidth]{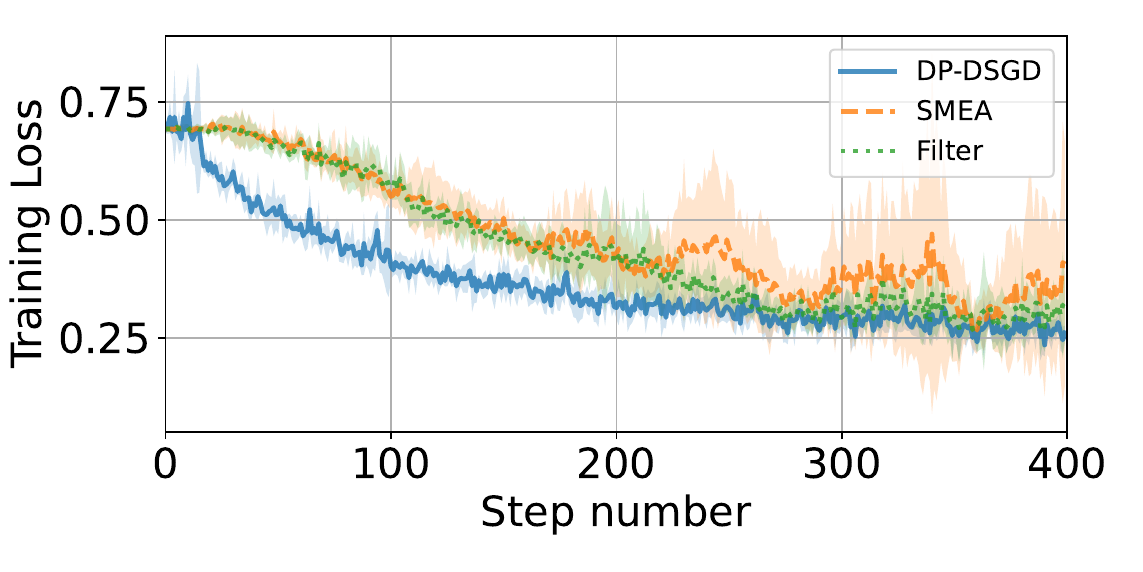}\\%
    \includegraphics[width=0.5\textwidth]{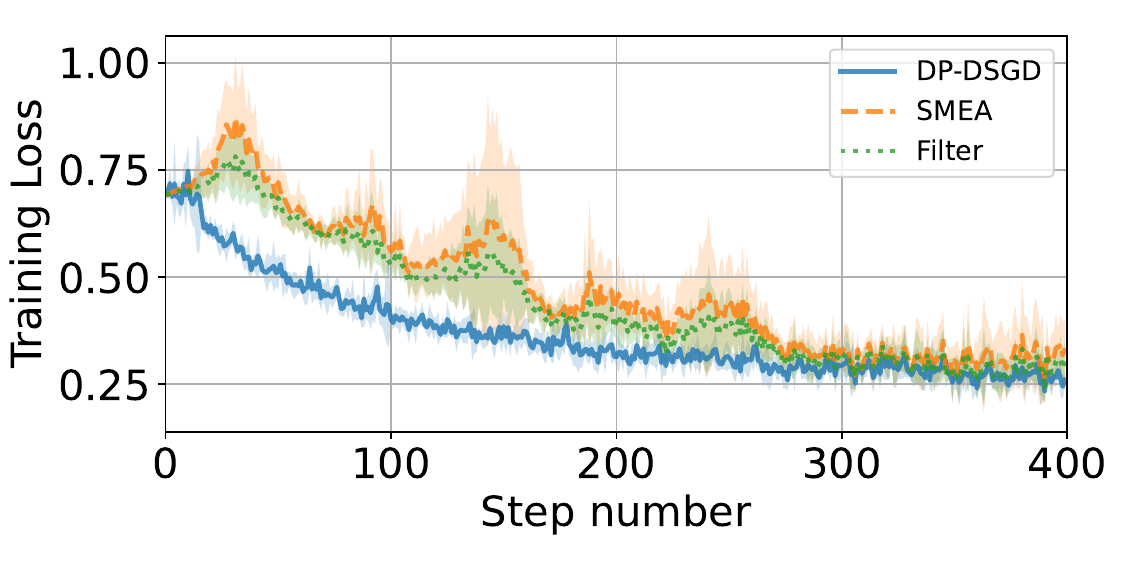}%
    \includegraphics[width=0.5\textwidth]{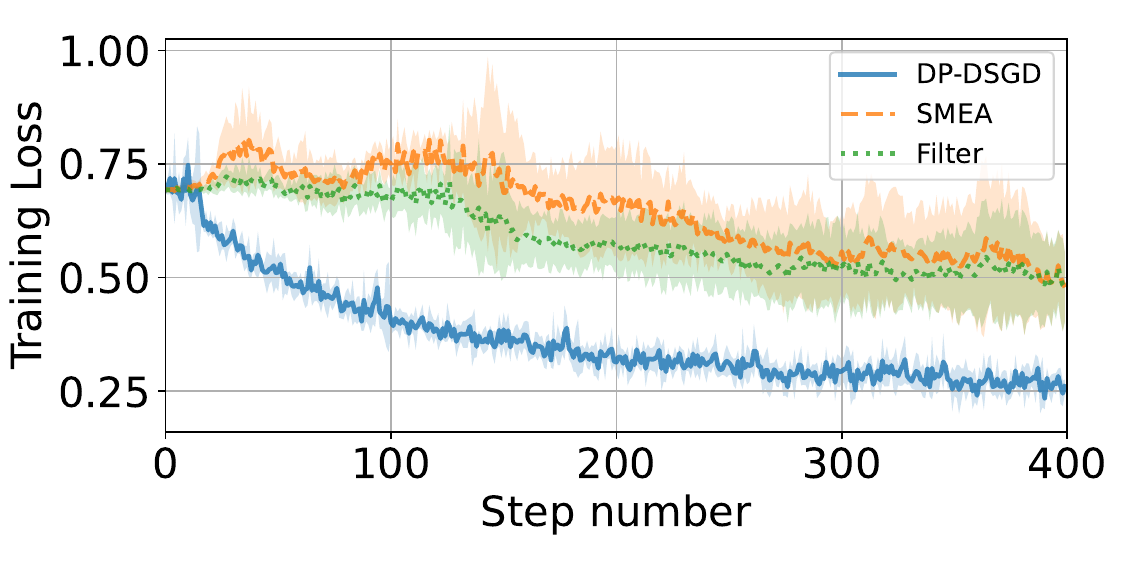}\\%
    \caption{Training loss on Phishing with $f = 3$ adversarial workers among $n = 7$ workers, with $\beta = 0.99$. The adversarial workers execute the LF (\textit{row 1, left}), SF (\textit{row 1, right}), ALIE (\textit{row 2, left}), and FOE (\textit{row 2, right}) attacks. Privacy budget after $T = 400$ steps is $(\epsilon, \delta) = (1.14, 10^{-4})$.}
\label{fig:plots_phishing_2}
\end{figure*}

\clearpage
\paragraph{Moderate Privacy Regime ($\sigma_{\mathrm{NM}} = 2$).}
\begin{figure*}[ht!]
    \centering
    \includegraphics[width=0.5\textwidth]{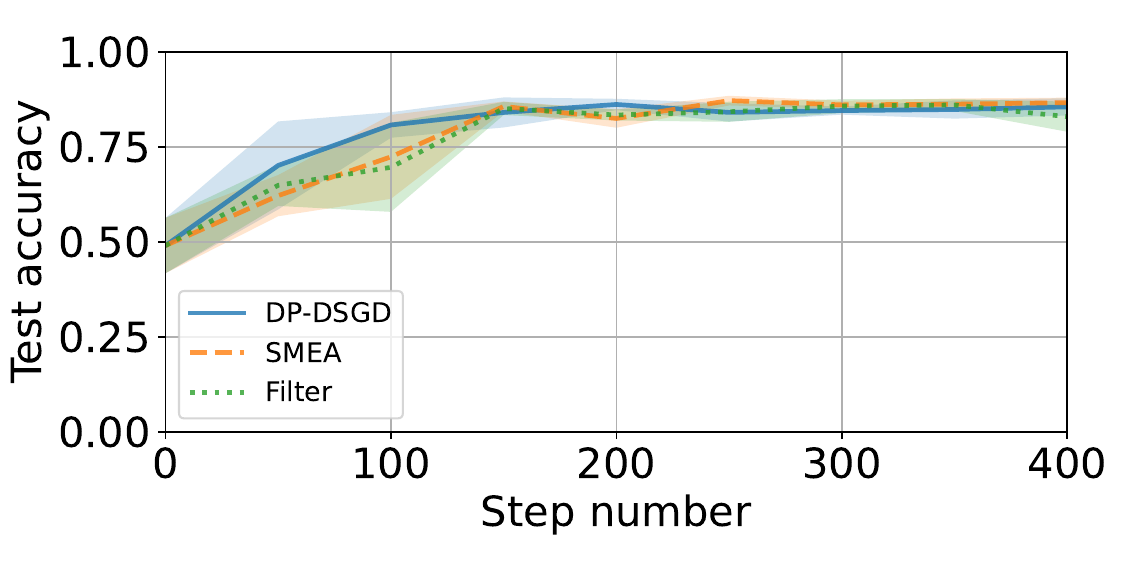}%
    \includegraphics[width=0.5\textwidth]{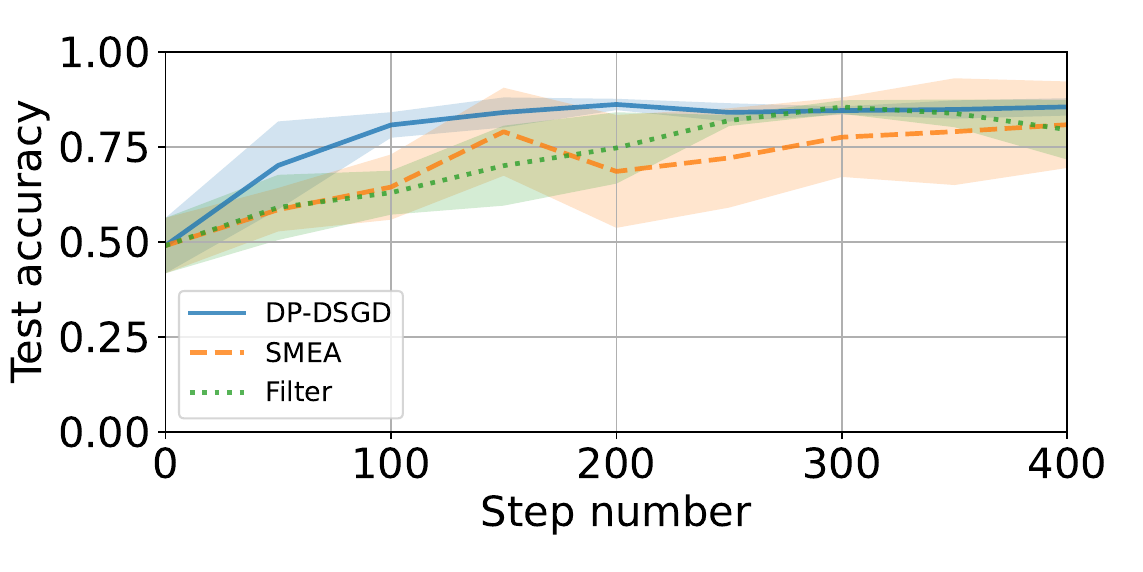}\\%
    \includegraphics[width=0.5\textwidth]{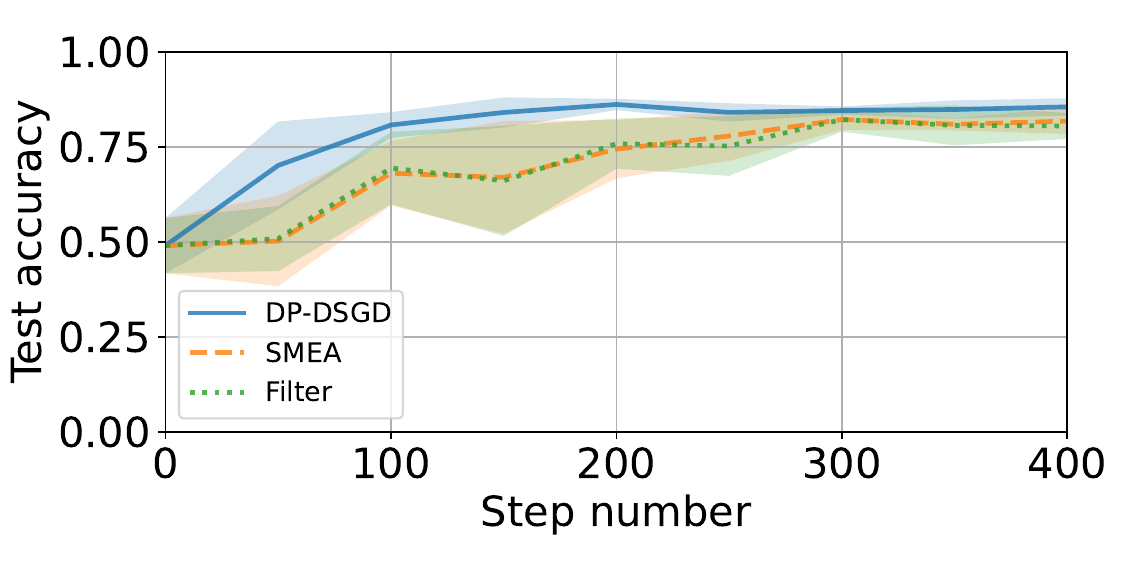}%
    \includegraphics[width=0.5\textwidth]{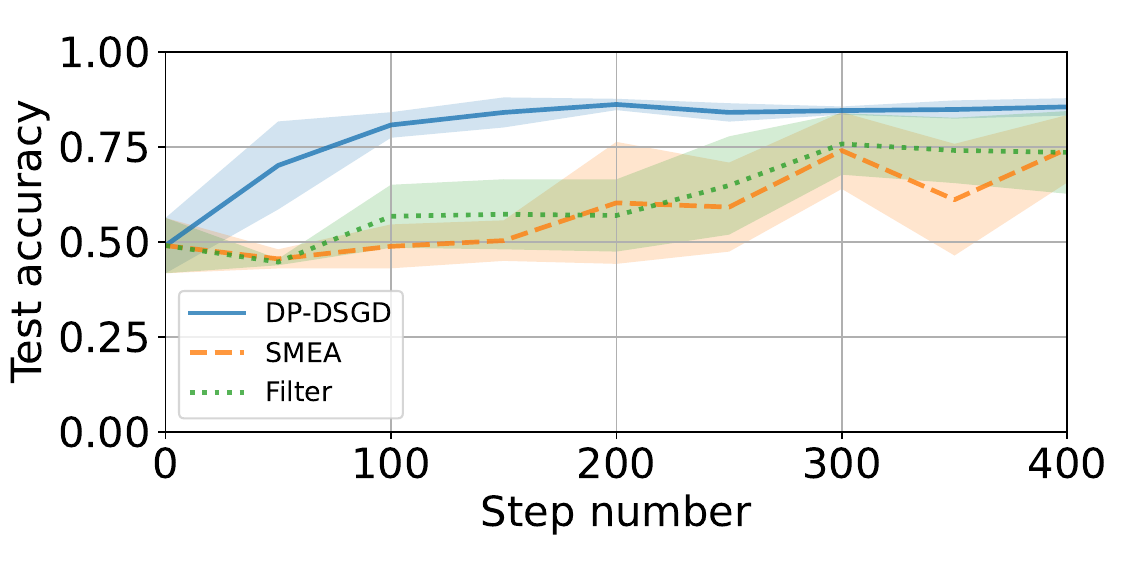}\\%
    \caption{Test accuracy on Phishing with $f = 3$ adversarial workers among $n = 7$ workers, with $\beta = 0.99$. The adversarial workers execute the LF (\textit{row 1, left}), SF (\textit{row 1, right}), ALIE (\textit{row 2, left}), and FOE (\textit{row 2, right}) attacks. Privacy budget after $T = 400$ steps is $(\epsilon, \delta) = (0.32, 10^{-4})$.}
\label{fig:plots_phishing_3}
\end{figure*}

\begin{figure*}[ht!]
    \centering
    \includegraphics[width=0.5\textwidth]{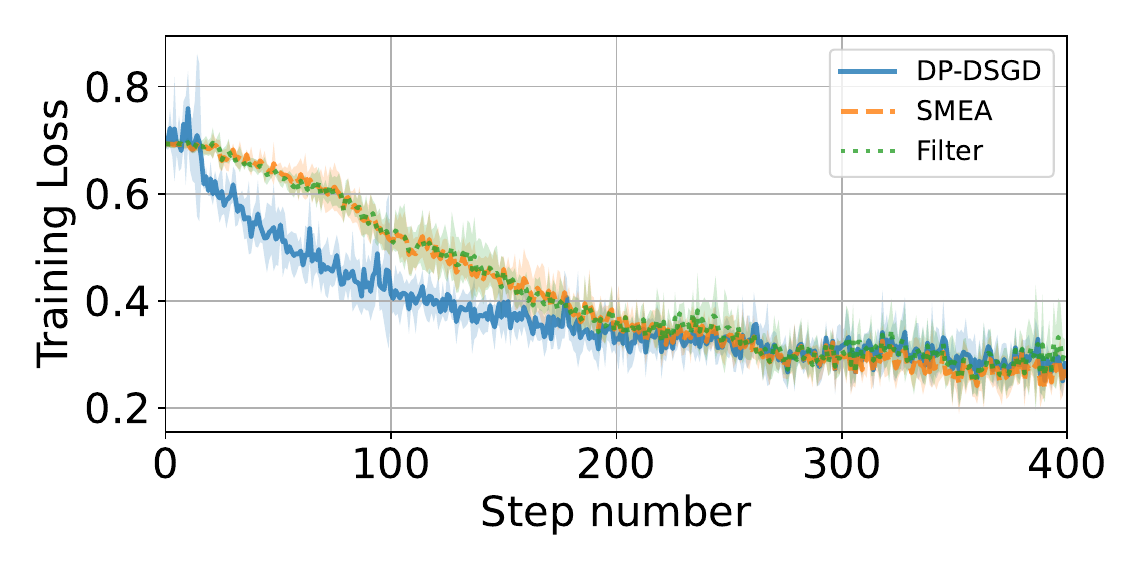}%
    \includegraphics[width=0.5\textwidth]{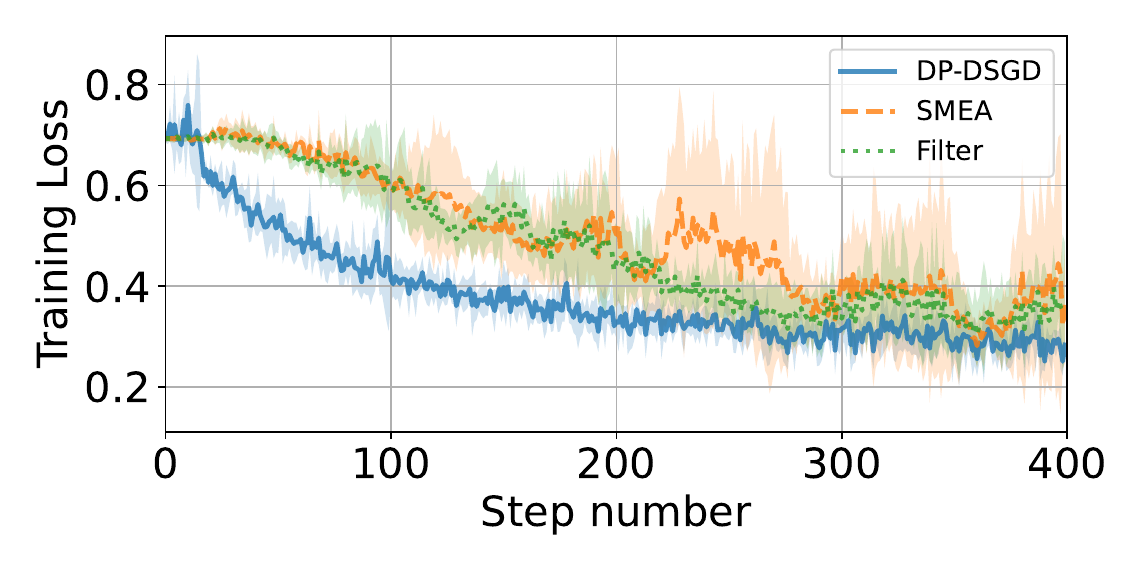}\\%
    \includegraphics[width=0.5\textwidth]{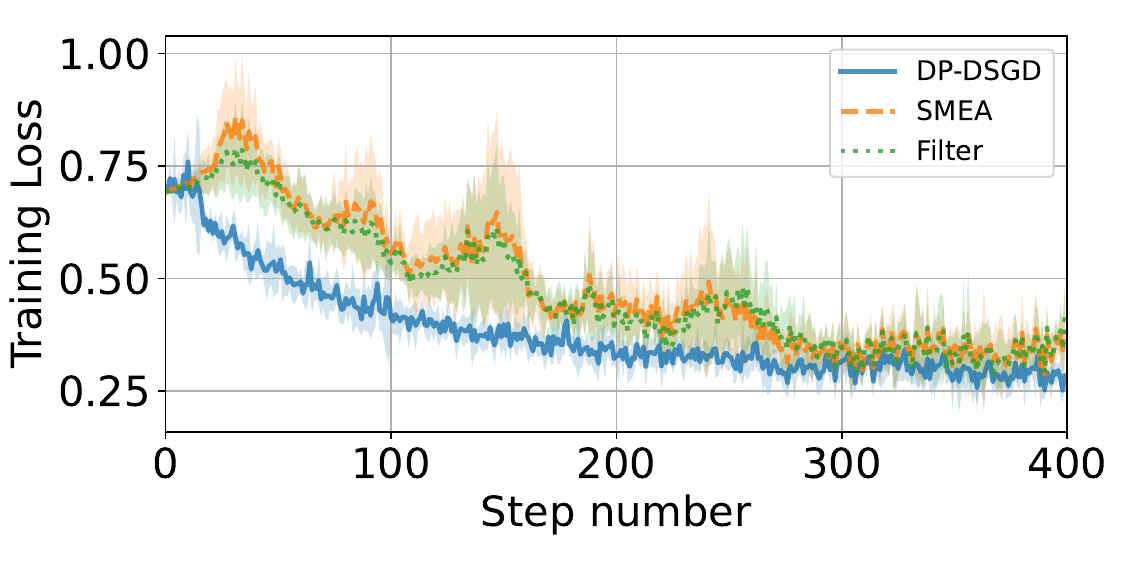}%
    \includegraphics[width=0.5\textwidth]{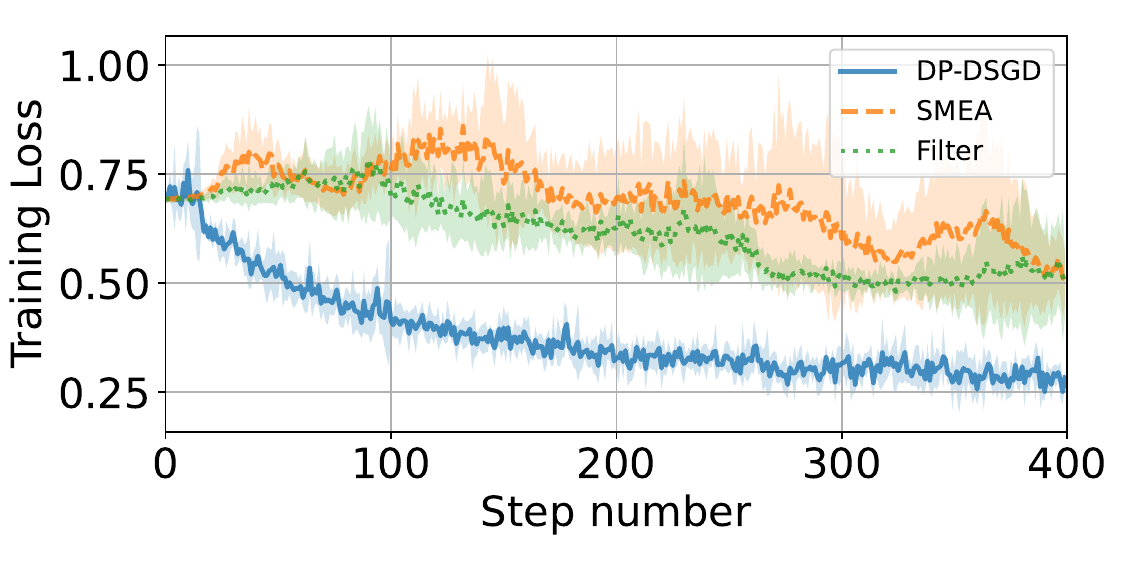}\\%
    \caption{Training loss on Phishing with $f = 3$ adversarial workers among $n = 7$ workers, with $\beta = 0.99$. The adversarial workers execute the LF (\textit{row 1, left}), SF (\textit{row 1, right}), ALIE (\textit{row 2, left}), and FOE (\textit{row 2, right}) attacks. Privacy budget after $T = 400$ steps is $(\epsilon, \delta) = (0.32, 10^{-4})$.}
\label{fig:plots_phishing_4}
\end{figure*}

\clearpage
\paragraph{High Privacy Regime ($\sigma_{\mathrm{NM}} = 3$).}
\begin{figure*}[ht!]
    \centering
    \includegraphics[width=0.5\textwidth]{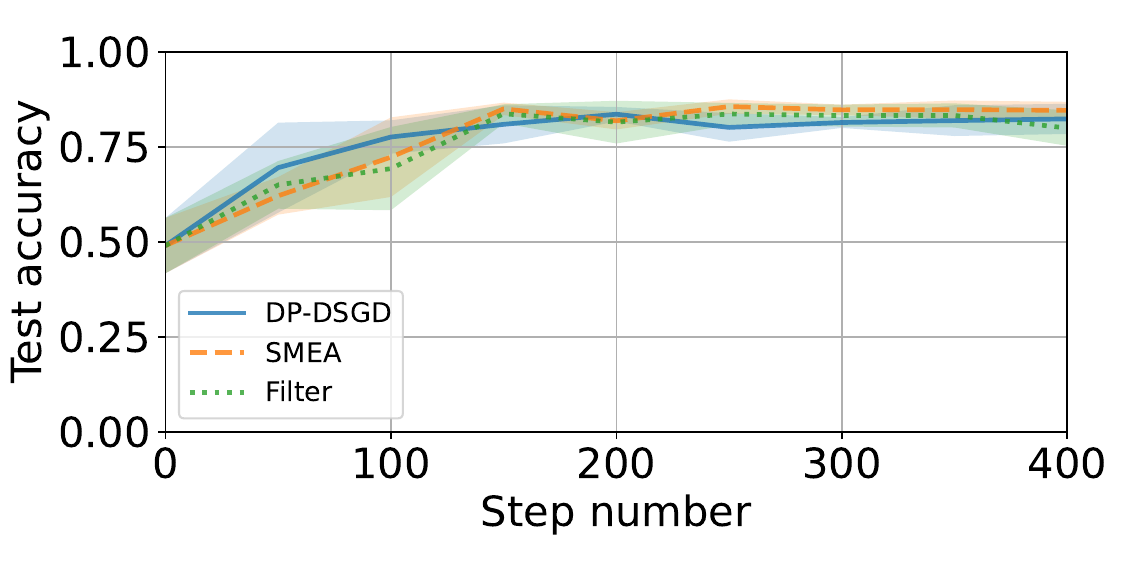}%
    \includegraphics[width=0.5\textwidth]{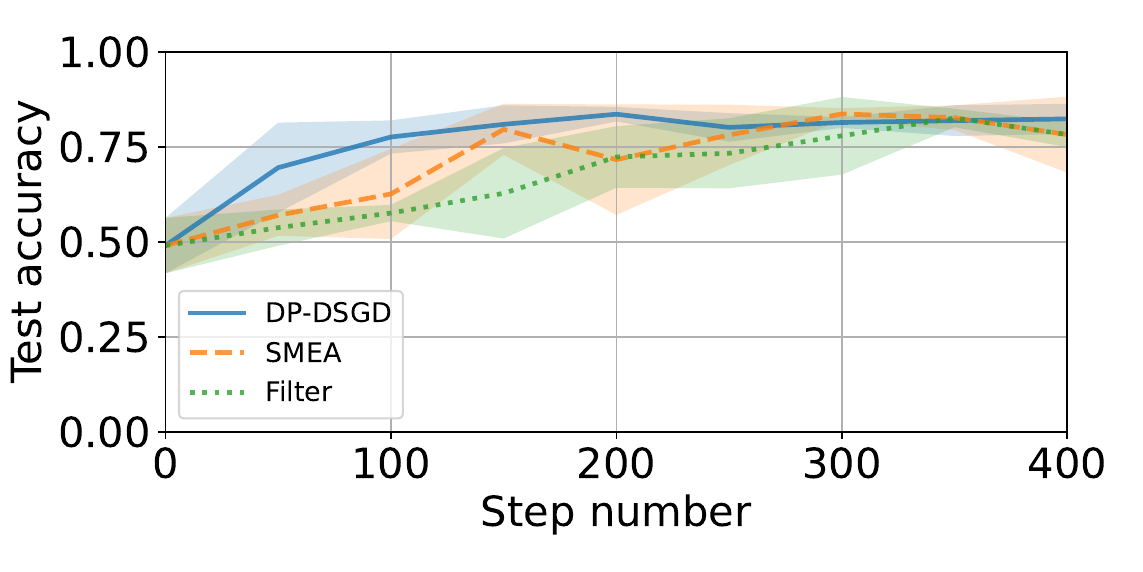}\\%
    \includegraphics[width=0.5\textwidth]{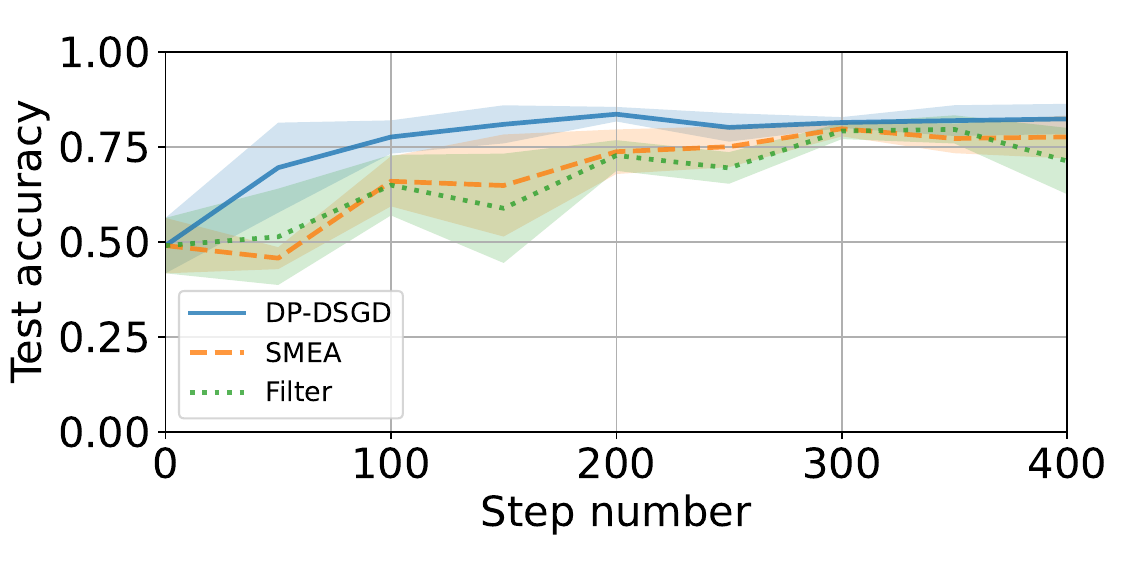}%
    \includegraphics[width=0.5\textwidth]{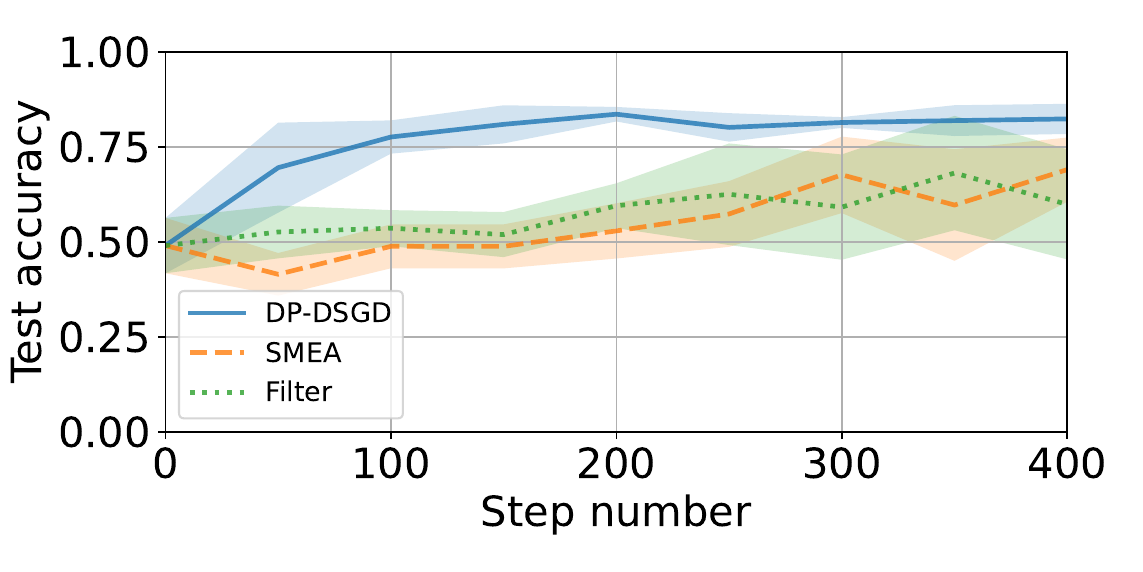}\\%
    \caption{Test accuracy on Phishing with $f = 3$ adversarial workers among $n = 7$ workers, with $\beta = 0.99$. The adversarial workers execute the LF (\textit{row 1, left}), SF (\textit{row 1, right}), ALIE (\textit{row 2, left}), and FOE (\textit{row 2, right}) attacks. Privacy budget after $T = 400$ steps is $(\epsilon, \delta) = (0.19, 10^{-4})$.}
\label{fig:plots_phishing_5}
\end{figure*}

\begin{figure*}[ht!]
    \centering
    \includegraphics[width=0.5\textwidth]{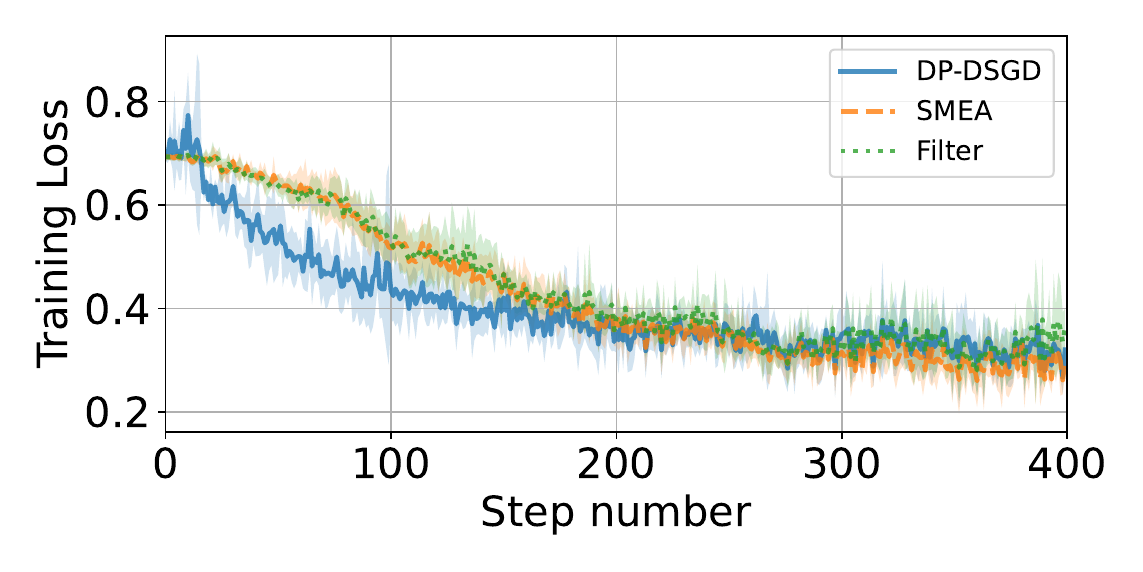}%
    \includegraphics[width=0.5\textwidth]{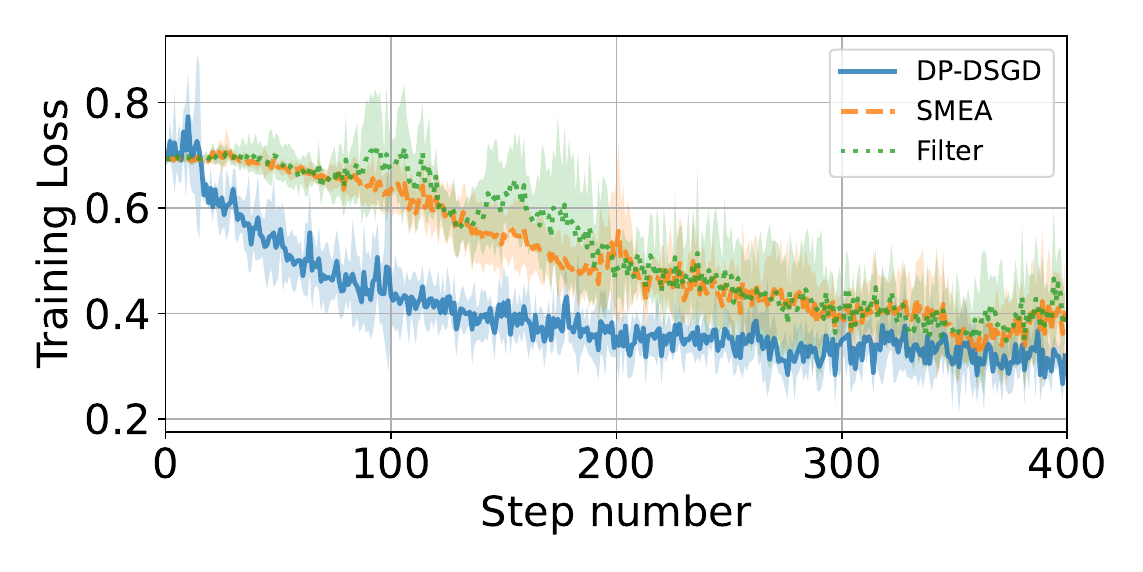}\\%
    \includegraphics[width=0.5\textwidth]{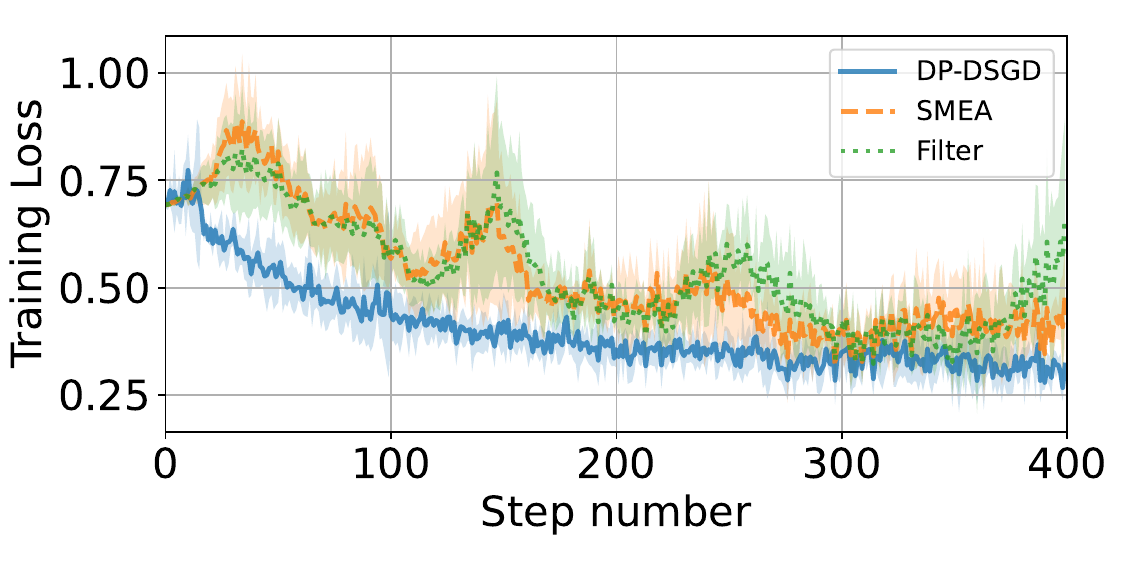}%
    \includegraphics[width=0.5\textwidth]{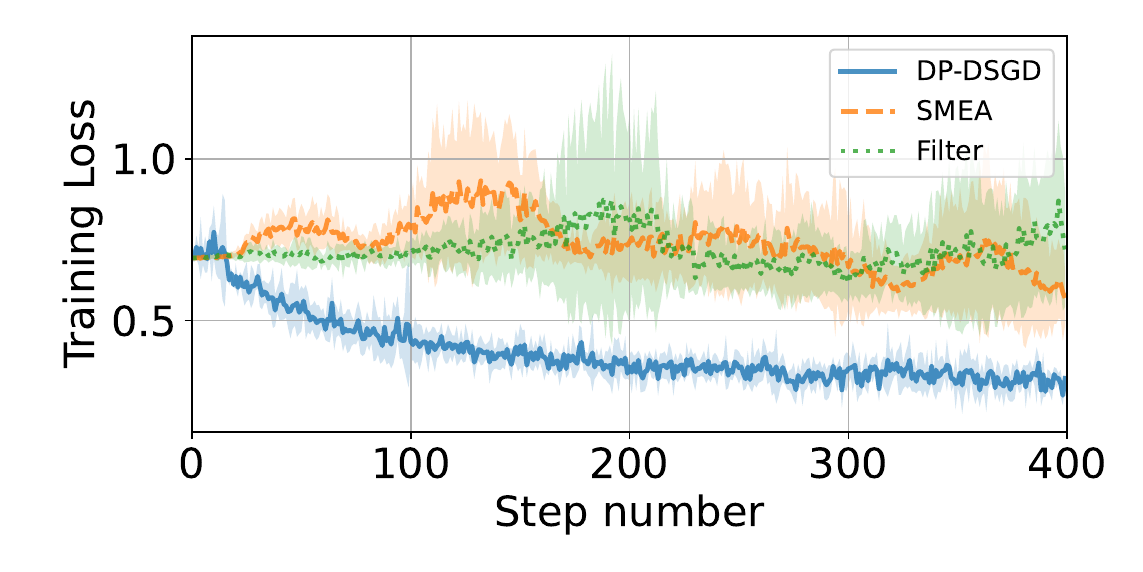}\\%
    \caption{Training loss on Phishing with $f = 3$ adversarial workers among $n = 7$ workers, with $\beta = 0.99$. The adversarial workers execute the LF (\textit{row 1, left}), SF (\textit{row 1, right}), ALIE (\textit{row 2, left}), and FOE (\textit{row 2, right}) attacks. Privacy budget after $T = 400$ steps is $(\epsilon, \delta) = (0.19, 10^{-4})$.}
\label{fig:plots_phishing_6}
\end{figure*}

\clearpage
\paragraph{Discussion.}
We consider four different attacks executed by the adversarial nodes, and report on the performance of the algorithms in three different privacy regimes. Our observations are twofold.

First, as expected, we see that as the privacy regime becomes more demanding, the performances of DP-DSGD and SMEA degrade both in terms of test accuracy and training loss. This confirms that the standard privacy-utility trade-off also occurs in the presence of adversarial workers.
Second, we see that under all three privacy regimes, \algoname{} with SMEA is able to successfully mitigate adversarial attacks while still ensuring strong levels of differential privacy. Indeed, the final accuracies reached by \algoname{} with SMEA are around 80\% in the \textit{low} and \textit{moderate} privacy regimes, and around 75\% in \textit{high} privacy (a bit lower under the FOE attack). On the other hand, the training losses are decreasing under all attacks and in all privacy regimes, sometimes asymptotically matching the curves of DP-DSGD (e.g., the LF attack in all three privacy regimes, the ALIE attack in \textit{low} and \textit{moderate} privacy). The same observations hold for \algoname{} with Filter.

\end{document}